\documentclass[11pt]{article}
\usepackage[utf8]{inputenc} 
\usepackage[T1]{fontenc}  
\usepackage{bm}
\usepackage{bbm}
\usepackage{dsfont}
\usepackage{xcolor}
\usepackage{amsmath,amsfonts,amssymb,amsthm,titling,url,array}
\usepackage{graphicx}
\usepackage{caption}
\usepackage{subcaption}

\usepackage{fullpage}
\usepackage{MnSymbol}
\usepackage{float}
\usepackage{hyperref}
\usepackage{authblk}

\allowdisplaybreaks[4]

\counterwithout{figure}{section}

\newtheorem{theorem}{Theorem}[section]
\newtheorem{lemma}[theorem]{Lemma} 
\newtheorem{remark}[theorem]{Remark}
\newtheorem{definition}[theorem]{Definition}

\newcommand{\norm}[1]{\Vert #1 \Vert}
\newcommand{\tr}{\text{trace}}
\newcommand{\fbnorm}[1]{\Vert #1 \Vert_F}
\renewcommand{\P}[1]{P_{#1}}
\newcommand{\Q}[1]{Q_{#1}}

\usepackage{enumitem}
\newcommand{\I}{\text{Id}}
\newcommand{\LA}{L_X}
\newcommand{\LAT}{L_X^T}
\newcommand{\LAP}{L_{X,\bot}}
\newcommand{\LAPT}{L_{X,\bot}^T}

\newcommand{\LsymAT}{L_{\text{sym}(X)}^T}
\newcommand{\LsymAP}{L_{\text{sym}(X),\bot}}
\newcommand{\LsymAPT}{L_{\text{sym}(X),\bot}^T}
\newcommand{\LtilA}{ \widetilde{ L_{X} }}
\newcommand{\LtilAT}{\widetilde{L_X}^T}
\newcommand{\LtilAP}{\widetilde{L_{X,\bot}}}
\newcommand{\LtilAPT}{\widetilde{L_{X,\bot}}^T}
\newcommand{\Zt}{Z_t}
\newcommand{\Ztone}{Z_{t_1}}
\newcommand{\Zttwo}{Z_{t_2}}

\newcommand{\ZtT}{Z_t^T}
\newcommand{\Zplus}{Z_{t+1}}
\newcommand{\ZplusT}{Z_{t+1}^T}
\newcommand{\symA}{\text{sym}(X)}

\newcommand{\Qt}{Q_t}
\newcommand{\QtT}{Q_t^T}

\newcommand{\Qtp}{Q_{t,\bot}}

\newcommand{\QtpT}{Q_{t,\bot}^T}
\newcommand{\Qplus}{Q_{t+1}}
\newcommand{\Qplusp}{Q_{t+1,\bot}}

\newcommand{\Ht}{H_t}

\newcommand{\tilZt}{\tilde{Z}_t}
\newcommand{\tilZtT}{\tilde{Z}_t^T}
\newcommand{\tilZplusT}{\tilde{Z}^T_{t+1}}
\newcommand{\tilLA}{\widetilde{\LA}}
\newcommand{\tilLAT}{\widetilde{\LA}^T}
\newcommand{\sglmin}{\sigma_{\min}}

\newcommand{\Deltat}{\Delta_t}

\newcommand{\Dt}{D_t}
\newcommand{\Dplus}{D_{t+1}}
\newcommand{\Dtwo}{D_{t_2}}
\newcommand{\Dthree}{D_{t_3}}

\newcommand{\Bcal}{\mathcal{B}}
\newcommand{\Acal}{\mathcal{A}}
\newcommand{\Id}{\text{Id}}
\newcommand{\nucnorm}[1]{\Vert #1 \Vert_{\ast}}
\newcommand{\twonorm}[1]{\big\Vert #1 \big\Vert_{2}}
\newcommand{\Loss}{\mathcal{L}}
\newcommand{\Losssym}{\mathcal{L}_{\text{sym}}}
\newcommand{\innerproduct}[1]{ \langle #1 \rangle }
\newcommand{\R}{\mathbb{R}}
\newcommand{\tilZplus}{\tilde{Z}_{t+1}}
\newcommand{\PZQ}{P_{\Zt \Qt}}
\newcommand{\PZQT}{P_{\Zt \Qt}^T}
\newcommand{\PZQperp}{P_{\Zt \Qt,\bot}}
\newcommand{\PZQperpT}{P_{\Zt \Qt,\bot}^T}

\newcommand{\hatK}{\hat{K}_t}

\newcommand{\tstar}{{t_{\star}}}

\newcommand{\Ztstar}{Z_{\tstar}}
\newcommand{\tilZtstar}{\tilde{Z}_{\tstar}}
\newcommand{\Etstar}{E_{\tstar}}
\newcommand{\tilEtstar}{\tilde{E}_{\tstar}}

\newcommand{\tone}{t_{1}}

\title{Implicit Balancing and Regularization:\\Generalization and Convergence Guarantees for\\Overparameterized Asymmetric Matrix Sensing}
\author[1]{Mahdi Soltanolkotabi}
\author[2]{Dominik St\"oger}
\author[1]{Changzhi Xie} 

\affil[1]{University of Southern California}
\affil[2]{KU Eichst\"att-Ingolstadt}
\begin{document}
\maketitle 
\begin{abstract}
Recently, there has been significant progress in understanding the convergence and generalization properties of gradient-based methods for training overparameterized learning models. However, many aspects including the role of small random initialization and how the various parameters of the model are coupled during gradient-based updates to facilitate good generalization remain largely mysterious. 
A series of recent papers have begun to study this role for non-convex formulations of symmetric Positive Semi-Definite (PSD) matrix sensing problems which involve reconstructing a low-rank PSD matrix from a few linear measurements. 
The underlying symmetry/PSDness is crucial to existing convergence and generalization guarantees for this problem. 
In this paper, we study a general overparameterized low-rank matrix sensing problem where one wishes to reconstruct an asymmetric rectangular low-rank matrix from a few linear measurements.
We prove that an overparameterized model trained via factorized gradient descent converges to the low-rank matrix generating the measurements. We show that in this setting, factorized gradient descent enjoys two implicit properties: (1) coupling of the trajectory of gradient descent where the factors are coupled in various ways throughout the gradient update trajectory and (2) an algorithmic regularization property where the iterates show a propensity towards low-rank models despite the overparameterized nature of the factorized model. These two implicit properties in turn allow us to show that the gradient descent trajectory from small random initialization moves towards solutions that are both globally optimal and generalize well.
\footnote{Accepted for presentation at the Conference on Learning Theory (COLT) 2023}
\end{abstract}


\section{Introduction}

Over the past few years, there has been a significant amount of focus on understanding the optimization and generalization dynamics of \emph{overparameterized} learning problems.
Such overparameterized training, 
which involves training models with more parameters than training data,
is the contemporary learning paradigm for a variety of problems spanning deep neural networks to matrix factorization. Surprisingly, despite the existence of many global optima of the training loss with subpar generalization/prediction capability, these models avoid such \emph{overfitting} when trained via (stochastic) gradient descent \cite{zhang2016understanding}. 

Among others, there seem to be two critical components facilitating this success, putting the gradient updates on a trajectory towards parameters that are not only globally optimal but also generalize well.
The first critical component is the role of small random initialization which seems to guide the gradient trajectory towards low complexity models that tend to generalize better.
For instance in low-rank matrix reconstruction, small random initialization guides the trajectory towards low-rank solutions \cite{gunasekar2018implicit}. 
For neural networks, small random initialization plays a critical role in allowing the neural net to learn good features (a.k.a. \emph{feature learning} regime) facilitating generalization performance far superior to the features of the model learned with large initialization (whose features are essentially frozen at the random initialization a.k.a.~lazy training or NTK regime) \cite{chizat2019lazy}.
The second critical component is that the parameters of the different layers of the model are coupled in intricate ways during the gradient descent trajectory. Understanding these intricate couplings of the trajectory of gradient descent is thus crucial to understand the generalization dynamics during training.

Recently, there has been interesting progress in understanding the role of small random initialization for a variety of problems ranging from positive semi-definite low-rank matrix sensing \cite{ LiMaCOLT18,stoger2021small} to one-hidden layer neural networks in a training regime where only one of the layers plays a dominant role in terms of generalization 
\cite{damian2022neural,mousavi2022neural,bietti2022learning}.
Despite this progress, the intricate couplings of the trajectory are far less understood.
In this paper, we wish to demystify both the role of small random initialization and the intricate coupling with an emphasis on the latter.

We focus on the problem of asymmetric low-rank matrix reconstruction from a few linear measurements in the overparameterized regime where the goal is to reconstruct a matrix from linear measurements of the form 
\begin{equation*}
 y_i 
 = \innerproduct{A_i,X}
 := \tr \left( A_i^T X  \right) \quad \quad \text{for } i \in [m]:= \left\{ 1;2; \ldots; m \right\},
\end{equation*}
Here, $X \in \mathbb{R}^{n_1 \times n_2}$ is the unknown asymmetric low-rank matrix of rank $r$ that we wish to recover and $A_i$ are known measurement matrices.
To recover the unknown matrix $X$, we consider the loss
\begin{equation}\label{equ:lossdefinition}
   \Loss \left(V,W\right) := \frac{1}{2}  \sum_{i=1}^m \left( y_i - \innerproduct{A_i, VW^T} \right)^2,
\end{equation}
and train both $V \in \mathbb{R}^{n_1 \times k} $ and $W \in \mathbb{R}^{n_2 \times k}$ via gradient descent.
In this paper, we are especially interested in the overparameterized scenario, i.e., $ k (n_1 + n_2) \gg m $.
We show that in this setting, starting from small random initialization the gradient descent iterates $V_t$ and $W_t$ follow a trajectory with two implicit properties: (1) an algorithmic regularization property where the iterates show a propensity towards low-rank models despite the overparameterized nature of the factorized model.  (2) a coupling of the trajectory of the two factors via  intricate balancing properties where the factors remain approximately balanced throughout training in various ways.
In this problem, the second property is also crucial to showing the first.
Together these two implicit properties allow us to show that the gradient descent trajectory from small random initialization moves towards solutions that are both globally optimal and generalize well.

Finally, we would like to emphasize that the coupling of the gradient trajectory of the two factors in the matrix sensing problem is quite intricate, even when compared to related problems such as matrix factorization.
For matrix factorization, one coupling that has been crucial in prior analysis is that the factors remain balanced in the sense that throughout the iterations we have $V_t^TV_t\approx W_t^TW_t$, see, e.g.,~\cite{du2018algorithmic,ye2021global,jiang2022algorithmic}. A similar balancing property does hold in matrix sensing as well. However, the behavior of the imbalance matrix $V_t^TV_t-W_t^TW_t$ is quite different. This contrast is depicted in Figure \ref{fig:balancevsinitialization} where we draw the spectral norm of the imbalance matrix in both cases. This figure demonstrates that imbalancedness increases initially in the matrix sensing problem before tapering off at a constant value. This is in sharp contrast with the matrix factorization problem where the imbalancedness is essentially constant throughout training with a much smaller constant value. As a result, a more careful analysis is required to control the imbalancedness in the matrix sensing problem. Furthermore, the analysis of the matrix sensing problem requires more intricate couplings of the trajectory. In particular, we demonstrate two additional couplings of the trajectory that are crucial for achieving strong generalization and optimization guarantees with a modest number of iterations. The first one is that the imbalance matrix is even smaller in certain ``nuisance" directions. This behavior is depicted in Figure \ref{fig:balancevariant} demonstrating that imbalancedness is orders of magnitude smaller in these nuisance directions. The second form of coupling is an angular form of balancedness demonstrating that an appropriate notion of angle between the imbalance matrix and certain ``signal" directions is sufficiently small across training as depicted in Figure \ref{fig:balanceangle}. In this paper, we show all of these coupling phenomena rigorously. These intricate couplings of the trajectory are crucial to our convergence analysis allowing us to show that despite overparameterization the trajectory of gradient descent leads to solutions with good generalization with fast convergence rates. 

\begin{figure}[t]
\begin{subfigure}[b]{0.3\textwidth}
\includegraphics[width=45mm]{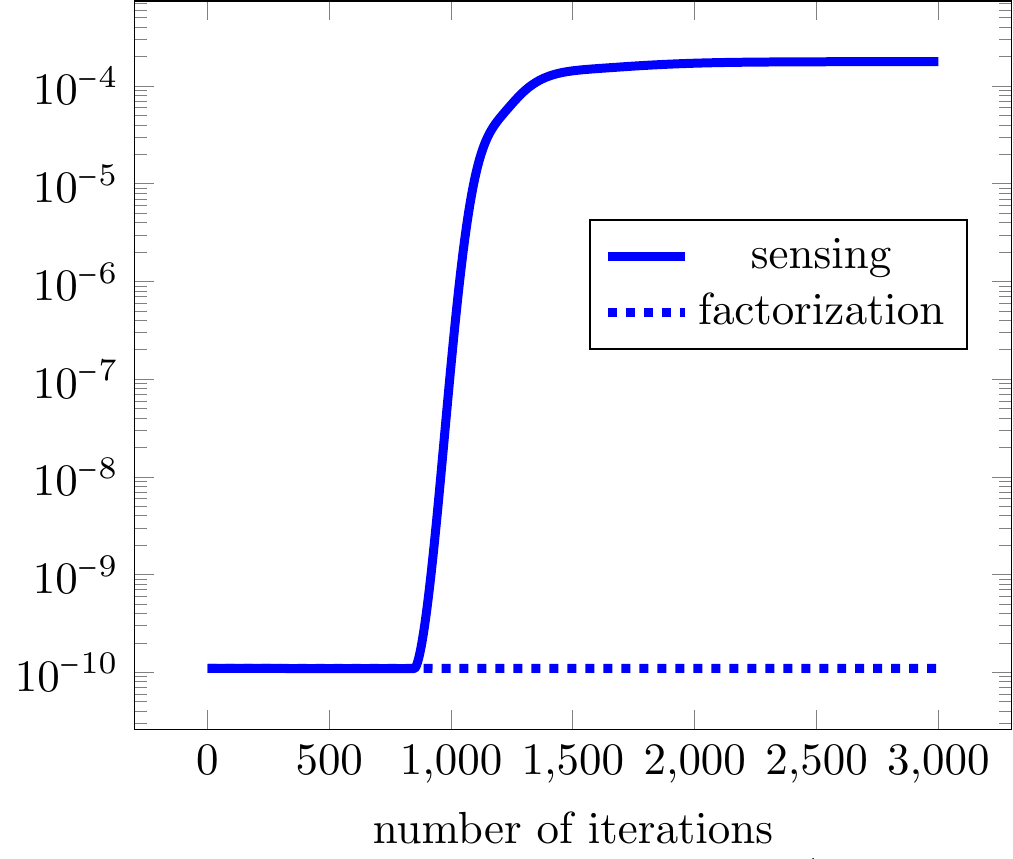}
\caption{}
\label{fig:balancevsinitialization}
\end{subfigure}
\begin{subfigure}[b]{0.3\textwidth}
\includegraphics[width=45mm]{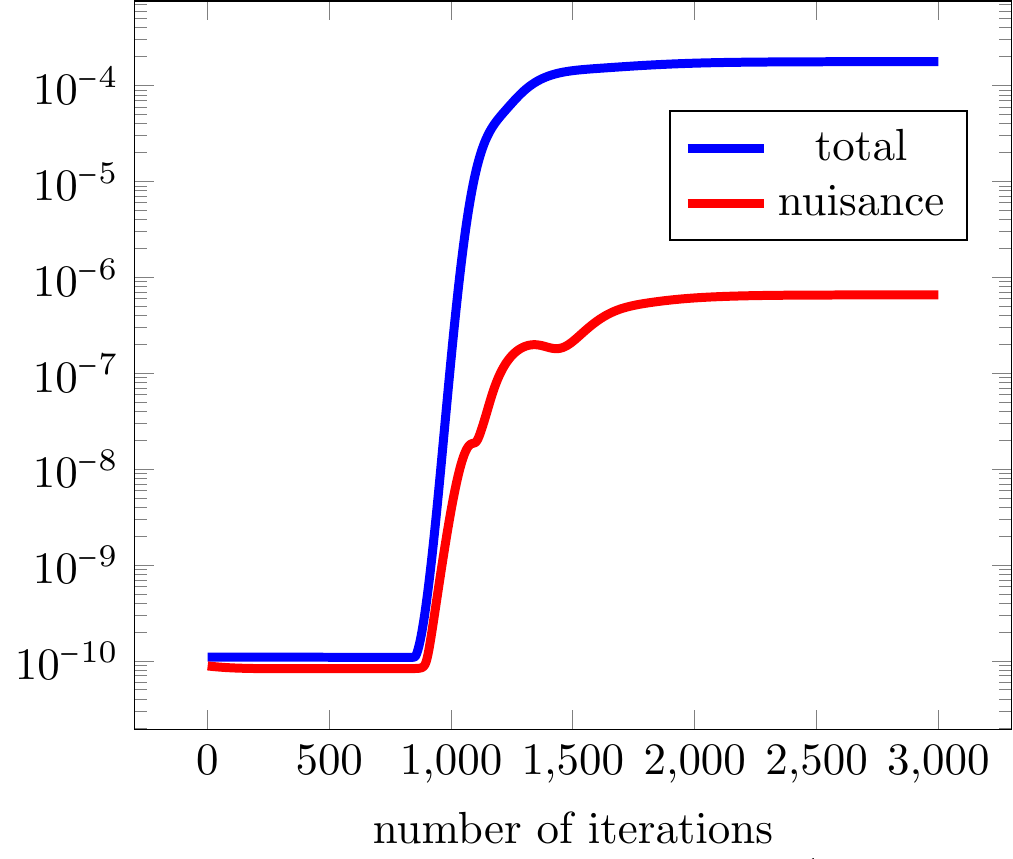}
\caption{}
\label{fig:balancevariant}
\end{subfigure}
\begin{subfigure}[b]{0.3\textwidth}
\includegraphics[width=45mm]{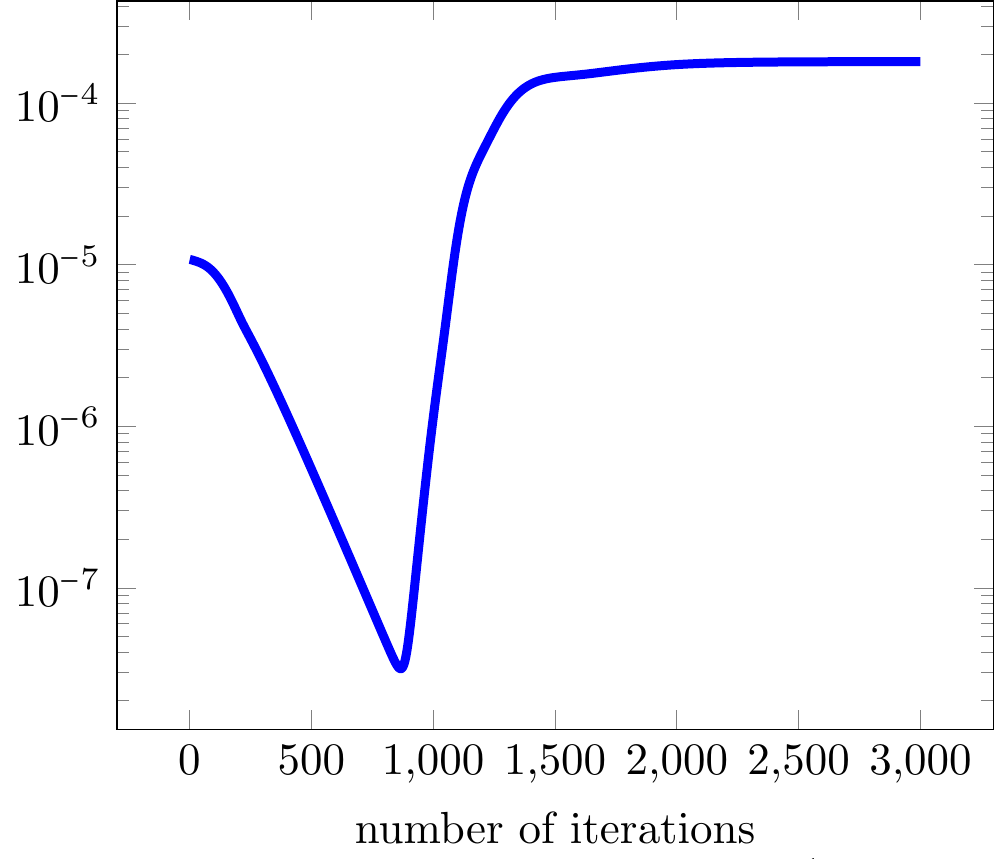}
\caption{}
\label{fig:balanceangle}
\end{subfigure}
\caption{This figure depicts various couplings between the trajectories of the two factors $V_t$ and $W_t$ throughout training. First, (a) depicts the spectral norm of the imbalance matrix $V_t^TV_t-W_t^TW_t$ ($\|V_t^TV_t-W_t^TW_t\|$) for matrix sensing (this paper) and matrix factorization. While the size of imbalancedness remains essentially constant in the matrix factorization problem and is very small, it actually increases in the matrix sensing problem before settling at a small value (albeit orders of magnitude larger than in the matrix factorization case). This shows that the two factors continue to be balanced for matrix sensing but the nature of this balancedness is much more intricate. Second, (b) shows that the size of the imbalance matrix is orders of magnitude smaller in certain ``nuisance'' directions. Such a more refined control of imbalancedness is crucial in our proofs. 
Finally, in (c) we consider a quantity regarding the 
imbalancedness of $V_t^T V_t -W_t^T W_t$ after 
some normalization using the singular values of $V_t$ and $W_t$ has been applied.
Figure c) indicates that this quantity stays small during training.
This particular coupling of the trajectory also plays a critical role in our proofs.}
\label{allfigs}
\end{figure}

\section{Problem Formulation}
In this paper, we focus on reconstructing a low-rank matrix $X\in \R^{n_1 \times n_2}$ of rank $r$ from  $\left[m\right]:= \left\{ 1, 2, \ldots, m \right\}$ linear measurements of the form
\begin{equation}\label{equ:measurementsequation}
    y_i = \innerproduct{A_i, X}:= \tr\left( A_i X^T \right) ,
\end{equation}
where $ \left\{ A_i\right\}_{i=1}^m \subset \mathbb{R}^{n_1 \times n_2}$ represent known measurement matrices. The linear system in \eqref{equ:measurementsequation} can also be rewritten in the more compact form $y = \Acal \left(X\right)$, where $ \mathcal{A}: \mathbb{R}^{n_1 \times n_2} \rightarrow \mathbb{R}^m$ is a linear measurement operator defined as $\left[ \mathcal{A} \left(Z\right) \right]_i = \innerproduct{A_i, Z} $ and $y\in\mathbb{R}^m$ is a vector consisting of all measurements.

To find the low-rank matrix, we consider the natural factorized loss function 
\begin{equation}\label{equ:lossdefinition}
   \Loss \left(V,W\right) := \frac{1}{2}  \sum_{i=1}^m \left( y_i - \innerproduct{A_i, VW^T} \right)^2=\frac{1}{2} \twonorm{ y - \Acal \left( VW^T\right) }^2,
\end{equation}
where $V \in \R^{n_1 \times k} $ and $ W \in \R^{n_2 \times k} $ are possibly overparameterized factors ($k\ge r$). To minimize this loss we train the factors $V$ and $W$ via gradient descent updates of the form 
\begin{align}
    V_{t+1}=&V_t - \mu \nabla_V \Loss (V_t,W_t)= V_t - \mu \left[ \left(\Acal^* \Acal \right) \left( V_t W_t^T - X \right)  \right] W_t, \label{Vupdate}\\
    W_{t+1}=&W_t - \mu \nabla_W \Loss (V_t,W_t)= W_t - \mu \left[ \left(\Acal^* \Acal \right) \left( V_t W_t^T - X \right)  \right]^T V_t\label{Wupdate}
\end{align}
starting from initial factors $V_0 \in \R^{n_1 \times k} $ and $ W_0 \in \R^{n_2 \times k} $ with step size $\mu >0$. Here, $ \Acal^*$ denotes the adjoint operator of $\Acal$.

\subsection*{Notation}

For any matrix $M \in \mathbb{R}^{d_1 \times d_2} $, we denote its Frobenius norm by $ \norm{M}_F := \sqrt{\text{trace} \left(A^T A\right) } $, its spectral norm by $ \norm{M} $, and its nuclear norm (i.e., the sum of the singular values of $M$) by $ \nucnorm{M}$. Moreover, we denote the singular value decomposition of $ M$ by $ M =  P_{M}  \Sigma_{ M}  Q_{ M}^T$ with
\begin{equation*}
\Sigma_{ M} = \text{diag} \left( \sigma_1 (M),\sigma_2 (M),\ldots,\sigma_{\text{rank} (M)} (M) \right),
\end{equation*}
where
$\sigma_1  (M)\ge \sigma_2 (M) \ge \ldots \ge \sigma_{\text{rank} (M)} (M)>0$
denote the singular values of the matrix $M$. We also use $P_{M,\bot} \in \mathbb{R}^{d_1 \times (d_2 - \text{rank} (M) )} $ to denote a matrix whose columns are orthonormal and orthogonal to the column span of $ P_M $.
The matrix $Q_{M,\bot} \in \mathbb{R}^{d_1 \times (d_2 -\text{rank} (M)  ) }$ is defined analogously. 
The set of symmetric matrices in $ \mathbb{R}^{d \times d} $ is denoted by $ \text{Sym}_{d}$.
Finally, for a symmetric matrix $ S \in \mathbb{R}^{d \times d} $ we denote its eigenvalues by
$\lambda_1 (S) \ge  \lambda_2 (S) \ge \ldots  \ge \lambda_d (S)$. 
\section{Main results}
In this section, we present our main results. We begin with a few preliminary definitions.
\begin{definition}[Condition number]
We denote the condition number of the ground truth matrix $X$ by 
\begin{equation*}
    \kappa 
    :=
    \frac{\norm{X}}{\sigma_{r} \left(X\right)}.
\end{equation*}
\end{definition}
\noindent The second definition concerns the measurement operator $\mathcal{A}$.
\begin{definition}[Restricted Isometry Property \cite{RechtFazel}]
    We say that the measurement operator  $\mathcal{A}: \R^{n_1 \times n_2} \rightarrow \R^m $ has the restricted isometry property (RIP) of order $r$ with constant $\delta >0$, if for all matrices $M \in \R^{n_1 \times n_2}$ of rank at most $r$ it holds that
    \begin{equation*}
        \left(1-\delta\right) \fbnorm{M}^2 \le \twonorm{ \mathcal{A} \left(M\right) }^2 \le \left(1+\delta\right) \fbnorm{M}^2.
    \end{equation*}
\end{definition}
It is well known that if all entries of the measurement matrices $A_i$ are independent (sub-)gaussian random variables
with zero mean and variance $1/m$, 
then the operator $\mathcal{A}$ fulfills the restricted isometry property of order $r$ with constant $\delta >0$
if the number of measurements satisfies $ m \gtrsim \frac{r \left(n_1 + n_2\right) }{\delta^2} $, see \cite{candesplan,vershynin2018high}. With these notations in place, we are now ready to state the main result of this paper.
\begin{theorem}\label{theorem:main}
Let $X\in \R^{n_1 \times n_2}$ be a rectangular matrix of rank $r$
and assume that we are given measurements of the form $ y = \mathcal{A} \left(X\right) $. Furthermore,
assume that the linear measurement operator $\mathcal{A}$ satisfies the restricted isometry property of order $2r+1 $ with constant $ \delta \le \frac{c_1}{\kappa^3 \sqrt{r}} $.
To identify the matrix $X$ we run gradient descent iterates of the form $\left\{ V_t; W_t \right\}_{t \in \mathbb{N}}$ with step size $\mu$ per the updates \eqref{Vupdate} and \eqref{Wupdate} starting from the initialization factors $V_0= \alpha V \in \mathbb{R}^{n_1 \times k}$ and $W_0=\alpha W \in \mathbb{R}^{n_2 \times k} $ for some $k \ge r$.
Here, the matrices $V$ and $W$ have i.i.d. entries with distribution $\mathcal{N} \left(0,1\right)$ and $\alpha >0$ denotes the scale of initialization. Consider $0< \varepsilon <1$ and assume that the step size $\mu$ satisfies
    \begin{equation}\label{main:stepsizeassumption}
        \mu \le  \frac{c_2}{ \kappa^5 \norm{X}} \cdot \frac{1}{ \ln \left( \frac{2 \sqrt{2 \norm{X}} }{  \varepsilon \alpha \left( \sqrt{k} - \sqrt{r-1} \right) } \right) }.
    \end{equation}
Also assume that the scale of initialization $\alpha$ satisfies
    \begin{equation}\label{main:alphabound}
        \alpha \le  
    \frac{ c_3 \sqrt{ \norm{X}  } }{ k^{5}  \left( \max \left\{  n_1 + n_2;k \right\} \right)^2} \left( \frac{ \varepsilon \left( \sqrt{k} -\sqrt{r-1} \right) }{C_1 \kappa^2  \sqrt{\max \left\{  n_1 + n_2; k \right\}} } \right)^{C_2 \kappa }.
    \end{equation}
    Then, 
    after 
    \begin{equation}\label{lemma:tildetbound}
        T
        \lesssim \frac{ \ln \left( \frac{2  \sqrt{2 \norm{X}} }{ \varepsilon \alpha \left( \sqrt{k} - \sqrt{r-1} \right) } \right) }{\mu \sglmin (X)}
    \end{equation}
    iterations, with probability at least $1 -  C_3 \exp \left( -c_4 k \right) + \left(C_4 \varepsilon\right)^{k-r+1} $ it holds that
    \begin{equation}\label{equ:mainresultfinalbound}
    \frac{\norm{V_{T} W_{T}^T - X}}{\norm{X}}
    \lesssim 
    \frac{\alpha^{3/5} }{ \norm{X}^{3/10} }.
    \end{equation}
    Here, $C_1,C_2,C_3,C_4,c_1,c_2,c_3,c_4>0$ are fixed numerical constants.
\end{theorem}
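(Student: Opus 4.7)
The plan is to decompose the gradient descent dynamics relative to the SVD of the ground truth $X = P_X \Sigma_X Q_X^T$ and analyze the trajectory in two phases: a spectral/alignment phase followed by a local linear-convergence phase. First I would introduce ``signal'' factors $P_X^T V_t$, $Q_X^T W_t$ and ``nuisance'' factors $P_{X,\bot}^T V_t$, $Q_{X,\bot}^T W_t$, and use the restricted isometry property to replace $\mathcal{A}^*\mathcal{A}$ by the identity up to a controlled perturbation whenever it acts on matrices of rank at most $2r+1$. This lets me rewrite the signal-block update as a noisy power iteration on $X$ (driven by the residual $V_t W_t^T - X$ plus an RIP error), and the nuisance-block update as a small multiplicative distortion plus a cross term coupling the signal and nuisance blocks.

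In the spectral phase I would maintain three invariants simultaneously. First, the $r$-th singular value of the signal block grows roughly like $(1+\mu \sglmin(X))^{t} \alpha$ until it reaches a constant fraction of $\sqrt{\norm{X}}$; the probabilistic factor $(C_4 \varepsilon)^{k-r+1}$ in the theorem comes precisely from Gaussian anti-concentration bounding this initial $\sigma_{\min}$ below by a constant times $\alpha (\sqrt{k}-\sqrt{r-1})$. Second, the nuisance factors grow at most polynomially in the iteration count, so under the bound \eqref{main:alphabound} on $\alpha$ they remain of size polynomial in $n_1+n_2$, $k$ times $\alpha$ at the end of the phase. Third, the three balancing couplings depicted in Figures \ref{fig:balancevsinitialization}--\ref{fig:balanceangle}: the spectral norm of $V_t^T V_t - W_t^T W_t$ is controlled, its component in the nuisance directions is an order of magnitude smaller, and an angular notion of imbalance normalized by the singular values of $V_t$ and $W_t$ stays small. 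These are propagated by expanding $V_{t+1}^T V_{t+1} - W_{t+1}^T W_{t+1}$ directly from \eqref{Vupdate}--\eqref{Wupdate} and showing that the dangerous term involving $(\mathcal{A}^*\mathcal{A})(V_t W_t^T - X)$ enters only through combinations that are cancelled when the invariants hold.

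Once the smallest singular value of the signal block exceeds a constant fraction of $\sqrt{\norm{X}}$, I would switch to a local analysis. Using the balancing invariants from Phase 1 together with the RIP assumption, one obtains a restricted Polyak--Lojasiewicz-type inequality along the trajectory, giving geometric contraction of $\fbnorm{V_t W_t^T - X}$ at rate $1 - c \mu \sglmin(X)$. Iterating this until the contraction is matched by the accumulated nuisance error yields the announced bound $\norm{V_T W_T^T - X}/\norm{X} \lesssim \alpha^{3/5}/\norm{X}^{3/10}$; the non-trivial exponent $3/5$ arises from balancing the polynomial amplification of the nuisance contribution during Phase 1 against the contraction achievable in Phase 2 before the small residual imbalance reactivates growth of the nuisance block. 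The total iteration count is the sum of the Phase 1 length $\sim \log(\sqrt{\norm{X}}/(\alpha(\sqrt{k}-\sqrt{r-1})))/(\mu \sglmin(X))$ and an $O(1)$ multiple of the same quantity for Phase 2, matching \eqref{lemma:tildetbound}.

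The main obstacle, as stressed in the introduction, is propagating the balancing invariants through the spectral phase in the asymmetric setting. Unlike matrix factorization, $\norm{V_t^T V_t - W_t^T W_t}$ is \emph{not} nearly conserved here and in fact grows over training (Figure \ref{fig:balancevsinitialization}); bounding it only in spectral norm is too weak to close the induction. One must additionally establish the \emph{refined} smallness in nuisance directions and the \emph{angular} balance of Figures \ref{fig:balancevariant}--\ref{fig:balanceangle}, both of which couple signal and nuisance blocks and have to be tracked iteration by iteration. A second delicate point is that $\alpha$ enters the step-size bound \eqref{main:stepsizeassumption} only logarithmically, so all inductive constants must be dimension-free enough to survive $T = \Theta(\log(1/\alpha))$ iterations without blowing up; this is ultimately what forces the doubly-exponential-in-$\kappa$ form of the upper bound on $\alpha$ in \eqref{main:alphabound}.
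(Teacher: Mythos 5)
Your overall architecture --- a noisy-power-iteration/alignment phase, a signal--nuisance split, propagation of the three imbalance couplings (spectral norm, nuisance component, angular), and a final local contraction --- matches the paper's three-phase proof. (The paper additionally symmetrizes the problem, stacking the factors into $Z_t=\frac{1}{\sqrt{2}}[V_t;W_t]$ and $\tilde{Z}_t=\frac{1}{\sqrt{2}}[V_t;-W_t]$ so that the imbalance becomes $2\tilde{Z}_t^TZ_t$, but that is largely a bookkeeping device; your two phases correspond to the paper's spectral and saddle-avoidance phases merged, plus the refinement phase.) The probabilistic term, the role of the logarithmic-in-$\alpha$ step size, and the rough origin of the $3/5$ exponent are identified essentially correctly.

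The genuine gap is your choice of nuisance coordinates. You split along the \emph{fixed} row spaces $P_{X,\bot}^TV_t$, $Q_{X,\bot}^TW_t$ and claim these stay of size $\mathrm{poly}(n_1+n_2,k)\cdot\alpha$. That holds during the alignment phase but fails afterwards: since $P_{X,\bot}^TX=0$, the update reads $P_{X,\bot}^TV_{t+1}=P_{X,\bot}^TV_t\left(\I-\mu W_t^TW_t\right)-\mu P_{X,\bot}^T\Delta_t'W_t$ with $\Delta_t'=(\I-\mathcal{A}^*\mathcal{A})(V_tW_t^T-X)$, and once $\norm{W_t}\approx\sqrt{\norm{X}}$ while the residual is still of order $\norm{X}$, the additive perturbation is of order $\mu\delta\sqrt{r}\norm{X}^{3/2}$ per step, \emph{independent of the current nuisance size}; moreover $\I-\mu W_t^TW_t$ does not contract in the overparameterized column directions, where $W_t$ has $O(\alpha)$ singular values (or is rank-deficient when $k>n_2$). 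Accumulated over $T=\Theta\left(\ln(1/\alpha)/(\mu\sglmin(X))\right)$ iterations this yields a floor of order $\delta\sqrt{r}\,\kappa\sqrt{\norm{X}}\ln(1/\alpha)$ that does not vanish with $\alpha$, so the final error cannot be driven to $O(\alpha^{3/5})$ this way. The paper instead uses the \emph{adaptive column-space} split $Z_t=Z_tQ_tQ_t^T+Z_tQ_{t,\bot}Q_{t,\bot}^T$, where $Q_t$ consists of the right singular vectors of $L_X^TZ_t$, so that $L_X^TZ_tQ_{t,\bot}=0$ by construction; the update of $\norm{Z_tQ_{t,\bot}}$ is then purely multiplicative up to the coupling term $2\mu\sqrt{\norm{X}}\norm{\tilde{Z}_t^TZ_tQ_{t,\bot}}$ (Lemma \ref{lemma:noisetermgrowth}), which the refined balancing estimate $\norm{\tilde{Z}_t^TZ_tQ_{t,\bot}}\lesssim\kappa^{-3}\sqrt{\norm{X}}\norm{Z_tQ_{t,\bot}}$ (Lemma \ref{lemma:balancednessperp}) converts back into a multiplicative factor. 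It is this quantity --- not the fixed row-space residual --- that remains $O(\alpha^{4/5})$ and carries the $\alpha$-dependence of the final bound; the component $\norm{L_{X,\bot}^T(\cdot)}$ of the error is recovered a posteriori from the signal error and $\norm{Z_tQ_{t,\bot}}^2$ via Lemma \ref{SpecLossboundedlemma}. Your refined and angular couplings also require this adaptive $Q_{t,\bot}$ and $P_{Z_tQ_t}$ to even be formulated, so without this ingredient the induction you describe does not close.
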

A few comments are in order.

\noindent\textbf{Impact of the scale of initialization on the reconstruction error of $X$:}
    Note that $\norm{V_{T} W_{T}^T - X} $ can be interpreted as the reconstruction error.
    From inequality \eqref{equ:mainresultfinalbound} it follows 
    that this reconstruction error can be made arbitrarily small by choosing the scale of initialization $\alpha$ small enough.
    We note that polynomial dependence of the reconstruction error on $\alpha$ is also observed in our experiments, see Section \ref{sec:experiments}.
   
\noindent\textbf{Overparameterization:}
    Our result holds for any choice of $k$ (the number of columns of the factors $U_t$ and $V_t$), which determines the number of parameters in the training model. Note that in overparameterized models, i.e., $k(n_1+n_2) \gg m $, there may be infinitely many global minimizers of the loss function $\mathcal{L}$ in \eqref{equ:lossdefinition} with arbitrarily large test error. Despite that, our result guarantees that for a sufficiently small random initialization, vanilla gradient descent finds the low-rank solution.
    
\noindent\textbf{Non-overparameterized setting ($k=r$):} In this special case, our result implies that if the measurement operator fulfills the restricted isometry property,
gradient descent with small, random initialization will converge to the ground truth matrix $X$ in
polynomial time. It is known that under the RIP assumption the loss landscape is benign in the
sense that there are no local optima that are not global and all saddles have a direction of negative
curvature. However, such results do not imply that vanilla gradient descent converges quickly, i.e., in polynomial time, to a global optimum, as gradient descent may take exponential time to escape
from saddle points \cite{du2017gradient}.
To the best of our knowledge, this is the first result in the non-overparameterized asymmetric setting $k=r$ which shows the convergence of vanilla gradient descent to the ground truth from a random
initialization using only the restricted isometry property in polynomial time. 
    
    \noindent\textbf{Sample complexity:} If $\mathcal{A}$ is a Gaussian measurement operator, we need
    $m \gtrsim \kappa^{6} r^2 (n_1 +n_2)$ measurements to guarantee that the assumption on the RIP constant $\delta$ holds with high probability.
    Thus, the number of required measurements depends only on the rank of the ground truth matrix $X$, which is $r$, and not on the number of columns $k$ of $V_t$ and $W_t$, which determines the number of parameters of our training model.
    
\noindent\textbf{Step size:}
    The bound on the step size, \eqref{main:stepsizeassumption}, depends logarithmically on the scale of initialization $\alpha$.
    This assumption is necessary for us to show that the imbalance matrix $V_t^TV_t-W_t^TW_t$ remains bounded.
    However, as we will explain in Section \ref{sec:proofoutline}, to achieve this near-optimal dependence of the step size on $\alpha$ we need to conduct a fine-grained analysis of the evolution of this imbalance matrix $ V_t^T V_t -W_t^T W_t$ during training. This intricate analysis goes beyond just controlling the spectral norm of this imbalance matrix. In particular, as also mentioned in the introduction, we also need to show more intricate forms of coupling of the $V_t$ and $W_t$ trajectories.

\section{Related work}

\textbf{Overparameterization in low-rank matrix recovery:} In \cite{gunasekar2017implicit}, the authors observed that in overparameterized low-rank matrix recovery models such as matrix sensing or matrix completion, gradient descent with a small, random initialization converges to a low-rank solution. 
Moreover, in this work it was conjectured, that for sufficiently small, random initialization gradient descent converges to a solution that is (almost) the nuclear norm minimizer, a common heuristic for finding the solution with minimal rank.
However, in  \cite{razin2020implicit} some examples have been constructed where the conjecture in  \cite{gunasekar2017implicit} does not hold.  
In \cite{LiMaCOLT18}, the authors show that small random initialization in symmetric matrix sensing (with a positive definite ground truth matrix) convergences to the ground truth matrix in the special case $k=n_1=n_2=n$.
A more recent paper \cite{stoger2021small} improved this result allowing for an arbitrary overparameterization parameter $k$ and also for an arbitrary scale of initialization $\alpha$.
A key insight in this work was that in the first few iterations, gradient descent exhibits a spectral bias.
A similar observation appeared in \cite{li2021towards}, which argues that gradient flow with sufficiently small initialization can be regarded as a rank-minimization heuristic.
Building upon the framework in \cite{stoger2021small}, this intuition has been made quantitative in \cite{jin2023understanding}.
The results in \cite{stoger2021small} have been generalized to the noisy case in \cite{ding2022validation}. 
In \cite{xu2023power} it was shown that preconditioned gradient descent leads to faster convergence compared to vanilla gradient descent for overparameterized (symmetric) low-rank matrix recovery problems with small random initialization.
We also note that this spectral bias has also been used in some of the algorithms and their corresponding analysis for avoiding strict saddles, e.g.,~see the interesting work in \cite{o2023line}.

Despite all of this interesting work, there has been much less understanding of the asymmetric matrix sensing problem. Very recently, the asymmetric case of the matrix sensing problem has been studied in the limit of the number of samples going to infinity (a.k.a.~population case) which is equivalent to the  matrix factorization problem.
In \cite{du2018algorithmic}, the authors show in this matrix factorization scenario that the factors $V_t$ and $W_t$ stay balanced and their product converges to the ground truth. However, no convergence rate was provided.
In \cite{ye2021global}, the same authors improved this result and showed that for matrix factorization in the non-overparameterized case, i.e., $k=r$, gradient descent from small, random initialization converges to the ground truth matrix with a polynomial amount of iterations.
Building upon \cite{ye2021global}, the paper \cite{jiang2022algorithmic} generalized these results to arbitrary rank $k$ among several other improvements.
However, the proof techniques of these works do not generalize to the finite sample scenario (i.e.~matrix sensing), which is the topic of this paper. The reason is that much more nuanced forms of coupling of the trajectories of $V_t$ and $W_t$ are required for matrix sensing as discussed in Figure \ref{allfigs} and further discussed in the proof and experimental sections.

Let us also comment on several other related works.
In \cite{ding2022flat}, it has been shown that in overparameterized low-rank matrix recovery models, flat minima, measured by the trace of the Hessian, have better generalization properties.
There is also recent work on non-convex subgradient methods in low-rank matrix recovery
when the data is grossly corrupted by noise \cite{ma2021sign,ding2021rank}.
However, in contrast to our result, in these works, the sample complexity scales with $k$ and not with $r$.
In \cite{arora2019implicit} authors focus on low-rank matrix recovery for deeper models, i.e., those which have more than two factors, and show that these models also exhibit a certain low-rank bias.

\textbf{Connection to quadratically reparameterized gradient flow in linear regression:}
It has been shown that in linear regression with quadratically reparameterized gradient flow, respectively gradient descent, with vanishingly small initialization converges to a solution, which corresponds to the $\ell_1$-minimizer, see  \cite{vaskevicius2019implicit,woodworth2020kernel,chou2021more,chou2022non}.
This can be interpreted as the commutative version of the problem studied in this paper.
A key insight in this line of research is that gradient flow on the factorized loss function is equivalent to mirror flow with an appropriately chosen Bregman divergence.
However, due to non-commutativity of the matrix multiplication this equivalence does not hold for the problem studied in this paper, see also \cite{li2022implicit}.
Finally, let us mention that in \cite{wu2021implicit} it has been shown that for low-rank matrix recovery, mirror descent equipped with a suitable Bregman divergence and starting from vanishingly random initialization converges to a solution that is vanishingly close to the nuclear norm minimizer.
However, as mentioned above, unlike in the (commutative) vector case, in the non-commutative matrix case it is unclear to which extent mirror descent connects to gradient descent on the factorized loss function.

\noindent\textbf{(Deep) Linear models and Balancing:} 
A variety of works \cite{bartlett2018gradient,arora2018optimization,arora2018convergence,bah2019learning,chou2020gradient,nguegnang2021convergence,tarmoun2021,Min2021}  studied the convergence of gradient flow and gradient descent for deep linear neural networks of the form
\begin{equation*}
 \underset{W_1, W_2, \ldots, W_N}{\min} \  \sum_{i=1}^m \big\Vert W_N \ldots W_2 W_1 x_i - y_i   \big\Vert^2.
 \end{equation*}
While this model cannot directly be compared with the one studied in this paper, the aforementioned papers also rely on the implicit coupling/balancing effect of the gradient descent algorithm. 

Finally, we note that in \cite{wang2021large} it has been observed that in a low-rank matrix factorization model of the form $ \Vert X-VW^T \Vert_F^2 $ gradient descent with a large step size implicitly balances the factors $V_t$ and $W_t$.
At first glance, this may look like a contradiction with the results presented in this paper.
However, note that \cite{wang2021large} assumes such a large step size that even the loss is not monotonically decreasing in the beginning.
Thus, their work operates in a very different regime (which is sometimes referred to as the "Edge of Stability" \cite{cohen2020gradient}).

\noindent\textbf{Non-convex optimization for low-rank matrix recovery in the non-overparameterized scenario:}
A variety of models in statistics, signal processing, and machine learning can be formulated as low-rank matrix recovery problems such as matrix completion \cite{candes_recht_MC,CandesMatrixComp2}, phase retrieval \cite{candes_strohmer,CESV12}, and blind deconvolution \cite{blind_deconvolution,ling2017blind}. Historically, one approach to solving such problems is via lifting techniques in convex relaxations such as nuclear norm minimization \cite{RechtFazel} which was the subject of intense study. We refer to \cite{davenport2016overview,fuchs2022proof} for an overview. 
However, since lifting increases the number of optimization variables, the nuclear norm minimization approach is computationally less efficient as non-convex approaches using a factorized gradient descent approach.
While the literature is too vast to give a complete overview of non-convex optimization for low-rank matrix recovery, in the following, we try to give an account of how we see our work positioned in this field.
For a more complete overview we refer to \cite{chen_overview}.
In recent years, numerous papers have analyzed gradient-descent based methods in low-rank matrix recovery, e.g., in matrix sensing \cite{tu2016low}, matrix completion \cite{xiaodong_matrixcompletion}, phase retrieval \cite{wirtinger_flow,truncated_wirtinger_flow,chen_implicit_regularization}, and blind deconvolution \cite{ling_blind_deconv,ling_demixing}. However, all of these results rely on spectral initialization.
That is, instead of using a random initialization one uses a carefully designed starting matrix as an initialization which is already close to the ground truth solution.
Moreover, note that many of these papers require adding a specific regularization term to enforce balancedness in the asymmetric scenario. 
An exception is \cite{ma2021balancingfree}, which proves convergence of vanilla gradient from spectral initialization (without adding any additional regularization).

Practitioners often use random initialization as it is model-agnostic.
Subsequently, several papers \cite{landscape_phaseretrieval,ge2016matrix_spurious,zhang2019sharp} have analyzed the loss landscape of non-convex formulations and show that the loss landscapes in these cases are benign in the sense that they do not admit spurious local minima and all saddle points have a direction of strict negative curvature.
In particular, those results imply that specialized solvers such as trust region methods, cubic regularization \cite{nesterov2006cubic, nocedal2006trust}, or noisy (stochastic) gradient-based methods \cite{jin2017escape,ge2015escaping,raginsky2017non,zhang2017hitting} can find the global optimum.
However, they do not explain why methods such as plain vanilla gradient descent can find the global optimum.

More recently, \cite{chen_global_convergence} shows that for phase retrieval, gradient descent using random initialization converges to the global optimum with a near-optimal amount of iterations. Moreover, in \cite{lee2022randomly} it has been shown that for rank-one matrix sensing, alternating least squares converges to the ground truth.
However, our understanding of why random initialization works so well in these settings is still very limited. Indeed, to the best of our knowledge, even in the non-overparameterized case where $k=r$, our work is the first one which shows convergence from a random initialization in the asymmetric scenario. 

\section{Numerical experiments}\label{sec:experiments}
In this section, we conduct numerical experiments to verify our theoretical results.
In our experiments we set $X \in \mathbb{R}^{n_1 \times n_2}$ to be a random matrix with $n_1 = 100$, $n_2 = 50$ and rank $r = 5$.
Specifically, we generate random matrices $X_1\in \mathbb{R}^{n_1 \times r}$ and $X_2\in \mathbb{R}^{r \times n_2}$ whose entries are drawn from the Gaussian distribution $\mathcal{N} \left(0,1\right)$ and set $X = \frac{X_1X_2}{\norm{X_1X_2}}$.
We use $m = 2000$ random Gaussian measurements.

\subsection{Variations in imbalance with different initialization scales}
In our first experiment, we want to show how the spectral norm of the imbalance term  $\norm{V_t^TV_t-W_t^TW_t}$ evolves during training
for different choices of the scale of initialization $\alpha$.
To this aim, we randomly generate some $V_{rd}$ and $W_{rd}$ using the normal distribution and run gradient descent with $V_0 = \alpha V_{rd}, W_0 = \alpha W_{rd}$ for $\alpha = \{10^{-2}, 10^{-3}, 10^{-4}, 10^{-5}\}$.
We both consider the empirical loss \eqref{equ:lossdefinition} and the population loss 
\begin{equation}
\label{testerr}
   \Loss_{\text{population}} \left(V,W\right) = \frac{1}{2}  \big\Vert X -V W^T \big\Vert_F^2.
\end{equation}
Moreover, we set the step size $\mu = \frac{1}{100\|X\|}$. 
The results are depicted in Figure \ref{fig:balancedness_scaleinit}.

\begin{figure}[t]
    \centering
    \includegraphics[scale=0.6]{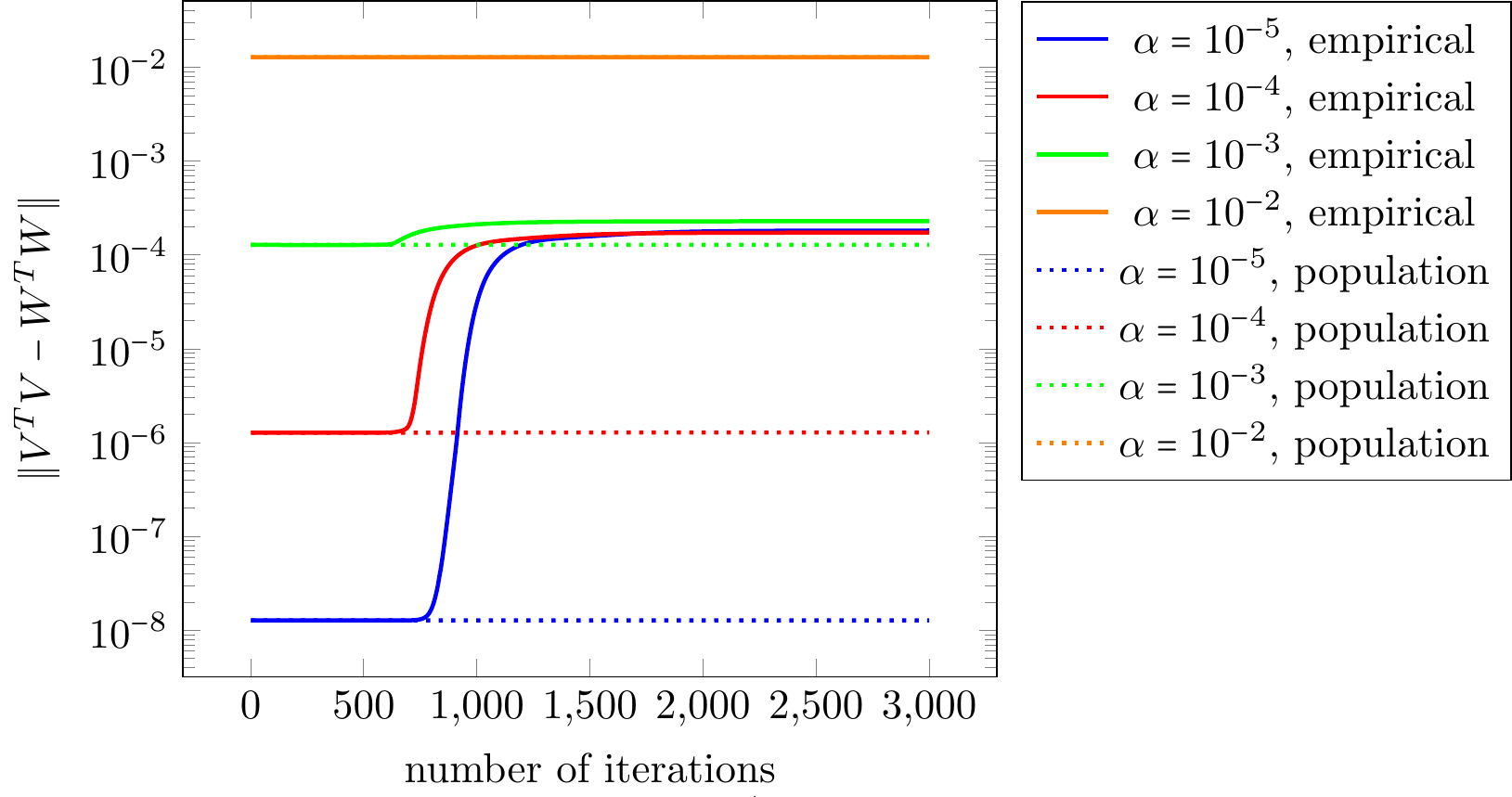}
    \caption{Evolution of the imbalance term $ \norm{V_t^T V_t - W_t^T W_t} $ with different choices of the scale of initialization $\alpha$.}
    \label{fig:balancedness_scaleinit}
\end{figure}

We observe that in the population case, the imbalance term $\norm{V_t^T V_t - W_t^T W_t} $ stays almost constant during the training.
This is in stark contrast to the empirical scenario, where we observe that the imbalance term grows until it reaches a certain threshold. We like to note that as evident in Figure \ref{fig:balancedness_scaleinit} when the scale of initialization $\alpha$ is chosen sufficiently small, this threshold does not depend on the scale of initialization.
We see that this is different for large initialization ($\alpha=10^{-2}$), but this is to be expected as in this case even in the beginning the imbalance term $\norm{V_t^T V_t - W_t^T W_t} $ is already larger than the threshold in the case of a smaller initialization. 

It is worth noting that the fact that the imbalance term evolves very differently in the population and in the empirical scenario has a huge impact on our theoretical analysis.
In contrast to \cite{ye2021global,ding2021global}, which analyze the population loss scenario, we need a much finer analysis of the imbalance term beyond controlling the spectral norm as stated earlier. For further experiments we refer to Section \ref{sec:threephase}.

\subsection{Change of test and train error during training}
\begin{figure}[t]
    \centering
    \begin{subfigure}[b]{0.45\textwidth}
    \includegraphics[scale=0.55]{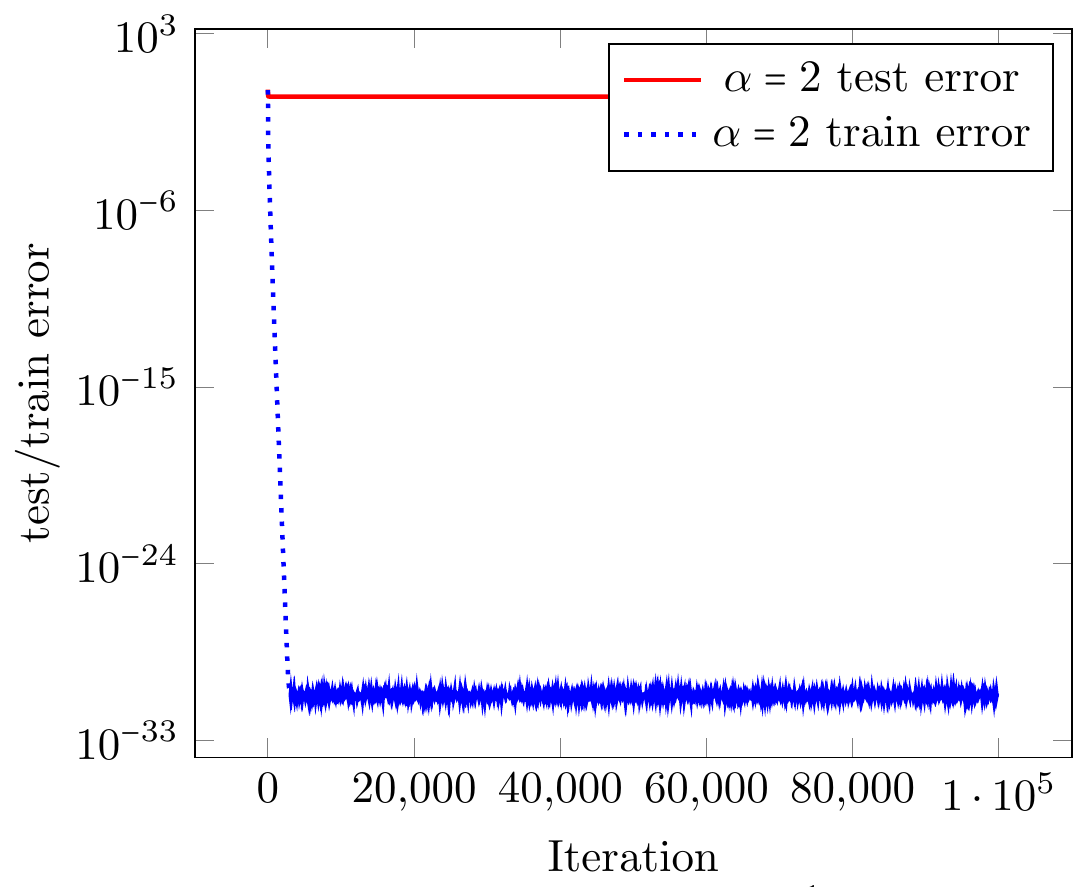}
    \caption{}
    \label{largeinit}
    \end{subfigure}
    \begin{subfigure}[b]{0.42\textwidth}
        \includegraphics[scale=0.55]{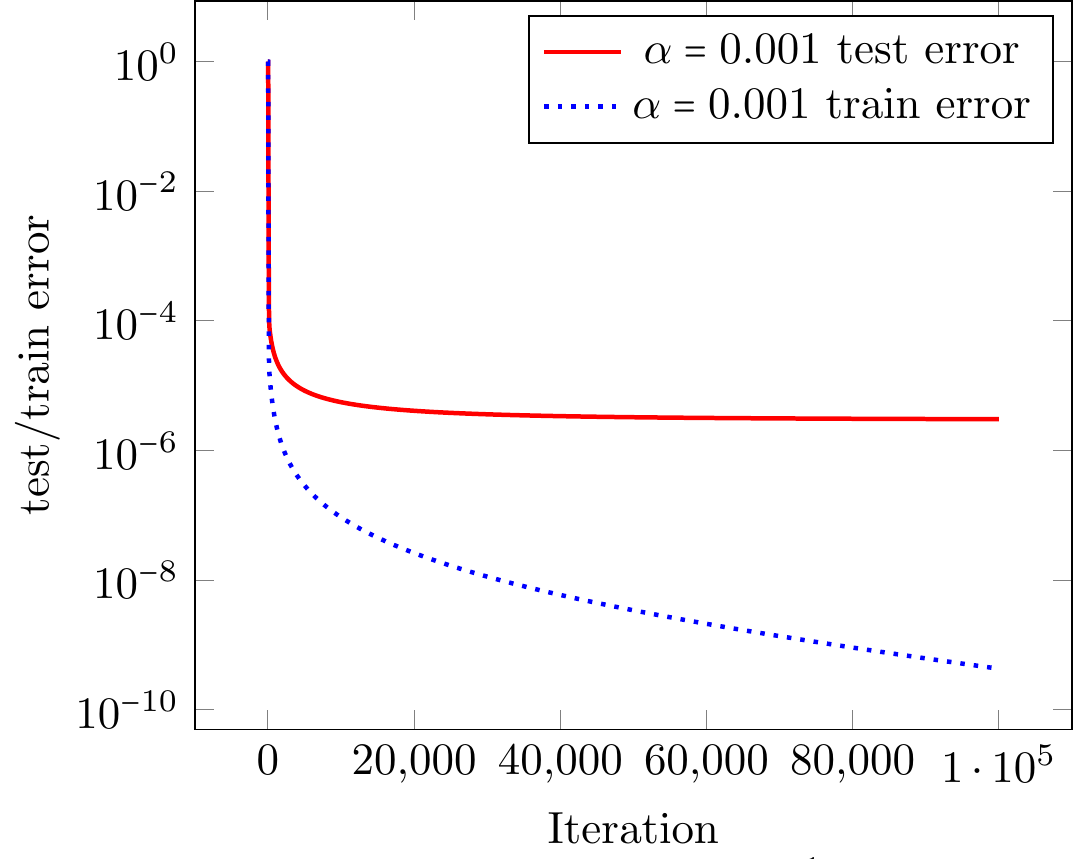}
        \caption{}
        \label{smallinit}
    \end{subfigure}
    \caption{Depiction of test error $\frac{1}{2}\|V_tW_t^T - X\|^2_F$ and train error $\mathcal{L}(V_t,W_t)$ for (a) large and (b) small $\alpha$ during training.}
    \label{fig:testtrainevolution1}
\end{figure}
In the next two experiments, we want to understand how the test error \eqref{testerr} and the train error $ \mathcal{L} \left(V_t, W_t\right) $ change during training.
For that, we fix the rank of the model to be $k=40$ and we set the step size as $ \mu = \frac{1}{4 \norm{X}} $.

In the first experiment, we compare the evolution of the train and test error for very large and for very small scale of initialization $\alpha$, see Figure \ref{fig:testtrainevolution1}.
For large initialization, depicted in Figure \ref{largeinit}, we observe that the train error converges linearly to $0$, whereas the test error stays roughly constant at a large value. This figure clearly demonstrates that in this large initialization regime the learned model does not generalize well. Choosing a large initialization corresponds to what in the literature is called \textit{lazy training} \cite{chizat2019lazy} which has been extensively studied in the context of neural networks, see e.g., \cite{oymak2020towards,du2019gradient}.
(In the context of matrix sensing with symmetric, positive definite matrices the lazy training regime has been theoretically analyzed in \cite[Theorem 4.2]{oymak2019overparameterized}.) For small initialization, depicted in Figure \ref{smallinit}, which corresponds to the regime studied in this paper, we observe that train and test error evolve very differently.
Indeed, we observe that in this regime both the train and test error decay and thus the solution found by gradient descent does generalize well.

In the next experiment, we want to understand how the relative test error $\frac{\|V_tW_t^T - X\|^2_F}{\|X\|^2_F}$ depends on the scale of initialization $\alpha$. To this aim,
 we run gradient descent until the train error is below $0.5 \times 10^{-9}$ and then we plot the test and train error for several choices of $\alpha$.
The results are depicted in Figure \ref{fig:testtrainevolution2}.
We observe that the train error depends polynomially on the scale of initialization.
This is in line with our main result, Theorem \ref{theorem:main}. Finally, let us note that the last two experiments resemble what has been observed in the symmetric matrix sensing scenario, see, e.g., \cite{stoger2021small}.

\begin{figure}[t]
    \centering
   \includegraphics[scale=0.8]{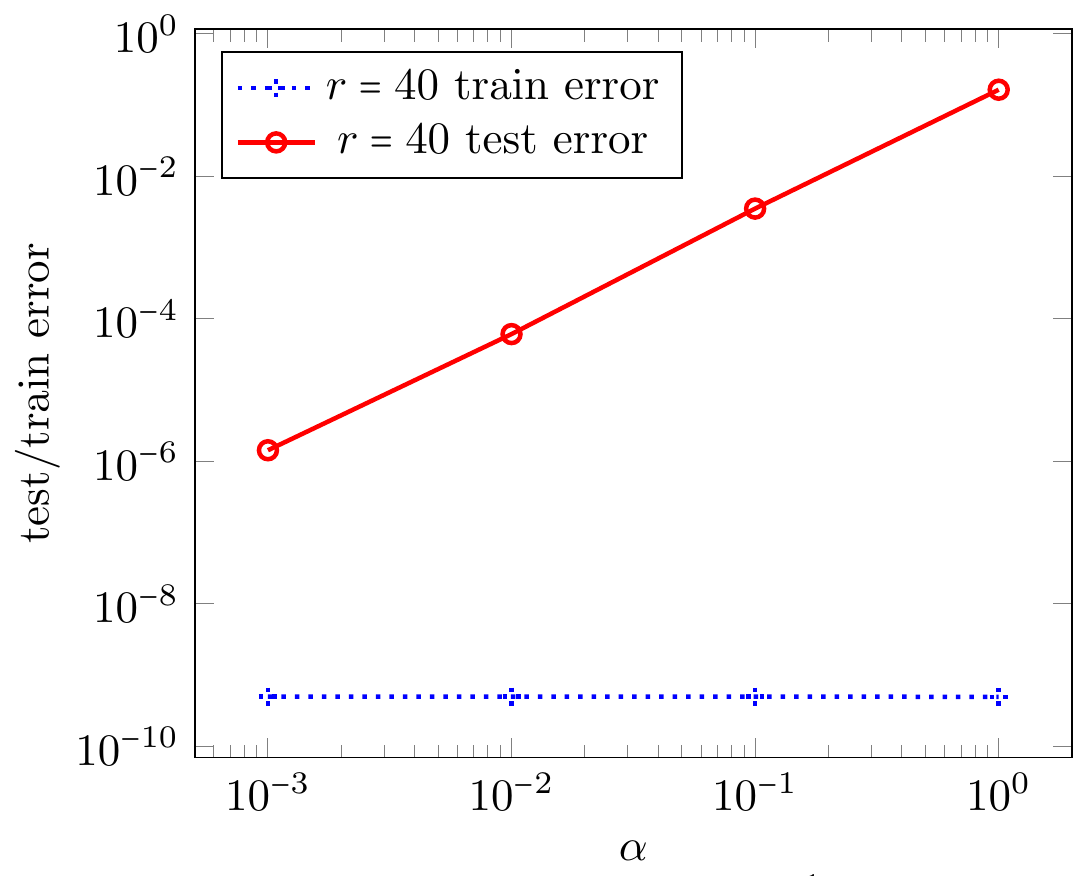}
  \caption{Relative test error $\frac{ \norm{V_tW_t^T - X}^2_F}{\norm{X}^2_F}$ at the end of training for different scales of initialization $\alpha$.}
 \label{fig:testtrainevolution2}
\end{figure}

\subsection{Impact of step size on balancedness}
In this section, we focus on understanding how different choices of step size impact the spectral norm of the imbalance term $\norm{V_t^T V_t - W_t^T W_t}$.
To this aim, we fix $k=10$ and $\alpha= 10^{-5}$ and we choose different step sizes $\mu$.
In Figure \ref{fig:balancedness_stepsize1}, we show how the evolution of $\norm{V_t^T V_t - W_t^T W_t} $ changes with different step sizes.
We observe that the spectral norm of the imbalance term, $\norm{V_t^T V_t - W_t^T W_t}$, behaves qualitatively similarly regardless of the choice of the step size.
The norm stays first constant and then grows rapidly, after which it stays roughly constant again.
This indicates that most of the growth of this spectral norm happens during the second phase when the signal is growing.
However, what changes with different step sizes is the threshold which $\norm{V_t^T V_t - W_t^T W_t} $ converges to at the end of training. 
Indeed, Figure \ref{fig:balancedness_stepsize1} indicates that larger step sizes lead to a larger threshold after training.
\begin{figure}[t]
    \centering
    \includegraphics[scale=0.7]{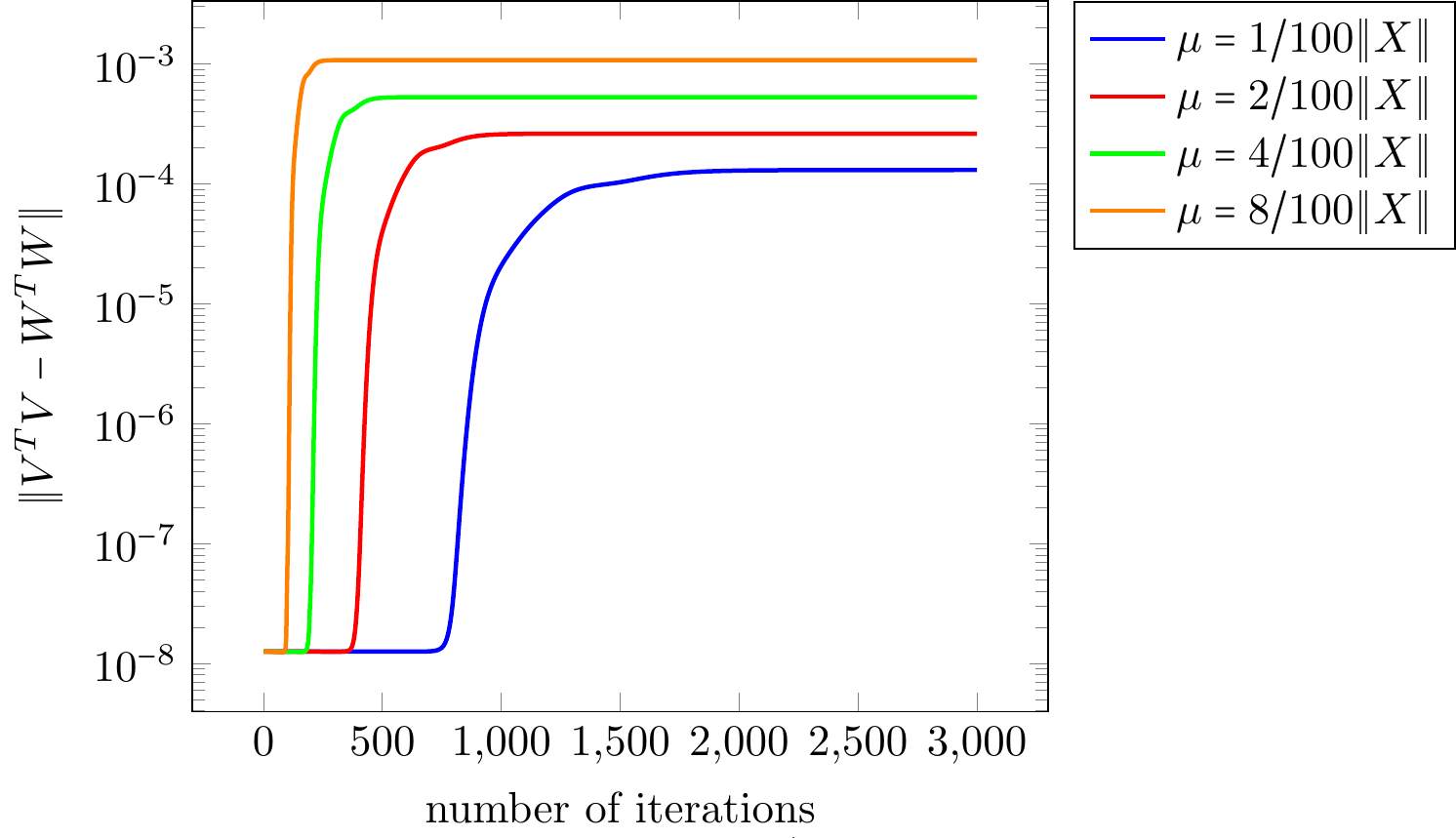}
    \caption{Evolution of the spectral norm of the imbalance term $\norm{V_t^T V_t - W_t^T W_t}$ during training with different step sizes}
     \label{fig:balancedness_stepsize1}
\end{figure}

In the next experiment, we want to examine how the value of this threshold at the end of training depends on the step size $\mu$.
For that, we repeat the experiment for $\mu \norm{X} = \{0.01, 0.02, \ldots, 0.10\}$ and show the relations between threshold and the step size.
The results are depicted in Figure \ref{fig:balancedness_stepsize2}.
We observe that at the end of training the spectral norm of the imbalance term $\norm{V_t^T V_t - W_t^T W_t} $ depends linearly on the step size $\mu$.
We note that this observation is well-aligned with our theory. In particular, we infer from Lemma \ref{lemma:balancedbase} that for each iteration $\norm{V_t^T V_t - W_t^T W_t} $ grows by an additive term which scales quadratically with the step size $\mu$.
Furthermore, Theorem \ref{theorem:main} shows that the number of iterations needed for convergence is proportional to the inverse of the step size $\mu$. Combining these two results, our theory predicts that the scaling for the threshold after convergence should be linear in the step size $\mu$.
\begin{figure}[t]
    \centering
    \includegraphics[scale=0.7]{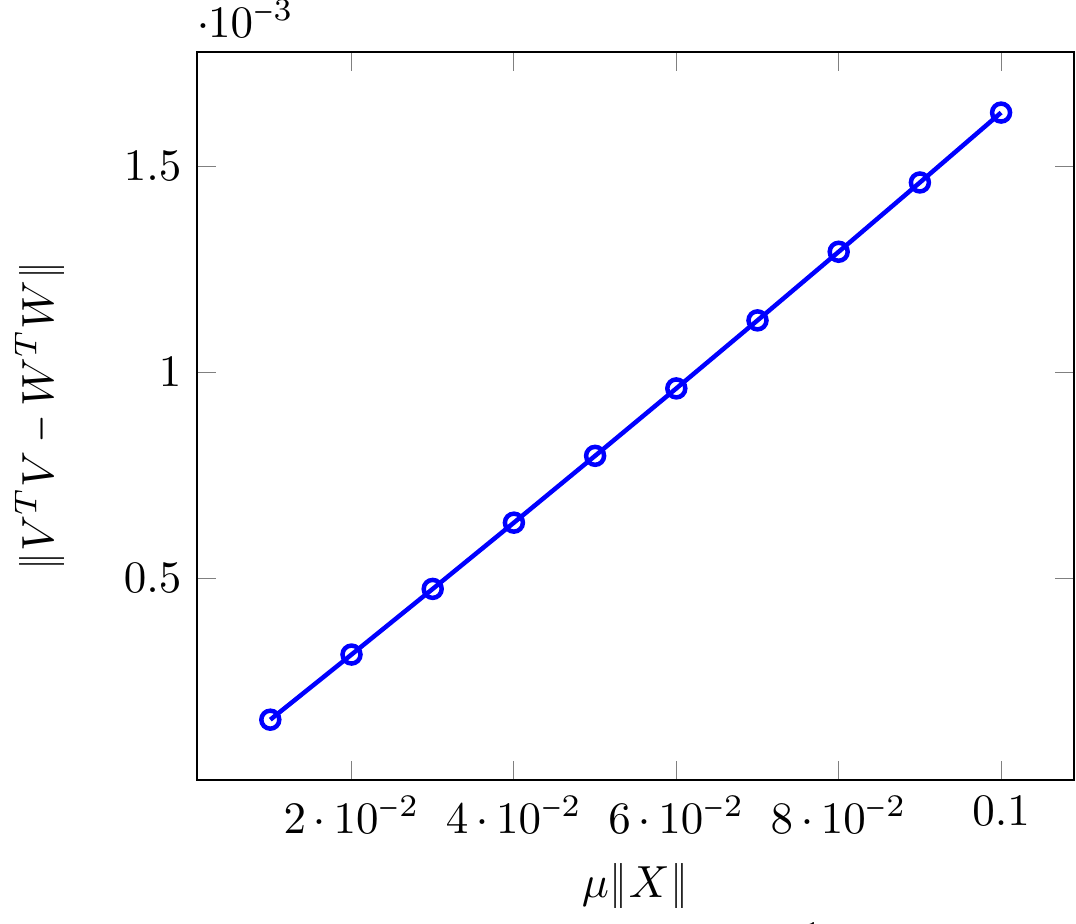}
    \caption{Spectral norm of the imbalance term $\norm{V_t^T V_t - W_t^T W_t}$ at convergence with different choices of the step size $\mu$}
    \label{fig:balancedness_stepsize2}
\end{figure}

\subsection{Evolution of the two additional couplings of the trajectory}
In this section, we focus on the behavior of  the two additional couplings of the trajectories of the factors $V_t$ and $W_t$ studied in this paper (see Section \ref{sec:threephase}). For that, we set $k=10$, the initialization scale $\alpha = 10^{-6}$ and the step size $\mu = \frac{1}{100\|X\|}$.

In the first experiment, depicted in Figure \ref{fig:BalanceVariant2}, we compare the evolution of the spectral norm of the imbalance term $\norm{V_t^T V_t - W_t^T W_t}$ and its nuisance part $\norm{(V_t^T V_t - W_t^T W_t)\Qtp} = 2\norm{\tilZt^T\Zt\Qtp}$ during training.
(The matrices $\Zt$, $\tilZt$, and $\Qtp$ will be formally defined in Section \ref{subsec:symmetrization} and Section \ref{subsec:signalnuisance}.)
We observe that the nuisance part is significantly smaller than the total imbalancedness. This phenomenon inspires us to do a tighter analysis of $\norm{\tilZt^T\Zt\Qtp}$ (see Lemma \ref{lemma:balancednessperp} for details). We note that this careful analysis is critical to our convergence analysis, allowing us to show good generalization and convergence with only a modest number of iterations.

In the next experiment, we show the evolution of the angle between the imbalance matrix and the signal direction $2\norm{\tilZtT\P{\Zt\Qt}}$ in Figure \ref{fig:BalanceAngle2}. We observe that this quantity remains small during training matching our analysis for this quantity in Lemma \ref{ref:balancednessangle}.

\begin{figure}[t]
    \centering\includegraphics[scale=0.7]{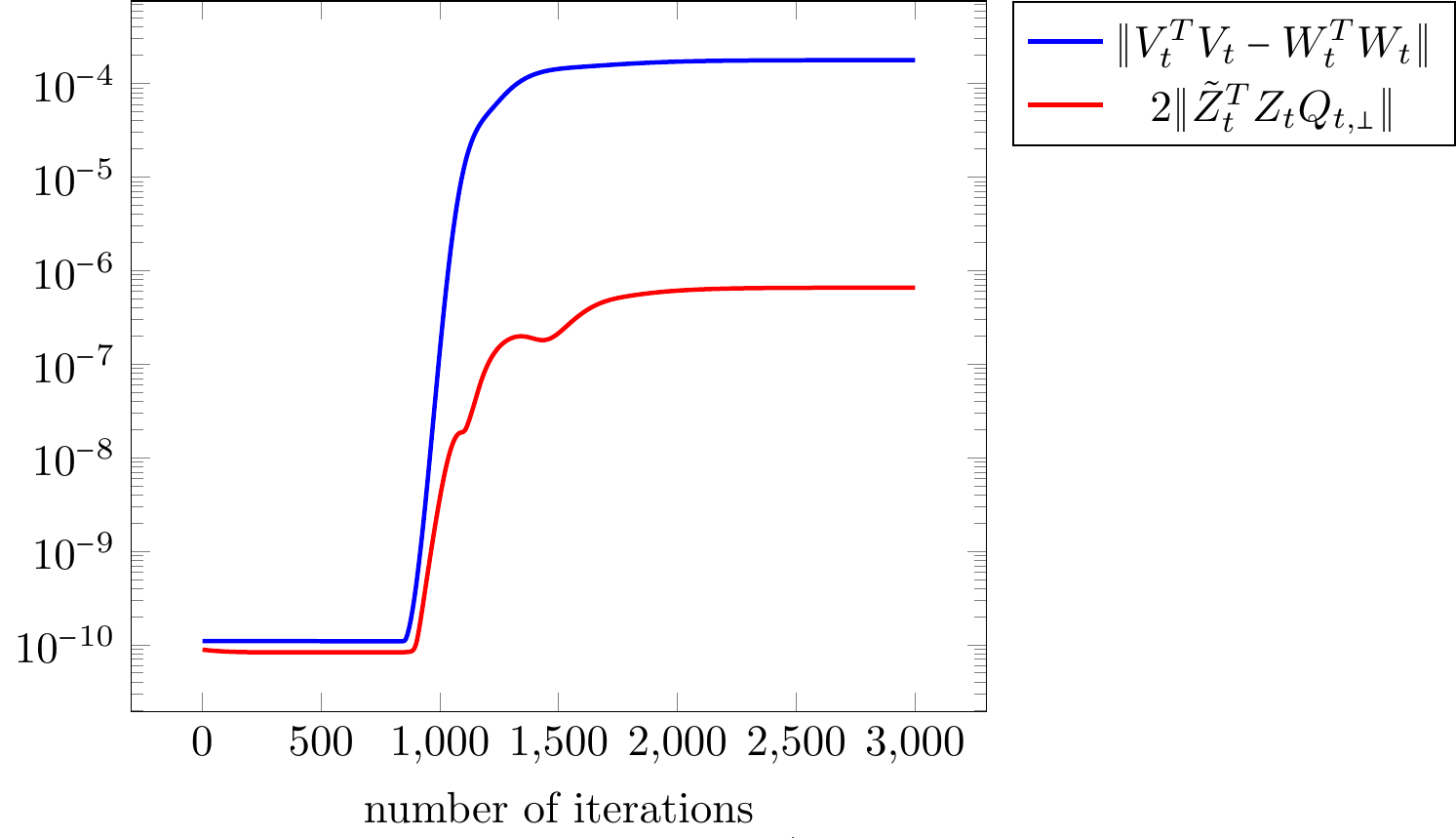}
    \caption{Evolution of the spectral norm of the imbalance term $\norm{V_t^T V_t - W_t^T W_t}$ and its nuisance part $2\|\tilde{Z}_t^T Z_t Q_{t,\bot}\|$ during training.}
    \label{fig:BalanceVariant2}
\end{figure}

\begin{figure}[t]
    \centering  \includegraphics[scale=0.7]{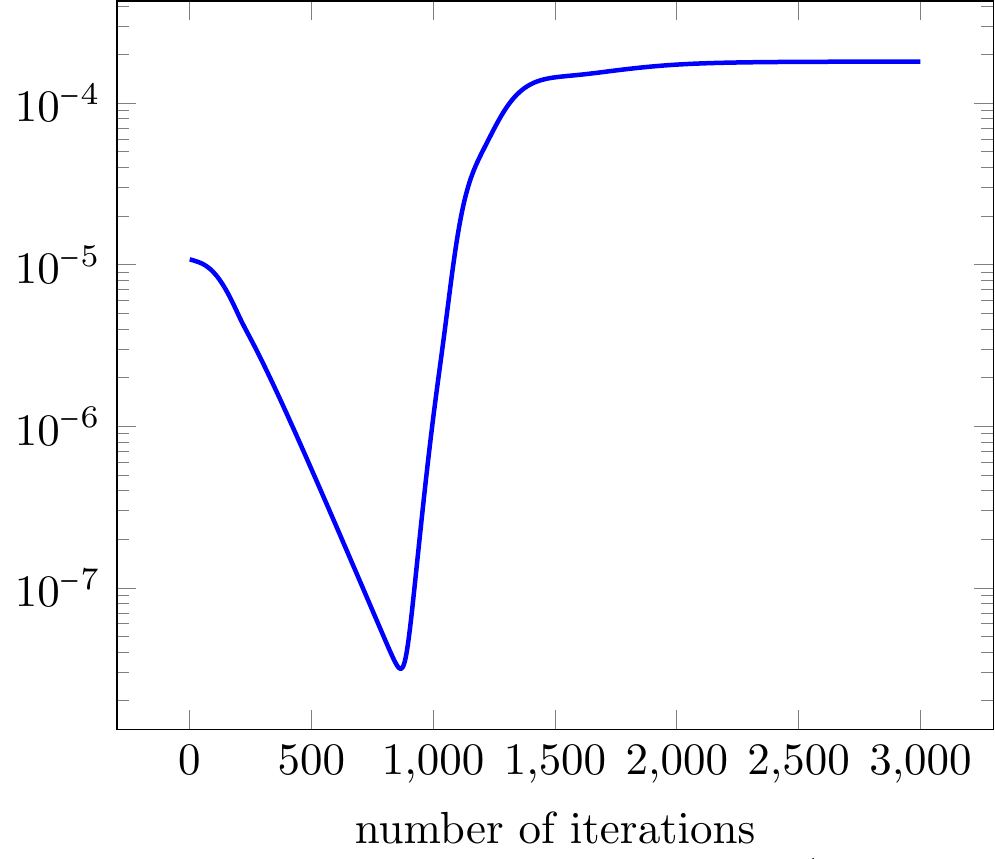}
    \caption{Evolution of the angle between the imbalance matrix and the signal direction $2\norm{\tilZtT\P{\Zt\Qt}}$ during training.}
    \label{fig:BalanceAngle2}
\end{figure}

\section{Proof of the main result}\label{sec:proofmainresult}

\subsection{Outline of the proof}\label{sec:proofoutline}

In the following, we will give an outline of our proof.
As we will, see our proof consists of the following three steps.
\begin{enumerate}
    \item \textbf{Symmetrization:}
    First, we will show how the asymmetric matrix sensing problem in equation  can be equivalently reformulated as a matrix sensing problem with symmetric matrices.
    With this reformulation, we will be able to use some of the tools developed in \cite{stoger2021small} in the next two steps.
    However, while on a first glance this reformulated problem might resemble the one in \cite{stoger2021small}, there is a key difference.
    Namely, in \cite{stoger2021small}, it was assumed that both the ground truth and the learned matrices are positive semidefinite.
    Instead, in the scenario in this paper the ground truth matrix will have both positive and negative eigenvalues and we train one positive definite and one negative definite matrix.
    The dynamics of this two matrices are coupled with each other, which will lead to important changes in the proof as we will point out below.  
    \item \textbf{Decomposition of the learned matrices into signal and nuisance part:}
    In the second step, we will discuss how to decompose both training matrices into a signal and nuisance part.
    As in \cite{stoger2021small} the idea is that in the third step we then can show the signal matrices, which have rank $r$, will converge (approximately) to the positive definite part, respectively the negative definite part, of the ground truth matrix,
    whereas the nuisance terms will stay small.
    \item \textbf{Three-Phase Analysis:}
    To analyse the dynamics of gradient descent we will utilize the three-phase-analysis introduced in \cite{stoger2021small}.
    In the first phase, the \textit{alignment/spectral phase},
    we will show how, similar to a spectral initialization, the subspaces spanned by the signal parts of the learned get gradually more aligned with the subspace spanned by the ground truth matrix.
    In the second phase, the \textit{saddle avoidance phase}, we will show how the singular values of the signal parts are growing until they reach a certain basin of attraction of the ground truth matrices.
    In the third phase, the \textit{local convergence phase}, we will prove that the signal parts of the learned matrices converge linearly to the ground truth matrix.

    As already pointed out in the description of the first step,
    the key difficulty will be to deal with the fact that the dynamics of the two learned matrices are coupled with each other. 
    In Section \ref{sec:threephase} we will describe in detail how we deal with this.
\end{enumerate}

In the following, we give a proof the main result, Theorem \ref{theorem:main}.
\subsection{Symmetrization}\label{subsec:symmetrization}

The first step in our proof is to show that the asymmetric model can be equivalently formulated as a symmetric model.
For that, we first define the symmetric measurement matrices $B_i \in \text{Sym}_{n_1+n_2}$ by
\begin{equation*}
    B_i:= \frac{1}{\sqrt{2}} \begin{pmatrix}
        0 & A_i \\
        A_i^T & 0 
    \end{pmatrix}
    \quad \text{for all } i \in \left[m\right]
\end{equation*}
and the associated linear measurement operator 
 $ \Bcal : \text{Sym}_{n_1 + n_2} \longrightarrow \R^m $ 
via
\begin{equation}\label{eq:prelimintern1}
   \left(  \Bcal \left( S \right) \right)_i := \innerproduct{B_i,S}
\end{equation}
for $ i \in \left[m\right] $ and for all symmetric matrices $S \in \text{Sym}_{n_1+n_2}$.
We define
\begin{equation*}
\symA :=  \begin{bmatrix}
     0 &  X\\
        X^T &  0
\end{bmatrix}
\end{equation*}
and 
\begin{equation}\label{equ:Ztdefinition}
\Zt := \frac{1}{\sqrt{2}}\begin{bmatrix}
 V_t\\
 W_t
\end{bmatrix}\quad \text{and}
\quad
\tilZt := \frac{1}{\sqrt{2}}\begin{bmatrix}
 V_t\\
- W_t
\end{bmatrix}
 \quad 
 \text{ for all } t.
\end{equation}
We observe via a straightforward calculation that 
\begin{equation*}
\frac{1}{\sqrt{2}} \Bcal \left(\symA\right)=  \Acal \left(X\right) 
\quad \text{ and } \quad 
\frac{1}{\sqrt{2}} \Bcal \left( \Zt \ZtT - \tilZt \tilZtT \right) = \Acal \left( V_t W_t^T \right)
\end{equation*}
The last two equalities imply that
\begin{equation*}
    \Loss \left( V_t, W_t  \right) = \frac{1}{2} \twonorm{\Acal \left(X- V_t W_t^T\right) }^2  =\frac{1}{4} \twonorm{\Bcal \left( \symA- \Zt \ZtT + \tilZt \tilZtT \right) }^2.
\end{equation*}
This motivates the definition of the following loss function
\begin{equation*}
\Losssym \left( Z, \tilde{Z} \right):=  \frac{1}{4} \twonorm{\Bcal \left( \symA- Z Z^T + \tilde{Z} \tilde{Z}^T  \right) }^2,
\end{equation*}
where $Z \in \R^{(n_1 + n_2) \times k} $ and $\tilde{Z} \in \R^{(n_1 + n_2) \times k}$.
We observe that
\begin{align*}
\nabla_Z \Losssym \left( Z, \tilde{Z} \right) &= - \left[ \left(   \Bcal^* \Bcal \right) \left( \symA- Z Z^T + \tilde{Z} \tilde{Z}^T  \right) \right] Z,\\
\nabla_{\tilde{Z}} \Losssym \left( Z, \tilde{Z} \right) &=  \left[ \left(   \Bcal^* \Bcal \right) \left( \symA- Z Z^T + \tilde{Z} \tilde{Z}^T  \right) \right] \tilde{Z}.
\end{align*}
Here, $\Bcal^*$ denotes the adjoint of the measurement operator $\Bcal$. 
It follows from a straightforward calculation that 
\begin{equation*}
\nabla_{Z} \Losssym \left( \Zt, \tilZt \right) =  \begin{pmatrix}
    \nabla_V \Loss \left( V_t,W_t \right) \\
    \nabla_W \Loss \left( V_t,W_t \right) 
\end{pmatrix}
\quad 
\text{and}
\quad
\nabla_{\tilde{Z}} \Losssym \left( \Zt, \tilZt \right)=  \begin{pmatrix}
    \nabla_V \Loss \left( V_t,W_t \right) \\
    -\nabla_W \Loss \left( V_t,W_t \right) 
\end{pmatrix}.
\end{equation*}
These two equations imply that
\begin{align}
\Zplus &= \Zt  + \mu \left[ \left(   \Bcal^* \Bcal \right) \left( \symA- \Zt \ZtT+ \tilZt\tilZtT  \right) \right] \Zt, \label{def:Ztdefinition} \\
 \tilZplus &= \tilZt - \mu \left[ \left(   \Bcal^* \Bcal \right) \left( \symA- \Zt \ZtT+ \tilZt\tilZtT  \right) \right] \tilZt.\label{def:tilZtdefinition}
\end{align}
This shows that a gradient descent iteration in the symmetrized loss function $ \mathcal{L}_{ \text{sym} } $ is equivalent to a gradient descent step in the original loss function $\mathcal{L}$.
In the following, we analyze the trajectories of $\Zt$ and $ \tilZt$ in the symmetric formulation.

Using this reformulation, we can use some of the proof techniques developed in \cite{stoger2021small} to analyze the gradient descent trajectory.
However, let us stress that there is a key difference compared to the model in \cite{stoger2021small}.
Namely, in \cite{stoger2021small}, it was assumed that the ground truth matrix $\symA$ is positive semidefinite. Thus, one can set $\tilZt=0$ and one only needs to optimize over $\Zt$.
However, in the scenario analyzed in this paper, the ground truth matrix $\symA$ has positive as well as negative eigenvalues. 
As already mentioned in the introduction, what makes the analysis in this paper now much more challenging is the fact that the trajectories of the two matrices $\Zt$ and $\tilZt$ are coupled with each other.
To deal with this additional difficulty we need a refined analysis, see below.

\subsection{Decomposition into signal and nuisance term}\label{subsec:signalnuisance}
A crucial ingredient of our analysis will be the decomposition of our matrices $\Zt$ and $\tilZt$ into a signal part, which will converge to the ground truth signal, and a nuisance part, which will stay small during the training.

In order to define this decomposition, recall first that the singular value decomposition of the ground truth matrix $X$ is given by $X=P_X \Sigma_X Q_X^T $. 
Then we define the following two matrices
\begin{equation}\label{equ:LXdefinition}
L_X := \frac{1}{\sqrt{2}}\begin{bmatrix}
 P_{ X}\\
 Q_{ X}
\end{bmatrix}, \quad \text{and}
\quad
\widetilde{L_X} := \frac{1}{\sqrt{2}}\begin{bmatrix}
 P_{ X}\\
- Q_{ X}
\end{bmatrix},
\end{equation}
whose columns are orthonormal.
This allows us to write the eigendecomposition of $\symA$ as
\begin{equation*}
\symA  = L_{ X} \Sigma_XL_{ X}^T - \widetilde{L_X} \Sigma_X\widetilde{L_X}^T.
\end{equation*}
Here, $ L_{ X} \Sigma_XL_{ X}^T$ represents the positive semidefinite part of $\symA$ and $ \widetilde{L_X} \Sigma_X\widetilde{L_X}^T$ represents the negative semidefinite part of $ \symA $.
Now consider the matrix $\LAT \Zt  \in \mathbb{R}^{r \times k}$.
Under the assumption that this matrix has full rank $r$ (which, as we will prove, holds true during training) we can denote its singular value decomposition by $ \LAT \Zt = P_t\Sigma_t Q_t^T$ with $Q_t\in \mathbb{R}^{k\times r}$.
Moreover, we denote by $\Qtp \in \R^{k\times (k-r)} $ a matrix with orthonormal columns, whose column span is orthogonal to the column span of $\Qt$.

Using these definitions, similarly as in \cite{stoger2021small}, we can decompose $Z_t$ into
\begin{equation*}
Z_t = 
\underset{ \text{signal part} }{\underbrace{Z_tQ_tQ_t^T}}
+ \underset{ \text{nuisance part} }{ \underbrace{Z_tQ_{t,\bot}Q_{t,\bot}^T}}.
\end{equation*}
Note that it follows immediately from the definition of $\Qtp$ that $ \LAT \Zt \Qtp =0 $.

Next, we decompose $\tilZt$ into its signal and nuisance part:
\begin{equation*}
    \tilZt = \underset{ \text{signal part} }{ \underbrace{ \tilZt \Qt \QtT}} +  \underset{ \text{nuisance part} }{\underbrace{\tilZt \Qtp \QtpT}} .  
\end{equation*}
Note that due to equations \eqref{equ:Ztdefinition} and \eqref{equ:LXdefinition} we have that $ \LAT \Zt = \LtilAT \tilZt=  P_t \Sigma_t Q_t^T$, i.e., the matrix $\LtilAT \tilZt$ and the matrix $\LAT\Zt$ are the same.
For this reason, we can also use the matrix $Q_t$ to define the decomposition of $\tilZt$ into its signal and nuisance part, i.e., 
\begin{equation*}
    \tilZt
    =
    \tilZt \Qt \QtT 
    +
    \tilZt \Qtp \QtpT.
\end{equation*}

The following lemma collects a few facts regarding the symmetry between $\Zt$ and $\tilZt$.
These are a direct consequence of equations \eqref{equ:Ztdefinition} and \eqref{equ:LXdefinition}, which is why we skip the short proof.
\begin{lemma}\label{lemma:symmetry}[Symmetry between $\Zt$ and $\tilZt$]
Assume that $L_X^T \Zt$ has rank $r$
and let $\Zt$ and $\tilZt$ as defined in this section.
Then the norms are equal:
\begin{align*}
    \norm{\Zt}&=\norm{\tilZt}, \\
    \norm{\Zt \Qt}&=\norm{\tilZt \Qt},\\
    \norm{\Zt \Qtp}&=\norm{\tilZt \Qtp}.
\end{align*}
Moreover, the following two identities hold:
\begin{align*}
    \LtilAPT \tilZt \Qt &=  \LAPT \Zt \Qt,\\
    \LtilAPT P_{\tilZt \Qt} &= \LAPT \PZQ,
\end{align*}
where we have set
\begin{equation*}
    \LtilAP
    :=
    \begin{bmatrix}
        \Id_{n_1} & 0 \\
        0 & -\Id_{n_2}
    \end{bmatrix}
    \cdot 
    \LAP.
\end{equation*}
Note that $\LtilAP \in \mathbb{R}^{(n_1+n_2) \times (n_1+n_2-r)} $ is a matrix with orthonormal columns, whose span is orthogonal to the span of $ \LtilA $.
\end{lemma}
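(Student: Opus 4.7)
The plan is to reduce every claim to a single orthogonal conjugation. Setting
$$
J := \begin{bmatrix} \Id_{n_1} & 0 \\ 0 & -\Id_{n_2} \end{bmatrix},
$$
one checks directly from \eqref{equ:Ztdefinition} and \eqref{equ:LXdefinition} that $\tilZt = J \Zt$ and $\widetilde{L_X} = J L_X$; by the definition given in the lemma statement we also have $\LtilAP = J \LAP$. Since $J$ is orthogonal, i.e.\ $J^T J = J J^T = \Id$, every equality in the lemma will follow by inserting $J^T J$ in an appropriate place.

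First, I would handle the three norm identities, which are immediate consequences of the orthogonal invariance of the spectral norm applied to $\tilZt = J \Zt$, $\tilZt \Qt = J \Zt \Qt$, and $\tilZt \Qtp = J \Zt \Qtp$. Next, for the first matrix identity I would compute
$$
\LtilAPT \tilZt \Qt = (\LAPT J^T)(J \Zt \Qt) = \LAPT \Zt \Qt,
$$
which uses only $J^T J = \Id$.

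For the second matrix identity I need to relate the left singular vectors of $\tilZt \Qt$ to those of $\Zt \Qt$. The assumption that $\LAT \Zt$ has rank $r$, together with the definition of $\Qt$, guarantees that $\Zt \Qt$ has full column rank $r$, and hence so does $\tilZt \Qt = J \Zt \Qt$. Starting from an SVD $\Zt \Qt = \PZQ \Sigma Q^T$, left-multiplication by $J$ yields $\tilZt \Qt = (J \PZQ)\, \Sigma\, Q^T$, and since $J$ is orthogonal, $J \PZQ$ has orthonormal columns, so this is itself a valid SVD. This lets me fix the representative $P_{\tilZt \Qt} := J \PZQ$, after which the identity follows from $\LtilAPT P_{\tilZt \Qt} = \LAPT J^T J \PZQ = \LAPT \PZQ$.

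There is no substantive technical obstacle here; the only point that requires a moment of care is the standard non-uniqueness of the SVD, which I resolve by fixing $P_{\tilZt \Qt}$ to be $J \PZQ$ so as to be consistent with $\PZQ$. This is the choice implicit in the lemma's statement, and the rank assumption on $\LAT \Zt$ is used only to ensure that the relevant SVDs have the expected rank-$r$ structure.
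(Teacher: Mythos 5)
Your proof is correct and takes essentially the same approach the paper intends: the authors omit the argument precisely because, as you show, every claim reduces to conjugation by the orthogonal matrix $J=\mathrm{diag}(\Id_{n_1},-\Id_{n_2})$ together with $\tilZt=JZ_t$, $\LtilA=JL_X$, and $\LtilAP=J\LAP$ (the same matrix appears as $D$ in equation \eqref{ineq:spectralsymmetry} of the appendix). Your remark on fixing the representative $P_{\tilZt\Qt}=J\PZQ$ to resolve the non-uniqueness of the SVD is the right point of care and is consistent with how the lemma is used.
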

To see why the decomposition of $\Zt$ into the signal and noise part is useful we note that 
\begin{align}
\Zplus &= \Zt  - \mu \left[ \left(   \Bcal^* \Bcal \right) \left(  \Zt \ZtT- \tilZt\tilZtT -\symA  \right) \right] \Zt \nonumber \\
&=\Zt  - \mu  \left( \Zt \ZtT- \tilZt\tilZtT -\symA \right)\Zt 
+ \mu \underset{=:\Deltat}{ \underbrace{ \left[ \left( \Id -  \Bcal^* \Bcal \right) \left(  \Zt \ZtT- \tilZt\tilZtT -\symA \right) \right] }} \Zt. \label{equ:decompositiongradient}
\end{align}
The expression $\Deltat$ can be interpreted as a perturbation term.
To control the spectral norm of the perturbation term, $\norm{\Deltat}$, we rely on the RIP.
However, since the RIP only applies for low-rank matrices, 
we decompose $\Deltat$ into a term involving the low-rank signal part, which we expect to have good control over, and a high-rank term involving the nuisance term as follows:
\begin{align*}
    \Deltat
    =&
    \left[ \left( \Id -  \Bcal^* \Bcal \right) \left(  \Zt \Qt \QtT \ZtT- \tilZt \Qt \QtT \tilZtT -\symA \right) \right]\\
    &+
    \left[ \left( \Id -  \Bcal^* \Bcal \right) \left(  \Zt \Qtp \QtpT \ZtT- \tilZt \Qtp \QtpT \tilZtT  \right) \right].
\end{align*}
The fact that we have much sharper control over the first term than over the second term is also reflected by the following lemma,
whose straightforward proof has been deferred to Appendix \ref{sec:proofauxiliarylemma}.
This also indicates that we need to deal with the signal part of $\Zt$ differently than with the nuisance part in our proof.
\begin{lemma}\label{lemma:RIPlemma}
    Assume that the linear measurement operator $\mathcal{A}: \R^{n_1 \times n_2} \rightarrow \R^m $ has the restricted isometry property (RIP) of order $2r+1$ with constant $\delta >0$.
    Then, for all iterations $t$, it holds that 
    \begin{equation}\label{ineq:RIPclaim1}
        \begin{split}
           & \norm{ \left(\Id - \mathcal{B}^* \mathcal{B} \right) \left( \Zt \Qt \QtT \ZtT -\tilZt \Qt \QtT \tilZtT   -\symA \right)  }\\
         \le  &\delta \sqrt{r} \norm{ \Zt \Qt \QtT \ZtT -\tilZt \Qt \QtT \tilZtT   -\symA }.     
        \end{split} 
    \end{equation}
    Moreover, for all $t$, we have that
    \begin{equation}\label{ineq:RIPclaim2}
        \begin{split}
                &\norm{ \left(\Id - \mathcal{B}^* \mathcal{B} \right) \left( \Zt \Qtp \QtpT  \ZtT -\tilZt\Qtp \QtpT \tilZtT  \right)  } \\
         \le &\left( k - r \right) \delta \norm{  \Zt \Qtp \QtpT \ZtT -\tilZt\Qtp \QtpT \tilZtT  }.
        \end{split}
    \end{equation}
    and
    \begin{equation}\label{ineq:RIPclaim3}
        \begin{split}
        \norm{ \left(\Id - \mathcal{B}^* \mathcal{B} \right) \left( \Zt  \ZtT -\tilZt \tilZtT  \right)  }  \le  \delta \nucnorm{  \Zt  \ZtT -\tilZt \tilZtT  }.
        \end{split}
    \end{equation}
\end{lemma}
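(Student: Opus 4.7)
My plan is to reduce each of the three spectral-norm estimates to a standard RIP bound for the original operator $\mathcal{A}$ on a rectangular matrix, by exploiting the fact that $\mathcal{B}$ only ``sees'' the off-diagonal block of a symmetric matrix of the form $\text{sym}(M) := \begin{pmatrix} 0 & M \\ M^T & 0 \end{pmatrix}$.

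First, I would verify two operator identities by direct calculation from the definition of $B_i$: for every $M\in\R^{n_1\times n_2}$ one has $\mathcal{B}(\text{sym}(M)) = \sqrt{2}\,\mathcal{A}(M)$, and for every $y\in\R^m$ one has $\mathcal{B}^*(y) = \tfrac{1}{\sqrt{2}}\,\text{sym}(\mathcal{A}^*(y))$. These combine into the key identity
\[
(\Id - \mathcal{B}^*\mathcal{B})(\text{sym}(M)) \;=\; \text{sym}\bigl((\Id - \mathcal{A}^*\mathcal{A})(M)\bigr),
\]
and, together with $\|\text{sym}(N)\| = \|N\|$, yield $\|(\Id - \mathcal{B}^*\mathcal{B})(\text{sym}(M))\| \le \|(\Id - \mathcal{A}^*\mathcal{A})(M)\|$. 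A block computation using \eqref{equ:Ztdefinition} then confirms that every matrix inside $\Id - \mathcal{B}^*\mathcal{B}$ in the lemma already has this form; in particular,
\[
\Zt \Qt \QtT \ZtT - \tilZt \Qt \QtT \tilZtT \;=\; \text{sym}(V_t \Qt \QtT W_t^T),
\]
and analogously with $\Qt$ replaced by $\Qtp$ or with no projection at all, while $\text{sym}(X)$ is already of this form. Each claim therefore reduces to bounding $\|(\Id - \mathcal{A}^*\mathcal{A})(M)\|$ for an explicit rectangular $M$.

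The final step is to apply standard RIP estimates to each $M$. For claim~1, $M := V_t\Qt\QtT W_t^T - X$ has rank at most $2r$; using the variational formula $\|(\Id - \mathcal{A}^*\mathcal{A})(M)\| = \sup_{\|u\|=\|v\|=1}|\langle uv^T, (\Id - \mathcal{A}^*\mathcal{A})(M)\rangle|$ together with the polarization form of the RIP of order $2r+1$ (applicable because $\text{rank}(uv^T)+\text{rank}(M)\le 2r+1$) gives $\|(\Id - \mathcal{A}^*\mathcal{A})(M)\|\le \delta\|M\|_F\le \delta\sqrt{2r}\,\|M\|$, which matches \eqref{ineq:RIPclaim1} up to an absolute constant absorbed into $\sqrt{r}$. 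For claim~2, $M := V_t\Qtp\QtpT W_t^T$ has rank at most $k-r$; writing its SVD $M=\sum_{i=1}^{k-r}\sigma_i p_iq_i^T$, applying the rank-$1$ RIP bound above to each summand, and using $\sigma_i\le\|M\|$ together with the triangle inequality gives $\|(\Id - \mathcal{A}^*\mathcal{A})(M)\|\le(k-r)\delta\|M\|$, which is exactly \eqref{ineq:RIPclaim2}. For claim~3, $M := V_tW_t^T$ is arbitrary; the same SVD decomposition, but now summing the $\delta\sigma_i$ rather than bounding each by $\delta\|M\|$, gives $\|(\Id - \mathcal{A}^*\mathcal{A})(M)\|\le\delta\|M\|_* = \tfrac{\delta}{2}\|\Zt\ZtT - \tilZt\tilZtT\|_*$, which is stronger than \eqref{ineq:RIPclaim3}.

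The argument is essentially mechanical once the symmetrization identity $(\Id - \mathcal{B}^*\mathcal{B})\circ \text{sym} = \text{sym}\circ(\Id - \mathcal{A}^*\mathcal{A})$ is in place, so I do not anticipate any real obstacle. The only care needed is tracking the $\sqrt{2}$ prefactors between $\mathcal{B}$ and $\mathcal{A}$ so that the rank of the \emph{rectangular} matrix $M$ --- rather than the rank of $\text{sym}(M)$, which is twice as large --- governs the $\sqrt{r}$ and $k-r$ factors in the final bounds.
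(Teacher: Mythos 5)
Your proposal follows essentially the same route as the paper: you establish the intertwining identity $(\Id-\mathcal{B}^*\mathcal{B})\circ\text{sym} = \text{sym}\circ(\Id-\mathcal{A}^*\mathcal{A})$ together with $\norm{\text{sym}(N)}=\norm{N}$, reduce each claim to bounding $\norm{(\Id-\mathcal{A}^*\mathcal{A})(M)}$ for the corresponding rectangular block $M$, and then apply a rank-$2r$ RIP estimate for the first claim and a rank-one SVD decomposition for the other two — exactly as the paper does. The one discrepancy is in the first claim: your polarization argument gives $\delta\norm{M}_F\le\delta\sqrt{2r}\,\norm{M}$, a factor $\sqrt{2}$ worse than the stated $\delta\sqrt{r}\norm{M}$ (the paper gets $\delta\sqrt{r}$ by citing an external lemma), so strictly speaking you do not obtain the constant as claimed; this is immaterial downstream, since $\delta$ is only ever assumed to satisfy $\delta\lesssim \kappa^{-3}r^{-1/2}$ with an unspecified small constant.
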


\subsection{Three-Phase Analysis}\label{sec:threephase}
In our convergence analysis, we need to control several quantities related to $\Zt$ and $\tilZt$.
The first three quantities, which we keep track of in our analysis, have also been studied in \cite{stoger2021small}:
\begin{align*}
    \sglmin \left(\Zt \Qt\right) \tag*{(magnitude of the signal part),}\\
    \norm{ \Zt \Qtp } \tag*{(magnitude of the nuisance part),}\\
    \norm{ \LAT \PZQ } \tag*{(angle between column spaces of signal part and ground truth).}
\end{align*}
Due to the symmetry between $\Zt$ and $\tilZt$ there is no need to control the corresponding quantities for $\tilZt$, see Lemma \ref{lemma:symmetry}.

In contrast to \cite{stoger2021small}, it does not suffice to only control these three norms to analyze the asymmetric scenario.
The reason is that the dynamics of $\Zt$ and $\tilZt$ are coupled with each other as can be seen from equations \eqref{def:Ztdefinition} and \eqref{def:tilZtdefinition}.
Indeed, rewriting equation \eqref{equ:decompositiongradient} we obtain that
\begin{align*}
    \Zplus = \Zt - \mu  \left( \Zt \ZtT -\symA \right)\Zt  + \mu \Delta \Zt+\mu\tilZt \underset{ \text{imbalance matrix} }{\underbrace{ \tilZtT \Zt}}.
\end{align*}
We observe in above equation that the coupling between the trajectories $\tilZt$ and $\Zt$ is controlled by the imbalance matrix $\tilZtT \Zt $. In particular, when the norm of the imbalance matrix $\norm{\tilZtT \Zt}$ is small, the trajectories of $\tilZt$ and $\Zt$ are only weakly coupled with each other.
For this reason, in our analysis we control the norm
\begin{align*}
    \norm{ \tilZtT \Zt } 
    \tag*{(magnitude of the imbalance term)}.
\end{align*}
A direct calculation shows that
\begin{align*}
    \tilZplusT \Zplus 
    = 
    \tilZtT \Zt 
    + 
    \mu^2 \tilZtT \left( \Bcal^* \Bcal \left( \symA -\Zt \ZtT + \tilZtT \tilZt  \right) \right)^2 \Zt
    =
    \tilZtT \Zt 
    + 
    O \left( \mu^2 \right).
\end{align*}
Note that due to the choice of our initialization, we have that $\norm{\tilde{Z}_0^T Z_0} = O \left( \alpha^2 \right) $.
Thus, be choosing the step size small enough one could achieve that $\norm{ \tilZtT \Zt }$ stays at order $ O \left(\alpha^2 \right) $ during training.
Indeed, choosing the step size small enough or analyzing gradient flow is how several related works studying gradient flow deal with this issue, see, e.g., \cite{bah2019learning}. 

However, as our experiments show, see Section \ref{sec:experiments}, when using more realistic step sizes, the quantity $\norm{ \tilZtT \Zt }$ will increase during training significantly.
For this reason, it turns out that it does not suffice to only control $\norm{ \tilZtT \Zt }$ and we need a more fine-grained analysis of the coupling between $\Zt$ and $\tilZt$.
This will be achieved via controlling the following two norms
\begin{align*}
    \norm{\tilZtT \Zt \Qtp} \tag*{(imbalance term parallel to nuisance part),}\\
    \norm{\tilZtT \PZQ} \tag*{(imbalance term parallel to signal part).}
\end{align*}
The first term can be interpreted as measuring how strong the nuisance part of the signal $\Zt$ is coupled with $\tilZt$.
The second term can be seen as a projection of $\tilZt$ onto the span of the signal part of $\Zt$.
Note that this is different from $\norm{\tilZtT \Zt \Qt}$ as the singular values of $\Zt \Qt$  are not taken into account in the expression $\norm{\tilZtT \PZQ} $.

\subsubsection{Analysis of Phase 1}
Phase 1, the \textit{Spectral Phase}, is based on the observation that, due to our small, random initialization, in the first few iterations $\Zt \ZtT $ and $\tilZt \tilZtT$ are both very small compared to the (symmetrized) signal $\symA$.  
Hence, we can approximate the gradients of the loss function by
\begin{align*}
\nabla_Z \Losssym \left( \Zt, \tilde{\Zt} \right) 
= - \left[ \left(   \Bcal^* \Bcal \right) \left( \symA- \Zt \ZtT + \tilZt \tilZtT  \right) \right] \Zt  
\approx - \left[ \left(   \Bcal^* \Bcal \right) \left( \symA  \right) \right] \Zt 
\end{align*}
and by
\begin{align*}
\nabla_{\tilde{Z}} \Losssym \left( \Zt, \tilZt \right) 
=  \left[ \left(   \Bcal^* \Bcal \right) \left( \symA- \Zt \ZtT + \tilZt \tilZtT  \right) \right] \tilZt 
\approx  \left[ \left(   \Bcal^* \Bcal \right) \left( \symA  \right) \right] \tilZt.
\end{align*}
Due to equation \eqref{def:Ztdefinition} this implies that in the first few iterations we can approximate the iterates $\Zt$ by 
\begin{align*}
    \Zt 
    \approx 
    \left( \Id + \mu \left[ \left(   \Bcal^* \Bcal \right) \left( \symA  \right) \right] \right) Z_{t-1}
    \approx \left( \Id + \mu \left[ \left(   \Bcal^* \Bcal \right) \left( \symA  \right) \right] \right)^{t} Z_{0}.
\end{align*}
Analogously, due to equation \eqref{def:tilZtdefinition} we can approximate the iterates $\tilZt$ by 
\begin{align*}
    \tilZt
    \approx 
    \left( \Id - \mu \left[ \left(   \Bcal^* \Bcal \right) \left( \symA  \right) \right] \right)^t \tilde{Z}_0.
\end{align*}
This observation allows us to prove that after the first few iterations the signal parts of $\Zt$ and $\tilZt$ are well aligned with the eigenvectors of the ground truth signal $X$.
This is made precise in Lemma \ref{lemma:spectralmain} below.
The proof of this lemma is similar to the analysis of the spectral phase in \cite{stoger2021small}, which is why its complete proof has been deferred to Appendix \ref{sec:spectralphaseanalysis}.
We also refer to \cite{stoger2021small} for a more detailed discussion of the intuition behind the spectral phase.
\begin{lemma}\label{lemma:spectralmain}
Assume that $V_0 = \alpha V \in \R^{n_1 \times k} $ and $W_0= \alpha W \in \R^{n_2 \times k}$ for some fixed parameter $\alpha >0$, where the matrices $V$ and $W$ have i.i.d. entries with distribution $ \mathcal{N} \left(0,1\right) $.
Let the iterates $\left\{ \Zt; \tilZt \right\}_{t\in \mathbb{N}} $ be defined as in equations \eqref{def:Ztdefinition} and \eqref{def:tilZtdefinition}.
Moreover, assume that 
\begin{equation}\label{spectral:alphaassumption}
    \alpha \le 
    \sqrt{\frac{\norm{X}  }{ C_1 \max \left\{  n_1 + n_2;k \right\} }} \left( \frac{c \varepsilon \left( \sqrt{k} -\sqrt{r-1} \right) }{6\kappa^2  \sqrt{\max \left\{  n_1 + n_2; k \right\}} } \right)^{34\kappa }
\end{equation}
for some $0 < \varepsilon <1 $ and for some constant $0< c \le 1/32$.
Moreover, assume that
\begin{equation*}
    \norm{X- \left( \mathcal{A}^* \mathcal{A} \right) \left(X\right) } \le \frac{c}{\kappa^2} \sglmin \left( X \right)
\end{equation*}
and that $ \mu \le \frac{\tilde{c}}{\sglmin (X)} $ for a sufficiently small absolute constant $\tilde{c}>0$.
Then, with probability at least $1 -  C_2 \exp \left( - c_1 k \right) + \left( C_3\varepsilon \right)^{k-r+1}  $, after 
\begin{equation}\label{tstarbound}
    \tstar 
    := \left\lceil \frac{\ln\left(\frac{c\sglmin \left(L_{F_1}^T Z_0\right)}{2\kappa^{2} \norm{Z_0} }\right)}{\ln \left(1 - \frac{\mu\sglmin(X)}{8} \right)} \right\rceil
    \le \frac{17 \ln\left(\frac{6\kappa^{2} \sqrt{ \max \left\{ n_1+n_2; k \right\} } }{c\varepsilon \left( \sqrt{k} - \sqrt{r-1} \right) }\right)}{ \mu \sglmin \left(X\right) }
\end{equation}
iterations, the following inequalities hold:
\begin{align}
\norm{L_{X,\bot}^{T} P_{Z_{t_\star}Q_{t_\star}}}&\le \frac{28c}{\kappa^2}, \label{spectral:final1} \\
\sigma_{\min}(Z_{t_\star}Q_{t_\star}) 
&\ge \left(\frac{2\kappa^{2}  }{c}\right)^{8\kappa} \frac{\alpha \varepsilon \left( \sqrt{k} -\sqrt{r-1} \right)}{4} ,\label{spectral:final2}\\
\norm{Z_{t_\star}Q_{t_\star,\bot}} 
&\le \min \left\{ 2 \sglmin \left( \Ztstar Q_{\tstar} \right); \  \left( \frac{6\kappa^2  \sqrt{\max \left\{  n_1+n_2; k \right\} }}{ c \varepsilon \left( \sqrt{k} - \sqrt{r -1} \right)} \right)^{ 16 \kappa} \cdot \frac{12c \alpha \sqrt{\max \left\{ n_1 + n_2; k \right\} }}{ \kappa^2 } \right\} ,\label{spectral:final3}\\
\norm{Z_{\tstar}} &\le 2 \sqrt{\norm{X}},\label{spectral:final4}\\
\norm{\tilde{Z}_{\tstar}^T Z_{\tstar}  } 
&\le  \frac{ c \norm{X}}{\kappa^4}, \label{spectral:finalbalanc1} \\ 
\norm{\tilde{Z}_{\tstar}^T Z_{\tstar} Q_{\tstar, \bot}  }
&\le  \frac{c \sqrt{\norm{X}} }{\kappa^3} \norm{\Ztstar Q_{\tstar,\bot} },\label{spectral:finalbalanc2} \\ 
\norm{\tilde{Z}_{\tstar}^T P_{ Z_{\tstar} Q_{\tstar} }  }
&\le \frac{ c \sqrt{\norm{X}} }{\kappa^3}  . \label{spectral:finalbalanc3}
\end{align}
Here, $C_1, C_2, C_3, c_1 >0$ are two absolute constants.
\end{lemma}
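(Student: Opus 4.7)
The plan is to extend the spectral-phase argument of \cite{stoger2021small} to the asymmetric, symmetrized setting defined by \eqref{def:Ztdefinition}--\eqref{def:tilZtdefinition}, while adding three balancedness estimates specific to the two-trajectory nature of the problem. For sufficiently small $\alpha$, during the first $\tstar$ iterations both $\Zt\ZtT$ and $\tilZt\tilZtT$ are dominated by $\symA$, so the updates linearize to $\Zplus \approx (\Id + \mu \Bcal^*\Bcal(\symA))\Zt$ and $\tilZplus \approx (\Id - \mu \Bcal^*\Bcal(\symA))\tilZt$. Invoking RIP via Lemma \ref{lemma:RIPlemma} to replace $\Bcal^*\Bcal(\symA)$ by $\symA$ up to a $\delta \sqrt{r}\norm{X}$ perturbation, the components of $\Zt$ in the range of $\LA$ grow geometrically while those in $\LAP$ stay nearly frozen. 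The time $\tstar$ in \eqref{tstarbound} is the first iteration at which the aligned part $\sglmin(\LAT \Zt)$ crosses a fixed fraction of $\norm{\Zt}$.

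\textbf{The bounds \eqref{spectral:final1}--\eqref{spectral:final4}.}
These are direct analogs of the spectral-phase bounds in \cite{stoger2021small}. The only structural difference is the symmetric bilinear term $\tilZt\tilZtT$ appearing in the residual, but by Lemma \ref{lemma:symmetry} we have $\norm{\tilZt}=\norm{\Zt}$, so $\tilZt\tilZtT$ contributes at the same order as $\Zt\ZtT$ and can be absorbed into the perturbation analysis. The Gaussian concentration input is unchanged: bounds on the extreme singular values of the $\mathcal{N}(0,1)$ blocks $\LAT Z_0$ of size $r\times k$ and $\LAPT Z_0$ of size $(n_1+n_2-r)\times k$. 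An induction on $t\in[0,\tstar]$ along the lines of the symmetric PSD case then yields \eqref{spectral:final1}--\eqref{spectral:final4}, with $\norm{\Zt}\le 2\sqrt{\norm{X}}$ following as a by-product of the spectral bias.

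\textbf{The scalar balancedness bound \eqref{spectral:finalbalanc1}.}
Let $G_t:=\Bcal^*\Bcal(\symA-\Zt\ZtT+\tilZt\tilZtT)$, which is symmetric. Expanding \eqref{def:Ztdefinition}--\eqref{def:tilZtdefinition} gives the cancellation identity
\begin{equation*}
\tilZplusT \Zplus \;=\; \tilZtT \Zt \;-\; \mu^2\, \tilZtT G_t^2 \Zt,
\end{equation*}
i.e.~the $O(\mu)$ cross-terms cancel because $G_t$ enters with opposite signs in $\Zplus$ and $\tilZplus$. At $t=0$, Gaussian concentration gives $\norm{\tilde Z_0^T Z_0}\lesssim \alpha^2(n_1+n_2+k)$ with the claimed failure probability. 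Within the spectral phase $\norm{G_t}\lesssim \norm{X}$ by RIP combined with the size control on $\Zt$, and the geometric growth $\norm{\Zt}\lesssim \alpha(1+\mu\norm{X})^t$ follows from the spectral bias. Telescoping the per-step increment $\mu^2\norm{G_t}^2\norm{\Zt}\norm{\tilZt}$ over $t\le \tstar$, the geometric tail is dominated by the last few iterations and yields a bound of order $\mu\norm{X}\cdot \norm{\Ztstar}^2$; the hypothesis \eqref{spectral:alphaassumption} on $\alpha$ is tuned exactly so that this falls below $c\norm{X}/\kappa^4$.

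\textbf{The projected bounds \eqref{spectral:finalbalanc2}--\eqref{spectral:finalbalanc3} and the main obstacle.}
For these sharper projected estimates, I would derive analogous recursions for $\tilZtT \Zt R$ against fixed test matrices $R$, and then specialize $R$ to $\Qtp$ (for \eqref{spectral:finalbalanc2}) and to $\Qt\Sigma_t^{-1}$ (for \eqref{spectral:finalbalanc3}, using $\PZQ=\Zt\Qt\Sigma_t^{-1}$). The improvement of a factor of $\sqrt{\norm{X}}$ over \eqref{spectral:finalbalanc1} comes from two sources: projecting onto $\Qtp$ picks out the nuisance block of $\Zt$, which is already bounded by the small quantity on the right-hand side of \eqref{spectral:final3}, while projecting onto $\PZQ$ extracts the $\sqrt{\norm{X}}$ scale inherent to the signal part rather than the $\norm{X}$ scale of $\Zt\ZtT$. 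The main obstacle is that $\Qt$ and $\PZQ$ are iteration-dependent SVD factors of $\LAT\Zt$, so a clean telescoping requires either a Davis--Kahan-type control of $\Qt-\Qplus$ through the spectral gap $\sglmin(\LAT\Zt)$ (bounded away from zero by \eqref{spectral:final2}), or first proving the recursion against arbitrary orthonormal test matrices and specializing at the end. Combining either route with the inductive estimates \eqref{spectral:final1}--\eqref{spectral:final4} and the scalar bound \eqref{spectral:finalbalanc1}, the hypothesis \eqref{spectral:alphaassumption} delivers \eqref{spectral:finalbalanc2}--\eqref{spectral:finalbalanc3}.
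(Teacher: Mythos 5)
Your plan for \eqref{spectral:final1}--\eqref{spectral:final4} matches the paper's: approximate $\Zt$ by the power-method iterates $(\Id+\mu F)^tZ_0$ with $F=\Bcal^*\Bcal(\symA)$, control the error by induction, and transfer the singular-value/angle separation from $F$ to $\Zt$. For \eqref{spectral:finalbalanc1} your telescoping identity $\tilZplusT\Zplus=\tilZtT\Zt-\mu^2\tilZtT G_t^2\Zt$ is correct and does work (it is exactly the mechanism of Lemma \ref{lemma:balancedbase}), provided you sum against the sharp growth bound $\norm{\Zt}\lesssim(1+\mu\norm{F})^t\norm{Z_0}$ rather than the crude $2\sqrt{\norm{X}}$; the paper instead works statically at $t=\tstar$, writing $\tilZtstar'^T\Ztstar'=\tilde Z_0^T(\Id-\mu^2F^2)^{\tstar}Z_0$ so that $\norm{\tilZtstar'^T\Ztstar'}\le\norm{Z_0}^2$, which yields a quantitatively sharper constant but is morally the same cancellation.

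The gap is in \eqref{spectral:finalbalanc2}--\eqref{spectral:finalbalanc3}. You propose to run recursions for $\tilZtT\Zt R$ with $R=\Qtp$ or $R=\Qt\Sigma_t^{-1}$, you correctly identify that the iteration-dependence of $\Qt$ and $\PZQ$ is the obstacle, and then you do not overcome it: you only name two possible routes (Davis--Kahan control of $Q_t-Q_{t+1}$, or recursions against arbitrary test matrices) without carrying either out. During the spectral phase, when $\sglmin(\Zt\Qt)$ is still of order $\alpha$, setting up and closing such a recursion is delicate, and as written the hardest step of your argument is missing. Moreover, no recursion is needed: both bounds follow from static estimates at $t=\tstar$ using what you have already established. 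For \eqref{spectral:finalbalanc2}, submultiplicativity gives $\norm{\tilZtstar^T\Ztstar Q_{\tstar,\bot}}\le\norm{\tilZtstar}\,\norm{\Ztstar Q_{\tstar,\bot}}$, and $\norm{\tilZtstar}=\norm{\Ztstar}\lesssim(1+\mu\norm{F})^{\tstar}\norm{Z_0}$ is, under \eqref{spectral:alphaassumption}, already below $c\sqrt{\norm{X}}/\kappa^3$. For \eqref{spectral:finalbalanc3}, decompose
\begin{equation*}
\norm{\tilZtstar^T P_{\Ztstar Q_{\tstar}}}
\le \norm{P_{\tilZtstar Q_{\tstar}}^T P_{\Ztstar Q_{\tstar}}}\,\norm{\Ztstar}+\norm{\Ztstar Q_{\tstar,\bot}},
\end{equation*}
and bound $\norm{P_{\tilZtstar Q_{\tstar}}^T P_{\Ztstar Q_{\tstar}}}\le 2\norm{\LAPT P_{\Ztstar Q_{\tstar}}}$ using the reflection symmetry $\LtilAPT P_{\tilZtstar Q_{\tstar}}=\LAPT P_{\Ztstar Q_{\tstar}}$ from Lemma \ref{lemma:symmetry}, so that \eqref{spectral:final1} finishes the job. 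Your proposal does not contain either of these observations, so as it stands it does not establish the last two claims of the lemma.
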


\subsubsection{Analysis of Phase 2}
In Phase 2, the \textit{saddle avoidance phase}, we show that $\sglmin \left( \LAT \Zt \right) $ grows exponentially until we have that $ \sglmin \left( \LAT \Zt \right) \ge  \sqrt{\frac{\sglmin (X)}{8}} $.
Moreover, we show that in Phase 2, that the spectral norm of the nuisance term, $\norm{ \Zt \Qtp}$, is growing much slower than $ \sglmin \left(  \LAT \Zt \right)$.
This is captured by the following two lemmas, whose proofs have been deferred to Appendix \ref{sec:sigmingrowth} and Appendix \ref{sec:noisetermgrowth}.
\begin{lemma}\label{lemma:sigmingrowth}
Assume that $\mu \le \frac{c}{\norm{X}\kappa}$, $\norm{\Zt} \le 2\sqrt{\norm{X}}$, $\norm{\LAPT\P{\Zt\Qt}} \le \frac{c}{\kappa}$, $ \norm{\Deltat} \le c \sglmin(X) $, and that $ \LAT \Zt \Qt$ is invertible. Then it holds that
\begin{equation}\label{lemma1result}
\sglmin(\LAT\Zplus) \ge \sglmin(\LAT\Zplus\Qt) \ge \sglmin(\LAT\Zt)\left(1 + \frac{1}{4}\mu\sglmin(X) - \mu\sglmin^2(\LAT\Zt)\right).
\end{equation}
Here, $c>0$ is an absolute constant chosen small enough.
\end{lemma}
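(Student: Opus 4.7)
The first inequality $\sglmin(\LAT\Zplus) \ge \sglmin(\LAT\Zplus\Qt)$ is immediate: since $\Qt\QtT \preceq I_k$, one has $\LAT\Zplus\Qt\QtT\Zplus^T\LA \preceq \LAT\Zplus\Zplus^T\LA$ in the L\"owner order, and taking $\lambda_{\min}$ followed by square roots gives the claim.

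For the main inequality, my plan is to derive a structured update for $\LAT\Zplus\Qt$, then lower bound its smallest singular value via the Gram matrix, and finally absorb perturbations into the $\tfrac{1}{4}$ slack. Multiplying \eqref{def:Ztdefinition} by $\LAT$ on the left and $\Qt$ on the right, using $\LAT\symA = \Sigma_X \LAT$ (which follows from $\LAT\LA = I_r$ and $\LAT\LtilA = 0$) and the identity $\LAT\Zt\Qtp = 0$ (which yields $\LAT\Zt\ZtT\Zt\Qt = M_t(\Sigma_t^2 + K_t^T K_t)$), with $M_t := \LAT\Zt\Qt = P_t\Sigma_t$ and $K_t := \LAPT\Zt\Qt$, I obtain
\begin{equation*}
    \LAT\Zplus\Qt = \underbrace{(I_r + \mu \Sigma_X - \mu M_t M_t^T) M_t}_{=:\, A_0} \;-\; \mu M_t K_t^T K_t \;+\; \mu \LAT \tilZt \tilZtT \Zt\Qt \;+\; \mu \LAT \Deltat \Zt\Qt.
\end{equation*}

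The central step is a lower bound on $\sglmin^2(A_0) = \lambda_{\min}(A_0^T A_0)$. Writing $B_t := P_t^T \Sigma_X P_t$ (symmetric PSD with eigenvalues in $[\sglmin(X), \norm{X}]$) and exploiting the diagonal identity $M_t^T M_t = \Sigma_t^2$, a direct expansion collapses into the clean formula
\begin{equation*}
    A_0^T A_0 \;=\; \Sigma_t^2(I_r - 2\mu \Sigma_t^2) \;+\; 2\mu \Sigma_t B_t \Sigma_t \;+\; \mu^2 \Sigma_t (B_t - \Sigma_t^2)^2 \Sigma_t.
\end{equation*}
The $\mu^2$-term is manifestly PSD (a congruence of the square of a symmetric matrix), so it drops harmlessly. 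The first term is diagonal with smallest entry $\sglmin^2(M_t)(1 - 2\mu \sglmin^2(M_t))$, using that $x \mapsto x(1-2\mu x)$ is increasing on the range of the $\sigma_i^2(M_t)$'s (guaranteed by $\mu \sigma_{\max}^2(M_t) \le 4\mu\norm{X} \le 4c/\kappa < 1/4$). The second satisfies $\Sigma_t B_t \Sigma_t \succeq \sglmin(X)\Sigma_t^2$, contributing at least $2\mu\sglmin(X)\sglmin^2(M_t)$. Weyl then yields $\lambda_{\min}(A_0^T A_0) \ge \sglmin^2(M_t)\bigl(1 + 2\mu\sglmin(X) - 2\mu\sglmin^2(M_t)\bigr)$, and taking square roots via $\sqrt{1+x} \ge 1 + x/2 - x^2/8$ delivers a main-term rate essentially of the form $\sglmin(M_t)\bigl(1 + \mu\sglmin(X) - \mu\sglmin^2(M_t)\bigr)$ up to negligible $O(\mu^2)$ corrections.

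Finally, I account for the three perturbation terms using Weyl's inequality for singular values. The nuisance contribution $\mu M_t K_t^T K_t$ is controlled via $\norm{K_t} \le \norm{\LAPT \PZQ}\norm{\Zt} \le 2c\sqrt{\norm{X}}/\kappa$, and the RIP piece $\mu\LAT\Deltat\Zt\Qt$ via $\norm{\Deltat} \le c\sglmin(X)$ and $\norm{\Zt} \le 2\sqrt{\norm{X}}$. The coupling term $\mu \LAT \tilZt \tilZtT \Zt\Qt$ is the most delicate: I would use the symmetry identity $\LAT\tilZt = \LtilAT\Zt$ together with $\LtilA \in \operatorname{span}(\LAP)$ (from Lemma~\ref{lemma:symmetry}) to write $\LAT\tilZt = R^T \LAPT\Zt$ for some $R$ with orthonormal columns, then decompose $\Zt = \Zt\Qt\QtT + \Zt\Qtp\QtpT$ and invoke the norm identities in Lemma~\ref{lemma:symmetry} to reduce the perturbation to a product of quantities controlled by $\norm{K_t}$ and $\norm{\Zt}$. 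Under the step-size hypothesis $\mu \le c/(\norm{X}\kappa)$ and for $c$ sufficiently small, the slack $\tfrac{3}{4}\mu\sglmin(X)\sglmin(M_t)$---the gap between the main rate $1+\mu\sglmin(X)$ and the claimed rate $1+\tfrac{1}{4}\mu\sglmin(X)$---is engineered to absorb all three perturbations, producing the stated bound. The main obstacle is precisely this last absorption: controlling the coupling term without any explicit assumption on the imbalance $\tilZtT\Zt$, which forces the argument to rely entirely on the symmetry relations in Lemma~\ref{lemma:symmetry} and the given bound on $\norm{\LAPT\PZQ}$.
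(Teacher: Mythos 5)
Your algebra for the main term is correct and in fact tidier than the paper's: the update formula $\LAT\Zplus\Qt = A_0 - \mu M_tK_t^TK_t + \mu\LAT\tilZt\tilZtT\Zt\Qt \pm \mu\LAT\Deltat\Zt\Qt$ and the identity $A_0^TA_0 = \Sigma_t^2(\I-2\mu\Sigma_t^2) + 2\mu\Sigma_tB_t\Sigma_t + \mu^2\Sigma_t(B_t-\Sigma_t^2)^2\Sigma_t$ both check out, and so does the first inequality in \eqref{lemma1result}. The fatal step is the last one. You propose to absorb the three perturbations \emph{additively}, via Weyl for singular values, into the slack $\tfrac{3}{4}\mu\sglmin(X)\sglmin(\LAT\Zt\Qt)$. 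That requires, e.g., $\norm{\LAT\Deltat\Zt\Qt}\le \tfrac14\sglmin(X)\,\sglmin(\LAT\Zt\Qt)$, whereas the hypotheses only yield the absolute bound $\norm{\LAT\Deltat\Zt\Qt}\le \norm{\Deltat}\norm{\Zt\Qt}\le 2c\,\sglmin(X)\sqrt{\norm{X}}$, with no factor of $\sglmin(\LAT\Zt\Qt)$ on the right. This lemma is invoked throughout Phase 2, where $\sglmin(\LAT\Zt)$ starts at order $\alpha\varepsilon(\sqrt{k}-\sqrt{r-1})$, i.e.\ arbitrarily small relative to $\sqrt{\norm{X}}$; the additive Weyl bound is then vacuous. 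The same objection applies to $\mu M_tK_t^TK_t$, and the coupling term is even worse: $\norm{\LAT\tilZt}=\norm{\LtilAT\Zt}$ is controlled only by $\norm{K_t}+\norm{\Zt\Qtp}$, and $\norm{\Zt\Qtp}$ is \emph{not} assumed small in this lemma, so that term is merely $O(\mu\norm{X}^{3/2})$.

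The missing idea is to make every perturbation \emph{multiplicative} in $\LAT\Zt\Qt$. Because $\LAT\Zt\Qt$ is invertible and $\norm{\LAPT\P{\Zt\Qt}}\le c/\kappa$, one has $\Zt\Qt=\P{\Zt\Qt}\left(\LAT\P{\Zt\Qt}\right)^{-1}\LAT\Zt\Qt$, hence $K_t=\LAPT\P{\Zt\Qt}\left(\LAT\P{\Zt\Qt}\right)^{-1}\LAT\Zt\Qt$ and likewise $\LAT\Deltat\Zt\Qt=\LAT\Deltat\P{\Zt\Qt}\left(\LAT\P{\Zt\Qt}\right)^{-1}\LAT\Zt\Qt$. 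Combined with the commutation identity \eqref{lemma1Xeq}, every perturbation becomes $D_i\cdot\LAT\Zt\Qt\left(\I-\mu\QtT\ZtT\LA\LAT\Zt\Qt\right)$ with $\norm{D_i}\lesssim c\,\sglmin(X)$, the whole update factors as $(\I+\mu\Sigma_X+\cdots)\,\LAT\Zt\Qt\,(\I-\mu\QtT\ZtT\LA\LAT\Zt\Qt)$, and the smallest singular value splits multiplicatively — the perturbations are then absorbed into the prefactor regardless of how small $\sglmin(\LAT\Zt\Qt)$ is. For the coupling term one must additionally split $\tilZt\tilZtT\Zt\Qt$ through $\LA\LAT+\LAP\LAPT$: the $\LA\LAT$ piece contributes the positive semidefinite term $\mu\LAT\tilZt\tilZtT\LA$ to the prefactor (harmless), and only the $\LAP\LAPT$ piece needs the factorization above, where it is controlled by $\norm{\LAPT\P{\Zt\Qt}}$ alone. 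This is precisely the route the paper takes; without it your argument does not close.
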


\begin{lemma}\label{lemma:noisetermgrowth}
Let $0 \le \varepsilon \le 1$.
Assume that $\mu \le \frac{c\varepsilon}{\norm{X}\kappa}$,  $\norm{\Zt} \le 2\sqrt{\norm{X}}$, $ \norm{\Deltat} \le c\varepsilon \sglmin (X) $, and $\norm{\LAPT\P{\Zt\Qt}} \le \frac{ c \varepsilon }{ \kappa }$.
Moreover, assume that $\LAT\Zplus\Qt$ and $\LAT\Zt\Qt$ have full rank.
Then it holds that 
\begin{equation*}
\norm{\Zplus\Qplusp} \le \left(1 - \frac{\mu}{2}\norm{\Zt\Qtp}^2 + \mu \varepsilon  \sglmin(X) \right) \norm{\Zt\Qtp} + 2\mu\sqrt{\norm{X}}\norm{\tilZtT\Zt \Qtp}.
\end{equation*}
Here, $c>0$ is an absolute constant chosen small enough.
\end{lemma}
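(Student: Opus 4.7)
The plan is to start from the update equation \eqref{equ:decompositiongradient}, multiply on the right by $\Qtp$ to expose the nuisance dynamics at iteration $t$, and only at the end transfer the resulting bound from the known subspace $\Qtp$ to the unknown $\Qplusp$. Since $\LAT \Zplus \Qplusp = 0$ by definition, we have $\Zplus \Qplusp = \LAP \LAPT \Zplus \Qplusp$. Writing any unit $v \in \mathrm{span}(\Qplusp)$ as $v = \Qt a + \Qtp b$ and using $\LAT \Zplus v = 0$ together with the assumed invertibility of $\LAT \Zplus \Qt$ yields
\[
\norm{\Zplus \Qplusp} \le \norm{\Zplus \Qtp} + \frac{\norm{\Zplus \Qt}\,\norm{\LAT \Zplus \Qtp}}{\sglmin(\LAT \Zplus \Qt)}.
\]
Because $\LAT \Zt \Qtp = 0$, the quantity $\LAT \Zplus \Qtp$ is purely of order $\mu$; combined with the lower bound on $\sglmin(\LAT \Zplus \Qt)$ provided by Lemma \ref{lemma:sigmingrowth}, the correction term is absorbable into the $\mu \varepsilon \sglmin(X)\norm{\Zt \Qtp}$ slack.

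\textbf{Expansion of $\Zplus \Qtp$.} Substituting the update into $\Zplus \Qtp$ and using the eigendecomposition $\symA = \LA \Sigma_X \LAT - \LtilA \Sigma_X \LtilAT$ together with $\LAT \Zt \Qtp = 0$ gives
\begin{equation*}
\Zplus \Qtp = \Zt \Qtp \bigl[\Id - \mu \QtpT \ZtT \Zt \Qtp\bigr] - \mu \Zt \Qt \QtT \ZtT \Zt \Qtp + \mu \symA \Zt \Qtp + \mu \tilZt \tilZtT \Zt \Qtp + \mu \Delta_t \Zt \Qtp.
\end{equation*}
An SVD computation, valid because $\mu \norm{\Zt}^2 \lesssim \varepsilon/\kappa \le 1/3$, bounds the first bracket by $(1-\mu\norm{\Zt\Qtp}^2)\norm{\Zt\Qtp}$, delivering the main contracting factor. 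For the cross-signal term, the identity $\QtT \ZtT \Zt \Qtp = (\LAPT \Zt \Qt)^T (\LAPT \Zt \Qtp)$ (a direct consequence of $\LAT \Zt \Qtp = 0$) yields $\norm{\Zt \Qt \QtT \ZtT \Zt \Qtp} \le \norm{\Zt}^2 \norm{\LAPT \PZQ}\norm{\Zt \Qtp} \le 4\norm{X}\cdot (c\varepsilon/\kappa)\cdot \norm{\Zt \Qtp} = 4c\varepsilon \sglmin(X) \norm{\Zt \Qtp}$, which fits inside the slack for $c$ small. The RIP perturbation is bounded by $\mu\norm{\Delta_t}\norm{\Zt \Qtp} \le \mu c\varepsilon \sglmin(X)\norm{\Zt \Qtp}$.

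\textbf{The coupling term.} The coupling $\mu \tilZt \tilZtT \Zt \Qtp$ is the natural source of the $2\mu\sqrt{\norm{X}}\norm{\tilZtT\Zt\Qtp}$ summand in the claimed bound. Splitting $\tilZt = \tilZt \Qt \QtT + \tilZt \Qtp \QtpT$ produces
\begin{equation*}
\tilZt \tilZtT \Zt \Qtp = \tilZt \Qt \bigl(\QtT \tilZtT \Zt \Qtp\bigr) + \tilZt \Qtp \bigl(\tilZt \Qtp\bigr)^T (\Zt \Qtp).
\end{equation*}
The signal piece has operator norm at most $\norm{\tilZt \Qt}\norm{\tilZtT \Zt \Qtp} \le 2\sqrt{\norm{X}}\norm{\tilZtT \Zt \Qtp}$, using Lemma \ref{lemma:symmetry} to equate $\norm{\tilZt \Qt} = \norm{\Zt \Qt}$; multiplying by $\mu$ produces exactly the claimed second summand. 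The nuisance piece has norm at most $\norm{\tilZt \Qtp}^2 \norm{\Zt \Qtp} = \norm{\Zt \Qtp}^3$ (again via Lemma \ref{lemma:symmetry}), which is absorbed into the $-\tfrac{\mu}{2}\norm{\Zt \Qtp}^2$ portion of the contraction coefficient.

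\textbf{Main obstacle.} The delicate term is $\mu \symA \Zt \Qtp$, which, crucially, does \emph{not} vanish in the asymmetric setting (unlike the symmetric PSD scenario of \cite{stoger2021small}, where $\symA$ is PSD and $\LAT \Zt \Qtp = 0$ kills it). Reducing via $\LAT \Zt \Qtp = 0$ and $\LtilAT \Zt = \LAT \tilZt$ gives $\symA \Zt \Qtp = -\LtilA \Sigma_X \LAT \tilZt \Qtp$, and the naive estimate $\norm{X}\norm{\Zt \Qtp}$ is too loose by a factor $\kappa/\varepsilon$. The plan is to pair this term with the signal piece of $\tilZt \tilZtT \Zt \Qtp$ and to analyze $(\tilZt \Qt \QtT \tilZtT - \LtilA \Sigma_X \LtilAT)\Zt \Qtp$ as a single object: the symmetry identities of Lemma \ref{lemma:symmetry} (in particular $\LtilAP = \mathrm{diag}(\Id_{n_1}, -\Id_{n_2})\LAP$ and $\LtilAPT \tilZt \Qt = \LAPT \Zt \Qt$), together with the signal-alignment hypothesis $\norm{\LAPT \PZQ} \le c\varepsilon/\kappa$, force the leading $\LtilA$-component of $\tilZt \Qt \QtT \tilZtT$ to cancel $\LtilA \Sigma_X \LtilAT$ up to an error that is absorbed into $\mu \varepsilon \sglmin(X)\norm{\Zt \Qtp}$ and $2\mu\sqrt{\norm{X}}\norm{\tilZtT \Zt \Qtp}$. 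This cancellation is precisely why the refined imbalance quantity $\norm{\tilZtT \Zt \Qtp}$ must be tracked in its own right, rather than only $\norm{\tilZtT \Zt}$, in the asymmetric analysis.
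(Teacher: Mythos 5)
Your proposal contains two genuine gaps, both traceable to the same missing idea: the paper works throughout with $\LAPT\Zplus\Qplusp$ (whose norm equals $\norm{\Zplus\Qplusp}$ because $\LAT\Zplus\Qplusp=0$), i.e.\ it projects onto $\LAP$ \emph{before} decomposing, and both of your problematic terms are only controllable after that projection. First, your reduction step
\[
\norm{\Zplus \Qplusp} \le \norm{\Zplus \Qtp} + \frac{\norm{\Zplus \Qt}\,\norm{\LAT \Zplus \Qtp}}{\sglmin(\LAT \Zplus \Qt)}
\]
does not yield an absorbable correction. The ratio $\norm{\Zplus\Qt}/\sglmin(\LAT\Zplus\Qt)$ is essentially the condition number of $\Zplus\Qt$, which early in Phase 2 can be exponentially large in $\kappa$ (at $t=\tstar$ one only knows $\sglmin(\Ztstar Q_{\tstar})\gtrsim \alpha\varepsilon(\sqrt{k}-\sqrt{r-1})\cdot(\kappa^2/c)^{8\kappa}$ while $\norm{\Ztstar}$ is larger by a further factor $(\cdots)^{24\kappa}$); Lemma \ref{lemma:sigmingrowth} gives growth of $\sglmin(\LAT\Zt)$, not a useful absolute lower bound. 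Even if you restore the cancellation $\Zplus\Qt(\LAT\Zplus\Qt)^{-1}=\P{\Zplus\Qt}(\LAT\P{\Zplus\Qt})^{-1}$ (norm $\le 2$), which your separated bound destroys, the correction is still of size $\gtrsim \mu\norm{X}\norm{\Zt\Qtp}=\mu\kappa\sglmin(X)\norm{\Zt\Qtp}$, exceeding the permitted slack $\mu\varepsilon\sglmin(X)\norm{\Zt\Qtp}$ by a factor $\kappa/\varepsilon$. The paper's decomposition $\LAPT\Zplus\Qplusp=-\LAPT\P{\Zplus\Qt}(\LAT\P{\Zplus\Qt})^{-1}\LAT\Zplus\Qtp\QtpT\Qplusp+\LAPT\Zplus\Qtp\QtpT\Qplusp$ attaches the additional small factor $\norm{\LAPT\P{\Zplus\Qt}}\le 24c\varepsilon/\kappa$ to the $\LAT\Zplus\Qtp$ term, which is exactly what makes it absorbable.

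Second, your treatment of the $\symA$ term is wrong in both diagnosis and remedy. The proposed cancellation of $\LtilA\Sigma_X\LtilAT$ against $\tilZt\Qt\QtT\tilZtT$ cannot happen in Phase 2: the alignment hypothesis $\norm{\LAPT\P{\Zt\Qt}}\le c\varepsilon/\kappa$ controls only angles, while $\norm{\tilZt\Qt}^2$ can be arbitrarily small compared to $\norm{X}$ for most of Phase 2, so the two matrices have completely different magnitudes and the difference $(\tilZt\Qt\QtT\tilZtT-\LtilA\Sigma_X\LtilAT)\Zt\Qtp$ is as large as $\norm{X}\norm{\Zt\Qtp}$ — the very bound you call too loose. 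The correct observation is much simpler: since $\LAT\Zt\Qtp=0$ one has $\symA\Zt\Qtp=-\LtilA\Sigma_X\LtilAT\LAP\hatK$ with $\hatK=\LAPT\Zt\Qtp$, and $\LAPT\LtilA\Sigma_X\LtilAT\LAP$ is positive semidefinite, so in the paper's expansion this term is grouped into the prefactor $\I-\mu\LAPT\tilLA\Sigma_X\tilLAT\LAP-\mu\LAPT\Zt\Qt\QtT\ZtT\LAP-\mu\LAPT\Deltat\LAP$ acting on $\hatK(\I-\mu\hatK^T\hatK)$, and its norm is at most $1+\mu c\varepsilon\sglmin(X)$. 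No cancellation against the imbalance is needed; the coupling term $\mu\LAPT\tilZt\tilZtT\Zt\Qtp$ is bounded separately by $2\mu\sqrt{\norm{X}}\norm{\tilZtT\Zt\Qtp}$, exactly as you do for the signal piece. The remaining ingredients of your proposal (the SVD contraction, the cross-signal estimate via $\QtT\ZtT\Zt\Qtp=(\LAPT\Zt\Qt)^T\LAPT\Zt\Qtp$, and the $\Deltat$ bound) are consistent with the paper.
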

We observe that in order to apply Lemma \ref{lemma:sigmingrowth} and Lemma \ref{lemma:noisetermgrowth}
we need to control several key quantities, which are described in the beginning of Section \ref{sec:threephase}.

The following lemma controls the angle between the positive eigenvectors ground truth signal $\symA$ and the signal part of $\Zt$.
The proof of Lemma \ref{lemma:anglecontrol} has been deferred to Appendix \ref{sec:anglecontrol}.
\begin{lemma}\label{lemma:anglecontrol}
Assume that $\mu \le \frac{c}{\norm{X}\kappa}$, $\norm{\LAPT\P{\Zt\Qt}} \le c\kappa^{-1}$, $ \norm{\Zt} \le 2\sqrt{\norm{X}}$, and $\norm{\Deltat} \le c \sglmin(X) $.
Moreover, assume that $\norm{\Zt\Qtp}\le \min \left\{ c \kappa^{-1/2} \sqrt{\sglmin(X)}; 2\sglmin(\Zt\Qt) \right\}$, $\norm{\tilZtT\Zt \Qtp} \le \sqrt{\norm{X}}\sglmin\left(\Zt\Qt\right)$, and that $\Zt \Qt$ has rank $r$.
Then it holds that
\begin{align*}
&\norm{\LAPT\P{\Zplus Q_{t+1} }} \le \\
&\left(1 - \frac{1}{4}\mu\sglmin \left(X\right) \right)\norm{\LAPT\P{\Zt\Qt}} 
+  2 \mu  \sqrt{\norm{X}} \norm{ \tilZtT \P{\Zt\Qt} }
+C\mu\frac{\sqrt{\norm{X}}\norm{\tilZtT\Zt \Qtp}}{\sglmin\left(\Zt\Qt\right)} + C \mu \norm{\Deltat} +  C\mu^2\norm{X}^2.
\end{align*}
Here, $c>0$ is an absolute constant chosen small enough and $C>0$ is an absolute constant chosen large enough.
\end{lemma}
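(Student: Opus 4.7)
The plan is to bound $\norm{\LAPT \P{\Zplus Q_{t+1}}}$ through the sine-of-angle inequality
\begin{equation*}
\norm{\LAPT \P{\Zplus Q_{t+1}}} \le \frac{\norm{\LAPT \Zplus Q_{t+1}}}{\sglmin(\LAT \Zplus)},
\end{equation*}
which follows from the fact that $Q_{t+1}$ is the right-singular-vector matrix of $\LAT \Zplus$, so $\sglmin(\Zplus Q_{t+1}) \ge \sglmin(\LAT \Zplus Q_{t+1}) = \sglmin(\LAT \Zplus)$. The denominator is controlled by Lemma \ref{lemma:sigmingrowth}, combined with the SVD identity $\sglmin(\LAT \Zt \Qt) = \sqrt{1-\norm{\LAPT \PZQ}^2}\,\sglmin(\Zt \Qt)$, which together yield a multiplicative growth of roughly $\sglmin(\LAT \Zplus) \ge (1+\tfrac{\mu}{4}\sglmin(X))\,\sglmin(\Zt \Qt)$ modulo an $O(\mu^2\norm{X}^2)$ correction.

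For the numerator, starting from the update $\Zplus = (\Id + \mu \symA)\Zt - \mu \Zt \ZtT \Zt + \mu \tilZt \tilZtT \Zt + \mu \Deltat \Zt$ and using the decomposition $\symA = \LA \Sigma_X \LAT - \LtilA \Sigma_X \LtilAT$ together with $\LAPT \LA = 0$, I would obtain
\begin{equation*}
\LAPT \Zplus = \LAPT \Zt - \mu \LAPT \LtilA \Sigma_X \LtilAT \Zt - \mu \LAPT \Zt \ZtT \Zt + \mu \LAPT \tilZt \tilZtT \Zt + \mu \LAPT \Deltat \Zt.
\end{equation*}
To right-multiply by $Q_{t+1}$ cleanly, the plan is to invoke a Wedin/Davis--Kahan argument at the projector level: writing $\LAT \Zplus = (\Id + \mu \Sigma_X)\LAT \Zt + \mu E_t$ with $\norm{E_t}$ controlled under the hypotheses by $O(\norm{X}^{3/2})$, the row-space perturbation gives $\norm{Q_{t+1} Q_{t+1}^T - \Qt \QtT}$ small enough to be absorbable into the $O(\mu^2\norm{X}^2)$ bucket after normalizing by $\sglmin(\LAT \Zt)$. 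This lets me effectively replace $Q_{t+1}$ by $\Qt$ in the subsequent computations up to the allowed error, and I then decompose $\Zt = \PZQ \Sigma_{\Zt\Qt} Q_{\Zt\Qt}^T \QtT + \Zt\Qtp\QtpT$ to handle each contribution.

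The dominant contraction comes from combining $\LAPT \Zt \Qt = \LAPT \PZQ \Sigma_{\Zt\Qt} Q_{\Zt\Qt}^T$ in the numerator with the growth of $\sglmin(\LAT \Zplus)$ produced by the signal part of $-\mu \LAPT \Zt \ZtT \Zt \Qt = -\mu \LAPT \PZQ \Sigma_{\Zt\Qt}^3 Q_{\Zt\Qt}^T$: these balance so that, after dividing by the denominator, the leading-order term becomes $(1 - \tfrac{\mu}{4}\sglmin(X))\norm{\LAPT \PZQ}$. The $\LtilA$-term is handled using $\norm{\LtilAT \PZQ} \le \norm{\LAPT \PZQ}$, which is immediate from $\mathrm{col}(\LtilA) \subseteq \mathrm{col}(\LAP)$ (a direct consequence of $\LAT \LtilA = 0$ from \eqref{equ:LXdefinition}), so its contribution is of order $\mu \norm{X} \norm{\LAPT \PZQ}$ and is absorbed into the contraction factor. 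The cross-trajectory term $\mu \LAPT \tilZt \tilZtT \Zt Q_{t+1}$ splits into a signal-direction piece contributing $2\mu \sqrt{\norm{X}}\,\norm{\tilZtT \PZQ}$ (using $\norm{\LAPT \tilZt} \le \norm{\tilZt} \le 2\sqrt{\norm{X}}$ together with cancellation of $\Sigma_{\Zt\Qt}$ against $\sglmin(\Zt\Qt)$ in the denominator) and a nuisance-direction piece giving $C\mu \sqrt{\norm{X}}\norm{\tilZtT \Zt \Qtp}/\sglmin(\Zt\Qt)$. The $\mu \LAPT \Deltat \Zt$ term yields $C\mu\norm{\Deltat}$, and all collected second-order-in-$\mu$ cross-terms, bounded using $\norm{\Zt}, \norm{\tilZt} \le 2\sqrt{\norm{X}}$, contribute $C\mu^2 \norm{X}^2$.

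The main technical obstacle will be the $Q_{t+1}$-versus-$\Qt$ subspace-perturbation step, since the $r\times r$ rotation ambiguity in the right singular vectors forces one to work with the projectors $Q_{t+1} Q_{t+1}^T$ and $\Qt \QtT$ rather than with the bases themselves, and one must maintain a uniform singular-value gap $\sglmin(\LAT \Zt)$ throughout the estimates. The assumed upper bounds $\norm{\Zt\Qtp}\le 2\sglmin(\Zt\Qt)$ and $\norm{\tilZtT\Zt\Qtp}\le \sqrt{\norm{X}}\sglmin(\Zt\Qt)$ are exactly what is needed to guarantee that the Wedin perturbation $\mu E_t$ does not swamp the boost from $(\Id + \mu \Sigma_X)$ acting on $\LAT \Zt$, and so the whole argument goes through.
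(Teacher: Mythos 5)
There is a genuine gap at the very first step of your plan. The bound
\begin{equation*}
\norm{\LAPT \P{\Zplus \Qplus}} \le \frac{\norm{\LAPT \Zplus \Qplus}}{\sglmin\left(\LAT \Zplus\right)}
\end{equation*}
is correct, but it is too lossy to prove the lemma: already at $\mu=0$ the right-hand side evaluates to $\norm{\LAPT \P{\Zt\Qt}\,\Sigma_{\Zt\Qt}}/\sglmin(\LAT\Zt\Qt)$, which can be as large as $\norm{\LAPT\P{\Zt\Qt}}\cdot \norm{\Zt\Qt}/\sglmin(\Zt\Qt)$ (take the top singular direction of $\LAPT\P{\Zt\Qt}$ aligned with the top singular value of $\Zt\Qt$). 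The condition number $\norm{\Zt\Qt}/\sglmin(\Zt\Qt)$ is \emph{not} bounded by any hypothesis of the lemma — during the saddle-avoidance phase the singular values of $\LAT\Zt$ grow at very different rates and $\Zt\Qt$ can be extremely ill-conditioned — so your "dominant contraction" step cannot produce the factor $(1-\tfrac{1}{4}\mu\sglmin(X))\norm{\LAPT\P{\Zt\Qt}}$: the claimed cancellation between $\LAPT\P{\Zt\Qt}\Sigma_{\Zt\Qt}$ in the numerator and the single scalar $\sglmin(\LAT\Zplus)$ in the denominator does not exist, because the numerator sees the whole spectrum of $\Sigma_{\Zt\Qt}$ while the denominator sees only its bottom. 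This is precisely why the paper's proof never divides a scaled matrix by a smallest singular value: it writes $\Zplus\Qplus = (\I-\mu M_t)(\I+K)\P{\Zt\Qt}\cdot\left(\P{\Zt\Qt}^T\Zt\Qt\QtT\Qplus\right)$ with the last factor invertible, so that $\P{\Zplus\Qplus}$ shares its column span with $H:=(\I-\mu M_t)(\I+K)\P{\Zt\Qt}$, and then computes $\norm{\LAPT H(H^TH)^{-1/2}}$ via a Taylor expansion of the inverse square root (Lemma \ref{MatrixTaylor}); the whitening $(H^TH)^{-1/2}$ removes all singular-value scalings and keeps only the angular information.

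A secondary but also real problem is your treatment of the $Q_{t+1}$-versus-$\Qt$ discrepancy. The contribution of $\LAPT\Zplus\Qtp\QtpT\Qplus$ is governed by $\norm{\QtpT\Qplus}$, which (cf.\ Lemma \ref{lemma:aux1}) is first order in $\mu$ and produces exactly the terms $C\mu\sqrt{\norm{X}}\norm{\tilZtT\Zt\Qtp}/\sglmin(\Zt\Qt)$ and $C\mu\norm{\Deltat}$ in the final bound; it is not an $O(\mu^2\norm{X}^2)$ correction, and "normalizing by $\sglmin(\LAT\Zt)$" makes the perturbation larger rather than absorbable. Your identification of where the individual error terms come from (the $\tilZt\tilZtT\Zt$ cross term, the $\Deltat$ term, the $\mu^2$ bucket) matches the paper, but without replacing the opening ratio bound by a genuinely scale-invariant computation of the projector, the leading-order contraction — which is the whole point of the lemma — is not established.
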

The next lemma, whose proof has been deferred to Appendix \ref{sec:normcontrolled}, shows the spectral norm of $\Zt$ will never be too large compared to $ \sqrt{ \norm{X} }$.
\begin{lemma}\label{lemma:normcontrolled}
Suppose that $\norm{\Zt} \le 2 \sqrt{\norm{X}}$, $ \norm{\Deltat} \le \frac{1}{100} \norm{X} $, $\norm{\Zt \Qtp}\le \frac{ \sqrt{\norm{X}}}{100}$, $\norm{\LAPT P_{\Zt \Qt}} \le \frac{1}{100} $, and $ \mu \le \frac{1}{100 \norm{X}} $.
Then it holds that
\begin{equation*}
    \norm{\Zplus} \le 2 \sqrt{\norm{X}}.
\end{equation*}
\end{lemma}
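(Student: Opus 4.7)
The plan is to decompose $\norm{\Zplus}^2 = \norm{\Zplus\ZplusT}$ using $I_k = \Qt\QtT + \Qtp\QtpT$. Since $\QtT\Qtp = 0$, the cross terms in $\Zplus(\Qt\QtT + \Qtp\QtpT)\ZplusT$ vanish and
\[
\norm{\Zplus}^2 \le \norm{\Zplus\Qt}^2 + \norm{\Zplus\Qtp}^2.
\]
I would then show the nuisance piece $\norm{\Zplus\Qtp}$ stays well below $\sqrt{\norm{X}}$, while the signal piece $\norm{\Zplus\Qt}$ is controlled by a scalar one-step contraction enabled by the alignment assumption.

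For the nuisance piece, substituting \eqref{equ:decompositiongradient} gives $\Zplus\Qtp = (I + \mu M_t)\Zt\Qtp$ with $M_t := \symA - \Zt\ZtT + \tilZt\tilZtT + \Deltat$. Since $\LAT\Zt\Qtp = 0$ by the definition of $\Qtp$, one has $\symA\Zt\Qtp = -\LtilA\Sigma_X\LtilAT\Zt\Qtp$; combined with the identity $\LtilAT\Zt = \LAT\tilZt$ (immediate from the definitions of $\LA$, $\LtilA$, $\Zt$, $\tilZt$) and $\norm{\tilZt\Qtp}=\norm{\Zt\Qtp}$ from Lemma \ref{lemma:symmetry}, this contribution is at most $\norm{X}\norm{\Zt\Qtp}$. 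A straightforward triangle inequality on the remaining pieces using $\norm{\Zt},\norm{\tilZt}\le 2\sqrt{\norm{X}}$, $\norm{\Deltat}\le\norm{X}/100$, and $\mu\norm{X}\le 1/100$ yields $\norm{\Zplus\Qtp} \le \norm{\Zt\Qtp}(1+O(\mu\norm{X})) \lesssim \sqrt{\norm{X}}/100$, contributing negligibly to $\norm{\Zplus}^2$.

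The signal piece is the real work. The naive triangle inequality $\norm{(I+\mu M_t)\Zt\Qt} \le (1+\mu\norm{M_t})\norm{\Zt\Qt}$ is inadequate because $\norm{M_t}\sim\norm{X}$ would force growth exceeding $2\sqrt{\norm{X}}$. The fix exploits cancellation between the boosting $\mu\symA\Zt\Qt$ and the contracting $-\mu\Zt\ZtT\Zt\Qt$ in the direction aligned with $\LA$. Using the SVD $\Zt\Qt = \PZQ\Lambda_t R_t^T$ together with the decomposition $\PZQ = \LA U + \LAP V$ (where $\norm{V}=\norm{\LAPT\PZQ}\le 1/100$), a direct calculation gives $\symA\PZQ = \LA\Sigma_XU - \LtilA\Sigma_X\LtilAT\LAP V$ and $\Zt\ZtT\PZQ = \PZQ\Lambda_t^2 + \Zt\Qtp\QtpT\ZtT\PZQ$. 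Substituting into $(I+\mu M_t)\PZQ\Lambda_t$, the dominant behavior mimics the scalar map $g(\sigma) := \sigma(1 + \mu(\norm{X} - \sigma^2))$ applied to the singular values of $\Lambda_t$. Under $\mu\norm{X}\le 1/100$, the critical point $\sqrt{(1+\mu\norm{X})/(3\mu)}$ lies outside $[0,2\sqrt{\norm{X}}]$, so $g$ is monotone on this interval, $g(2\sqrt{\norm{X}}) = 2\sqrt{\norm{X}}(1-3\mu\norm{X})$, and for $\sigma<\sqrt{\norm{X}}$ one has $\sigma(1+\mu\norm{X})\le\sqrt{\norm{X}}(101/100) < 2\sqrt{\norm{X}}$, leaving slack of order $\mu\norm{X}^{3/2}$ for error terms.

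The main obstacle is that several error terms compete with this slack: the $\LAP V$ misalignment; the nuisance cross-term $\Zt\Qtp\QtpT\ZtT\PZQ$; the perturbation $\mu\Deltat\PZQ\Lambda_t$; the quadratic $\mu^2$ correction from $(I+\mu M_t)$; and, most delicately, the imbalance-type term $\mu\tilZt\tilZtT\PZQ\Lambda_t$. The last is tricky because no explicit bound on $\norm{\tilZtT\PZQ}$ appears in the hypotheses; however, by the symmetry between $\Zt$ and $\tilZt$ (Lemma \ref{lemma:symmetry}), the column space of $\tilZt\Qt$ is aligned with $\LtilA$ to the same degree that $\PZQ$ is aligned with $\LA$, and since $\LA\perp\LtilA$ this forces $\norm{\tilZtT\PZQ}$ to be small enough that the contribution fits within the scalar slack. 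All other error contributions are individually $O(\mu\norm{X}^{3/2}/100)$ under the hypotheses $\norm{\LAPT\PZQ}\le 1/100$, $\norm{\Zt\Qtp}\le\sqrt{\norm{X}}/100$, $\norm{\Deltat}\le\norm{X}/100$, and $\mu\norm{X}\le 1/100$, so they collectively remain below $6\mu\norm{X}^{3/2}$, yielding $\norm{\Zplus\Qt}\le 2\sqrt{\norm{X}}$ and hence $\norm{\Zplus}\le 2\sqrt{\norm{X}}$.
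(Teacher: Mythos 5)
Your two core ingredients are the right ones and coincide with the paper's: the cancellation between the boost $+\mu\symA$ and the contraction $-\mu\Zt\ZtT$ via monotonicity of $x\mapsto x(1-\mu x^2)$ on $[0,2\sqrt{\norm{X}}]$, and the symmetry argument ($\norm{P_{\tilZt\Qt}^T\P{\Zt\Qt}}\le 2\norm{\LAPT\P{\Zt\Qt}}$, since $\LtilA$ lies in the span of $\LAP$) to control the imbalance term $\tilZt\tilZtT\Zt$ without any explicit hypothesis on it. However, your top-level combination step has a genuine gap. The inequality $\norm{\Zplus}^2\le\norm{\Zplus\Qt}^2+\norm{\Zplus\Qtp}^2$ is lossy by up to an additive $\norm{\Zplus\Qtp}^2\approx\norm{X}/10^4$, and this loss does not shrink with $\mu$, whereas the margin you win on the signal piece is only of order $\mu\norm{X}^2$. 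Concretely, the hypotheses permit $\norm{\Zt\Qt}=2\sqrt{\norm{X}}$ together with $\norm{\Zt\Qtp}=\sqrt{\norm{X}}/100$ (take the ranges of $\Zt\Qt$ and $\Zt\Qtp$ orthogonal, so that $\norm{\Zt}=\max\{\norm{\Zt\Qt},\norm{\Zt\Qtp}\}=2\sqrt{\norm{X}}$); then your bound gives $\norm{\Zplus}^2\le 4\norm{X}\bigl(1-O(\mu\norm{X})\bigr)+10^{-4}\norm{X}$, which exceeds $4\norm{X}$ whenever $\mu\norm{X}\ll 10^{-4}$. So the argument as written does not close for small step sizes.

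The fix is to avoid the $\Qt/\Qtp$ split at the top level and let the contraction be governed by $\norm{\Zt}$ itself rather than by $\norm{\Zt\Qt}$. Writing $\Zplus=(\Id-\mu\Zt\ZtT)\Zt+\mu\tilZt\tilZtT\Zt-\mu\symA\Zt+\mu\Deltat\Zt$, one has exactly $\norm{(\Id-\mu\Zt\ZtT)\Zt}=(1-\mu\norm{\Zt}^2)\norm{\Zt}$ by the SVD and the monotonicity you already invoked; your symmetry argument gives $\norm{\tilZt\tilZtT\Zt}\le\frac{1}{5}\norm{X}\norm{\Zt}$, and the remaining terms contribute at most $\mu(\norm{X}+\norm{\Deltat})\norm{\Zt}$, so $\norm{\Zplus}\le(1-\mu\norm{\Zt}^2+2\mu\norm{X})\norm{\Zt}$. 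A two-case analysis on whether $\norm{\Zt}<\tfrac32\sqrt{\norm{X}}$ (then the multiplicative factor $1+2\mu\norm{X}$ is harmless) or $\tfrac32\sqrt{\norm{X}}\le\norm{\Zt}\le 2\sqrt{\norm{X}}$ (then $\mu\norm{\Zt}^2\ge\tfrac94\mu\norm{X}>2\mu\norm{X}$, so the norm does not increase) finishes the proof; this is the route the paper takes.
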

The next three lemmas allow us to control the quantities related to the imbalance term, $ \norm{\tilZtT \Zt} $, $ \norm{ \tilZt \Zt \Qtp } $, and $ \norm{ \tilZt \PZQ } $.
In particular, using these lemmas we can prove that the trajectories of $\tilZt$ and $\Zt$ are sufficiently decoupled during training. The first lemma controls the growth of $\norm{\tilZtT \Zt} $.
\begin{lemma}\label{lemma:balancedbase}
Assume that $ \norm{\Zt} \le 2 \sqrt{\norm{X}} $
and $\norm{\Deltat} \le  \norm{X} $.
Then it holds that
\begin{equation*}
    \norm{\tilZplusT \Zplus} \le \norm{\tilZtT \Zt} + 400 \mu^2 \norm{X}^3.
\end{equation*}
\end{lemma}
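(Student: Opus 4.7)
The plan is to exploit the symmetric structure of the two updates. Let $M_t := \mathcal{B}^*\mathcal{B}\left(\symA - \Zt \ZtT + \tilZt \tilZtT\right)$, which is a symmetric matrix. The updates \eqref{def:Ztdefinition} and \eqref{def:tilZtdefinition} can then be written compactly as
\begin{equation*}
    \Zplus = (I + \mu M_t) \Zt, \qquad \tilZplus = (I - \mu M_t) \tilZt,
\end{equation*}
i.e., the two factors are propagated by operators that differ only by a sign in front of the step. This sign difference is the mechanism that keeps the imbalance matrix almost constant.

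Multiplying out and using symmetry of $M_t$, I would compute
\begin{equation*}
    \tilZplusT \Zplus = \tilZtT (I - \mu M_t)(I + \mu M_t) \Zt = \tilZtT \Zt - \mu^2 \tilZtT M_t^2 \Zt,
\end{equation*}
so the $O(\mu)$ cross terms cancel and only the $O(\mu^2)$ contribution survives. Taking spectral norms and using submultiplicativity gives
\begin{equation*}
    \norm{\tilZplusT \Zplus} \le \norm{\tilZtT \Zt} + \mu^2 \, \norm{\tilZt} \, \norm{M_t}^2 \, \norm{\Zt}.
\end{equation*}

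Next, I would bound each factor. From the very definition \eqref{equ:Ztdefinition} one has $\ZtT \Zt = \tfrac{1}{2}(V_t^T V_t + W_t^T W_t) = \tilZtT \tilZt$, hence $\norm{\tilZt} = \norm{\Zt} \le 2 \sqrt{\norm{X}}$ by hypothesis. To bound $M_t$, I would use the identity
\begin{equation*}
    M_t = \left(\symA - \Zt \ZtT + \tilZt \tilZtT\right) + \Deltat,
\end{equation*}
which follows directly from the definition of $\Deltat$ in \eqref{equ:decompositiongradient}. The triangle inequality then gives
\begin{equation*}
    \norm{M_t} \le \norm{\symA} + \norm{\Zt}^2 + \norm{\tilZt}^2 + \norm{\Deltat} \le \norm{X} + 4\norm{X} + 4\norm{X} + \norm{X} = 10\norm{X}.
\end{equation*}

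Plugging these bounds into the earlier inequality yields
\begin{equation*}
    \mu^2 \norm{\tilZt} \norm{M_t}^2 \norm{\Zt} \le \mu^2 \cdot 2\sqrt{\norm{X}} \cdot (10 \norm{X})^2 \cdot 2 \sqrt{\norm{X}} = 400 \mu^2 \norm{X}^3,
\end{equation*}
which is exactly the claimed bound. There is no real obstacle here; the lemma is essentially the bookkeeping statement recording the cancellation phenomenon outlined in Section \ref{sec:threephase}. The only subtlety worth emphasizing in the writeup is that the cancellation of the $O(\mu)$ terms relies crucially on $M_t$ being symmetric together with the opposite signs in the $\Zt$ and $\tilZt$ updates, and this is why the imbalance matrix drifts only at the slow rate $O(\mu^2)$ per iteration.
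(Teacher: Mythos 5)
Your proof is correct and follows essentially the same route as the paper's: write both updates as $(\Id \mp \mu M_t)$ applied to the factors, observe that the first-order terms cancel so that $\tilZplusT \Zplus = \tilZtT \Zt - \mu^2 \tilZtT M_t^2 \Zt$, and then bound $\norm{M_t}\le 10\norm{X}$ and $\norm{\Zt}=\norm{\tilZt}\le 2\sqrt{\norm{X}}$ to get the additive $400\mu^2\norm{X}^3$ term. The only (immaterial) difference is your sign convention for $M_t$ and that you phrase it directly as $\Bcal^*\Bcal(\cdot)$ rather than as the population term plus $\Deltat$.
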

The proof of Lemma \ref{lemma:balancedbase} has been deferred to Appendix \ref{sec:balancedbase}.
Note that in particular, Lemma \ref{lemma:balancedbase} shows that the growth is quadratic in the step size $\mu$.

The next lemma controls the growth of the balancedness term parallel to the nuisance part, $\tilZtT \Zt \Qtp $.
Note that it shows that its growth is upper bound by the size of the nuisance part $\norm{\Zt \Qtp}$.
Using this lemma we can show that  $\tilZtT \Zt \Qtp $ stays small relative to $\sqrt{\norm{X}} \norm{\Zt \Qtp} $.
\begin{lemma}\label{lemma:balancednessperp}
Assume that $\LAT \Zplus \Qt $ has full rank.
Moreover, assume that $\max \left\{ \norm{ \Zt }; \norm{\Zplus } \right\} \le 2 \sqrt{\norm{X}}$, $ \norm{\LAPT \PZQ} \le c $,  $\mu \le \frac{c}{\norm{X} \kappa}$, $\norm{\Zt} \le  2\sqrt{\norm{X}} $, and $ \norm{\Deltat} \le c \sglmin (X) $, where $c>0$ is an absolute constant chosen small enough.
Set 
\begin{equation*}
\beta := \norm{\LAPT \PZQ}  \norm{X}  + \norm{ \Zt \Qtp  }^2 + \norm{\Deltat}.
\end{equation*}
Then it holds that
 \begin{align*}
    &\norm{\tilZplus^T \Zplus \Qplusp} \le  \norm{ \tilZtT \Zt\Qtp } \\
    &+ C\mu \left(   \left( \norm{\LAPT\P{\Zt\Qt}} + \mu\norm{X} \right)\beta  +  \mu \norm{X}^2 \right)  \sqrt{\norm{ X }} \norm{ \Zt\Qtp }+ 8 \mu \beta \norm{\Zt \Qtp}^2,
\end{align*} 
where $C>0$ is an absolute constant chosen large enough.
\end{lemma}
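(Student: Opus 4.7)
My plan starts from the one-step identity $\tilZplusT \Zplus = \tilZtT \Zt - \mu^2 \tilZtT N_t^2 \Zt$, obtained by writing $\Zplus = (\Id+\mu N_t)\Zt$ and $\tilZplus = (\Id-\mu N_t)\tilZt$ with the common symmetric matrix $N_t := \symA + \tilZt\tilZtT - \Zt\ZtT + \Deltat$; the linear-in-$\mu$ cross-terms cancel because the same $N_t$ appears in both factors. Right-multiplying by $\Qplusp$ separates the analysis into a ``main'' piece $\tilZtT \Zt \Qplusp$ and an $O(\mu^2)$ correction $-\mu^2 \tilZtT N_t^2 \Zt \Qplusp$. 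The correction will be bounded bluntly by $\mu^2 \norm{\tilZt}\norm{N_t}^2 \norm{\Zt \Qplusp} \lesssim \mu^2 \norm{X}^{5/2} \norm{\Zt\Qtp}$ using $\norm{N_t}\lesssim \norm X$, $\norm{\tilZt} \le 2\sqrt{\norm X}$, and the auxiliary bound $\norm{\Zt \Qplusp} \le 2\norm{\Zt \Qtp}$ derived later; this matches the $\mu^2\norm X^2 \sqrt{\norm X}\norm{\Zt\Qtp}$ term in the statement.

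For the main piece I insert $\Id_k = \Qt \QtT + \Qtp \QtpT$ immediately before $\Qplusp$, splitting it into a signal-aligned summand $\tilZtT \Zt \Qt \QtT \Qplusp$ and a nuisance-aligned summand $\tilZtT \Zt \Qtp \QtpT \Qplusp$. The latter is dominated in norm by $\norm{\tilZtT \Zt \Qtp}$, which is exactly the leading term of the target. For the former, the identity $\LAT \Zplus \Qplusp = 0$ combined with $\LAT \Zt = \Pt \Sigma_t \QtT$ and the update rule gives $\QtT \Qplusp = -\mu\, \Sigma_t^{-1} \PtT \LAT N_t \Zt \Qplusp$, so the whole signal-aligned summand acquires an extra factor of $\mu$. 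Applying the SVD identity $\Zt \Qt \Sigma_t^{-1} \PtT = \PZQ (\LAT \PZQ)^{-1}$ then rewrites it cleanly as $-\mu\, \tilZtT \PZQ (\LAT \PZQ)^{-1} \LAT N_t \Zt \Qplusp$, where $(\LAT \PZQ)^{-1} = O(1)$ under the assumption $\norm{\LAPT \PZQ} \le c$.

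To produce the $\beta = \norm{\LAPT \PZQ}\norm{X} + \norm{\Zt\Qtp}^2 + \norm{\Deltat}$ structure I expand $\LAT N_t \Zt \Qplusp$ using $N_t = \Deltat + \symA + \tilZt\tilZtT - \Zt\ZtT$ and $\LAT \symA = \Sigma_X \LAT$. The $\Deltat$ piece contributes a $\norm{\Deltat}$ factor; the $\symA$ piece reduces, after re-applying $\LAT \Zplus \Qplusp = 0$, to a higher-order $\mu \norm X$ correction; and the quartic pieces $\tilZt\tilZtT \Zt$ and $\Zt\ZtT \Zt$ deliver the $\norm{\Zt\Qtp}^2$ and $\norm{\LAPT \PZQ}\norm X$ contributions after splitting $\ZtT \Zt \Qplusp$ along $\Qt \QtT + \Qtp \QtpT$ and pushing the outer $\LAT$ through $\PZQ = \LA \LAT \PZQ + \LAP \LAPT \PZQ$. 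To close the self-referential appearance of $\norm{\Zt \Qplusp}$, I substitute the expression for $\QtT \Qplusp$ back into $\Zt \Qplusp = \Zt \Qt \QtT \Qplusp + \Zt \Qtp \QtpT \Qplusp$, obtaining $(\Id + \mu K N_t)\Zt \Qplusp = \Zt \Qtp \QtpT \Qplusp$ with $K := \PZQ (\LAT \PZQ)^{-1} \LAT$; a Neumann series argument (using $\mu \norm{K}\norm{N_t}\lesssim \mu \norm X \ll 1$) then yields $\norm{\Zt \Qplusp} \le 2\norm{\Zt \Qtp}$.

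The hardest step will be the careful bookkeeping needed to ensure that the prefactor in front of $\beta$ is $\norm{\LAPT \PZQ} + \mu \norm X$ rather than a generic constant. This requires using the decomposition $\PZQ (\LAT \PZQ)^{-1} = \LA + \LAP \LAPT \PZQ (\LAT \PZQ)^{-1}$ to separate the ``in-signal'' part $\LA$ from a residual of norm $O(\norm{\LAPT \PZQ})$; the $\LA$ part, when multiplied against terms of the form $\LAT \tilZt\tilZtT \Zt\Qtp$, must be handled using the identity $\LtilAT \Zt = \LAT \tilZt$ together with the fact that the column span of $\LtilA$ lies inside the column span of $\LAP$, so that the final bound avoids picking up extra $\sqrt{\norm X}$ factors beyond those appearing in the statement.
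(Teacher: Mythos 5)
Your proposal is correct, and it reaches the stated bound by a genuinely different decomposition than the paper's. The paper writes $\tilZplus^T \Zplus \Qplusp = \tilZplus^T \LAP\LAPT \Zplus \Qplusp$ and reuses the representation of $\LAPT\Zplus\Qplusp$ from the proof of Lemma \ref{lemma:noisetermgrowth}, splitting in the \emph{updated} frame into a piece through $\P{\Zplus\Qt}$ (which supplies the prefactor via $\norm{\LAPT\P{\Zplus\Qt}}\le 2\norm{\LAPT\PZQ}+20\mu\norm{X}$) and the piece $\tilZplus^T\LAP\LAPT\Zplus\Qtp\QtpT\Qplusp$, which is then expanded through the one-step updates. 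You instead exploit the exact cancellation $\tilZplusT\Zplus=\tilZtT\Zt-\mu^2\tilZtT N_t^2\Zt$ up front (the identity underlying Lemma \ref{lemma:balancedbase}), so that only the frame rotation $\Qtp\to\Qplusp$ contributes at order $\mu$; you then control that rotation in the \emph{old} frame via the explicit formula for $\QtT\Qplusp$ and the fixed-point bound $\norm{\Zt\Qplusp}\le 2\norm{\Zt\Qtp}$, and recover the prefactor from the splitting $\PZQ(\LAT\PZQ)^{-1}=\LA+\LAP\LAPT\PZQ(\LAT\PZQ)^{-1}$ together with the symmetry fact $\norm{\LAT\tilZt}=\norm{\LtilAT\Zt}\le\norm{\LAPT\PZQ}\norm{\Zt}+\norm{\Zt\Qtp}$ — the same trick the paper uses inside its term $(II)$. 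Both routes ultimately rest on the identical central estimate $\norm{\LAT(\Zt\ZtT-\tilZt\tilZtT+\Deltat)\LAP}\lesssim\beta$, and your correction terms land in the same buckets ($\mu^2\norm{X}^{5/2}\norm{\Zt\Qtp}$ and $\mu\beta\norm{\Zt\Qtp}^2$). Your version makes the $O(\mu^2)$ nature of the change in the full imbalance matrix manifest from the start, which is conceptually cleaner; the cost is the extra bookkeeping for $\QtT\Qplusp$ and the Neumann argument, whereas the paper never needs those. Two minor cautions: your route produces the last term as $C\mu\beta\norm{\Zt\Qtp}^2$ with a possibly larger absolute constant than the ``$8$'' hard-coded in the statement (the paper's own displayed proof conclusion also has a generic $C$ there, and downstream uses absorb it), and your use of $\Sigma_t^{-1}$ implicitly requires $\LAT\Zt$ to have rank $r$, which is part of the standing setup for the definition of $\Qt$ rather than an explicitly listed hypothesis of this lemma.
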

The proof of Lemma \ref{lemma:balancednessperp} has been deferred to Appendix \ref{sec:balancednessperp}.
Next, Lemma \ref{ref:balancednessangle}, whose proof has been deferred to Appendix \ref{sec:balancednessangle}, allows us to prove that $\norm{ \tilZplusT \PZQ } $ stays bounded.
\begin{lemma}\label{ref:balancednessangle}
    Assume that $ \norm{\Zt} \le 2 \sqrt{\norm{X}} $, $ \norm{ \Zt \Qtp} \le \min \left\{  2 \sglmin \left( \Zt \Qt \right); c \sqrt{\sglmin (X)}  \right\} $, $\norm{\Deltat} \le c \sglmin \left( X\right)  $,  $ \mu \le \frac{c}{\norm{X} \kappa}  $, $\norm{\tilZtT\Zt \Qtp} \le \frac{c}{\kappa} \sglmin \left(\Zt \Qt \right) \sqrt{\norm{X}}$, and $\norm{\LAPT \PZQ } \le \frac{c}{\kappa} $, where $c>0$ is an absolute constant chosen sufficiently small.
    Then it holds that
    \begin{equation*}
        \begin{split}
        &\norm{ \tilZplus^T P_{\Zplus \Qplus}  } \\
        \le & \left( 1-  \frac{\mu}{4} \sglmin \left( X \right) \right) \norm{\tilZt^T \PZQ } 
        +4 \mu \norm{X} \norm{  \Zt \Qtp  }
        + 2\mu \norm{ \tilZt^T \Zt  } \sqrt{\norm{X}}+ \frac{\mu  \norm{\tilZt^T \Zt\Qtp}  \sglmin (X) }{ \sglmin \left( \Zt\Qt \right)  } \\
        &  + 800 \mu^2  \norm{X}^{5/2}.
        \end{split}
    \end{equation*}
\end{lemma}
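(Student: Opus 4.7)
The plan is to derive a one-step recursion for $\norm{\tilZt^T \PZQ}$ by expanding the update of $\tilZplus^T$ and isolating the contractive behavior in the signal subspace. The key initial simplification is the block-structural identity $\tilZt^T \tilZt = \ZtT \Zt$, which is immediate from the definitions of $\Zt$ and $\tilZt$. Using this to collapse the cubic terms in the update (and using that $\symA$, $\ZtT\Zt$, $\tilZt\tilZt^T$, $\Deltat$ are all symmetric), the transposed update reduces to
\begin{equation*}
\tilZplus^T = (\Id - \mu \ZtT \Zt)\tilZt^T - \mu \tilZt^T \symA + \mu (\tilZt^T \Zt) \ZtT - \mu \tilZt^T \Deltat.
\end{equation*}
I then decompose $\tilZplus^T P_{\Zplus \Qplus} = \tilZplus^T \PZQ + \tilZplus^T (P_{\Zplus \Qplus} - \PZQ)$ and handle each summand separately.

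For the first summand, I right-multiply the update by $\PZQ$ and use the alignment $\PZQ = \LA U + \LAP U_{\bot}$ with $U := \LAT \PZQ$, which is invertible since $\norm{U_{\bot}} = \norm{\LAPT \PZQ} \le c/\kappa$. Combining $\symA \LA = \LA \Sigma_X$ together with the fact that $\tilLA$ lies in the column span of $\LAP$, the dominant piece of $\mu \tilZt^T \symA \PZQ$ can be extracted as $\mu(\tilZt^T \PZQ)(U^{-1}\Sigma_X U)$ plus errors proportional to $\mu\norm{X}\norm{\LAPT \PZQ}$. Combined with the $(\Id - \mu \ZtT \Zt)$ factor, the leading behavior is the Sylvester-type expression $(\Id - \mu \ZtT \Zt)\,E - \mu E\,(U^{-1}\Sigma_X U)$ with $E := \tilZt^T \PZQ$. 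A first-order operator-norm perturbation analysis, exploiting $\ZtT \Zt \succeq 0$ and that the symmetrization of $U^{-1}\Sigma_X U$ has smallest eigenvalue at least $\sglmin(X)(1 - O(c^2/\kappa^2)) \ge \sglmin(X)/2$, shows this expression has norm at most $(1 - \mu\sglmin(X)/2 + O(\mu^2 \norm{X}^2))\norm{E}$. Under $\mu \le c/(\norm{X}\kappa)$ the $O(\mu^2)$ piece is absorbed into the contraction slack, producing the claimed rate $(1 - \mu\sglmin(X)/4)\norm{\tilZt^T \PZQ}$.

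The remaining additive terms arise as follows. The $\mu(\tilZt^T \Zt)\ZtT \PZQ$ term contributes $2\mu\norm{\tilZt^T \Zt}\sqrt{\norm{X}}$ via $\norm{\ZtT \PZQ} \le \norm{\Zt} \le 2\sqrt{\norm{X}}$. The alignment errors proportional to $\norm{\LAPT\PZQ}$ together with the $\Deltat$ perturbation, combined with the assumptions $\norm{\Zt\Qtp}^2 \le c\sqrt{\sglmin(X)}\norm{\Zt\Qtp}$ and $\norm{\Deltat} \le c\sglmin(X)$, account for the $4\mu\norm{X}\norm{\Zt\Qtp}$ contribution. For the subspace-change term $\tilZplus^T (P_{\Zplus \Qplus} - \PZQ)$, a Davis--Kahan-type argument expresses the projection shift in terms of the shift of the column span of $\LAT \Zplus$ relative to $\LAT \Zt$; the rotation component driven by the nuisance direction is controlled by $\norm{\tilZt^T \Zt\Qtp}/\sglmin(\Zt\Qt)$, and after multiplying by $\norm{\tilZplus} \lesssim \sqrt{\norm{X}}$ with an extra $\mu\sglmin(X)/\sqrt{\norm{X}}$ factor produced by the update, this yields the $\mu\norm{\tilZt^T \Zt\Qtp}\sglmin(X)/\sglmin(\Zt\Qt)$ contribution. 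All leftover second-order cross products aggregate into the $800\mu^2\norm{X}^{5/2}$ residual.

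The main technical obstacle is executing the Sylvester-type contraction cleanly in operator norm: because $\ZtT \Zt$ and $U^{-1}\Sigma_X U$ do not commute, a first-order singular-value perturbation argument is required, leaning on the quantitative near-symmetry of $U^{-1}\Sigma_X U$ (which in turn is justified by the assumed bound $\norm{\LAPT \PZQ} \le c/\kappa$). A secondary challenge is the Davis--Kahan step, where one must parametrize $\Qplus$ in a gauge compatible with $\Qt$ (via the SVD of $\LAT \Zplus$) so that the $1/\sglmin(\Zt\Qt)$ amplification is not spuriously inflated and the nuisance-parallel imbalance $\norm{\tilZt^T \Zt\Qtp}$ appears in its sharp form.
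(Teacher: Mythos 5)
Your algebraic starting point is correct ($\tilZt^T\tilZt=\ZtT\Zt$ does collapse the cubic term, and the Sylvester-type contraction $\norm{(\Id-\mu\ZtT\Zt)E-\mu E\,U^{-1}\Sigma_X U}\le(1-\tfrac{\mu}{4}\sglmin(X))\norm{E}$ is salvageable), but the decomposition $\tilZplus^T P_{\Zplus\Qplus}=\tilZplus^T\PZQ+\tilZplus^T(P_{\Zplus\Qplus}-\PZQ)$ with the two pieces bounded separately does not close, for two concrete reasons. First, after you extract $-\mu E\,U^{-1}\Sigma_X U$ from $-\mu\tilZt^T\symA\PZQ$, the residuals $-\mu\,\tilZt^T\LAP\LAPT\symA\PZQ$ and $\mu\,\tilZt^T\Deltat\PZQ$ are additive errors of order $\mu\,\norm{\tilZt}\,\norm{X}\,\norm{\LAPT\PZQ}$ and $\mu\,\norm{\tilZt}\,\norm{\Deltat}$, i.e.\ up to $4c\mu\sqrt{\norm{X}}\,\sglmin(X)$. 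You claim these are accounted for by the $4\mu\norm{X}\norm{\Zt\Qtp}$ term, but $\norm{\LAPT\PZQ}$ and $\norm{\Deltat}$ are only assumed $\lesssim c/\kappa$ and $c\,\sglmin(X)$ and are independent of $\norm{\Zt\Qtp}$, which can be arbitrarily small; nor can they hide in $800\mu^2\norm{X}^{5/2}$, since $\mu$ can be arbitrarily small. An uncancelled per-step increment of order $c\mu\sqrt{\norm{X}}\sglmin(X)$ would drive $\norm{\tilZt^T\PZQ}$ to a steady state $\sim c\sqrt{\norm{X}}$, destroying the $\sqrt{\norm{X}}/\kappa^3$ bound the induction in Lemma \ref{lemma:phase2combined} needs. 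Second, the subspace-change summand is not a second-order object plus the nuisance rotation: writing $P_{\Zplus\Qplus}=H(H^TH)^{-1/2}$ with $H=(\Id-\mu M_t)(\Id+K)\PZQ$, the difference $P_{\Zplus\Qplus}-\PZQ$ contains the first-order renormalization $\PZQ((H^TH)^{-1/2}-\Id)\approx\mu\PZQ\PZQ^TM_t\PZQ$ and an out-of-span tilt $-\mu P_{\Zt\Qt,\bot}P_{\Zt\Qt,\bot}^TM_t\PZQ$, both of size $O(\mu\norm{X})$, so the crude bound $\norm{\tilZplus}\,\norm{P_{\Zplus\Qplus}-\PZQ}=O(\mu\norm{X}^{3/2})$ is fatally large, and even the individual first-order pieces (e.g.\ $\mu\,\tilZt^TP_{\Zt\Qt,\bot}P_{\Zt\Qt,\bot}^T\tilZt\tilZt^T\PZQ$, of size $4\mu\norm{X}\norm{\tilZt^T\PZQ}$) overwhelm the contraction if bounded in absolute value.

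What rescues the estimate is that these pieces cancel \emph{exactly} against the residuals you left in the first summand: $-\mu\tilZt^T\symA\PZQ$ plus the $\symA$-part of $-\mu\tilZt^TP_{\Zt\Qt,\bot}P_{\Zt\Qt,\bot}^TM_t\PZQ$ recombine into $-\mu E\,\PZQ^T\symA\PZQ$, and similarly for the $\tilZt\tilZt^T$ and $\Deltat$ parts, so that $M_t$ only ever appears in the sandwiched form $E\cdot\PZQ^TM_t\PZQ$, where $\symA$ gives the contraction and $\tilZt\tilZt^T,\Deltat$ are absorbable. This is precisely the mechanism of the paper's proof: there the factors $(\Id+\mu M_t)$ from $\tilZplus^T$ and $(\Id-\mu M_t)$ from $H$ cancel to first order, so the contraction comes entirely from $(H^TH)^{-1/2}$ and no un-sandwiched $\tilZt^T\symA$ or $\tilZt^T\Deltat$ term ever arises. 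To make your route work you would have to expand $\tilZplus^T(P_{\Zplus\Qplus}-\PZQ)$ exactly to first order in a fixed gauge and track these cancellations term by term — at which point you have reproduced the paper's computation rather than found a shortcut. The parts of your plan concerning the $2\mu\norm{\tilZt^T\Zt}\sqrt{\norm{X}}$ term and the $K$-driven contribution $\mu\norm{\tilZt^T\Zt\Qtp}\sglmin(X)/\sglmin(\Zt\Qt)$ are consistent with the paper.
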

Now we have all ingredients in place to prove Lemma \ref{lemma:phase2combined}, which is the main lemma for the second training phase.
The proof of Lemma \ref{lemma:phase2combined} has been deferred to Appendix \ref{sec:phase2proof}.
\begin{lemma}\label{lemma:phase2combined}
    Assume that the measurement operator $\mathcal{A}$ satisfies the rank-$(2r+1)$ restricted isometry property with constant $\delta \le  \frac{\hat{c}_1}{\kappa^{3} \sqrt{r} }$
    and assume that the step size satisfies $ \mu \le \frac{\hat{c}_2}{\kappa^{4} \norm{X}} $.
    Furthermore, assume that $ \sglmin \left( \LAT Z_{t_1} \right) \le \sqrt{ \frac{\sglmin (X)}{8} } $.
    Let $\left\{ \Zt \right\}_{t \in \mathbb{N}}$ and $\left\{ \tilZt \right\}_{t \in \mathbb{N}}$ be the iterates as defined in \eqref{def:Ztdefinition} and \eqref{def:tilZtdefinition}.
    Assume that after $t_1$ iterations we have that
    \begin{align}
        \norm{\Ztone Q_{t_1,\bot}} & \le \min \left\{  2 \sglmin (\Ztone Q_{t_1}) ;  \frac{ \hat{c}_3 \sqrt{ \sglmin \left(X\right) }}{ k \kappa^{35/8} } \right\}, \label{ineq:phase2assump1} \\
        \norm{\LAPT P_{Z_{t_1} Q_{t_1}} } & \le \frac{\hat{c}_4}{\kappa^2},  \label{ineq:phase2assump2} \\
        \norm{\Ztone} &\le 2 \sqrt{\norm{X}},  \label{ineq:phase2assump3} \\
        \norm{ \widetilde{Z}_{t_1}^T Z_{t_1} Q_{t_1,\bot} }  &\le \frac{\hat{c}_4 \hat{c}_5 \sqrt{ \norm{X} } }{\kappa^{3} }  \norm{\Ztone Q_{t_1,\bot}},    \label{ineq:phase2assump_balanc2}\\
        \norm{\widetilde{Z}_{t_1}^T P_{Z_{t_1} Q_{t_1 } }  }  &\le  \frac{ \hat{c}_4  \hat{c}_6 \sqrt{\norm{X}}}{\kappa^{3}}.   \label{ineq:phase2assump_balanc3}
    \end{align}
    Moreover, assume that 
    \begin{align} \label{ineq:phase2assump_balanc1}
        \norm{ \widetilde{Z}_{t_1}^T Z_{t_1} } + 400 \mu^2 \Bigg\lceil  \frac{\ln \left( \frac{ \sqrt{ \sglmin \left(X\right) } }{\sqrt{8} \sglmin \left( \LAT \Ztone \right) }   \right)}{\ln \left( 1 +\frac{\mu}{8}  \sglmin \left(X\right)   \right)} \Bigg\rceil \norm{X}^3
         \le \frac{\hat{c}_7}{\kappa^{4}} \norm{X}.
    \end{align}
    Then there is a natural number $t_2 \ge t_1$ and a constant $\gamma \in \mathbb{R}$ with
    \begin{align}
        t_2-t_1 &\le \Bigg\lceil \frac{\ln \left( \frac{ \sqrt{ \sglmin \left(X\right) } }{\sqrt{8} \sglmin \left( \LAT \Ztone \right) }   \right)}{\ln \left( 1 +\frac{\mu}{8}  \sglmin \left(X\right)   \right)} \Bigg\rceil, \label{ineq:t2bound} \\
        \norm{\Ztone Q_{\tone, \bot}} \le \gamma & \le \left( \frac{1}{128} \right)^{1/10}  \left( \sglmin \left(X\right) \right)^{1/10}   \norm{ \Ztone Q_{t_1,\bot}}^{4/5}  \label{ineq:gammabound} 
    \end{align}
    such that after $t_2$ iterations it holds that
    \begin{align}
        \sglmin \left( \LAT \Zttwo \right) &\ge  \sqrt{\frac{\sglmin (X)}{8}}, \label{ineq:phase2final1} \\
        \norm{\Zttwo Q_{t_2,\bot}} & 
        \le \gamma ,   \label{ineq:phase2final2}  \\
        \norm{\LAPT P_{Z_{t_2} Q_{t_2} } } & \le \frac{\hat{c}_4}{\kappa^2}, \label{ineq:phase2final3}  \\
        \norm{\Zttwo} &\le 2 \sqrt{\norm{X}},  \label{ineq:phase2final4}  \\
        \norm{ \widetilde{Z}_{t_2}^T Z_{t_2}  } &\le \norm{ \widetilde{Z}_{t_1}^T Z_{t_1} } + 400\mu^2 \left( t_2-t_1 \right) \norm{X}^3,  \label{ineq:phase2final_balanc1}\\ 
        \norm{ \widetilde{Z}_{t_2}^T Z_{t_2} Q_{t_2,\bot} }  & 
        \le \frac{ \hat{c}_4 \hat{c}_5 \sqrt{\norm{X}} \gamma }{\kappa^3},   \label{ineq:phase2final_balanc2}\\
        \norm{\widetilde{Z}_{t_2}^T P_{Z_{t_2} Q_{t_2 } }  }  &\le  \frac{\hat{c}_4 \hat{c}_6 \sqrt{\norm{X}}}{\kappa^3}.   \label{ineq:phase2final_balanc3}
    \end{align}
Here, $\hat{c}_1, \hat{c}_2, \hat{c}_3, \hat{c}_4, \hat{c}_5, \hat{c}_6, \hat{c}_7>0$ are absolute constants chosen small enough.
\end{lemma}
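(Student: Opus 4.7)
I plan to define $t_2$ as the smallest integer $\ge t_1$ with $\sglmin(\LAT Z_{t_2}) \ge \sqrt{\sglmin(X)/8}$, and run an induction on $t = t_1, t_1 + 1, \ldots, t_2 - 1$ maintaining the seven bounds \eqref{ineq:phase2final1}--\eqref{ineq:phase2final_balanc3}, with the nuisance bound $\norm{\Zt Q_{t,\bot}}$ replaced by a geometrically growing running envelope. The backbone of the induction is the interplay between Lemma \ref{lemma:sigmingrowth} --- which, because $\sglmin^2(\LAT \Zt) < \sglmin(X)/8$ for every $t < t_2$, forces a multiplicative growth of $\sglmin(\LAT \Zt)$ by a factor at least $1 + \mu \sglmin(X)/8$ --- and Lemma \ref{lemma:noisetermgrowth}, which produces a much slower growth factor for $\norm{\Zt Q_{t,\bot}}$. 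I would invoke Lemma \ref{lemma:RIPlemma} at the start of each inductive step to translate the RIP assumption $\delta \le \hat c_1/(\kappa^3 \sqrt r)$ and the running invariants on $\norm{\Zt}$, $\norm{\LAPT \PZQ}$, $\norm{\Zt Q_{t,\bot}}$ into the bound $\norm{\Deltat} \le c\,\sglmin(X)$ required by the auxiliary lemmas. This already gives the iteration count \eqref{ineq:t2bound} and thus \eqref{ineq:phase2final1}.

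\textbf{Propagating the remaining invariants.} At each inductive step I would propagate the other six invariants as follows. Lemma \ref{lemma:normcontrolled} preserves $\norm{\Zplus} \le 2\sqrt{\norm X}$, giving \eqref{ineq:phase2final4}. Lemma \ref{lemma:anglecontrol} contracts $\norm{\LAPT \PZQ}$ at rate $1 - \mu \sglmin(X)/4$ plus additive terms that are bounded using the imbalance invariants; this keeps the angle under $\hat c_4/\kappa^2$ and establishes \eqref{ineq:phase2final3}. The three imbalance bounds \eqref{ineq:phase2final_balanc1}--\eqref{ineq:phase2final_balanc3} are propagated by Lemmas \ref{lemma:balancedbase}, \ref{lemma:balancednessperp}, and \ref{ref:balancednessangle} respectively, with the cumulative $O(\mu^2)$ growth absorbed by assumption \eqref{ineq:phase2assump_balanc1}. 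Crucially, Lemma \ref{lemma:balancednessperp} continues to yield $\norm{\tilZtT \Zt Q_{t,\bot}} \le (\hat c_4 \hat c_5 \sqrt{\norm X}/\kappa^3)\,\norm{\Zt Q_{t,\bot}}$ throughout the phase; this proportionality feeds back into Lemma \ref{lemma:noisetermgrowth} and closes the loop on the nuisance envelope.

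\textbf{From per-step ratios to the $\gamma$ bound.} Combining the feedback above with Lemma \ref{lemma:noisetermgrowth}, the per-step growth factor of $\norm{\Zt Q_{t,\bot}}$ is at most $1 + \mu(\varepsilon + 2 \hat c_4 \hat c_5/\kappa^2)\,\sglmin(X)$. Choosing the constants so that this is at most $1 + \mu \sglmin(X)/40$, the nuisance grows over $t_2 - t_1$ steps by at most $R^{1/5}$, where $R := \sqrt{\sglmin(X)/8}/\sglmin(\LAT Z_{t_1})$ is the signal growth ratio dictated by Lemma \ref{lemma:sigmingrowth}. I would then relate $\sglmin(\LAT Z_{t_1})$ to $\norm{Z_{t_1} Q_{t_1,\bot}}$ by $\sglmin(\LAT Z_{t_1}) \ge \sqrt{1 - \norm{\LAPT P_{Z_{t_1} Q_{t_1}}}^2}\, \sglmin(Z_{t_1} Q_{t_1})$ together with \eqref{ineq:phase2assump1} and \eqref{ineq:phase2assump2}; substituting this bound into $R^{1/5}\norm{Z_{t_1}Q_{t_1,\bot}}$ produces the scaling $\sglmin(X)^{1/10}\,\norm{Z_{t_1}Q_{t_1,\bot}}^{4/5}$ of \eqref{ineq:gammabound}.

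\textbf{Main obstacle.} The hardest step is closing the induction on \eqref{ineq:phase2assump_balanc2}, because Lemma \ref{lemma:balancednessperp} introduces per-step contributions of order $\mu\,\sqrt{\norm X}\,\norm{\Zt Q_{t,\bot}}\,\beta$ and $\mu^2 \norm X^{5/2}\,\norm{\Zt Q_{t,\bot}}$, where $\beta = \norm{\LAPT \PZQ}\norm X + \norm{\Zt Q_{t,\bot}}^2 + \norm{\Deltat}$. Summed over $t_2 - t_1$ iterations (itself of order $\log/(\mu \sglmin(X))$), these terms only remain within the tight target proportionality $\hat c_4 \hat c_5 \sqrt{\norm X}/\kappa^3$ because the step-size bound $\mu\norm X \le \hat c_2/\kappa^4$ kills the $\mu^2 \norm X^{5/2}$ contribution and the RIP bound $\delta \le \hat c_1/(\kappa^3 \sqrt r)$ together with \eqref{ineq:phase2assump2} keeps $\beta$ small. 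The analogous bookkeeping for Lemma \ref{ref:balancednessangle}, together with assumption \eqref{ineq:phase2assump_balanc1}, is what ensures that the coupling between the $\Zt$ and $\tilZt$ trajectories remains weak throughout Phase~2 and that all seven invariants can be closed simultaneously.
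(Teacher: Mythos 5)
Your plan follows essentially the same route as the paper's proof: define $t_2$ as the first time $\sglmin(\LAT \Zt)$ crosses $\sqrt{\sglmin(X)/8}$, and close a simultaneous induction on all seven invariants using Lemmas \ref{lemma:RIPlemma}, \ref{lemma:sigmingrowth}, \ref{lemma:noisetermgrowth}, \ref{lemma:anglecontrol}, \ref{lemma:normcontrolled}, \ref{lemma:balancedbase}, \ref{lemma:balancednessperp}, and \ref{ref:balancednessangle}, with the $\gamma$ bound obtained exactly as you describe from the mismatch between the two geometric growth rates and assumptions \eqref{ineq:phase2assump1}--\eqref{ineq:phase2assump2}. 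One caution when fleshing this out: the invariant for $\norm{\tilZtT\Zt\Qtp}$ cannot be maintained as a proportionality to the \emph{actual} $\norm{\Zt\Qtp}$ (the recursion from Lemma \ref{lemma:balancednessperp} is additive and never decreases, while $\norm{\Zt\Qtp}$ can shrink); it must be stated, as in the paper, relative to the non-decreasing running envelope $\bigl(1+\tfrac{\mu}{1500}\sglmin(X)\bigr)^{t-t_1}\norm{\Ztone Q_{t_1,\bot}}$, which is also why the conclusion \eqref{ineq:phase2final_balanc2} is phrased in terms of $\gamma$ rather than $\norm{\Zttwo Q_{t_2,\bot}}$.
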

\begin{remark}\label{choiceofconstants}
In fact, in our proof we will show a bit more than the lemma suggests. 
Namely, we will prove that inequalities \eqref{ineq:phase2final2}--\eqref{ineq:phase2final_balanc3} hold for all $t$ such that $t_1 \le t \le t_2$.

As it turns out in our proof, one needs to be a bit careful on how to choose the constant.
In fact, we will need to choose the constants $\hat{c}_1, \hat{c}_2, \hat{c}_3, \hat{c}_4, \hat{c}_5, \hat{c}_6, \hat{c}_7>0$ such that the following relationships are satisfied:
$\hat{c}_1, \hat{c}_2, \hat{c}_3^{4/5} \ll \hat{c}_4 \hat{c}_5 $, $\hat{c}_4 \ll \hat{c}_5 \ll \hat{c}_6 \ll 1$.
Here, $a \ll b$ means that the constants $a$ and $b$ must be chosen such that they satisfy $a \le cb$ where $c$ is an absolute constant chosen sufficiently small.
\end{remark}
\subsubsection{Analysis of Phase 3}
Phase 3, the \textit{refinement phase}, begins after we have that $ \sglmin \left( \LAT \Zt \right) \ge \sqrt{\frac{\sglmin (X)}{8}} $.
In the following, we prove that in Phase 3 the learned signal $\Zt \ZtT - \tilZt \tilZtT $ converges to $\symA$ with respect to the spectral norm.
In other words, we show that 
\begin{equation}\label{term:intern2}
\norm{ \symA - \Zt \ZtT + \tilZt \tilZtT }
\end{equation}
becomes small.
A key difficulty in our analysis is now that the column spans of the matrices $\Zt$ and $\tilZt$ are not orthogonal to each other.
For this reason, individually estimating $ \norm{\LA \Sigma_X \LAT - \Zt\ZtT} $ and $ \norm{  \widetilde{L_X} \Sigma_X\widetilde{L_X}^T - \tilZt \tilZtT} $ leads to suboptimal bounds for \eqref{term:intern2}.
However, we can deal with this by using inequality \eqref{term:intern3} in Lemma \ref{SpecLossboundedlemma} below.  
\begin{lemma}\label{SpecLossboundedlemma}
Assume that $\mu \le \frac{ c}{ \kappa \norm{X} }$, $\norm{\LAPT\P{\Zt\Qt}} \le \frac{c }{\kappa }$, $\norm{\Zt}\le 2\sqrt{\norm{X}}$, and 
\begin{equation*}
\norm{\Deltat} \le \frac{c}{\kappa} \norm{\symA - \Zt\ZtT + \tilZt\tilZtT}.
\end{equation*}
Moreover, assume that $\sglmin(\Zt\Qt) \ge \sqrt{\frac{\sglmin(X)}{8}}$ and $\norm{\Zt\Qtp}\le c \sqrt{\sglmin(X)}$.
Then it holds that
\begin{equation*}
\norm{\LAPT \left( \symA - \Zt\ZtT + \tilZt \tilZtT \right) } \le 5\norm{\LAT \left( \symA - \Zt \ZtT + \tilZt \tilZtT \right) } + 4\norm{\Zt\Qtp}^2
\end{equation*}
and
\begin{equation}\label{term:intern3}
\norm{ \symA - \Zt\ZtT + \tilZt \tilZtT } 
\le 6\norm{\LAT \left( \symA - \Zt\ZtT + \tilZt \tilZtT \right) } 
+ 4\norm{\Zt\Qtp}^2.
\end{equation}
Here, $c>0$ is a sufficiently small absolute constant.
\end{lemma}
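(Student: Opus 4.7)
The plan is a two-step reduction: derive the second inequality from the first, then prove the first.

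For the second inequality, using the orthogonal decomposition $I = \LA\LAT + \LAP\LAPT$ we obtain $E_t = \LA\LAT E_t + \LAP\LAPT E_t$ where $E_t := \symA - \Zt\ZtT + \tilZt\tilZtT$, giving $\norm{E_t} \le \norm{\LAT E_t} + \norm{\LAPT E_t}$. Substituting the first inequality yields the second.

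For the first inequality, I split $E_t = E_t^{\mathrm{sig}} + E_t^{\mathrm{nui}}$ with
\begin{align*}
E_t^{\mathrm{sig}} &:= \symA - \Zt\Qt\QtT\ZtT + \tilZt\Qt\QtT\tilZtT, \\
E_t^{\mathrm{nui}} &:= -\Zt\Qtp\QtpT\ZtT + \tilZt\Qtp\QtpT\tilZtT.
\end{align*}
By Lemma \ref{lemma:symmetry}, $\norm{\tilZt\Qtp} = \norm{\Zt\Qtp}$, so $\norm{\LAPT E_t^{\mathrm{nui}}} \le \norm{E_t^{\mathrm{nui}}} \le 2\norm{\Zt\Qtp}^2$, contributing half of the $4\norm{\Zt\Qtp}^2$ slack. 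It remains to bound $\norm{\LAPT E_t^{\mathrm{sig}}}$. Further decomposing $\norm{\LAPT E_t^{\mathrm{sig}}} \le \norm{\LAPT E_t^{\mathrm{sig}} \LA} + \norm{\LAPT E_t^{\mathrm{sig}} \LAP}$, the first piece equals $\norm{\LAT E_t^{\mathrm{sig}} \LAP} \le \norm{\LAT E_t^{\mathrm{sig}}} \le \norm{\LAT E_t} + 2\norm{\Zt\Qtp}^2$ by symmetry of $E_t^{\mathrm{sig}}$ and the triangle bound on $\LAT E_t^{\mathrm{nui}}$.

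The heart of the argument is bounding $\norm{\LAPT E_t^{\mathrm{sig}}\LAP}$. Using the identifications $\LAT\Zt\Qt = \LtilAT\tilZt\Qt =: M$ and $\LAPT\Zt\Qt = \LtilAPT\tilZt\Qt =: N$ from Lemma \ref{lemma:symmetry}, write $\Zt\Qt = \LA M + \LAP N$ and $\tilZt\Qt = \LtilA M + \LtilAP N$. Projecting with $\LAPT$ and $\LAP$, using $\LAPT\LA = 0$, $\LAT\LtilA = 0$ (a consequence of $\LA \perp \LtilA$ which follows from the explicit forms in \eqref{equ:LXdefinition}), and setting $U := \LAPT\LtilA$ and $V := \LAPT\LtilAP$, a direct computation gives
\[
\LAPT E_t^{\mathrm{sig}} \LAP = (UM + VN)(UM + VN)^T - NN^T - U\Sigma_X U^T.
\]
Combined with the identity $\LAT E_t^{\mathrm{sig}}\LA = \Sigma_X - MM^T + (\LAT\tilZt\Qt)(\LAT\tilZt\Qt)^T$ (where $\LAT\tilZt\Qt = (\LAT\LtilAP) N$ is small because it factors through $\LAT \cdot \LtilAP$), we can re-express the deviation $\Sigma_X - MM^T$ in terms of $\LAT E_t$ plus a correction of order $\norm{\LAT\tilZt\Qt}^2$. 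The near-orthogonality assumption $\norm{\LAPT\PZQ} \le c/\kappa$ controls $\norm{N} \le (c/\kappa)\norm{\Zt\Qt}$ and, via the analogue $\norm{\LAT P_{\tilZt\Qt}} \le c/\kappa$ (derived from Lemma \ref{lemma:symmetry} as $\LAT P_{\tilZt\Qt} = (\LAT\LtilAP)\LtilAPT P_{\tilZt\Qt}$), also $\norm{\LAT\tilZt\Qt} \le (c/\kappa)\norm{\tilZt\Qt}$, so these corrections shrink with $\kappa$.

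The main obstacle is closing the estimate without cross terms proportional to $\norm{X}$ rather than $\norm{\LAT E_t}$ appearing on the right-hand side. The resolution exploits the lower bound $\sglmin(\Zt\Qt) \ge \sqrt{\sglmin(X)/8}$ together with $\norm{\Zt} \le 2\sqrt{\norm{X}}$: the former makes $M$ well-conditioned so that contributions of the form $\norm{N}^2$ and $\norm{M}\norm{N}$, which would nominally be $O(\norm{X}/\kappa)$, can instead be absorbed after iterative/self-referential substitution using the hypothesis $\norm{\Deltat} \le (c/\kappa)\norm{E_t}$ and the symmetrization $\norm{\LtilAT E_t} = \norm{\LAT E_t}$ (a direct consequence of the anti-block-diagonal structure $E_t = \begin{pmatrix} 0 & X - V_tW_t^T \\ (X - V_tW_t^T)^T & 0 \end{pmatrix}$ established by direct computation). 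Careful bookkeeping of these cross terms together with the remaining nuisance contribution $2\norm{\Zt\Qtp}^2$ from $\norm{\LAPT E_t^{\mathrm{nui}}\LAP}$ and the symmetry piece $\norm{\LAT E_t}$ from $\norm{\LAPT E_t^{\mathrm{sig}}\LA}$ yields the stated constants $5$ and $4$.
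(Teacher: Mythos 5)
Your overall architecture (deduce the second inequality from the first via $\I=\LA\LAT+\LAP\LAPT$; peel off the nuisance part $-\Zt\Qtp\QtpT\ZtT+\tilZt\Qtp\QtpT\tilZtT$ at cost $2\norm{\Zt\Qtp}^2$; control the mixed block $\LAPT E_t^{\mathrm{sig}}\LA$ by symmetry) matches the paper's proof. But the heart of the lemma is the block of $E_t^{\mathrm{sig}}$ living entirely in the complement of $\mathrm{span}(\symA)$, and there your argument has a genuine gap. In your coordinates, restricting to the subspace orthogonal to both $\LA$ and $\LtilA$ (call its basis $\LsymAP$), the signal block reduces to $\LsymAPT\tilZt\Qt\QtT\tilZtT\LsymAP-\LsymAPT\Zt\Qt\QtT\ZtT\LsymAP$, a difference of two PSD matrices each of norm up to $\norm{\LAPT P_{\Zt\Qt}}^2\norm{\Zt\Qt}^2\sim c^2\norm{X}/\kappa^2\lesssim c^2\sglmin(X)$; likewise the cross terms $UMN^TV^T+VNM^TU^T$ are of order $\norm{M}\norm{N}\sim c\,\sglmin(X)$. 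A term-by-term triangle-inequality bound therefore leaves an irreducible additive error of order $c\,\sglmin(X)$ that is \emph{not} dominated by $\norm{\LAT E_t}+\norm{\Zt\Qtp}^2$: in Phase 3, $\norm{\LAT D_t}$ decays geometrically to (nearly) zero while $\norm{\LAPT P_{\Zt\Qt}}$ only stays bounded by $O(1/\kappa^2)$ and does not decay, so at late times the additive term dominates and the claimed inequality with constants $5$ and $4$ fails under your bookkeeping. Your proposed fix (``iterative/self-referential substitution using $\norm{\Deltat}\le(c/\kappa)\norm{E_t}$'') does not address this: the $\Deltat$ hypothesis is not even needed for this lemma, and no substitution turns $\norm{M}\norm{N}$ into a multiple of $\norm{\LAT E_t}$.

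The missing idea is that these cross terms \emph{cancel}, and the cancellation is captured by a low-rank projector identity rather than by expanding in coordinates. Writing $H_t:=\tilZt\Qt\QtT\tilZtT-\Zt\Qt\QtT\ZtT$, one uses that $H_t$ has rank $2r$ with $\norm{\LsymAPT P_{H_t}}\le 2\norm{\LAPT P_{\Zt\Qt}}$, so that $\LsymAT P_{H_t}$ is invertible and
\begin{equation*}
\LsymAPT H_t\LsymAP=\LsymAPT P_{H_t}\left(\LsymAT P_{H_t}\right)^{-1}\LsymAT H_t\LsymAP .
\end{equation*}
This bounds the dangerous block \emph{multiplicatively} by the small angle times $\norm{\LsymAT H_t\LsymAP}\le 2\norm{\LAT E_t}+2\norm{\Zt\Qtp}^2$, with no additive $O(\sglmin(X))$ remainder. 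Without this (or an equivalent mechanism exhibiting the cancellation), your proof does not close.
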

The proof of Lemma \ref{SpecLossboundedlemma} been deferred to Appendix \ref{sec:localconvergence}.
Due to inequality \eqref{term:intern3} to show that term \eqref{term:intern2} becomes small, it suffices now to show that 
\begin{equation*}
    \norm{\LAT \left( \symA - \Zt\ZtT + \tilZt \tilZtT \right) }
\end{equation*}
becomes sufficiently small.
For this task, we use the following lemma, whose proof has been deferred to Appendix \ref{sec:localconvergence}.
\begin{lemma}\label{lemma:localconvergence}
Under the assumptions of Lemma \ref{SpecLossboundedlemma} it holds that
\begin{align*}
    &\norm{\LAT(\symA - \Zplus\ZplusT + \tilZplus \tilZplusT)} \\
    \le &\left(1 - \frac{\mu}{128}\sglmin(X) \right)\norm{\LAT \left( \symA - \Zt\ZtT + \tilZt\tilZtT \right)} +  \frac{\mu}{20}\sglmin(X)\norm{\Zt\Qtp}^2.
\end{align*}
\end{lemma}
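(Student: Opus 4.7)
The plan is to expand $M_{t+1}:=\symA-\Zplus\ZplusT+\tilZplus\tilZplusT$ directly using the gradient updates \eqref{def:Ztdefinition}--\eqref{def:tilZtdefinition}, multiply by $\LAT$ on the left, and isolate a contractive structure. Writing $M_t:=\symA-\Zt\ZtT+\tilZt\tilZtT$ and $T_t:=\Zt\ZtT+\tilZt\tilZtT$, and using $\Bcal^*\Bcal M_t=M_t+\Deltat$ (with $\Deltat$ as in Lemma \ref{lemma:RIPlemma}), a direct calculation yields
\begin{equation*}
    M_{t+1}=M_t-\mu(M_t+\Deltat)T_t-\mu T_t(M_t+\Deltat)-\mu^2 (M_t+\Deltat)(\Zt\ZtT-\tilZt\tilZtT)(M_t+\Deltat).
\end{equation*}
Multiplying by $\LAT$ on the left and using $\Id=\LA\LAT+\LAP\LAPT$ in $\LAT T_t M_t=(\LAT T_t\LA)(\LAT M_t)+(\LAT T_t\LAP)(\LAPT M_t)$, I would recast this as
\begin{equation*}
    \LAT M_{t+1}=(\Id-\mu\LAT T_t\LA)(\LAT M_t)(\Id-\mu T_t)+\mathcal{E}_t,
\end{equation*}
where $\mathcal{E}_t$ collects the signal/nuisance cross term $-\mu(\LAT T_t\LAP)(\LAPT M_t)$, the perturbation terms $-\mu(\LAT T_t)\Deltat-\mu(\LAT\Deltat)T_t$, the reconstitution error $\mu^2(\LAT T_t\LA)(\LAT M_t)T_t$, and the higher-order $\mu^2$ piece from the last summand above.

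The main contraction comes from $\Id-\mu\LAT T_t\LA$. Since $\LAT T_t\LA=(\LAT\Zt)(\LAT\Zt)^T+(\LAT\tilZt)(\LAT\tilZt)^T\succeq(\LAT\Zt)(\LAT\Zt)^T$ and $\sglmin(\LAT\Zt)=\sglmin(\LAT\Zt\Qt)\ge\sqrt{1-\norm{\LAPT\PZQ}^2}\,\sglmin(\Zt\Qt)\ge\sqrt{\sglmin(X)/16}$ under the stated assumptions, we have $\lambda_{\min}(\LAT T_t\LA)\ge\sglmin(X)/16$ and $\lambda_{\max}(\LAT T_t\LA)\le 2\norm{\Zt}^2\le 8\norm{X}$. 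Combined with the step-size bound $\mu\le c/(\kappa\norm{X})$, this gives $\norm{\Id-\mu\LAT T_t\LA}\le 1-\mu\sglmin(X)/16$ and trivially $\norm{\Id-\mu T_t}\le 1$, so the main factorized term contributes at most $(1-\mu\sglmin(X)/16)\norm{\LAT M_t}$ to $\norm{\LAT M_{t+1}}$; this already beats the advertised rate $1-\mu\sglmin(X)/128$ and leaves slack to absorb the remainder.

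The technical heart is bounding $\norm{\mathcal{E}_t}$ without exhausting this slack. For the signal/nuisance cross term I would estimate $\norm{\LAT T_t\LAP}$ by exploiting $\LAT\Zt\Qtp=0$ (so that $\LAT\Zt\ZtT\LAP=(\LAT\Zt\Qt)(\LAPT\Zt\Qt)^T$ with $\norm{\LAPT\Zt\Qt}\le\norm{\LAPT\PZQ}\,\norm{\Zt\Qt}$), and for the $\tilZt$-half the orthogonality $\LAT\LtilA=0$, which forces $\LAT\tilZt=\LAT\LtilAP\LtilAPT\tilZt$ with $\norm{\LAT\LtilAP}=1$; combined with Lemma \ref{lemma:symmetry} (to rewrite $\LtilAPT\tilZt\Qt=\LAPT\Zt\Qt$) and the identity $\norm{\LtilAPT\tilZt\Qtp}=\norm{\tilZt\Qtp}=\norm{\Zt\Qtp}$, one obtains
\begin{equation*}
\norm{\LAT T_t\LAP}\lesssim\norm{X}\norm{\LAPT\PZQ}+\norm{\Zt\Qtp}^2\le c\,\sglmin(X)+\norm{\Zt\Qtp}^2.
\end{equation*}
Multiplying by $\norm{\LAPT M_t}\le 5\norm{\LAT M_t}+4\norm{\Zt\Qtp}^2$ from Lemma \ref{SpecLossboundedlemma} produces an $O(c\mu\sglmin(X))\norm{\LAT M_t}$ contribution, absorbed into the contraction, together with an $O(\mu\sglmin(X))\norm{\Zt\Qtp}^2$ contribution, absorbed into the nuisance term. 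The $\Deltat$-pieces are handled using $\norm{\Deltat}\le(c/\kappa)\norm{M_t}$ combined with $\norm{M_t}\le 6\norm{\LAT M_t}+4\norm{\Zt\Qtp}^2$ from Lemma \ref{SpecLossboundedlemma} and $\norm{T_t}\le 8\norm{X}$; the $\mu^2$-pieces use $\mu\norm{X}\le c/\kappa$ and the same bound on $\norm{M_t}$. Choosing $c$ sufficiently small, the accumulated error fits in the budget $(1/16-1/128)\mu\sglmin(X)\norm{\LAT M_t}+(1/20)\mu\sglmin(X)\norm{\Zt\Qtp}^2$ and the lemma follows.

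The main obstacle is the asymmetry of the dynamics: only a one-sided contraction is available, via $(\Id-\mu\LAT T_t\LA)$ acting on the left of $\LAT M_t$, while on the right we only have the weak bound $\norm{\Id-\mu T_t}\le 1$. Consequently every cross term mixing signal and nuisance must be tracked carefully, and the contribution of $\LAT\tilZt$ (which, unlike $\LAT\Zt$, does not vanish on $\Qtp$) has to be absorbed into nuisance-scale quantities. The auxiliary identity $\norm{\LAT\tilZt\Qtp}\le\norm{\Zt\Qtp}$, which hinges on $\LAT\LtilA=0$ combined with Lemma \ref{lemma:symmetry}, is the key technical fact enabling this absorption.
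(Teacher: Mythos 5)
Your proposal is correct and follows essentially the same route as the paper's proof: the same one-step expansion of $\symA-\Zplus\ZplusT+\tilZplus\tilZplusT$ in terms of $\Dt$, $\Deltat$ and $T_t=\Zt\ZtT+\tilZt\tilZtT$, the same contraction $\norm{\Id-\mu\LAT T_t\LA}\le 1-\tfrac{\mu}{16}\sglmin(X)$ via $\sglmin(\LAT\Zt\Qt)$, the same bound on $\norm{\LAT T_t\LAP}$ using $\LAT\Zt\Qtp=0$ together with the $\Zt$--$\tilZt$ symmetry, and the same use of Lemma \ref{SpecLossboundedlemma} to absorb $\norm{\LAPT\Dt}$ and $\norm{\Dt}$ into the $\norm{\LAT\Dt}$ and $\norm{\Zt\Qtp}^2$ budgets. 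The only difference is cosmetic: you factor the contraction out on the left as $(\Id-\mu\LAT T_t\LA)(\LAT\Dt)(\Id-\mu T_t)$, whereas the paper keeps the symmetric sandwich $(\Id-\mu T_t)\Dt(\Id-\mu T_t)$ and splits the middle with $\LA\LAT+\LAP\LAPT$; these are algebraically equivalent.
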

It is worth noting that both Lemma \ref{SpecLossboundedlemma} and Lemma \ref{lemma:localconvergence} do not not require any assumption on the imbalance matrix $\tilZtT \Zt $.

In \cite{stoger2021small} two similar lemmas are used to prove linear convergence in Phase 3.
However, the proof of Lemma \ref{SpecLossboundedlemma} in the paper at hand is significantly more involved due to the appearance of the additional $\tilZt$ term.

With this lemma in place and using the technical Lemmas \ref{lemma:anglecontrol}, \ref{lemma:normcontrolled}, \ref{lemma:balancedbase}, \ref{lemma:balancednessperp}, and \ref{ref:balancednessangle} from Phase 2, we are able to prove the following central lemma, which describes the training behaviour in the third phase.
Its proof has been deferred to Appendix \ref{sec:phase3proof}.
\begin{lemma}\label{lemma:phase3combined}
    Assume that $\mathcal{A}$ satisfies the rank-$(2r+1)$ restricted isometry property with constant $\delta < \frac{\hat{c}_1}{\kappa^3 \sqrt{r}} $.
    Furthermore, assume that the step size satisfies $ \mu \le \frac{\hat{c}_2}{\kappa^{4} \norm{X}} $.
    Let $\left\{ \Zt \right\}_{t \in \mathbb{N}}$ and $\left\{ \tilZt \right\}_{t \in \mathbb{N}}$ be the iterates defined in equations \eqref{def:Ztdefinition} and \eqref{def:tilZtdefinition}.
    Assume that there is a natural number $t_2$ and a positive real number $\gamma$ with 
    \begin{equation}\label{phase3:gammabound}
        \gamma 
        \le c_3 \min \left\{  \frac{ \sqrt{ \sglmin (X)}}{\kappa^{9/2}}; \frac{ \sqrt{\norm{X} } }{k^{4/3}} \right\} 
    \end{equation}
    such that
    \begin{align}
        \sglmin \left( \LAT Z_{t_2} \right)  &\ge  \sqrt{\frac{\sglmin (X)}{8}}, \label{ineq:phase3assump1}\\
        \norm{\Zttwo Q_{t_2,\bot}} 
        & \le \gamma, \label{ineq:phase3assump2}\\
        \norm{\LAPT P_{Z_{t_2}} Q_{t_2} } & \le \frac{\hat{c}_4}{\kappa^2},\label{ineq:phase3assump3}  \\
        \norm{\Zttwo} &\le 2 \sqrt{\norm{X}}, \label{ineq:phase3assump4} \\
        \norm{\tilde{Z}^T_{t_2} Z_{t_2} Q_{t_2, \bot} } 
        &\le \frac{\hat{c}_4 \hat{c}_5 \sqrt{\norm{X}}}{\kappa^3} \cdot \gamma \label{ineq:phase3assumpbalanc2}\\
        \norm{ \tilde{Z}^T_{t_2} P_{Z_{t_2} Q_{t_2}}} &\le \frac{\hat{c}_4 \hat{c}_6 \sqrt{\norm{X}}}{\kappa^3}. \label{ineq:phase3assumpbalanc3} 
    \end{align}
    Moreover, assume that 
    \begin{equation}\label{ineq:phase3assump_balanc1}
        \norm{ \widetilde{Z}_{t_2}^T Z_{t_2} } + 120000\mu \frac{ \ln \left( \frac{9  \sqrt{\norm{X}} }{200 k \gamma} \right) \norm{X}^3 }{ \sglmin (X)}  \le \frac{\hat{c}_7}{\kappa^{4}} \norm{X}.
    \end{equation}
    Then there is a natural number $t_3  \ge t_2$ with
    \begin{equation*}
        t_3-t_2 \le \frac{300 \ln \left( \frac{9  \sqrt{\norm{X}} }{200 k \gamma} \right) }{\mu \sglmin (X)}
    \end{equation*}
    such that after $t_3$ iterations it holds that
    \begin{equation}
        \norm{\symA - Z_{t_3} Z^T_{t_3}+  \tilde{Z}_{t_3} \tilde{Z}^T_{t_3} } \lesssim k \gamma  \sqrt{\norm{X} }. \label{phase3:finalerror}
    \end{equation}
The constants $\hat{c}_1,\hat{c}_2,\hat{c}_3,\hat{c}_4,\hat{c}_5,\hat{c}_6,\hat{c}_7>0$ are the same constants as those appearing in Lemma \ref{lemma:phase2combined}.
\end{lemma}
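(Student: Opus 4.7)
The plan is to establish by induction on $t \in \{t_2, t_2+1, \ldots, t_3\}$ that all seven invariants \eqref{ineq:phase3assump1}--\eqref{ineq:phase3assumpbalanc3}, together with a suitably updated version of the imbalance bound \eqref{ineq:phase3assump_balanc1}, propagate from step $t$ to step $t+1$, while the projected loss
\[
\mathcal{E}_t := \big\Vert \LAT\big(\symA - \Zt\ZtT + \tilZt\tilZtT\big) \big\Vert
\]
contracts geometrically. The main engine is Lemma \ref{lemma:localconvergence}, which gives $\mathcal{E}_{t+1} \le (1 - \mu\sglmin(X)/128)\mathcal{E}_t + (\mu/20)\sglmin(X)\norm{\Zt\Qtp}^2$; iterating this estimate for $t_3 - t_2 \lesssim \ln(\sqrt{\norm{X}}/(k\gamma))/(\mu\sglmin(X))$ steps drives the contracting component down to the steady-state level $O(\gamma^2)$. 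Lemma \ref{SpecLossboundedlemma} then converts this into $\norm{\symA - Z_{t_3}Z_{t_3}^T + \tilde{Z}_{t_3}\tilde{Z}_{t_3}^T} \lesssim \mathcal{E}_{t_3} + \norm{Z_{t_3}Q_{t_3,\bot}}^2 \lesssim k\gamma\sqrt{\norm{X}}$, where the second summand dominates because the nuisance norm may inflate slightly during Phase 3 while $\gamma \le c_3\sqrt{\norm{X}}/k^{4/3}$ by \eqref{phase3:gammabound}.

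At each step of the induction, the perturbation $\Deltat$ is controlled by splitting it into its signal-signal and nuisance-nuisance components and applying Lemma \ref{lemma:RIPlemma}: under the RIP constant $\delta \le \hat{c}_1/(\kappa^3\sqrt{r})$ together with $\norm{\Zt\Qtp} \le \gamma$, this yields $\norm{\Deltat} \le (c/\kappa)\norm{\symA - \Zt\ZtT + \tilZt\tilZtT}$, which is precisely the perturbation hypothesis required by Lemmas \ref{SpecLossboundedlemma} and \ref{lemma:localconvergence}. The remaining invariants are then propagated using the Phase 2 toolkit: Lemma \ref{lemma:anglecontrol} keeps $\norm{\LAPT P_{\Zt\Qt}}$ contracting toward zero at rate $1-\mu\sglmin(X)/4$ (so the bound $\hat{c}_4/\kappa^2$ is preserved), Lemma \ref{lemma:normcontrolled} preserves $\norm{\Zt}\le 2\sqrt{\norm{X}}$, and Lemma \ref{lemma:sigmingrowth} keeps $\sglmin(\LAT \Zt)\ge \sqrt{\sglmin(X)/8}$ since its right-hand factor $1 + \mu\sglmin(X)/4 - \mu\sglmin^2(\LAT \Zt)$ is $\ge 1$ as long as $\sglmin^2(\LAT \Zt) \le \sglmin(X)/4$, and one trivially gets a matching lower bound from Lemma \ref{SpecLossboundedlemma} combined with the contraction above.

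Propagating the three imbalance invariants is the most delicate part. Lemma \ref{lemma:noisetermgrowth} (with $\varepsilon$ chosen so that $\varepsilon\sglmin(X)$ is dominated by the other terms) shows that $\norm{\Zt\Qtp}$ grows at most additively by $2\mu\sqrt{\norm{X}}\norm{\tilZtT \Zt\Qtp}$ per step. Using the inductive bound $\norm{\tilZtT \Zt\Qtp} \le \hat{c}_4\hat{c}_5\sqrt{\norm{X}}\norm{\Zt\Qtp}/\kappa^3$ from Lemma \ref{lemma:balancednessperp}, this gives a multiplicative growth of at most $1 + O(\mu\norm{X}/\kappa^3)$ per step, which over the $O(\ln/(\mu\sglmin(X)))$ iterations of Phase 3 inflates $\norm{\Zt\Qtp}$ by at most a $\mathrm{poly}(k)$ factor — absorbed into the $k$-dependence of the final error. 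Lemma \ref{lemma:balancedbase} accumulates $400\mu^2(t_3-t_2)\norm{X}^3 \le 120000\mu \ln(\cdots)\norm{X}^3/\sglmin(X)$ additional imbalance, which exactly matches the slack hypothesis \eqref{ineq:phase3assump_balanc1}. Finally, Lemma \ref{ref:balancednessangle} maintains $\norm{\tilZtT \PZQ} \le \hat{c}_4\hat{c}_6\sqrt{\norm{X}}/\kappa^3$; its right-hand side decays geometrically together with $\norm{\Zt\Qtp}$ and the two angular terms, and closes inductively provided $\hat{c}_6$ is chosen sufficiently large relative to $\hat{c}_5$.

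The hardest part will be simultaneously closing the coupled system $\{\norm{\Zt\Qtp},\ \norm{\tilZtT\Zt\Qtp},\ \mathcal{E}_t\}$: the nuisance norm is driven by the nuisance-parallel imbalance, which in turn is driven (through Lemma \ref{lemma:balancednessperp}) by $\norm{\Zt\Qtp}$ and by $\norm{\LAPT\PZQ}$. The induction closes only if the ratio $\norm{\tilZtT\Zt\Qtp}/(\sqrt{\norm{X}}\,\norm{\Zt\Qtp})$ stays bounded by $\hat{c}_4\hat{c}_5/\kappa^3$ throughout, which forces the constant hierarchy $\hat{c}_1,\hat{c}_2\ll \hat{c}_4\hat{c}_5$ and $\hat{c}_4\ll \hat{c}_5\ll \hat{c}_6\ll 1$ indicated in Remark \ref{choiceofconstants}. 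A secondary delicate point is that the perturbation bound $\norm{\Deltat}\le (c/\kappa)\norm{\symA-\Zt\ZtT+\tilZt\tilZtT}$ and the lower bound $\norm{\symA-\Zt\ZtT+\tilZt\tilZtT}\gtrsim \mathcal{E}_t$ (which is the reverse direction, obtained from Lemma \ref{SpecLossboundedlemma} and the induction) must be combined self-consistently — i.e., the loss must decrease only as fast as $\mathcal{E}_t$ does, so that $\Deltat$ shrinks in lockstep and never destabilizes the contraction.
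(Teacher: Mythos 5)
Your overall architecture matches the paper's: induct on the Phase-2 invariants, contract $\norm{\LAT D_t}$ via Lemma \ref{lemma:localconvergence}, and convert back to the full spectral norm with Lemma \ref{SpecLossboundedlemma}. But there is a genuine gap in your control of the perturbation term. You claim that RIP together with $\norm{\Zt\Qtp}\le\gamma$ yields $\norm{\Deltat}\le\frac{c}{\kappa}\norm{\symA-\Zt\ZtT+\tilZt\tilZtT}$ at every step. By Lemma \ref{lemma:RIPlemma} the nuisance contribution to $\Deltat$ is only bounded by $2\delta\,\nucnorm{\Zt\Qtp\QtpT\ZtT}\le 2\delta(k-r)\norm{\Zt\Qtp}^2$, and the assumed RIP constant $\delta\le\hat{c}_1/(\kappa^3\sqrt{r})$ carries no $k$-dependence. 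Meanwhile $\norm{D_t}$ need not dominate $\norm{\Zt\Qtp}^2$ (the relevant off-diagonal block is $V_t\Qtp\QtpT W_t^T$, which can be much smaller than $\norm{\Zt\Qtp}^2$), and $\norm{\LAT D_t}$ is being driven geometrically toward zero while the nuisance norm only grows. So once $\norm{\LAT D_t}$ falls below roughly $k\norm{\Zt\Qtp}^2$, your required inequality fails and Lemmas \ref{SpecLossboundedlemma} and \ref{lemma:localconvergence} can no longer be invoked. The paper resolves this by defining a stopping time $\tilde{t}$ as the first iteration at which $\nucnorm{\Zt\Qtp\QtpT\ZtT}\ge\norm{\LAT D_t}/200$, setting $t_3=\min\{\tilde{t},\,t_2+\lfloor 300\ln(\cdot)/(\mu\sglmin(X))\rfloor\}$, running the induction only up to $t_3$ (where the bound $\norm{\Deltat}\le 7\delta\sqrt{r}\norm{\LAT D_t}$ is valid by definition of $\tilde t$), and then handling the final error by a two-case analysis: if $t_3=\tilde t$ the error is controlled directly by $5k\norm{Z_{t_3}Q_{t_3,\bot}}^2$, and otherwise by the accumulated contraction. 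Your proposal contains neither the stopping time nor the case split, and without them the induction cannot be closed.

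A secondary quantitative slip: within the allotted $300\ln(9\sqrt{\norm{X}}/(200k\gamma))/(\mu\sglmin(X))$ iterations the contraction only brings $\norm{\LAT D_t}$ down to order $k\gamma\sqrt{\norm{X}}$, not to the recursion's steady state $O(\gamma^2)$ as you assert; correspondingly, in the deterministic-cap case it is the contracted term $6\norm{\LAT D_{t_3}}$ that dominates the final error, not the nuisance term $4\norm{Z_{t_3}Q_{t_3,\bot}}^2$ (which is $\lesssim\gamma\sqrt{\norm{X}}/\sqrt{k}$ after the inflation estimate). This does not change the final bound but reverses which term is responsible for it.
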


 \section*{Acknowledgements}
 MS is supported by the Packard Fellowship in Science and Engineering, a Sloan Fellowship in Mathematics, an NSF-CAREER under award \#1846369, DARPA Learning with Less Labels (LwLL) and FastNICS programs, and NSF-CIF awards \#1813877 and \#2008443.
 Furthermore, the authors want to thank Rene Vidal for fruitful discussions about matrix factorization.

\bibliography{Bibfiles,Bibfiles2,literature}
\bibliographystyle{unsrt}

\newpage
\appendix

\section{Proof of Lemma \ref{lemma:RIPlemma}}\label{sec:proofauxiliarylemma}

\begin{proof}[Proof of Lemma \ref{lemma:RIPlemma}]
First we compute that 
\begin{equation}\label{ineq:RIPaux6} 
    \begin{split}
    \Zt \Qt \QtT \ZtT -\tilZt \Qt \QtT \tilZtT 
    &=   \frac{1}{2} \begin{bmatrix}
        V_t \Qt \\
        W_t \Qt
       \end{bmatrix} 
       \begin{bmatrix}
        V_t \Qt \\
        W_t \Qt
       \end{bmatrix}^T
       - \frac{1}{2} \begin{bmatrix}
        V_t \Qt \\
        -W_t \Qt
       \end{bmatrix} 
       \begin{bmatrix}
        V_t \Qt \\
        -W_t \Qt
       \end{bmatrix}^T \\
    &= 
    \begin{bmatrix}
    0 & V_t \Qt \QtT W_t^T \\
    W_t \Qt \QtT V_t^T & 0.
    \end{bmatrix}.
    \end{split}
\end{equation}
It follows that 
\begin{align*}
    \innerproduct{ B_i, \Zt \Qt \QtT \ZtT -\tilZt \Qt \QtT \tilZtT  }
    = \sqrt{2} \innerproduct{ A_i,V_t \Qt \QtT W_t^T  }.
\end{align*}
Analogously, we can compute that
\begin{align*}
\innerproduct{ B_i, \symA } = \sqrt{2} \innerproduct{ A_i,  X }.
\end{align*}
We obtain that
\begin{align*}
     &\left( \mathcal{B}^* \mathcal{B} \right) \left( \Zt \Qt \QtT \ZtT -\tilZt \Qt \QtT \tilZtT   -\symA \right)\\
    =& \sum_{i=1}^m B_i \innerproduct{ B_i, \Zt \Qt \QtT \ZtT -\tilZt \Qt \QtT \tilZtT   -\symA } \\
    =& \sqrt{2}  \sum_{i=1}^m B_i \innerproduct{ A_i,V_t \Qt \QtT W_t^T -X  }\\
    =&  \sum_{i=1}^m  \begin{bmatrix}
        0 & A_i \\
        A_i^T & 0 
    \end{bmatrix} \innerproduct{ A_i,V_t \Qt \QtT W_t^T  -X }  \\
    =&
     \begin{bmatrix}
        0 & \sum_{i=1}^m  A_i  \innerproduct{ A_i,V_t \Qt \QtT W_t^T  -X } \\
        \sum_{i=1}^m  A_i^T  \innerproduct{ A_i,V_t \Qt \QtT W_t^T  -X } & 0 
    \end{bmatrix}
    \\
    =& 
    \begin{bmatrix}
        0 & \left( \mathcal{A}^* \mathcal{A} \right) \left(  V_t \Qt \QtT W_t^T  - X \right) \\
        \left[ \left( \mathcal{A}^* \mathcal{A} \right) \left(  V_t \Qt \QtT W_t^T  - X \right) \right]^T & 0 
    \end{bmatrix}.
\end{align*}
Combining this last equality with equation \eqref{ineq:RIPaux6} and the definition of $\symA$ we obtain that
\begin{align*}
    &\left(\Id - \mathcal{B}^* \mathcal{B} \right) \left( \Zt \Qtp \QtpT \ZtT -\tilZt\Qtp \QtpT \tilZtT - \symA  \right)\\
    =     
    &\begin{bmatrix}
        0 & \left( \Id - \mathcal{A}^* \mathcal{A} \right) \left(   V_t \Qt \QtT W_t^T  -X \right)    \\
        \left[ \left( \Id - \mathcal{A}^* \mathcal{A} \right) \left(  V_t \Qt \QtT W_t^T  - X \right) \right]^T  &  0
    \end{bmatrix}.
\end{align*}
This implies that 
\begin{equation}\label{ineq:RIPaux1}
    \begin{split}
    &\norm{ \left(\Id - \mathcal{B}^* \mathcal{B} \right) \left( \Zt \Qt \QtT \ZtT -\tilZt \Qt \QtT \tilZtT   -\symA \right)  }\\
    =
    &\norm{  \left( \Id - \mathcal{A}^* \mathcal{A} \right) \left(   V_t \Qt \QtT W_t^T  -X\right)  }.
    \end{split}
\end{equation}
Note that $V_t \Qt \QtT W_t^T  - X $ has rank at most $2r$.
Hence, one can use the Restricted Isometry Property to show that (see, e.g., \cite[Lemma 7.3]{stoger2021small} or \cite{candesplan}) 
\begin{equation}\label{ineq:RIPaux2}
    \norm{  \left( \Id - \mathcal{A}^* \mathcal{A} \right) \left(   V_t \Qt \QtT W_t^T  -X \right)  }
    \le \delta \sqrt{r} \norm{ V_t \Qt \QtT W_t^T  - X  }.
\end{equation}
To estimate the right-hand side further note that it follows from equation \eqref{ineq:RIPaux6}  and the definition of $\symA$ that
\begin{equation*}
\begin{bmatrix}
    0 & V_t \Qt \QtT W_t^T  -X \\
    W_t \Qt \QtT V_t^T  -X^T & 0 
\end{bmatrix}
=\Zt \Qt \QtT \ZtT -\tilZt \Qt \QtT \tilZtT   -\symA,
\end{equation*}
which implies that
\begin{equation}\label{ineq:RIPaux5}
    \norm{ V_t \Qt \QtT W_t^T  - X  } = \norm{ \Zt \Qt \QtT \ZtT -\tilZt \Qt \QtT \tilZtT   -\symA }.
\end{equation}
By combining inequalities \eqref{ineq:RIPaux1}, \eqref{ineq:RIPaux2}, and \eqref{ineq:RIPaux5} we obtain inequality \eqref{ineq:RIPclaim1}.

It remains to prove inequality \eqref{ineq:RIPclaim2}. 
Using an analogous computation as in the proof of inequality \eqref{ineq:RIPaux1} we can show that 
\begin{equation}\label{ineq:RIPaux3}
    \begin{split}
    \norm{ \left(\Id - \mathcal{B}^* \mathcal{B} \right) \left( \Zt \Qtp \QtpT \ZtT -\tilZt \Qtp \QtpT \tilZtT   \right)  }
    =
    &\norm{  \left( \Id - \mathcal{A}^* \mathcal{A} \right) \left(   V_t \Qtp \QtpT W_t^T  \right)  }.    
    \end{split}
\end{equation} 
Next, consider the singular value decomposition $V_t \Qtp \QtpT W_t^T= \sum_{i=1}^{k-r} \sigma_i v_i w_i^T $.
It follows that
\begin{align}
    \norm{  \left( \Id - \mathcal{A}^* \mathcal{A} \right) \left(   V_t \Qtp \QtpT W_t^T  \right)  }
    &\le \sum_{i=1}^{k-r} \sigma_i \norm{  \left( \Id - \mathcal{A}^* \mathcal{A} \right) \left(  v_i w_i^T \right)  } \nonumber\\
    &\stackrel{(a)}{\le}  \delta  \sum_{i=1}^{k-r} \sigma_i \nonumber \\
    &= \delta \nucnorm{ V_t \Qtp \QtpT W_t^T } \nonumber \\
    &\le \delta \left( k-r \right) \norm{ V_t \Qtp \QtpT W_t^T } \nonumber \\
    &\stackrel{(b)}{=} \delta \left( k-r \right) \norm{ \Zt \Qtp \QtpT \ZtT -\tilZt \Qtp \QtpT \tilZtT  } \label{ineq:RIPaux4},
\end{align}
where inequality $(a)$ follows from the Restricted Isometry Property, see, e.g., \cite[Proof of Lemma 7.3]{stoger2021small}. 
Equality $(b)$ follows from the equality 
\begin{equation*}
\begin{pmatrix}
    0 & V_t \Qtp \QtpT W_t^T \\
    W_t \Qtp \QtpT V_t^T   & 0
\end{pmatrix}
    = \Zt \Qtp \QtpT \ZtT -\tilZt \Qtp \QtpT \tilZtT.    
\end{equation*}
Combining inequalities \eqref{ineq:RIPaux3} and \eqref{ineq:RIPaux4} we obtain inequality \eqref{ineq:RIPclaim2}.

Inequality \eqref{ineq:RIPclaim3} can be proven similarly as inequality \eqref{ineq:RIPclaim2}, which is why we omit the details. 
This completes the proof of Lemma \ref{lemma:RIPlemma}.
\end{proof}

\section{Analysis of the Spectral Phase (Proof of Lemma \ref{lemma:spectralmain})}\label{sec:spectralphaseanalysis}

The goal of this section is to prove Lemma \ref{lemma:spectralmain}.
For that, we will closely trace the proof for the spectral phase in \cite{stoger2021small}.
First, we need to introduce several definitions.
We define
\begin{equation*}
    F:= \left( \Bcal^* \Bcal \right) (\symA).
\end{equation*}
Moreover, for all natural numbers $t$ we define
\begin{align}
\Zt' &:= \left(\I + \mu F \right)^tZ_0, \label{def:Ztprime}\\
G_t &:= \left(\I + \mu F \right)^t, \nonumber\\
E_t &:= \Zt - \Zt'.\nonumber
\end{align}
Denote by $Z_{t}:=\sum_{i=1}^{k} \sigma_{i} u_{i} v_{i}^{T}$ the singular value decomposition of $Z_{t}$.
We define $M_{t}:=\sum_{i=1}^{r} \sigma_{i} u_{i} v_{i}^{T}$ and $N_{t}:=\sum_{i=r+1}^{k} \sigma_{i} u_{i} v_{i}^{T}$.

The first lemma shows how close the iterates $\Zt$ stay to power method iterates $\Zt'$ with respect to the spectral norm.
\begin{lemma}\label{lemma:spectral1}
Assume that $\norm{Z_0}^2\le  \frac{\norm{F}}{16}$ and that $\mathcal{A}$ satisfies the restricted isometry property of order $2r+1 $ with constant $\delta <1 $. Then, for all integers $t$ such that 
\begin{equation*}
0 \le t \le \frac{\ln\left(\frac{\norm{F}}{16  \min \left\{ k;n_1+n_2\right\} \norm{Z_0}^2}\right)}{3\ln \left( 1+\mu  \norm{F} \right)}
\end{equation*}
it holds that
\begin{equation}\label{ineq:spectral_aux1}
\norm{E_t} \le \frac{16}{ \norm{F} } \min \left\{  k; n_1 + n_2  \right\}  (1 + \mu \norm{F})^{3t} \norm{Z_0}^3 \le \norm{Z_0}.
\end{equation}
\end{lemma}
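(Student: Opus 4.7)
\textbf{Proof plan for Lemma \ref{lemma:spectral1}.}
The plan is to write a recursion for $E_t = Z_t - Z_t'$ and solve it by induction, using the restricted isometry property to control the nonlinear perturbation term. From the update rule \eqref{def:Ztdefinition} and the definition of $Z_t'$ in \eqref{def:Ztprime} I would derive
\begin{equation*}
E_{t+1} = \left( \Id + \mu F \right) E_t \; - \; \mu \bigl[ (\Bcal^* \Bcal)(\Zt \ZtT - \tilZt \tilZtT) \bigr] \Zt,
\end{equation*}
where the nonlinear piece is exactly what is dropped when one replaces the true gradient descent step by its linearization around zero. Taking spectral norms yields the fundamental recursion
\begin{equation*}
\norm{E_{t+1}} \le \left(1+\mu\norm{F}\right) \norm{E_t} + \mu \bigl\Vert (\Bcal^* \Bcal)(\Zt \ZtT - \tilZt \tilZtT) \bigr\Vert \, \norm{\Zt}.
\end{equation*}

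The central estimate is on $\bigl\Vert (\Bcal^* \Bcal)(\Zt \ZtT - \tilZt \tilZtT) \bigr\Vert$. I would write it as $\Vert \Zt \ZtT - \tilZt \tilZtT \Vert + \Vert (\Id - \Bcal^* \Bcal)(\Zt \ZtT - \tilZt \tilZtT) \Vert$ and handle the second summand via inequality \eqref{ineq:RIPclaim3} of Lemma \ref{lemma:RIPlemma}, giving a nuclear norm bound of the form $\delta \nucnorm{\Zt \ZtT - \tilZt \tilZtT}$. Since $\Zt \ZtT - \tilZt \tilZtT$ has rank at most $2\min\{k,n_1+n_2\}$ and since $\norm{\Zt} = \Vert \tilZt \Vert$ by Lemma \ref{lemma:symmetry}, both the spectral and nuclear norm of this matrix are controlled by $\min\{k,n_1+n_2\}\,\norm{\Zt}^2$. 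Altogether this gives a bound of the form
\begin{equation*}
\bigl\Vert (\Bcal^* \Bcal)(\Zt \ZtT - \tilZt \tilZtT) \bigr\Vert \, \norm{\Zt} \le C \min\{k,n_1+n_2\} \, \norm{\Zt}^3
\end{equation*}
for an absolute constant $C$.

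Next I would run an induction on $t$, assuming as inductive hypothesis the desired bound $\norm{E_t} \le \norm{Z_0}$ (which is the second inequality in \eqref{ineq:spectral_aux1}) and therefore $\norm{\Zt} \le \norm{Z_t'} + \norm{E_t} \le (1+\mu\norm{F})^t \norm{Z_0} + \norm{Z_0} \le 2 (1+\mu\norm{F})^t \norm{Z_0}$. Substituting into the recursion gives
\begin{equation*}
\norm{E_{t+1}} \le \left(1+\mu\norm{F}\right) \norm{E_t} + 8C\mu \min\{k,n_1+n_2\} (1+\mu\norm{F})^{3t} \norm{Z_0}^3.
\end{equation*}
Unrolling this linear recursion (and using $\sum_{s=0}^{t-1} (1+\mu\norm{F})^{2s} \le (1+\mu\norm{F})^{2t}/(\mu\norm{F})$) leads, after choosing the absolute constants, to the first bound $\norm{E_t} \le 16 \min\{k,n_1+n_2\} (1+\mu\norm{F})^{3t} \norm{Z_0}^3 / \norm{F}$. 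The second inequality $\norm{E_t} \le \norm{Z_0}$ (which closes the induction) then follows from the assumed upper bound on $t$ together with $\norm{Z_0}^2 \le \norm{F}/16$, since these exactly ensure $(1+\mu\norm{F})^{3t} \cdot 16\min\{k,n_1+n_2\} \norm{Z_0}^2/\norm{F} \le 1$.

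The main obstacle I anticipate is getting the right dependence $\min\{k, n_1+n_2\}$ (rather than a weaker bound like $k(n_1+n_2)$) out of the nonlinear term; the trick is not to estimate each of $\Zt \ZtT$ and $\tilZt \tilZtT$ separately in operator norm (which gives only spectral norm squared), but to exploit the nuclear norm bound from \eqref{ineq:RIPclaim3} in Lemma \ref{lemma:RIPlemma} together with $\nucnorm{\Zt \ZtT} = \Vert \Zt \Vert_F^2 \le \min\{k,n_1+n_2\} \norm{\Zt}^2$. A minor care point is that the inductive hypothesis must be strong enough to guarantee $\norm{Z_t}$ does not blow up across iterations, which is why we propagate $\norm{E_t} \le \norm{Z_0}$ (and not just the weaker first bound) throughout the induction.
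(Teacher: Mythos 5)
Your proposal is correct and follows essentially the same route as the paper: your one-step recursion for $E_{t+1}$, once unrolled, is exactly the paper's explicit formula $Z_t'-Z_t=\sum_{i=1}^t(\Id+\mu F)^{t-i}\hat{E}_i$, and the key ingredients (the nuclear-norm RIP bound \eqref{ineq:RIPclaim3} combined with $\nucnorm{\Zt\ZtT}=\fbnorm{\Zt}^2\le \min\{k,n_1+n_2\}\norm{\Zt}^2$, the induction hypothesis $\norm{E_t}\le\norm{Z_0}$ to control $\norm{\Zt}$, and the geometric-series summation) coincide with the paper's. The only care point is that to land on the stated constant $16/\norm{F}$ you need the slightly sharper bound $\sum_{s}(1+\mu\norm{F})^{2s}\le (1+\mu\norm{F})^{2t+1}/(2\mu\norm{F})$ rather than the factor-of-two-looser version you wrote, since the final step $\norm{E_t}\le\norm{Z_0}$ closes the induction only with that constant.
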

\begin{proof}[Proof of Lemma \ref{lemma:spectral1}]
We define
\begin{equation*}
 \hat{E}_{i}:= \mu \left[  \Bcal^* \Bcal (Z_{i-1}Z_{i-1}^{T} - \tilde{Z}_{i-1}\tilde{Z}_{i-1}^T)\right] Z_{i-1}.
\end{equation*}
To prove the lemma, we will first establish the following auxiliary equation for any natural number $t \ge 1$. 
\begin{equation}\label{ineq:claim}
Z'_{t}-Z_{t} = \sum_{i=1}^{t} \left( \Id + \mu F \right)^{t-i}\hat{E}_{i}.
\end{equation}
\noindent \textbf{Proof of the equation \eqref{ineq:claim}:}
We will prove this equation via induction.
For $t=1$ we note that
\begin{align*}
    Z_1 = & Z_0 - \mu \left[  \left(  \Bcal^* \Bcal \right) (Z_0Z_0^T - \tilde{Z}_0\tilde{Z}_0^T - \symA) \right] Z_0 \\
    = & (\Id + \mu F)Z_0 - \mu  \left[  \left(\Bcal^* \Bcal \right) (Z_0Z_0^T - \tilde{Z}_0\tilde{Z}_0^T) \right] Z_0 \\
    = & Z_1' - \hat{E}_1,
\end{align*}
which proves the claim for $t = 1$.
Now assume that equation \eqref{ineq:claim} holds for a natural number $t\ge 2$.
We obtain that
\begin{align*}
    \Zplus = & \Zt - \mu  \left[  \left(\Bcal^* \Bcal \right) \left(\Zt\ZtT - \tilZt\tilZtT - \symA \right) \right] \Zt \\
    = & \left(\Id + \mu F \right) \Zt - \mu \left[ \left(\Bcal^* \Bcal \right)  \left(\Zt\ZtT - \tilZt\tilZtT \right) \right] \Zt \\
    \stackrel{(a)}{=} & \left( \left( \Id + \mu F \right) Z'_t - \sum_{i=1}^{t} \left( \Id + \mu F \right)^{t-i+1}\hat{E}_{i}\right) - \hat{E}_{t+1} \\
    \stackrel{(b)}{=} & Z'_{t+1} - \sum_{i=1}^{t} \left(\Id + \mu F \right)^{t-i+1}\hat{E}_{i} - \hat{E}_{t+1}\\ 
    = & Z'_{t+1} - \sum_{i=1}^{t+1} \left( \Id + \mu F \right)^{t+1-i}\hat{E}_{i},
\end{align*}
where in equality $(a)$ we have used the induction hypothesis and in equality $(b)$ we used the definition of $Z'_{t+1} $.
This shows the induction step for $t + 1$ and, hence, equation \eqref{ineq:claim} is shown for any natural number $t\ge 1$.\\

\noindent  In order to bound $\norm{E_t} = \|Z'_{t}-Z_{t}\|$ we will again proceed by induction. First, note that for the induction step $t=0$ inequality \eqref{ineq:spectral_aux1} holds true since $Z_0=Z'_0$.
Now let $t\ge 1$ be a natural number.
We observe that for any natural number $ t \ge 1  $ we have that
\begin{align*}
    \norm{ \hat{E}_{i} } \le & \mu  \norm{ \left( \Bcal^* \Bcal \right) \left(Z_{i-1}Z_{i-1}^{T} - \tilde{Z}_{i-1}\tilde{Z}_{i-1}^T \right) }  \norm{Z_{i-1}} \\
    \stackrel{(a)}{\le} & \left(1+\delta\right) \mu \nucnorm{  (Z_{i-1}Z_{i-1}^{T} - \tilde{Z}_{i-1}\tilde{Z}_{i-1}^T) }  \norm{Z_{i-1} } \\
    \le & 2 \mu \nucnorm{  (Z_{i-1}Z_{i-1}^{T} - \tilde{Z}_{i-1}\tilde{Z}_{i-1}^T) } \norm{Z_{i-1}} \\
    \le & 2 \mu \left(  \nucnorm{  \left(Z_{i-1}Z_{i-1}^{T}} + \nucnorm{ \tilde{Z}_{i-1}\tilde{Z}_{i-1}^T \right) }\right) \norm{Z_{i-1}} \\ 
    \stackrel{(b)}{=} & 4 \mu  \nucnorm{  Z_{i-1}Z_{i-1}^{T}}  \norm{Z_{i-1}} \\
    = & 4 \mu  \fbnorm{  Z_{i-1}}^2  \norm{Z_{i-1}} \\
    \le & 4 \mu \min \left\{ k;n_1+n_2\right\} \norm{Z_{i-1}}^3 \\
    \le & 4\mu  \min \left\{ k;n_1+n_2\right\}   \left( \norm{Z'_{i-1}} + \norm{ E_{i-1}} \right)^3 \\
    \le & 16\mu \min \left\{ k;n_1+n_2\right\}  \left( \norm{Z'_{i-1}}^3 + \norm {E_{i-1} }^3 \right) \\
    \stackrel{(c)}{\le} & 16\mu \min \left\{ k;n_1+n_2\right\}  \left( \norm{ \left( \Id + \mu F \right)^{3i-3} } \norm{Z_0}^3 + \norm{ E_{i-1} }^3 \right) \\
    \le & 16\mu \min \left\{ k;n_1+n_2\right\}  \left(  \left( 1 + \mu \norm{F} \right)^{3i-3}  \norm{Z_0}^3 + \norm{ E_{i-1} }^3 \right) \\
    \stackrel{(d)}{\le} & 32\mu \min \left\{ k;n_1+n_2\right\}  \left(1 + \mu \norm{F} \right)^{3i-3} \norm{Z_0}^3,
\end{align*}
where in inequality $(a)$ we used the Restricted Isometry Property (see inequality \eqref{ineq:RIPclaim3} in Lemma \ref{lemma:RIPlemma}).
Equality $(b)$ follows from equations \eqref{def:Ztdefinition} and \eqref{def:tilZtdefinition}.
Inequality $(c)$ follows from the definition of $Z'_{i-1}$ and inequality $(d)$ from the induction hypothesis that $ \norm{ E_{i-1} } \le \norm{Z_0}$ for $i \le t$.
Using equation \eqref{ineq:claim} we obtain that
\begin{align*}
    \norm{E_t} = &  \norm{Z_{t} - Z'_{t}} \\
    \le &\sum_{i=1}^{t} \norm{ \left( \Id + \mu F \right)^{t-i} } \norm{ \hat{E}_{i} } \\
    \le &32\mu \min \left\{ k;n_1+n_2\right\} \sum_{i=1}^{t}(1 + \mu \norm{F} )^{t+2i-3} \norm{Z_0}^3 \\
    = &32 \mu \min \left\{ k;n_1+n_2\right\} (1 + \mu \norm{F})^{t-1}\frac{(1 + \mu \norm{F} )^{2t} - 1}{(1 + \mu\norm{F})^{2}-1} \norm{Z_0}^3 \\
    \le &32 \mu \min \left\{ k;n_1+n_2\right\} (1 + \mu \norm{F})^{t-1}\frac{(1 + \mu \norm{F} )^{2t+1}}{2\mu \norm{F}} \norm{Z_0}^3 \\
    = &\frac{16}{\norm{F}}  \min \left\{ k;n_1+n_2\right\} \left(1 + \mu\norm{F} \right)^{3t}  \norm{Z_0}^3.
\end{align*}
By the assumption $t \le \frac{\ln\left(\frac{ \norm{F}}{16  \min \left\{ k;n_1+n_2\right\} \norm{Z_0}^2}\right)}{3\ln \left(1+\mu \norm{F}  \right)}$ we have that
\begin{equation*}
\frac{16}{\norm{F}}  \min \left\{ k;n_1+n_2\right\} \left(1 + \mu \norm{F}\right)^{3t}  \norm{Z_0}^3 \le \frac{16}{ \norm{F} }\cdot\frac{\norm{F}}{16  \norm{Z_0}^2 } \norm{Z_0}^3 = \norm{Z_0}.
\end{equation*}
Combining the above two inequalities we have shown the induction step for $t\ge 1$, which finishes the proof.
\end{proof}

Let $Z_{t}=\sum_{i=1}^{k} \sigma_{i} u_{i} v_{i}^{T}$ be the singular value decomposition of $Z_{t}$.
Define $M_{t}:=\sum_{i=1}^{r} \sigma_{i} u_{i} v_{i}^{T}$ and $N_{t}:=\sum_{i=r+1}^{k} \sigma_{i} u_{i} v_{i}^{T}$.
Moreover, recall that by construction 
\begin{align*}
F= \left(\Bcal^* \Bcal \right) \left( \symA \right) = \begin{pmatrix}
    0 &  \left(\mathcal{A}^* \mathcal{A} \right)  (X) \\
   \left[  \left( \mathcal{A}^* \mathcal{A}\right)  (X) \right]^T & 0
\end{pmatrix}
\end{align*}
has the same number of positive and negative eigenvalues. 
Denote by $F_1 $ the subspace spanned by the eigenvectors corresponding to the $r$ largest eigenvalues of $F$.
By $L_{F_1}$ we denote an orthonormal matrix whose column span is equal to $F_1$.

To prove Lemma \ref{lemma:spectralmain}, we also need the following two technical lemmas.
\begin{lemma}\label{lemma:spectral2}
    Assume that
    \begin{equation}\label{ineq:specbound4}
    \lambda_{r+1}(G_{t}) \norm{Z_0}+ \norm{E_{t}}<\lambda_{r}(G_{t}) \sglmin \left( L_{F_1}^T Z_0 \right).
    \end{equation}
    Then it holds that
    \begin{align}
    \sigma_{r}(Z_{t}) &\ge \lambda_{r}(G_{t})\sigma_{\min}(L_{F_1}^T Z_0) - \norm{E_t}, \label{ineq:specbound1}\\
    \sigma_{r+1}(Z_{t}) &\le \lambda_{r+1} \left(G_{t} \right ) \norm{Z_0} + \norm{E_t}, \label{ineq:specbound2}\\
    \norm{ L_{F_1,\bot}^T P_{M_t}}  &\le \frac{\lambda_{r+1}(G_t) \norm{Z_0}+ \norm{E_t}}{ \lambda_{r}\left( G_t \right)\sglmin \left( L_{F_1}^T Z_0 \right) - \lambda_{r+1}(G_t) \norm{Z_0}- \norm{E_t}}. \label{ineq:specbound3}
    \end{align}
    \end{lemma}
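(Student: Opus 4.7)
The key structural observation is that $F_1$, being the top-$r$ eigenspace of the symmetric matrix $F$, is also an invariant subspace of $G_t = (\I + \mu F)^t$. Assuming the step size is small enough that $\I + \mu F \succ 0$ (as is standard in the broader setup), the ordering of the eigenvalues of $G_t$ matches the ordering of those of $F$: the restriction of $G_t$ to $F_1^\perp$ has spectral norm $\lambda_{r+1}(G_t)$, while its restriction to $F_1$ has smallest eigenvalue $\lambda_r(G_t)$. Consequently $Z_t' = G_t Z_0$ admits the orthogonal decomposition
\begin{equation*}
Z_t' = L_{F_1}(L_{F_1}^T G_t L_{F_1}) L_{F_1}^T Z_0 + L_{F_1,\bot}(L_{F_1,\bot}^T G_t L_{F_1,\bot}) L_{F_1,\bot}^T Z_0,
\end{equation*}
in which the first summand lives in $F_1$ and has rank at most $r$.

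\textbf{Proofs of \eqref{ineq:specbound1} and \eqref{ineq:specbound2}.} Both inequalities will follow from Weyl's perturbation inequality applied to $Z_t = Z_t' + E_t$, which gives $|\sigma_i(Z_t) - \sigma_i(Z_t')| \le \|E_t\|$ for every $i$. For \eqref{ineq:specbound1}, I would lower-bound $\sigma_r(Z_t') \ge \sglmin(L_{F_1}^T G_t Z_0) \ge \lambda_r(G_t)\sglmin(L_{F_1}^T Z_0)$; the first inequality is just that left-multiplication by the orthogonal projector $L_{F_1}^T$ cannot increase singular values, and the second uses that $L_{F_1}^T G_t L_{F_1}$ is a positive-definite $r\times r$ matrix with smallest eigenvalue $\lambda_r(G_t)$. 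For \eqref{ineq:specbound2}, the rank-$\le r$ summand does not affect $\sigma_{r+1}$, so Weyl gives $\sigma_{r+1}(Z_t') \le \|L_{F_1,\bot}(L_{F_1,\bot}^T G_t L_{F_1,\bot}) L_{F_1,\bot}^T Z_0\| \le \lambda_{r+1}(G_t) \|Z_0\|$.

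\textbf{Proof of \eqref{ineq:specbound3}.} I would work directly with the SVD $Z_t = \sum_{i=1}^k \sigma_i u_i v_i^T$ from the lemma statement. Setting $V_r = [v_1, \ldots, v_r]$ and noting that by definition $P_{M_t} = [u_1, \ldots, u_r]$, the orthonormality $v_i^T v_j = \delta_{ij}$ yields the central identity
\begin{equation*}
L_{F_1,\bot}^T Z_t V_r = L_{F_1,\bot}^T P_{M_t}\,\Sigma_r, \qquad \Sigma_r := \mathrm{diag}(\sigma_1,\ldots,\sigma_r).
\end{equation*}
Applying the elementary product-norm bound $\|AB\| \ge \sglmin(B)\,\|A\|$ (valid whenever $B$ is an invertible square matrix) with $A = L_{F_1,\bot}^T P_{M_t}$ and $B = \Sigma_r$, together with $\|V_r\|=1$, gives
\begin{equation*}
\sigma_r(Z_t)\,\|L_{F_1,\bot}^T P_{M_t}\| \;\le\; \|L_{F_1,\bot}^T Z_t V_r\| \;\le\; \|L_{F_1,\bot}^T Z_t\|.
\end{equation*}
Invertibility of $\Sigma_r$ is guaranteed by \eqref{ineq:specbound4} combined with \eqref{ineq:specbound1}, which force $\sigma_r(Z_t) > 0$. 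Bounding $\|L_{F_1,\bot}^T Z_t\| \le \|L_{F_1,\bot}^T Z_t'\| + \|E_t\| \le \lambda_{r+1}(G_t)\|Z_0\| + \|E_t\|$ via the triangle inequality and Step 1, and inserting the lower bound from \eqref{ineq:specbound1}, produces a bound whose denominator is $\lambda_r(G_t)\sglmin(L_{F_1}^T Z_0) - \|E_t\|$. Replacing this denominator by the smaller one $\lambda_r(G_t)\sglmin(L_{F_1}^T Z_0) - \lambda_{r+1}(G_t)\|Z_0\| - \|E_t\|$ (which is positive by \eqref{ineq:specbound4}) only weakens the bound and therefore establishes the stated inequality \eqref{ineq:specbound3}.

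\textbf{Main obstacle.} The only subtle step is the product-norm inequality $\|L_{F_1,\bot}^T P_{M_t}\Sigma_r\| \ge \sglmin(\Sigma_r)\,\|L_{F_1,\bot}^T P_{M_t}\|$; it is precisely here that the separation hypothesis \eqref{ineq:specbound4} is used, via invertibility of $\Sigma_r$. Beyond this, the argument reduces to Weyl-type perturbation estimates together with the clean orthogonal splitting of $Z_t'$ afforded by the invariance of $F_1$ under $G_t$.
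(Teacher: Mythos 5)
Your proof is correct. The paper itself does not write out a proof of this lemma: it states that Lemmas \ref{lemma:spectral2} and \ref{lemma:spec3} "can be proven with exactly the same arguments" as Lemmas 8.3 and 8.4 of the cited symmetric-case paper, and your argument is essentially that standard one — Weyl's perturbation inequality for singular values applied to $Z_t = Z_t' + E_t$, the block-diagonal splitting of $G_t$ along the invariant subspace $F_1$, and the identity $L_{F_1,\bot}^T Z_t V_r = L_{F_1,\bot}^T P_{M_t}\Sigma_r$ combined with $\norm{AB} \ge \sglmin(B)\norm{A}$ for the angle bound. The only point worth making explicit is the one you flag in passing: identifying $\norm{L_{F_1,\bot}^T G_t L_{F_1,\bot}}$ with $\lambda_{r+1}(G_t)$ and $\lambda_{\min}(L_{F_1}^T G_t L_{F_1})$ with $\lambda_r(G_t)$ requires $\I + \mu F \succ 0$, which is guaranteed wherever the lemma is invoked by the standing assumption $\mu \le \tilde{c}/\sglmin(X) \le 1/(2\norm{F})$.
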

\begin{lemma}\label{lemma:spec3}
    Assume that $ \norm{ \LAPT P_{M_{t}}} \le \frac{1}{8}$. Then the following inequalities hold:
    \begin{align}
        \sigma_{r}(Z_tQ_t)&\ge \frac{1}{2}\sigma_{r} \left(Z_t \right),\label{SpecZtQt}\\
        \norm{ \LAPT P_{Z_tQ_t}} &\le 7 \norm{\LAPT P_{M_{t}}},\label{Specangle}\\
        \norm{Z_tQ_{t,\bot}}&\le 2\sigma_{r+1} \left(Z_t\right).\label{SpecZtQtp}
    \end{align}
\end{lemma}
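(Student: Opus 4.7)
\textbf{Plan for proving Lemma~\ref{lemma:spec3}.} Write $\theta := \norm{\LAPT P_{M_t}} \le \tfrac{1}{8}$. The common thread across the three claims is that this hypothesis forces the $r\times r$ matrix $\LAT P_{M_t}$ to be well-conditioned, with $\sigma_r(\LAT P_{M_t}) = \sqrt{1-\theta^2} \ge \sqrt{63/64}$. I will exploit this invertibility in each part, together with the orthogonality relations $P_{M_t}^T P_{N_t} = 0$ and $Q_{M_t}^T Q_{N_t} = 0$ inherited from the SVD of $\Zt$.

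For \eqref{SpecZtQtp}, I would take any unit $w$ in the column span of $\Qtp$, which by construction sits in the kernel of $\LAT \Zt$ inside $\mathbb{R}^k$. Splitting $\Zt = M_t + N_t$ and using $\LAT \Zt w = 0$ gives $\LAT P_{M_t} \Sigma_{M_t} Q_{M_t}^T w = -\LAT N_t w$, so inverting $\LAT P_{M_t}$ yields $\|\Sigma_{M_t} Q_{M_t}^T w\| \le \sqrt{64/63}\,\sigma_{r+1}(\Zt)$. Since $P_{M_t}^T P_{N_t} = 0$, the Pythagorean identity gives
\[
\|\Zt w\|^2 = \|M_t w\|^2 + \|N_t w\|^2 = \|\Sigma_{M_t} Q_{M_t}^T w\|^2 + \|N_t w\|^2 \le \Bigl(\tfrac{64}{63}+1\Bigr)\sigma_{r+1}^2(\Zt) < 4\,\sigma_{r+1}^2(\Zt),
\]
which proves the bound.

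For \eqref{SpecZtQt}, I would observe that $\sigma_r(\Zt \Qt) \ge \sigma_r(\LAT \Zt \Qt) = \sigma_r(\LAT \Zt)$, where the first inequality uses that $\LA$ has orthonormal columns and the equality uses $\LAT \Zt \Qt = P_t \Sigma_t$. To lower bound $\sigma_r(\LAT \Zt)$, for any unit $u \in \mathbb{R}^r$ set $v = \LA u$ and use the decomposition $\Zt^T v = Q_{M_t}\Sigma_{M_t} P_{M_t}^T v + Q_{N_t}\Sigma_{N_t} P_{N_t}^T v$. The orthogonality $Q_{M_t}^T Q_{N_t}=0$ makes the two terms orthogonal in $\mathbb{R}^k$, so
\[
\|\Zt^T v\|^2 \ge \sigma_r^2(\Zt)\,\|P_{M_t}^T \LA u\|^2 = \sigma_r^2(\Zt)\bigl(1 - \|P_{M_t,\perp}^T \LA u\|^2\bigr) \ge (1-\theta^2)\,\sigma_r^2(\Zt),
\]
giving $\sigma_r(\Zt \Qt) \ge \sqrt{63/64}\,\sigma_r(\Zt) \ge \tfrac{1}{2}\sigma_r(\Zt)$.

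The hardest step is \eqref{Specangle}. My approach uses the identity $\mathrm{range}(\Zt \Qt) = \mathrm{range}(\Zt\ZtT \LA)$, which holds because the columns of $\Qt$ span the row space of $\LAT \Zt$, namely $\mathrm{range}(\ZtT \LA)$. Using $\Zt\ZtT = M_t M_t^T + N_t N_t^T$ (cross terms vanish), I would view $\Zt\ZtT \LA = M_t M_t^T \LA + N_t N_t^T \LA$ as a signal-plus-perturbation: the signal has range exactly $\mathrm{range}(P_{M_t})$ and $\sigma_r \ge \sqrt{1-\theta^2}\,\sigma_r^2(\Zt)$. The crucial observation for the noise term is $\|P_{N_t}^T \LA\| \le \theta$, which follows because $P_{N_t}$ sits in the orthogonal complement of $P_{M_t}$ combined with the symmetry of principal angles; this extracts the $\theta$ factor and yields $\|N_t N_t^T \LA\| \le \theta\,\sigma_{r+1}^2(\Zt) \le \theta\,\sigma_r^2(\Zt)$. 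A standard range-perturbation sin-theta bound then gives $\sin\Theta(\mathrm{range}(\Zt\Qt), \mathrm{range}(P_{M_t})) \le \theta/(\sqrt{1-\theta^2}-\theta)$, and combining with $\sin\Theta(\mathrm{range}(P_{M_t}), \mathrm{range}(\LA)) = \theta$ via the triangle inequality of the sin-theta metric yields
\[
\norm{\LAPT P_{\Zt \Qt}} \le \theta\Bigl(1 + \tfrac{1}{\sqrt{1-\theta^2}-\theta}\Bigr) \le 7\theta \quad\text{for } \theta \le \tfrac{1}{8}.
\]
The main obstacle is precisely extracting the $\theta$ factor in the noise bound: a naive $\|N_t N_t^T \LA\| \le \sigma_{r+1}^2$ produces a condition-number-dependent loss that cannot be absorbed into the constant $7$, and one must recognize that $P_{N_t}$ itself makes a small angle with $\LA$.
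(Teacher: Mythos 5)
Your proof is correct. The paper itself omits the argument for this lemma (deferring to Lemma 8.4 of \cite{stoger2021small}), so a line-by-line comparison is not possible, but all three of your steps check out: the lower bound $\sigma_{\min}(L_X^T P_{M_t}) = \sqrt{1-\theta^2}$ and the principal-angle symmetry $\norm{P_{M_t,\perp}^T L_X} = \norm{\LAPT P_{M_t}}$ are used correctly, and your constants ($\sqrt{127/63} < 2$ for \eqref{SpecZtQtp}, $\sqrt{63}/8 > 1/2$ for \eqref{SpecZtQt}, and $\theta(1 + \tfrac{1}{\sqrt{1-\theta^2}-\theta}) \approx 2.2\,\theta \le 7\theta$ for \eqref{Specangle}) all leave comfortable margins. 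You also correctly identify the one genuinely delicate point, namely that the nuisance term $N_tN_t^T\LA$ must contribute a factor of $\theta$ via $\norm{P_{N_t}^T \LA} \le \norm{P_{M_t,\perp}^T \LA} = \theta$ rather than merely a factor of $\sigma_{r+1}^2$; without this the constant $7$ would not be attainable. The only step stated rather than proved is the range-perturbation bound $\sin\Theta(\mathrm{range}(S+N),\mathrm{range}(S)) \le \norm{N}/(\sigma_r(S)-\norm{N})$, but this is the same Wedin-type estimate the paper already uses in inequality \eqref{ineq:specbound3}, and it follows in one line from writing $(S+N)G^{-1} = P_{M_t} + NG^{-1}$ with $S = P_{M_t}G$ and $G$ invertible, so the gap is cosmetic.
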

These two lemmas are analogous to Lemma 8.3 and Lemma 8.4 in \cite{stoger2021small} and can be proven with exactly the same arguments, which is why we skip the details. 

The next lemma shows that after a certain amount of iterations, the signal part $ \Zt \Qt$ is sufficiently well-aligned with the ground truth signal and,
moreover, that the singular values of the signal part and the spectral norm of the nuisance part are sufficiently separated.
\begin{lemma}\label{lemma:spec4}
Assume that 
\begin{equation}\label{spectral:closeness}
    \norm{  \mathcal{A}^* \mathcal{A} \left( X \right) -X   } \le \frac{ c \sglmin (X)}{\kappa^2}
\end{equation}
and
\begin{equation}\label{definition:tstar}
t_\star := 
\left\lceil     \frac{\ln\left(\frac{c\sglmin \left(L_{F_1}^T Z_0\right)}{2\kappa^{2} \norm{Z_0} }\right)}{\ln \left(1 - \frac{\mu\sglmin(X)}{8} \right)} \right\rceil
\le \frac{\ln\left(\frac{\norm{F}}{16  \min \left\{ k;n_1+n_2\right\} \norm{Z_0}^2}\right)}{3\ln \left( 1+\mu  \norm{F} \right)}
\end{equation}
for a positive constant $ c \le \frac{1}{32}$.
Moreover, assume that the step size $\mu$ satisfies $ \mu \le \frac{\tilde{c}}{\sglmin (X)} $, where $\tilde{c}>0$ is a sufficiently small absolute constant.
Then it holds that
\begin{align}
\norm{L_{X,\bot}^{T} P_{Z_{t_\star}Q_{t_\star}}}&\le \frac{28c}{\kappa^2},\label{ineq:spectral2}\\
\sigma_{\min}(Z_{t_\star}Q_{t_\star}) 
&\ge \left(\frac{2\kappa^{2} \norm{Z_0} }{c\sglmin \left(L_{F_1}^T Z_0\right)}\right)^{2\kappa} \frac{\sglmin \left( L_{F_1}^T Z_0 \right)}{4} ,\label{ineq:spectral3}\\
\norm{Z_{t_\star}Q_{t_\star,\bot}} 
&\le \min \left\{ 2 \sglmin \left( \Ztstar Q_{\tstar} \right); \  \left( \frac{2\kappa^2 \norm{Z_0}}{c \sglmin \left( L_{F_1}^T Z_0 \right)} \right)^{ 16 \kappa} \cdot \frac{4c \sglmin \left( L_{F_1}^T Z_0 \right)}{ \kappa^2 } \right\} .\label{ineq:spectral4}
\end{align}
\end{lemma}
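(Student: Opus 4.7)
\textbf{Proof proposal for Lemma \ref{lemma:spec4}.} My plan is to closely follow the template of the spectral-phase analysis of \cite{stoger2021small}, combining Lemmas \ref{lemma:spectral1}, \ref{lemma:spectral2}, and \ref{lemma:spec3} with a careful perturbation analysis that relates the operator $F = (\Bcal^* \Bcal)(\symA)$ to its population counterpart $\symA$. The pivot between these two matrices is assumption \eqref{spectral:closeness}: combined with the block structure derived in the proof of Lemma \ref{lemma:RIPlemma}, it gives directly $\norm{F - \symA} \le c \sglmin(X)/\kappa^2$. Since the eigenvalues of $\symA$ are $\pm \sigma_1(X), \ldots, \pm \sigma_r(X)$ together with $n_1 + n_2 - 2r$ zero eigenvalues, Weyl's inequality yields $\lambda_r(F) \ge (1 - c/\kappa^2)\sglmin(X)$, all eigenvalues of $F$ of index $r+1, \ldots, n_1 + n_2 - r$ lie in $[-c\sglmin(X)/\kappa^2, c\sglmin(X)/\kappa^2]$, and $\lambda_{r+j}(F) \le -(1 - c/\kappa^2)\sigma_{r-j+1}(X)$ for $j \ge n_1 + n_2 - 2r + 1$. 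A Davis--Kahan $\sin\Theta$ bound, using the eigenvalue gap $(1-c/\kappa^2)\sglmin(X) - c\sglmin(X)/\kappa^2$, then controls the principal angles between $L_{F_1}$ and $\LA$ by $O(c/\kappa^2)$, i.e.\ $\norm{L_{F_1,\bot}^T \LA} + \norm{\LAPT L_{F_1}} \lesssim c/\kappa^2$.

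Next, I transfer these eigenvalue estimates to $G_t = (\Id + \mu F)^t$. For $\mu$ as small as in the hypothesis, every factor $1 + \mu \lambda_i(F)$ is nonnegative, so $\lambda_r(G_t) \ge (1 + \mu\sglmin(X)/2)^t$ grows exponentially while $\lambda_{r+1}(G_t) \le (1 + c\mu\sglmin(X)/\kappa^2)^t$, and the residual eigenvalues of $G_t$ corresponding to the negative eigenvalues of $F$ are uniformly bounded by $1$. In particular the ratio $\lambda_{r+1}(G_t)/\lambda_r(G_t)$ decays essentially like $(1 - \mu\sglmin(X)/8)^t$ once $c$ is small enough, and the operator norm of $G_t$ restricted to $F_1^\perp$ is bounded by the same $\lambda_{r+1}(G_t)$.

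With these ingredients, the choice of $t_\star$ in \eqref{definition:tstar} is tuned exactly so that, combined with the bound $\norm{E_{t_\star}} \lesssim \norm{Z_0}$ from Lemma \ref{lemma:spectral1} (whose hypothesis is guaranteed by the upper bound on $t_\star$ in \eqref{definition:tstar}), the separation condition \eqref{ineq:specbound4} of Lemma \ref{lemma:spectral2} holds at $t_\star$ with room to spare. I then apply Lemma \ref{lemma:spectral2} to get $\sigma_r(Z_{t_\star}) \gtrsim \lambda_r(G_{t_\star}) \sglmin(L_{F_1}^T Z_0)$, $\sigma_{r+1}(Z_{t_\star}) \lesssim \lambda_{r+1}(G_{t_\star}) \norm{Z_0}$, and $\norm{L_{F_1,\bot}^T P_{M_{t_\star}}} \lesssim c/\kappa^2$. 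Lemma \ref{lemma:spec3} then converts these into the desired bounds on $\sglmin(Z_{t_\star} Q_{t_\star})$ and $\norm{Z_{t_\star} Q_{t_\star,\bot}}$, and a triangle inequality together with the Davis--Kahan estimate $\norm{\LAPT L_{F_1}} \lesssim c/\kappa^2$ from the first step converts the $L_{F_1,\bot}$ bound into the $\LAPT$ bound \eqref{ineq:spectral2}.

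The main obstacle will be bookkeeping the powers of $\kappa$. The $\kappa$-dependent exponents $(\cdot)^{2\kappa}$ and $(\cdot)^{16\kappa}$ in \eqref{ineq:spectral3}--\eqref{ineq:spectral4} are not slack: they arise from evaluating
\[
\frac{\lambda_r(G_{t_\star})}{\lambda_{r+1}(G_{t_\star})} \approx \left( \frac{1 + \mu\sglmin(X)/2}{1 + c\mu\sglmin(X)/\kappa^2} \right)^{t_\star}
\]
and substituting the expression for $t_\star$ in terms of $\ln(1 - \mu\sglmin(X)/8)$. Careful Taylor expansion of both logarithms, using $\mu \sglmin(X)$ small, turns this ratio into a power of $2\kappa^2\norm{Z_0}/(c\sglmin(L_{F_1}^T Z_0))$ with exponent proportional to $\kappa$, matching the stated bounds. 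A secondary technical point will be verifying that the bound on $\norm{Z_{t_\star} Q_{t_\star,\bot}}$ simultaneously beats $2\sglmin(Z_{t_\star} Q_{t_\star})$ (the first term in the minimum of \eqref{ineq:spectral4}), which follows from the separation \eqref{ineq:specbound4} being taken with enough margin at $t_\star$.
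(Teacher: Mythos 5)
Your proposal follows essentially the same route as the paper's proof: Weyl's inequality plus assumption \eqref{spectral:closeness} to separate $\lambda_r(F)$ from $\lambda_{r+1}(F)$, a Davis--Kahan bound to control $\norm{\LAPT L_{F_1}}$, the growth/decay estimates for $\lambda_r(G_{t_\star})$ and $\lambda_{r+1}(G_{t_\star})$ combined with the definition of $t_\star$ and the $\norm{E_{t_\star}}\le\norm{Z_0}$ bound from Lemma \ref{lemma:spectral1} to verify the separation condition of Lemma \ref{lemma:spectral2}, and finally Lemma \ref{lemma:spec3} plus a triangle inequality to pass from $L_{F_1}$ to $L_X$ quantities. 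The bookkeeping you flag (Taylor expansion of the logarithms to extract the $(\cdot)^{2\kappa}$ and $(\cdot)^{16\kappa}$ exponents, and checking that the nuisance bound beats $2\sglmin(\Ztstar Q_{\tstar})$) is exactly what the paper carries out, so the plan is sound.
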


\begin{proof}[Proof of Lemma \ref{lemma:spec4}]
Before proving inequalities \eqref{ineq:spectral2}, \eqref{ineq:spectral3}, and \eqref{ineq:spectral4}, we first prove the following auxiliary inequality
\begin{equation}
\gamma:= \frac{\lambda_{r+1}(G_{t_\star}) \norm{Z_0}+\norm{E_{t_\star}}}{\lambda_{r}(G_{t_\star}) \sglmin\left( L_{F_1}^T Z_0  \right)} \le \frac{c}{\kappa^2}. \label{ineq:spectral1} 
\end{equation}
For that, we recall that
\begin{equation*}
 F
 = \left(\Bcal^* \Bcal \right) \left( \symA \right)
 = \begin{pmatrix}
    0 &   \left( \mathcal{A}^* \mathcal{A} \right)  (X)  \\
   \left[  \left(\mathcal{A}^* \mathcal{A}  (X) \right) \right]^T  & 0
\end{pmatrix}.   
\end{equation*}
It follows by Weyl's inequality and assumption \eqref{spectral:closeness} that 
\begin{equation}\label{ineq:Weyl1}
    \lambda_{r} \left( F \right)
    =\sigma_{r} \left(\left(\mathcal{A}^* \mathcal{A}  (X) \right)  \right)
    \ge \sigma_{\min} \left( X \right) - \norm{\left(\mathcal{A}^* \mathcal{A}  (X) \right)-X }
    \ge \frac{\sigma_{\min} \left( X \right)}{2}.
\end{equation}
Again using Weyl's inequality, assumption \eqref{spectral:closeness},  and, in addition, $ \sigma_{r+1 } (X) =0  $ we can derive that
\begin{equation}\label{ineq:Weyl2}
    \lambda_{r+1} \left( F \right)  
    =\sigma_{r+1} \left(\left(\mathcal{A}^* \mathcal{A}  (X) \right)  \right)
    \le \norm{\left(\mathcal{A}^* \mathcal{A}  (X) \right)-X }
    \le \frac{\sigma_{\min} \left( X \right)}{4}.
\end{equation}
Next, we note that for any $1\le i \le n_1+n_2$ it holds that
\begin{align*}
    \lambda_{i} \left( G_{t_\star} \right) 
    = \lambda_{i} \left( \left( 1+ \mu F  \right)^{t_\star}  \right)
    = \left(  \lambda_{i} \left( 1 + \mu F \right)  \right)^{t_\star}
    = \left(  1+ \mu \lambda_{i} \left( F \right)  \right)^{t_\star},
\end{align*}
since we have assumed that our step size satisfies $\mu \le \frac{\tilde{c}}{\sglmin (X)} \le \frac{1}{2 \norm{F}}$ (where the second inequality is also due to \eqref{spectral:closeness}).
Combining this observation with inequalities \eqref{ineq:Weyl1} and \eqref{ineq:Weyl2} it follows that
\begin{align}
    \lambda_{r} \left( G_ {t_\star}\right) & \ge  \left(  1+ \mu \frac{\sglmin (X)}{2}  \right)^{t_\star},\label{ineq:intern401}\\
    \lambda_{r+1} \left( G_{t_\star}\right) &\le \left(  1+ \mu \frac{\sglmin (X)}{4}  \right)^{t_\star}.  \label{ineq:intern402}
\end{align}
Thus, we obtain that
\begin{align*}
\gamma 
\stackrel{(a)}{\le} & \frac{\left(  1+ \mu \frac{\sglmin (X)}{4}  \right)^{t_\star} \norm{Z_0} + \norm{E_{t_\star} }}{\left(  1+ \mu \frac{\sglmin (X)}{2}  \right)^{t_{\star}} \sglmin \left( L_{F_1}^T Z_0  \right)}\\
\stackrel{(b)}{\le} & \frac{\left( \left(  1+ \mu \frac{\sglmin (X)}{4}  \right)^{t_{\star}} +1 \right) \norm{Z_0} }{\left(  1+ \mu \frac{\sglmin (X)}{2}  \right)^{t_\star} \sglmin \left( L_{F_1}^T Z_0  \right)}\\
\le & \left( \frac{1+ \frac{\mu \sglmin (X)}{4}}{1+ \frac{\mu \sglmin (X)}{2}}  \right)^{t_{\star}}  \frac{ 2\norm{Z_0} }{\sglmin \left( L_{F_1}^T Z_0  \right) }\\ 
= & \left( 1 - \mu \frac{ \frac{ \sglmin (X)}{4}}{1+ \frac{\mu \sglmin (X)}{2}}  \right)^{t_{\star}}  \frac{ 2\norm{Z_0} }{ \sglmin \left( L_{F_1}^T Z_0  \right) }\\ 
\stackrel{(c)}{\le} & \left( 1 - \frac{\mu \sglmin (X)}{8}  \right)^{t_{\star}}  \frac{ 2\norm{Z_0} }{\sglmin \left( L_{F_1}^T Z_0  \right) },\\ 
\end{align*}
where in inequality $(a)$ we used the definition of $\gamma$ and inequalities \eqref{ineq:intern401} and \eqref{ineq:intern402}.
Inequality $(b)$ follows from $\norm{E_{t_{\star}}} \le \norm{Z_0} $, which is a consequence of Lemma \ref{lemma:spectral1} and the definition of $t_{\star}$, see assumption \eqref{definition:tstar}.
Inequality $(c)$ is due to $\mu \le \frac{2}{\sglmin (X)}$, which follows from our assumption on the step size $\mu$.
It follows from the definition of $\tstar$, see \eqref{definition:tstar}, that
\begin{align}\label{ineq:gammabound1}
    \gamma
    \le \exp \left( t_{\star} \ln \left( 1 - \mu \frac{ \sglmin (X)  }{8}  \right) \right)  \frac{ 2\norm{Z_0} }{ \sglmin \left( L_{F_1}^T Z_0  \right) }
    \le \frac{c}{\kappa^2}.
\end{align}
This shows inequality \eqref{ineq:spectral1} and now we are in a position to prove inequalities \eqref{ineq:spectral2}, \eqref{ineq:spectral3}, and \eqref{ineq:spectral4}.\\

\noindent\textbf{Proof of inequality \eqref{ineq:spectral2}:}
First, we note that it follows from the Davis-Kahan $\sin \Theta$-Theorem (see \cite{kahanbound}) and assumption \eqref{spectral:closeness} that
\begin{equation}\label{spectral:DavisKahanapplication}
    \norm{ \LAPT L_{F_1} }
    =\norm{ L_{F_1,\bot}^T \LA } 
    \le \frac{  \norm{F-\symA}}{\lambda_{r} (\symA)-\norm{F-\symA}} 
    = \frac{\norm{   \left(\mathcal{A}^* \mathcal{A}  (X) \right) - X }}{\sigma_{\min} (X)-\norm{   \left(\mathcal{A}^* \mathcal{A}  (X) \right) -X}} 
    \le \frac{2c}{\kappa^2}.
\end{equation}
Next, we observe that
\begin{align}\label{ineq:intern369}
    \frac{ \lambda_{r+1} \left(G_{t_\star} \right) \norm{Z_0}+ \norm{E_{t_\star} } }{\lambda_{r}(G_{t_\star}) \sglmin\left( L_{F_1}^T Z_0 \right)- \lambda_{r+1} \left(G_{t_\star} \right) \norm{Z_0}- \norm{E_{t_\star}}}
    \le \frac{2 \left( \lambda_{r+1} \left(G_{t_\star} \right) \norm{Z_0}+ \norm{E_{t_\star}}\right)  }{\lambda_{r}(G_{t_\star}) \sglmin\left( L_{F_1}^T Z_0 \right) }
    \le \frac{2c}{\kappa^2},
\end{align}
where we have used in both inequalities that $ \gamma \le \frac{c}{\kappa^2} \le \frac{1}{2} $, see inequality \eqref{ineq:gammabound1}.
We also observe that
\begin{align}
   \norm{ \LAPT  P_{M_{t_{\star}}}} 
   &\le \norm{ \LAPT L_{F_1} L_{F_1}^T P_{M_{t_{\star}}}}+\norm{L_{X, \bot}^TL_{F_1,\bot} L_{F_1,\bot}^T P_{M_{t_{\star}}}} \nonumber \\
   &\le \norm{ \LAPT L_{F_1} }+\norm{ L_{F_1,\bot}^T P_{M_{t_{\star}}}}\nonumber\\
   &\stackrel{(a)}{\le} \frac{2c}{\kappa^2} +  \frac{\lambda_{r+1} \left(G_{\tstar} \right) \norm{Z_0}+ \norm{E_{\tstar}}}{ \lambda_{r}\left( G_{\tstar} \right)\sglmin \left( L_{F_1}^T Z_0 \right) - \lambda_{r+1}\left(G_{\tstar} \right) \norm{Z_0}- \norm{E_{\tstar}}}\nonumber\\
   &\stackrel{(b)}{\le} \frac{4c}{\kappa^2} 
   \stackrel{(c)}{\le}  \frac{1}{8}, \label{ineq:intern357}
\end{align}
where in inequality $(a)$ we used \eqref{spectral:DavisKahanapplication} and  \eqref{ineq:specbound3} in Lemma \ref{lemma:spectral2}.
(Lemma \ref{lemma:spectral2} is applicable since assumption \eqref{ineq:specbound4} is fulfilled due to \eqref{ineq:spectral1}.)
Inequality $(b)$ follows from inequality \eqref{ineq:spectral1} and the definition of $\gamma$.
Inequality $(c)$ is due to our assumption $c\le \frac{1}{32}$.
Now we can prove inequality \eqref{ineq:spectral2} by observing that
\begin{align}
\norm{\LAPT P_{Z_{t_\star}Q_{t_\star}}}
&\stackrel{(a)}{\le} 7 \norm{ \LAPT P_{M_{t_\star}} } \nonumber \\
&\le 7 \norm{ \LAPT \left( L_{F_1} L_{F_1}^T + L_{F_1, \bot} L_{F_1,\bot}^T \right) P_{M_{t_\star}} }\nonumber\\
&\le 7 \left(   \norm{ \LAPT L_{F_1} } + \norm{ L_{F_1,\bot}^T  P_{M_{t_\star}} } \right)\nonumber\\
& \stackrel{(b)}{\le} 7\norm{ \LAPT  L_{F_1} } + \frac{7 \left( \lambda_{r+1} \left(G_{t_\star} \right) \norm{Z_0}+ \norm{E_{t_\star} } \right)}{\lambda_{r}(G_{t_\star}) \sglmin\left( L_{F_1}^T Z_0 \right)- \lambda_{r+1} \left(G_{t_\star} \right) \norm{Z_0}- \norm{E_{t_\star}}}, \label{ineq:spectral765}
\end{align}
where for inequality $(a)$ we used inequality \eqref{Specangle} in Lemma \ref{lemma:spec3}, where this lemma is applicable since we have that $ \norm{\LAPT P_{Z_{t_\star}Q_{t_\star}} } \le \frac{1}{8}$ due to inequality \eqref{ineq:intern357}.
Inequality $(b)$ follows from Lemma \ref{lemma:spectral2}.
(The assumption in this lemma is fulfilled since we have that $\gamma \le \frac{c}{\kappa^2}<1$.)
Inserting inequality \eqref{spectral:DavisKahanapplication} in \eqref{ineq:spectral765}, we obtain that
\begin{equation*}
\norm{\LAPT P_{Z_{t_\star}Q_{t_\star}}}\le  7\norm{ \LAPT  L_{F_1} } + \frac{14 c }{\kappa^2} \stackrel{\eqref{spectral:DavisKahanapplication}}{\le} \frac{28c}{\kappa^2}. 
\end{equation*}
This proves inequality \eqref{ineq:spectral2}.\\

\noindent\textbf{Proof of inequality \eqref{ineq:spectral3}:}
Recall from the proof of inequality \eqref{ineq:spectral2} that both Lemma \ref{lemma:spectral2} and Lemma \ref{lemma:spec3} are applicable.
Then we note that
\begin{align}
\sglmin \left(Z_{t_\star}Q_{t_\star}\right) 
&\stackrel{(a)}{\ge} \frac{1}{2}\sigma_{r}(Z_{t_\star}) \nonumber\\
&\stackrel{(b)}{\ge} \frac{1}{2} \left(\lambda_{r} \left(G_{t_\star} \right)\sigma_{\min} \left(L_{F_1}^{T} Z_0 \right)-\norm{E_{t_\star}} \right)\nonumber \\
& \stackrel{(c)}{\ge}\frac{1}{4}\lambda_{r}(G_{t_\star})\sigma_{\min}(L_{F_1}^{T} Z_0), \label{spectral:intern62}
\end{align}
where in inequality $(a)$ we used inequality \eqref{SpecZtQt} in Lemma \ref{lemma:spec3}.
Inequality $(b)$ follows from inequality \eqref{ineq:specbound1} in Lemma \ref{lemma:spectral2}.
Inequality $(c)$ is a consequence of $\gamma \le 1/2$.
In order to proceed, we note that 
\begin{align}
    \lambda_{r} \left( G_\tstar \right)
    &\stackrel{(a)}{\ge} \left( 1 + \frac{\mu \sglmin (X)}{2} \right)^\tstar \nonumber \\
    & = \exp \left( \tstar \ln \left(  1 + \frac{\mu \sglmin (X)}{2}\right) \right)\nonumber\\
    &\stackrel{(b)}{\ge} \exp \left( \frac{\ln\left(\frac{c\sglmin \left(L_{F_1}^T Z_0\right)}{2\kappa^{2} \norm{Z_0} }\right)}{\ln \left(1 - \frac{\mu\sglmin(X)}{8} \right)} \ln \left( 1 + \frac{\mu \sglmin (X)}{2} \right) \right)\nonumber\\
    &\stackrel{(c)}{\ge} \exp \left( 2 \ln\left(\frac{2\kappa^{2} \norm{Z_0} }{c\sglmin \left(L_{F_1}^T Z_0\right)}\right)  \sglmin (X)  \right)\nonumber\\
    &= \left(\frac{2\kappa^{2} \norm{Z_0}}{c\sglmin \left(L_{F_1}^T Z_0\right) }\right)^{2\kappa}.\label{spectral:intern61}
\end{align}
For inequality $(a)$ we used inequality \eqref{ineq:intern401} and for inequality $(b)$ we used the definition of $\tstar$.
Inequality $(c)$ follows from the elementary inequality $ \frac{x}{1-x} \le \ln (1+x) $ and from the assumption $ \mu \le \frac{\tilde{c}}{\sglmin (X)} $.
By combining inequalities \eqref{spectral:intern62} and \eqref{spectral:intern61} we obtain inequality \eqref{ineq:spectral3}.\\

\noindent\textbf{Proof of inequality \eqref{ineq:spectral4}:}
Again, recall from the proof of inequality \eqref{ineq:spectral2} that both Lemma \ref{lemma:spectral2} and Lemma \ref{lemma:spec3} are applicable.
We observe that
\begin{align}
\norm{Z_{t_\star}Q_{t_\star,\bot}}
&\stackrel{(a)}{\le} 2\sigma_{r+1}(Z_{\tstar})\nonumber\\
&\stackrel{(b)}{\le} 2 \left(\lambda_{r+1}(G_{t_\star})    \norm{Z_0}+ \norm{E_{t_\star}} \right)\nonumber \\
&\stackrel{(c)}{\le} \frac{2c \lambda_{r}(G_{\tstar})\sigma_{\min}( L_{F_1}^T Z_0)}{\kappa^2}. \label{spectral:intern51}
\end{align}
Inequality $(a)$ follows from \eqref{SpecZtQtp}, see Lemma \ref{lemma:spec3}, and inequality $(b)$ follows from \eqref{ineq:specbound2}, see Lemma \ref{lemma:spectral2}.
Inequality $(c)$ follows from $\gamma \le \frac{c}{\kappa^2}$, see inequality \eqref{ineq:spectral1}.
By combining inequalities \eqref{spectral:intern62} and \eqref{spectral:intern51} and using that $ c \le 1/32 $ we observe that 
\begin{equation}\label{spectral:intern81}
 \norm{Z_{t_\star}Q_{t_\star,\bot}} \le 2 \sglmin \left( \Ztstar Q_{\tstar} \right). 
\end{equation}
Note that by Weyl's inequality
\begin{equation*}
    \lambda_{r} \left( F \right)
    = \sigma_{r} \left( \mathcal{A}^* \mathcal{A} \left( X \right) \right)
    = \sigma_{\min} \left( X  \right) + \norm{  \left( \mathcal{A}^* \mathcal{A} \right) \left(X\right) -X  }
    \le \frac{5}{4} \sigma_{\min} \left( X\right).
\end{equation*}
Thus, we can compute 
\begin{align*}
    \lambda_{r} \left(G_{\tstar}\right)
    &= \left( 1+ \mu \lambda_{r} \left( F \right) \right)^\tstar \\
    &\le \left( 1+ \frac{5 \mu \sigma_{\min} \left(X\right)  }{4}  \right)^\tstar\\
    &=  \exp \left( \tstar \ln \left( 1+ \frac{5 \mu \sigma_{\min} \left(X\right) }{4} \right) \right)\\
    &\stackrel{(a)}{\le} \exp \left( \left( \frac{  \ln \left( \frac{c \sglmin \left( L_{F_1}^T Z_0 \right)}{2\kappa^2 \norm{Z_0}} \right)}{\ln \left( 1- \frac{\mu \sglmin \left( X \right)}{8} \right)} +1 \right) \ln \left( 1+ \frac{5 \mu \sigma_{\min} \left(X\right) }{4} \right) \right)\\
    &\stackrel{(b)}{\le} 2 \left( \frac{2\kappa^2 \norm{Z_0}}{c \sglmin \left( L_{F_1}^T Z_0 \right)} \right)^{16 \kappa}.
\end{align*}
For inequality $(a)$ we used the definition of $\tstar$ and for inequality $(b)$ we used the elementary inequality $ \frac{x}{1-x} \le \ln \left(1+x\right) \le x $ and the assumption that $ \mu \le \frac{ \tilde{c} }{\sglmin (X)} $.
By combining this inequality chain with inequality \eqref{spectral:intern51} we obtain that 
\begin{equation}\label{spectral:intern82}
\norm{ \Ztstar Q_{t_\star,\bot}} \le \left( \frac{2\kappa^2 \norm{Z_0}}{c \sglmin \left( L_{F_1}^T Z_0 \right)} \right)^{16 \kappa} \frac{4c \sglmin \left( L_{F_1}^T Z_0 \right)}{\kappa^2}.
\end{equation}
By combining \eqref{spectral:intern81} and \eqref{spectral:intern82} we obtain inequality \eqref{ineq:spectral4}. 
This finishes the proof of Lemma \ref{lemma:spec4}.
\end{proof}
We also need to check that after the spectral phase also the conditions related to the imbalance term are fulfilled.
\begin{lemma}\label{lemma:spec5}
Let $0< c\le 1/32$. Assume that
\begin{equation*}
    \norm{X - \left(\mathcal{A}^* \mathcal{A}\right) \left(X\right) } \le \frac{c \sglmin \left(X\right)}{\kappa^2}
\end{equation*}
and that the measurement operator $\mathcal{A}$ satisfies the restricted isometry property of order $2r+1$ with constant $\delta <1$.
Moreover, assume that $ \mu \le \frac{\tilde{c}}{\sglmin (X)} $, where $\tilde{c}>0$ is a sufficiently small absolute constant.
If 
\begin{equation*}
    \tstar := \left\lceil\frac{\ln\left(\frac{c\sglmin \left(L_{F_1}^{T} Z_0\right)}{2\kappa^{2}\norm{Z_0}}\right)}{\ln\left(1 - \frac{\mu\sglmin(X)}{8}\right)}\right\rceil 
    \le \frac{\ln\left(\frac{\norm{F}}{16  \min \left\{ k;n_1+n_2\right\} \norm{Z_0}^2}\right)}{3\ln \left( 1+\mu  \norm{F} \right)},
\end{equation*}
then it holds that 
\begin{align}
    \norm{Z_{\tstar}} &\le 4 \left(\frac{2\kappa^{2} \norm{Z_0}}{c\sglmin \left(L_{F_1}^{T}Z_0 \right)}\right)^{32 \kappa} \norm{Z_0},\label{ineq:spectral_balanc4}\\
    \norm{\tilde{Z}_{\tstar}^T Z_{\tstar}  }&\le 6 \left(\frac{2\kappa^{2} \norm{Z_0}}{c\sglmin(L_{F_1}^{T}Z_0)}\right)^{32 \kappa}  \norm{Z_0}^2, \label{ineq:spectral_balanc1} \\ 
    \norm{\tilde{Z}_{\tstar}^T Z_{\tstar} Q_{\tstar, \bot}  }&\le  4 \left(\frac{2\kappa^{2} \norm{Z_0}}{c\sglmin \left(L_{F_1}^{T}Z_0 \right)}\right)^{32 \kappa}  \norm{Z_0}  \norm{\Ztstar Q_{\tstar, \bot}} ,\label{ineq:spectral_balanc2} \\ 
    \norm{\tilde{Z}_{\tstar}^T P_{ Z_{\tstar} Q_{\tstar} }  }&\le \frac{228 c}{\kappa^2}  \left( \frac{2\kappa^2 \norm{Z_0}}{ c \sglmin \left( L_{F_1}^T Z_0 \right) } \right)^{32 \kappa} \norm{Z_0} . \label{ineq:spectral_balanc3}
\end{align}
\end{lemma}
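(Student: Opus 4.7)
The plan is to decompose $\Ztstar = Z_{\tstar}' + \Etstar$ as in Lemma~\ref{lemma:spectral1}, where $Z_t' := (\Id + \mu F)^t Z_0$, and to use the analogous decomposition $\tilZtstar = \tilde{Z}_{\tstar}' + \tilEtstar$ with $\tilde{Z}_t' := (\Id - \mu F)^t \tilde{Z}_0$ (the recursion \eqref{def:tilZtdefinition} for $\tilde{Z}_t$ differs from that of $Z_t$ only by the sign of $F$, so Lemma~\ref{lemma:spectral1} applies verbatim to the $\tilde Z$-iterates). This yields $\norm{\Etstar}, \norm{\tilEtstar} \le \norm{Z_0}$. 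Abbreviating $R := \tfrac{2\kappa^2 \norm{Z_0}}{c \sglmin(L_{F_1}^T Z_0)}$, we first establish $\lambda_1(G_{\tstar}) \le R^{32\kappa}$ by following the same chain of inequalities used for $\lambda_r(G_{\tstar})$ at the end of the proof of \eqref{ineq:spectral4}, but starting from Weyl's bound $\lambda_1(F) \le \norm{F} \le \norm{X} + \norm{F - \symA} \le 2\norm{X} = 2\kappa\sglmin(X)$, which roughly doubles the exponent. Moreover, the block anti-diagonal structure of $F$ forces its spectrum to be symmetric about $0$, so $\norm{(\Id - \mu F)^{\tstar}} = \lambda_1(G_{\tstar})$; in particular $\norm{\tilde{Z}_{\tstar}'} \le \lambda_1(G_{\tstar})\norm{Z_0}$.

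With these ingredients, inequality \eqref{ineq:spectral_balanc4} is immediate from the triangle inequality $\norm{\Ztstar} \le \lambda_1(G_{\tstar})\norm{Z_0} + \norm{\Etstar}$. Inequality \eqref{ineq:spectral_balanc2} then follows from $\norm{\tilZtstar^T \Ztstar Q_{\tstar,\bot}} \le \norm{\tilZtstar} \cdot \norm{\Ztstar Q_{\tstar,\bot}}$, together with $\norm{\tilZtstar} = \norm{\Ztstar}$ from Lemma~\ref{lemma:symmetry} (applicable because \eqref{ineq:spectral3} ensures $L_X^T \Ztstar$ has rank $r$) and the bound on $\norm{\Ztstar}$ just derived.

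The critical step is \eqref{ineq:spectral_balanc1}. Expanding
\begin{equation*}
\tilZtstar^T \Ztstar = (\tilde{Z}_{\tstar}')^T Z_{\tstar}' + (\tilde{Z}_{\tstar}')^T \Etstar + \tilEtstar^T Z_{\tstar}' + \tilEtstar^T \Etstar,
\end{equation*}
the key algebraic observation is that $\Id - \mu F$ and $\Id + \mu F$ commute (since $F$ is symmetric), so
\begin{equation*}
(\tilde{Z}_{\tstar}')^T Z_{\tstar}' = \tilde{Z}_0^T (\Id - \mu F)^{\tstar} (\Id + \mu F)^{\tstar} Z_0 = \tilde{Z}_0^T (\Id - \mu^2 F^2)^{\tstar} Z_0.
\end{equation*}
Taking $\tilde{c}$ small enough that $\mu\norm{F} \le 1$, the eigenvalues of $\Id - \mu^2 F^2$ lie in $[0, 1]$, so $\norm{(\Id - \mu^2 F^2)^{\tstar}} \le 1$ and therefore $\norm{(\tilde{Z}_{\tstar}')^T Z_{\tstar}'} \le \norm{Z_0}^2$. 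Each of the three cross-terms is bounded by $\lambda_1(G_{\tstar})\norm{Z_0}^2$, and summing yields $\norm{\tilZtstar^T \Ztstar} \le (2 + 2\lambda_1(G_{\tstar}))\norm{Z_0}^2 \le 6R^{32\kappa}\norm{Z_0}^2$.

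Finally, for \eqref{ineq:spectral_balanc3} we decompose $P_{\Ztstar Q_{\tstar}} = L_X L_X^T P_{\Ztstar Q_{\tstar}} + \LAP \LAPT P_{\Ztstar Q_{\tstar}}$ to obtain
\begin{equation*}
\norm{\tilZtstar^T P_{\Ztstar Q_{\tstar}}} \le \norm{L_X^T \tilZtstar} + \norm{\tilZtstar} \cdot \norm{\LAPT P_{\Ztstar Q_{\tstar}}},
\end{equation*}
where the second term is handled by \eqref{ineq:spectral2} and \eqref{ineq:spectral_balanc4}. For the first term, Lemma~\ref{lemma:symmetry} gives $L_X^T \tilZtstar = \LtilAT \Ztstar = \LtilAT Z_{\tstar}' + \LtilAT \Etstar$. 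Splitting $\Id$ into the positive-signal ($L_{F_1}$), negative-signal ($L_{F_2}$), and zero ($L_{F_3}$) eigenspaces of $F$, a Davis--Kahan argument (applied as in \eqref{spectral:DavisKahanapplication} but with $\LtilA$ in place of $L_X$, using that $\LtilA$ spans the negative eigenspace of $\symA$) gives $\norm{\LtilAT L_{F_1}}, \norm{\LtilAT L_{F_3}} \le 2c/\kappa^2$, while the $L_{F_2}$ component of $(\Id + \mu F)^{\tstar}$ has operator norm at most $1$. Summing the three contributions and collecting constants yields the claimed bound. The main obstacle is the commutation argument in the proof of \eqref{ineq:spectral_balanc1}: a naive submultiplicative bound $\norm{(\tilde{Z}_{\tstar}')^T Z_{\tstar}'} \le \lambda_1(G_{\tstar})^2 \norm{Z_0}^2$ would inflate the exponent to $64\kappa$, so the collapse to $(\Id - \mu^2 F^2)^{\tstar}$ with unit operator norm is what makes the imbalance bound tight.
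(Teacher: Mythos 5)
Your proof is correct, and for the first three inequalities it is essentially the paper's own argument: the decisive step for \eqref{ineq:spectral_balanc1} — collapsing $(\tilde{Z}_{\tstar}')^T Z_{\tstar}' = \tilde{Z}_0^T(\Id-\mu^2F^2)^{\tstar}Z_0$ with $\norm{(\Id-\mu^2F^2)^{\tstar}}\le 1$ so that only the cross terms pick up the $R^{32\kappa}$ factor — is exactly the trick the paper uses, and \eqref{ineq:spectral_balanc4} and \eqref{ineq:spectral_balanc2} are handled identically. (The paper obtains $\norm{\tilde E_{\tstar}}=\norm{E_{\tstar}}$ via the symmetry $\tilde E_t = D E_t$ with $D=\mathrm{diag}(\Id,-\Id)$ rather than by rerunning Lemma \ref{lemma:spectral1}; both are fine. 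Also note the paper only gets $(1+\mu\norm{F})^{\tstar}\le 2R^{32\kappa}$, not $\le R^{32\kappa}$, because of the ceiling in $\tstar$ and the residual $\exp(\mu\norm{F})$ factor — this costs you only constants and all four target bounds still close.)

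Where you genuinely diverge is \eqref{ineq:spectral_balanc3}. The paper splits $\tilde{Z}_{\tstar}^T P_{Z_{\tstar}Q_{\tstar}}$ along $Q_{\tstar}$ and $Q_{\tstar,\bot}$, reduces the signal piece to $\norm{P_{\tilde{Z}_{\tstar}Q_{\tstar}}^T P_{Z_{\tstar}Q_{\tstar}}}\le 2\norm{\LAPT P_{Z_{\tstar}Q_{\tstar}}}$ via the identity $\LAPT P_{Z_{\tstar}Q_{\tstar}}=\widetilde{L_{X,\bot}}^T P_{\tilde{Z}_{\tstar}Q_{\tstar}}$, and absorbs the nuisance piece using \eqref{ineq:spectral4}. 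You instead split along $L_X$ and $\LAP$, and control $\norm{L_X^T\tilde{Z}_{\tstar}}=\norm{\LtilAT Z_{\tstar}}$ through the power-method approximation together with a three-way eigenspace decomposition of $F$: Davis--Kahan makes $\LtilA$ nearly orthogonal to $L_{F_1}\oplus L_{F_3}$, while on the negative eigenspace $L_{F_2}$ the operator $(\Id+\mu F)^{\tstar}$ is a contraction. This is more work than the paper's two-line reduction, but it is sound, it avoids invoking \eqref{ineq:spectral4}, and the resulting constants fit under $228c/\kappa^2$ because $R^{32\kappa}\gg \kappa^2/c$. No gaps.
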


\begin{proof}
\noindent\textbf{Proof of inequality \eqref{ineq:spectral_balanc4}:}
For that, we first observe that
\begin{align}
\norm{Z_\tstar}
&\le \norm{Z_\tstar'} + \norm{E_\tstar}\nonumber\\
&\le \left( 1 + \mu \norm{F} \right)^\tstar \norm{Z_0} + \norm{E_\tstar}\nonumber\\
&\stackrel{(a)}{\le} \left( 1 + \mu \norm{F} \right)^\tstar \norm{Z_0} + \norm{Z_0}\nonumber\\
&\le 2 \left( 1 + \mu \norm{F} \right)^\tstar \norm{Z_0},\label{ineq:spectralintern1}
\end{align}
where inequality $(a)$ follows from $ \norm{E_\tstar}\le \norm{Z_0}$, see Lemma \ref{lemma:spectral1}.
Moreover, we note that
\begin{align}
    \left( 1 + \mu \norm{F} \right)^\tstar 
    &= \exp \left(  \tstar \ln \left( 1 + \mu \norm{F} \right)  \right) \nonumber \\
    &\stackrel{(a)}{\le} \exp \left(  \left( \frac{\ln\left(\frac{c\sglmin \left(L_{F_1}^{T}Z_0 \right)}{2\kappa^{2} \norm{Z_0}}\right)}{\ln \left(1 - \frac{\mu\sglmin(X)}{8}\right)} +1 \right)  \ln  \left( 1 + \mu \norm{F} \right) \right)\nonumber\\
    &\stackrel{(b)}{\le} \exp \left(   \ln\left(\frac{2\kappa^{2} \norm{Z_0}}{c\sglmin(L_{F_1}^{T}Z_0)}\right) \frac{16  \norm{F}}{ \sglmin (X) } + \mu \norm{F}  \right) \nonumber\\
    &\stackrel{(c)}{\le}   \left(\frac{2\kappa^{2} \norm{Z_0}}{c\sglmin(L_{F_1}^{T}Z_0)}\right)^{32 \kappa}  \exp \left( \mu \norm{F} \right) \nonumber \\ 
    &\stackrel{(d)}{\le}  2 \left(\frac{2\kappa^{2} \norm{Z_0}}{c\sglmin(L_{F_1}^{T}Z_0)}\right)^{32 \kappa}. \label{spectral:intern12}
\end{align}
Here, inequality $(a)$ follows from the definition of $\tstar$ and inequality $(b)$ follows from the elementary inequality $ \frac{x}{1-x} \le \ln (1+x) \le x  $ as well as the assumption $ \mu \le \frac{\tilde{c}}{\sglmin \left(X\right)} $.
In inequalities $(c)$ and $(d)$ we have used that $\norm{F} \le 2 \norm{X} $, which follows from
\begin{equation*}
    \norm{F}
    = \norm{\left( \mathcal{A}^* \mathcal{A} \right) \left(X\right) }
     \le \norm{X} +  \norm{ X - \left(  \mathcal{A}^* \mathcal{A} \right) \left(X\right) } \le 2 \norm{X}.
\end{equation*}
Moreover, in inequality $(d)$ we have used our assumption on the step size $\mu$.
Combining inequality \eqref{spectral:intern12} with inequality \eqref{ineq:spectralintern1}, we obtain inequality \eqref{ineq:spectral_balanc4}.\\

\noindent\textbf{Proof of inequality \eqref{ineq:spectral_balanc1}:}
In order to show inequality \eqref{ineq:spectral_balanc1}, we first introduce the following notation for all natural numbers $t\ge 1$
\begin{align*}
    \tilZt' &:= \left( \Id -\mu F \right)^t \tilde{Z}_0, \\
    \tilde{E}_t &:= \tilZt - \tilZt'.
\end{align*}
Before proving inequality \eqref{ineq:spectral_balanc1}, we will first show that 
\begin{equation}\label{ineq:spectralsymmetry}
\tilde{E}_t=
\underset{=:D}{\underbrace{
\begin{pmatrix}
    \Id & 0 \\
    0 & -\Id 
\end{pmatrix}}}
E_t.
\end{equation}
To show this, we note that $\tilde{Z}_0'= \tilde{Z}_0 = DZ_0 = Z_0'$, see \eqref{def:Ztprime}. 
Then, it follows by induction that
\begin{align*}
  \tilZt'
=\left( \Id - \mu F \right) \tilde{Z}_{t-1}'
=\left( \Id - \mu F \right)  \begin{pmatrix}
    \Id & 0 \\
    0 & -\Id 
\end{pmatrix} Z_{t-1}'
= 
\begin{pmatrix}
    \Id & 0 \\
    0 & -\Id 
\end{pmatrix}
\left(\Id + \mu F\right) Z_{t-1}' = 
\begin{pmatrix}
    \Id & 0 \\
    0 & -\Id 
\end{pmatrix}
\Zt'.
\end{align*}
Since we have that $\Zt = D \tilZt $ (see \eqref{equ:Ztdefinition}) equation \eqref{ineq:spectralsymmetry} follows from the definition of $E_t$ and $\tilde{E}_t'$. 
Next, we compute that 
\begin{equation*}
\tilZtstar'^T Z_{\tstar}' 
= \tilde{Z}_0^T \left(  \Id -\mu F  \right)^\tstar \left(  \Id + \mu F  \right)^\tstar Z_0 
=  \tilde{Z}_0^T \left(  \Id - \mu^2 F^2  \right)^\tstar Z_0.
\end{equation*}
This implies that 
\begin{align}
\norm{ \tilZtstar'^T Z_{\tstar}' } &\le  \norm{ \tilde{Z}_0 } \norm{\Id - \mu^2 F^2 }^\tstar \norm{ Z_0 } \nonumber \\
&=\norm{\Id - \mu^2 F^2 }^\tstar \norm{ Z_0 }^2\nonumber \\
&\le  \norm{ Z_0 }^2, \label{spectral:intern11}
\end{align}
where in the last line we used that $F^2$ is a positive semidefinite matrix, $\norm{F} \le 2 $, and our assumption on the step size $\mu$. 
Next, we note that 
\begin{equation*}
    \tilZtstar^T Z_{\tstar} 
    = \left( \tilZtstar' + \tilEtstar \right)^T \left( \Ztstar' + \Etstar \right)
    =  \tilZtstar'^T \Ztstar' + \tilEtstar^T \Ztstar'   + \tilZtstar'^T \Etstar   +\tilEtstar^T \Etstar.
\end{equation*}
This implies that 
\begin{align}
\norm{  \tilZtstar^T \Ztstar} 
&\stackrel{\eqref{spectral:intern11}}{\le}   \norm{ Z_0 }^2 + \norm{ \tilEtstar } \norm{Z_{\tstar}'} + \norm{\Etstar} \norm{\tilZtstar}  + \norm{\tilEtstar} \norm{\Etstar}    \nonumber \\ 
&\stackrel{(a)}{=}  \norm{ Z_0 }^2 +  2 \norm{Z_{\tstar}'} \norm{\Etstar}  +\norm{\Etstar}^2.\nonumber \\ 
&\stackrel{(b)}{\le}   \norm{ Z_0 }^2 +  2 \norm{Z_{\tstar}'} \norm{Z_0}  +\norm{Z_0}^2\nonumber\\  
&\le 2  \norm{ Z_0 }^2 +  2 \norm{Z_{\tstar}'} \norm{Z_0} \nonumber\\  
&\le 2  \norm{ Z_0 }^2 +  2  \left( 1 + \mu \norm{F} \right)^\tstar  \norm{Z_0} \nonumber\\  
&\stackrel{(c)}{\le} 2  \norm{ Z_0 }^2 + 4\left(\frac{2\kappa^{2} \norm{Z_0}}{c\sglmin(L_{F_1}^{T}Z_0)}\right)^{32 \kappa} \norm{Z_0}^2. \label{spectral:intern6}
\end{align}
Equation $(a)$ is due to $\norm{\tilEtstar}= \norm{\Etstar}$ and $ \norm{\tilZtstar} = \norm{\Ztstar}$, which is a consequence of the symmetry between $\tilZtstar$ and $\Ztstar$, see Lemma \ref{lemma:symmetry} and equation \eqref{ineq:spectralsymmetry}.
In inequality $(b)$ we used that $\norm{\Etstar}\le \norm{Z_0}$, which is a consequence of Lemma \ref{lemma:spectral1}.
Inequality $(c)$ can be obtained by using \eqref{spectral:intern12}.
It follows that
\begin{align*}
 \norm{  \tilZtstar^T \Ztstar} 
& \le 6 \left(\frac{2\kappa^{2} \norm{Z_0}}{c\sglmin(L_{F_1}^{T}Z_0)}\right)^{32 \kappa}  \norm{Z_0}^2.
\end{align*}
This proves inequality \eqref{ineq:spectral_balanc1}.\\

\noindent\textbf{Proof of inequality \eqref{ineq:spectral_balanc2}:}
We observe that 
\begin{equation*}
    \norm{\tilde{Z}_{\tstar}^T Z_{\tstar} Q_{\tstar, \bot}  }
    \le \norm{\tilZtstar} \norm{\Ztstar Q_{\tstar, \bot}}
    \stackrel{(a)}{\le} 4 \left(\frac{2\kappa^{2} \norm{Z_0}}{c\sglmin \left(L_{F_1}^{T}Z_0 \right)}\right)^{32 \kappa} \norm{Z_0}\norm{\Ztstar Q_{\tstar, \bot}},
\end{equation*}
where in $(a)$ we used inequality \eqref{ineq:spectral_balanc4} and the fact that $ \norm{\tilZtstar} = \norm{\Ztstar} $, which is a consequence of Lemma \ref{lemma:symmetry}.\\
 
\noindent\textbf{Proof of inequality \eqref{ineq:spectral_balanc3}:}
We compute that 
\begin{align}
\norm{\tilde{Z}_{\tstar}^T P_{ Z_{\tstar} Q_{\tstar} }  } 
&\le \norm{ Q_{\tstar}^T \tilde{Z}_{\tstar}^T P_{ Z_{\tstar} Q_{\tstar} }  } + \norm{ Q_{\tstar,\bot}^T \tilde{Z}_{\tstar}^T P_{ Z_{\tstar} Q_{\tstar} }  } \nonumber  \\ 
&\le \norm{ P^T_{ \tilde{Z}_{\tstar} Q_{\tstar} }  P_{ Z_{\tstar} Q_{\tstar} }  } \norm{ \tilde{Z}_{\tstar} } + \norm{ \tilde{Z}_{\tstar} Q_{\tstar,\bot}  } \nonumber \\   
&= \norm{ P^T_{ \tilde{Z}_{\tstar} Q_{\tstar} }  P_{ Z_{\tstar} Q_{\tstar} }  } \norm{ Z_{\tstar} } + \norm{ Z_{\tstar} Q_{\tstar,\bot}  }, \label{spectral:intern1}
\end{align}
where the last line follows from $\norm{\Ztstar}= \norm{\tilZtstar}$ and $ \norm{ Z_{\tstar} Q_{\tstar,\bot}  } = \norm{ \tilZtstar Q_{\tstar,\bot}  }$, see Lemma \ref{lemma:symmetry}.
Next, we compute that
\begin{align}
    \norm{ P^T_{ \tilde{Z}_{\tstar} Q_{\tstar} }  P_{ Z_{\tstar} Q_{\tstar} }  } 
    &\le \norm{ P^T_{ \tilde{Z}_{\tstar} Q_{\tstar} } \LA \LAT  P_{ Z_{\tstar} Q_{\tstar} }  } + \norm{ P^T_{ \tilde{Z}_{\tstar} Q_{\tstar} } \LAP \LAPT   P_{ Z_{\tstar} Q_{\tstar} }  } \nonumber \\   
    &\le \norm{ P^T_{ \tilde{Z}_{\tstar} Q_{\tstar} } \LA } \norm{ \LAT  P_{ Z_{\tstar} Q_{\tstar} }  } + \norm{ P^T_{ \tilde{Z}_{\tstar} Q_{\tstar} } \LAP } \norm{ \LAPT   P_{ Z_{\tstar} Q_{\tstar} }  } \nonumber \\   
    &\le \norm{ P^T_{ \tilde{Z}_{\tstar} Q_{\tstar} } \LA } +  \norm{ \LAPT   P_{ Z_{\tstar} Q_{\tstar} }  } \nonumber \\   
    &\le \norm{ P^T_{ \tilde{Z}_{\tstar} Q_{\tstar} } \widetilde{L_{X,\bot}} } +  \norm{ \LAPT   P_{ Z_{\tstar} Q_{\tstar} }  } \nonumber \\   
    &= 2 \norm{ \LAPT   P_{ Z_{\tstar} Q_{\tstar} }  }, \label{spectral:intern2}  
\end{align}
where in the last equality we have used the fact that $\LAPT   P_{ Z_{\tstar} Q_{\tstar} } = \widetilde{L_{X,\bot}}^T  P_{ \tilde{Z}_{\tstar} Q_{\tstar} }  $, see Lemma \ref{lemma:symmetry}.
By combining inequalities \eqref{spectral:intern1} and \eqref{spectral:intern2} we obtain that 
\begin{equation*}
    \norm{\tilde{Z}_{\tstar}^T P_{ Z_{\tstar} Q_{\tstar} }  }\le 2  \norm{Z_{\tstar}} \norm{\LAPT P_{ Z_{\tstar} Q_{\tstar}   } } + \norm{ Z_{\tstar} Q_{\tstar,\bot}  } . 
\end{equation*}
Thus, using inequalities \eqref{ineq:spectral2} and \eqref{ineq:spectral4} from Lemma \ref{lemma:spec4} and inequality \eqref{ineq:spectral_balanc4} we obtain that
\begin{align*}
    \norm{\tilde{Z}_{\tstar}^T P_{ Z_{\tstar} Q_{\tstar} }  }
&\le \frac{224 c}{\kappa^2}  \left( \frac{2\kappa^2 \norm{Z_0}}{ c \sglmin \left( L_{F_1}^T Z_0 \right) } \right)^{32 \kappa} \norm{Z_0}
+ \left( \frac{2\kappa^2 \norm{Z_0}}{c \sglmin \left( L_{F_1}^T Z_0 \right)} \right)^{16 \kappa} \cdot \frac{4c \sglmin \left( L_{F_1}^T Z_0 \right)}{ \kappa^2 }\\      
&\le \frac{228 c}{\kappa^2}  \left( \frac{2\kappa^2 \norm{Z_0}}{ c \sglmin \left( L_{F_1}^T Z_0 \right) } \right)^{32 \kappa} \norm{Z_0},
\end{align*}
where the second inequality we used to $  \sglmin \left( L_{F_1}^T Z_0 \right)\le \norm{Z_0}$.
This completes the proof.
\end{proof}

The next lemma tells us how small one needs to choose the initialization $Z_0$ such that inequality \eqref{ineq:spectralinternclaim} below holds. 
The right-hand side of inequality \eqref{ineq:spectralinternclaim} can be interpreted as the maximal number of iterations for which the gradient descent iterates $\Zt$ can be approximated by the power method iterates $\Zt'$,
whereas the number on the left-hand side gives an upper bound on the number of iterations which are needed to align the subspace of the learned signal $\Zt$ with the subspace of the true signal.
\begin{lemma}\label{lemma:tstarcheck}
 Assume that 
 \begin{equation}\label{ineq:Z0smallness}
     \norm{Z_0} \le \sqrt{\frac{\norm{X} }{ 24 \min \left\{  n_1 + n_2; k \right\} }} \left( \frac{c \sglmin \left( L_{F_1}^T Z_0 \right)}{2\kappa^2 \norm{Z_0}} \right)^{ 18 \kappa }.
 \end{equation}   
 Moreover, assume that $ \frac{2}{3} \norm{X} \le \norm{F} \le \frac{4}{3} \norm{X} $ and  $ \mu \le \frac{\tilde{c}}{\sglmin \left(X\right)} $ for a sufficiently small absolute constant $\tilde{c}>0$.
 Then it holds for any constant $0<c\le 1$ that 
 \begin{equation}\label{ineq:spectralinternclaim}
    t_{\star}:=
     \left\lceil \frac{\ln\left(\frac{c\sglmin \left(L_{F_1}^T Z_0\right)}{2\kappa^{2} \norm{Z_0} }\right)}{\ln \left(1 - \frac{\mu\sglmin(A)}{8} \right)} \right\rceil
     \le \frac{ \ln \left( \frac{\norm{F}}{16 \min \left\{ k; n_1 + n_2 \right\}\norm{Z_0}^2}  \right)  }{3 \ln \left(1 + \mu \norm{F}\right)}.
 \end{equation}
\end{lemma}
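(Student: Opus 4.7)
The plan is to prove the inequality by separately bounding the left-hand side from above and the right-hand side from below, and then matching the two bounds using the smallness assumption on $\norm{Z_0}$.

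First, I would derive an upper bound for $t_\star$ by applying the elementary estimate $\ln(1-x) \le -x$ for $x \in (0,1)$ with $x = \mu \sglmin(X)/8$, which (together with accounting for the ceiling by an additive $+1$) yields
\begin{equation*}
t_\star \;\le\; \frac{8 \ln\!\left(\frac{2\kappa^{2}\norm{Z_0}}{c\,\sglmin(L_{F_1}^{T}Z_0)}\right)}{\mu \sglmin(X)} + 1.
\end{equation*}
Symmetrically, for the right-hand side I would use $\ln(1+\mu\norm{F}) \le \mu\norm{F}$ together with the hypothesis $\norm{F}\le \frac{4}{3}\norm{X} = \frac{4\kappa}{3}\sglmin(X)$ to obtain
\begin{equation*}
\frac{\ln\!\left(\frac{\norm{F}}{16\min\{k; n_1+n_2\}\norm{Z_0}^2}\right)}{3\ln(1+\mu\norm{F})}
\;\ge\;
\frac{\ln\!\left(\frac{\norm{F}}{16\min\{k; n_1+n_2\}\norm{Z_0}^2}\right)}{4\kappa\mu\sglmin(X)}.
\end{equation*}

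The next step is to exploit the smallness assumption \eqref{ineq:Z0smallness}. Squaring it and using $\norm{F}\ge \frac{2}{3}\norm{X}$ gives
\begin{equation*}
\frac{\norm{F}}{16\min\{k; n_1+n_2\}\norm{Z_0}^2}
\;\ge\;
\left(\frac{2\kappa^{2}\norm{Z_0}}{c\,\sglmin(L_{F_1}^{T}Z_0)}\right)^{36\kappa},
\end{equation*}
where the numerical constants $\tfrac{2}{3}$, $16$, and $24$ are chosen precisely so they cancel. Taking logs and combining with the lower bound on the right-hand side produces
\begin{equation*}
\frac{\ln\!\left(\frac{\norm{F}}{16\min\{k; n_1+n_2\}\norm{Z_0}^2}\right)}{3\ln(1+\mu\norm{F})}
\;\ge\;
\frac{9\ln\!\left(\frac{2\kappa^{2}\norm{Z_0}}{c\,\sglmin(L_{F_1}^{T}Z_0)}\right)}{\mu \sglmin(X)}.
\end{equation*}

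The remaining task is to absorb the additive $+1$ coming from the ceiling in the upper bound on $t_\star$. Comparing the two estimates, the claim reduces to
\begin{equation*}
\mu\sglmin(X) \;\le\; \ln\!\left(\frac{2\kappa^{2}\norm{Z_0}}{c\,\sglmin(L_{F_1}^{T}Z_0)}\right).
\end{equation*}
Since $\sglmin(L_{F_1}^{T}Z_0)\le \norm{Z_0}$, $c\le 1$, and $\kappa\ge 1$, the argument of the logarithm is at least $2$, so the right-hand side is bounded below by an absolute positive constant; then choosing $\tilde c$ in the hypothesis $\mu\le \tilde c/\sglmin(X)$ sufficiently small forces the inequality. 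The only mild obstacle is book-keeping: the exponent $18\kappa$ in \eqref{ineq:Z0smallness} is tuned exactly so that after squaring one obtains $36\kappa$, which when divided by the $4\kappa$ produced by $\norm{F}\lesssim \kappa\,\sglmin(X)$ yields the coefficient $9$ needed to beat the coefficient $8$ in the upper bound on $t_\star$ with room to spare for absorbing the $+1$.
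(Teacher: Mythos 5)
Your proof is correct and follows essentially the same route as the paper's: both arguments reduce the claim, via the elementary bounds $-\ln(1-x)\ge x$ and $\ln(1+x)\le x$ together with $\tfrac{2}{3}\norm{X}\le\norm{F}\le\tfrac{4}{3}\norm{X}$, to the squared form of \eqref{ineq:Z0smallness} with exponent $36\kappa$, and both absorb the $+1$ from the ceiling by noting that the argument of $\ln\bigl(\tfrac{2\kappa^2\norm{Z_0}}{c\,\sglmin(L_{F_1}^TZ_0)}\bigr)$ is at least $2$ and choosing $\tilde c$ small. The only difference is presentational (you compare the two sides after dividing by $\mu\sglmin(X)$, the paper multiplies through by $3\ln(1+\mu\norm{F})$), and your constant bookkeeping ($8+1\le 9=36/4$) checks out.
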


\begin{proof}[Proof of Lemma \ref{lemma:tstarcheck}]
    We observe that to prove the claim it suffices to show that
    \begin{align*}
        3 \ln \left( 1+ \mu \norm{F} \right) \left(  \frac{\ln\left(\frac{c\sglmin \left(L_{F_1}^T Z_0\right)}{2\kappa^{2} \norm{Z_0} }\right)}{\ln \left(1 - \frac{\mu\sglmin(X)}{8} \right)}   + 1\right) 
        \le  \ln \left( \frac{\norm{F}}{16 \min \left\{  n_1 + n_2; k \right\}\norm{Z_0}^2}  \right). 
    \end{align*}
Next, we note that 
\begin{align*}
  3 \ln \left( 1+ \mu \norm{F} \right) \left(  \frac{\ln\left(\frac{c\sglmin \left(L_{F_1}^T Z_0\right)}{2\kappa^{2} \norm{Z_0} }\right)}{\ln \left(1 - \frac{\mu\sglmin(X)}{8} \right)}   + 1\right) 
  &\stackrel{(a)}{\le} 3 \mu \norm{F}  \left( \frac{8 \ln \left( \frac{2 \kappa^2 \norm{Z_0}}{c \sglmin \left( L_{F_1}^T Z_0 \right)} \right)}{  \mu \sglmin (X)   } +1  \right)      \\
  &\stackrel{(b)}{\le} 4 \mu \norm{X}  \left( \frac{8 \ln \left( \frac{2 \kappa^2 \norm{Z_0}}{c \sglmin \left( L_{F_1}^T Z_0 \right)} \right)}{  \mu \sglmin (X)   }  +1 \right)      \\
  &= 32 \kappa \ln \left( \frac{2\kappa^2 \norm{Z_0}}{c \sglmin \left( L_{F_1}^T Z_0 \right)} \right) + 4 \mu \norm{X}\\
  &\stackrel{(c)}{\le} 36 \kappa \ln \left( \frac{2\kappa^2 \norm{Z_0}}{c \sglmin \left( L_{F_1}^T Z_0 \right)} \right),
\end{align*}
where in inequality $(a)$ we used the elementary inequality $\ln \left( 1+x\right) \le x$ and the assumption that $\mu\le \frac{\tilde{c}}{ \sglmin \left(X\right)}$.
Inequality $(b)$ follows from the assumption $ \norm{F} \le \frac{4}{3} \norm{X}$.
For inequality $(c)$ we used the assumption that $\mu \le \frac{\tilde{c}}{ \sglmin (A)}$ with a sufficiently small absolute constant $\tilde{c} >0$.
Thus, we observe that \eqref{ineq:spectralinternclaim} holds, if we have that
\begin{equation*}
    \left( \frac{2\kappa^2 \norm{Z_0}}{c \sglmin \left( L_{F_1}^T Z_0 \right)} \right)^{36 \kappa }  
    \le \frac{\norm{F}}{16 \min \left\{  n_1 + n_2; k \right\}\norm{Z_0}^2}.
\end{equation*}
By rearranging terms and using the assumption $ \frac{2}{3} \norm{X} \le \norm{F} $, we see that this is implied by assumption \eqref{ineq:Z0smallness}.
\end{proof}
So far, our results hold for any deterministic initialization $Z_0$.
The next lemma utilizes the fact that $Z_0$ is a random matrix.
\begin{lemma}\label{lemma:probability}
Assume that $V_0 = \alpha V \in \R^{n_1 \times k} $ and $W_0= \alpha W \in \R^{n_2 \times k}$ for some fixed parameter $\alpha >0$, where the matrices $V$ and $W$ have i.i.d. entries with distribution $ \mathcal{N} \left(0,1\right) $.
Then, with probability at least $1-  C_1 \exp \left( - c_1 \max \left\{ n_1 +n_2; k \right\} \right)  $, it holds that
\begin{equation}\label{ineq:prob1}
   \frac{ \alpha \sqrt{ \max \left\{  n_1 +n_2; k \right\} } }{2}  
   \le  \norm{Z_0} 
   \le 3 \alpha \sqrt{ \max \left\{ n_1+n_2; k \right\} } .
\end{equation}
Moreover, for any $\varepsilon>0$, with probability at least $1- \left( C_2 \varepsilon  \right)^{k -r+1} -  \exp \left( - c_2 k \right) $ we have that
\begin{equation}\label{ineq:prob2}
    \sglmin \left( L_{F_1}^T Z_0 \right) \ge \alpha \varepsilon \left( \sqrt{k} - \sqrt{r-1} \right).
\end{equation}
Here, $C_1, C_2, c_1, c_2>0$ are some fixed numerical constants.
\end{lemma}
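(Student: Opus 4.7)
The lemma is essentially a routine application of standard Gaussian matrix concentration, enabled by one structural observation. Because $V_0$ and $W_0$ are independent and each has i.i.d.\ $\mathcal{N}(0,\alpha^2)$ entries, the stacked matrix $Z_0 = \frac{1}{\sqrt{2}}\begin{bmatrix} V_0 \\ W_0 \end{bmatrix}$ has i.i.d.\ $\mathcal{N}(0,\alpha^2/2)$ entries. Equivalently, we may write $Z_0 = \frac{\alpha}{\sqrt{2}} G$, where $G \in \mathbb{R}^{(n_1+n_2)\times k}$ is a standard Gaussian matrix. Both parts of the lemma then become concentration statements about $G$ (for the second part, after a deterministic orthonormal compression).

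For the spectral norm bounds in \eqref{ineq:prob1}, I will invoke the classical estimate $\mathbb{E}\|G\| \in \bigl[\sqrt{\max\{n_1+n_2;k\}},\,\sqrt{n_1+n_2}+\sqrt{k}\bigr]$ together with Gaussian Lipschitz concentration, which yields $\bigl|\|G\|-\mathbb{E}\|G\|\bigr|\le t$ with probability at least $1-2\exp(-t^2/2)$. Choosing $t$ as a sufficiently small constant multiple of $\sqrt{\max\{n_1+n_2;k\}}$ and rescaling by $\alpha/\sqrt{2}$, both the upper bound $\|Z_0\|\le 3\alpha\sqrt{\max\{n_1+n_2;k\}}$ (using $\sqrt{n_1+n_2}+\sqrt{k}\le 2\sqrt{\max\{n_1+n_2;k\}}$) and the lower bound $\|Z_0\|\ge \tfrac{\alpha}{2}\sqrt{\max\{n_1+n_2;k\}}$ follow simultaneously with the stated exponential failure probability.

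For the smallest-singular-value bound in \eqref{ineq:prob2}, the crucial observation is that $L_{F_1}$ is determined entirely by the deterministic measurement operator $\mathcal{B}$ and the ground-truth $\symA$, and is therefore independent of the random initialization $Z_0$. Since $L_{F_1}^T \in \mathbb{R}^{r\times (n_1+n_2)}$ has orthonormal rows, rotational invariance of the Gaussian implies that $L_{F_1}^T G$ is itself distributed as a standard Gaussian matrix $\tilde G \in \mathbb{R}^{r\times k}$. I then apply the sharp lower-tail bound for the smallest singular value of rectangular Gaussian matrices (Rudelson--Vershynin / Szarek): $\Pr\bigl(\sigma_{\min}(\tilde G)\le s(\sqrt{k}-\sqrt{r-1})\bigr)\le (Cs)^{k-r+1}+e^{-ck}$. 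Setting $s=\sqrt{2}\varepsilon$ and rescaling by $\alpha/\sqrt{2}$ yields $\sigma_{\min}(L_{F_1}^T Z_0)\ge \alpha\varepsilon(\sqrt{k}-\sqrt{r-1})$ with the claimed probability, $1-(C_2\varepsilon)^{k-r+1}-\exp(-c_2 k)$.

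There is no substantive obstacle in the proof; the only subtle point is to record the independence of $L_{F_1}$ from $Z_0$, which is what permits the rotational-invariance reduction to an $r\times k$ standard Gaussian matrix. Both claims of the lemma then follow by combining off-the-shelf Gaussian singular-value tail bounds with this reduction.
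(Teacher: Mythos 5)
Your proof is correct and follows essentially the same route as the paper: both arguments reduce \eqref{ineq:prob2} to the Rudelson--Vershynin smallest-singular-value bound after observing that $L_{F_1}$ is independent of $Z_0$ (so rotation invariance makes $L_{F_1}^T Z_0$ a rescaled $r\times k$ standard Gaussian), and both obtain \eqref{ineq:prob1} from standard Gaussian operator-norm estimates. The only cosmetic difference is in the lower bound of \eqref{ineq:prob1}, where the paper bounds $\norm{Z_0}$ below by the norm of a single column or row while you use concentration of $\norm{G}$ around its mean; both are routine, and your handling of the $1/\sqrt{2}$ normalization in the definition of $Z_0$ is if anything more careful than the paper's.
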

\begin{proof}[Proof of Lemma \ref{lemma:probability}]
Recall that
\begin{equation*}
    Z_0=
    \begin{pmatrix}
        V_0 \\
        W_0
    \end{pmatrix}
    = \alpha \begin{pmatrix}
        V\\
        W
    \end{pmatrix}
    = \alpha Z 
     \in \R^{ \left( n_1 + n_2 \right) \times k }.
\end{equation*} 
Since $Z$ has i.i.d.~entries with distribution $\mathcal{N} \left(0,1\right)$, 
it is well-known (see, e.g., \cite[Corollary 7.3.3]{vershynin2018high}) that with probability at least $1- 2 \exp \left( -\frac{\max \left\{ n_1+n_2; k \right\} }{C} \right)  $ we must have that 
\begin{equation*}
    \norm{Z}
     \le 3\sqrt{ \max \left\{ n_1+n_2; k \right\} }  . 
\end{equation*}
This implies the second inequality in \eqref{ineq:prob1}.
To prove the first inequality in \eqref{ineq:prob1} it suffices to note that when $k \le n_1 +n_2$ then it holds that
\begin{equation*}
    \norm{Z}
    \ge \twonorm{Ze_1}
    \ge \frac{\sqrt{n_1+n_2}}{2},
\end{equation*}
where the second inequality holds with probability at least  $ 1 - O \left( \exp \left( -\frac{n_1 + n_2}{C}\right) \right) $.
Analogously, if $k \ge n_1 +n_2$, we have that
\begin{equation*}
    \norm{Z}
    \ge \twonorm{Z^Te_1}
    \ge \frac{\sqrt{k}}{2},
\end{equation*}
with probability at least $ 1 - O \left( \exp \left( -\frac{k}{C} \right) \right) $.
Since $Z_0= \alpha Z$, by choosing the fixed numerical constants $C_1, c_1 >0$ appropriately this shows the first inequality in \eqref{ineq:prob1}.

In order to prove inequality \eqref{ineq:prob2}, we note that $ \sglmin \left( L_{F_1}^T Z_0 \right) = \alpha \sglmin \left( L_{F_1}^T Z \right) $.
Note that $L_{F_1} \in \R^{ (n_1 + n_2) \times r }$ is a fixed matrix (conditional on the measurement matrices $ \left\{ A_i \right\}_{i=1}^m$), which describes a subspace of dimension $r$.
In particular, due to the rotation invariance of the Gaussian distribution, the matrix $ L_{F_1}^T Z $ is again a random matrix with i.i.d. entries with distribution $ \mathcal{N} \left(0,1\right)$ (conditional on $\left\{ A_i \right\}_{i=1}^m$).
Thus it follows from \cite[Theorem 1.1]{rudelson_vershynin} that for every $\varepsilon>0$  with probability at least $1 - \left( C_2 \varepsilon \right)^{k-r+1} - \exp \left( -c_2k  \right) $ it holds that 
\begin{equation*}
\sglmin \left( L_{F_1}^T Z_0 \right)  = \alpha \sglmin \left( L_{F_1}^T Z \right) \ge \alpha \varepsilon \left( \sqrt{k} - \sqrt{r} \right).
\end{equation*}
 Choosing the fixed numerical constants $C_2,c_2>0$ appropriately this implies the second claim in Lemma \ref{lemma:probability}.
\end{proof}
Now we have all ingredients in place to prove the main result for the spectral phase.
\begin{proof}[Proof of Lemma \ref{lemma:spectralmain}]
First, we note that due to Lemma \ref{lemma:probability} we have with probability at least $1 - C_2 \exp \left( - c_1 k \right)  + \left( C_3 \varepsilon \right)^{k-r+1}  $ that 
\begin{align}
   \frac{ \alpha \sqrt{ \max \left\{  n_1 +n_2; k \right\} } }{2}  
   \le  \norm{Z_0} 
   &\le 3\alpha  \sqrt{ \max \left\{ n_1 + n_2; k \right\} },  \label{ineq:spectralintern13}\\
    \sglmin \left( L_{F_1}^T Z_0 \right)
    & \ge \alpha \varepsilon \left( \sqrt{k} - \sqrt{r-1} \right),\label{ineq:spectralintern14}
\end{align}
where $C_2,C_3, c_1>0 $ are some fixed absolute constants.
We first check that the condition
 \begin{equation*}
    t_{\star}=
     \left\lceil \frac{\ln\left(\frac{c\sglmin \left(L_{F_1}^T Z_0\right)}{2\kappa^{2} \norm{Z_0} }\right)}{\ln \left(1 - \frac{\mu\sglmin(X)}{8} \right)} \right\rceil
     \le \frac{ \ln \left( \frac{\norm{F}}{16 \min \left\{  n_1 + n_2; k \right\}\norm{Z_0}^2}  \right)  }{3 \ln \left(1 + \mu \norm{F}\right)}
 \end{equation*}
 is satisfied. Due to Lemma \ref{lemma:tstarcheck} it suffices to note that 
 \begin{align*}
     \norm{Z_0} 
     &\stackrel{(a)}{\le} 3\alpha \sqrt{\max \left\{  n_1 +n_2; k \right\}}\\
     &\stackrel{(b)}{\le} 3 \sqrt{\frac{\norm{X} }{ C_1 \min \left\{  n_1 + n_2; k \right\} }} \left( \frac{c \varepsilon \left( \sqrt{k} -\sqrt{r-1} \right) }{6\kappa^2 \sqrt{ \max \left\{ k; n_1 + n_2 \right\} }} \right)^{ 18 \kappa }\\
     &\stackrel{(c)}{\le} \sqrt{\frac{\norm{X} }{ 24 \min \left\{  n_1 + n_2; k \right\} }} \left( \frac{c \sglmin \left( L_{F_1}^T Z_0 \right)}{2\kappa^2 \norm{Z_0}} \right)^{18 \kappa },
 \end{align*}   
 where inequality $(a)$ follows from \eqref{ineq:spectralintern13} and inequality $(b)$ follows from our assumption on $\alpha$.
 Inequality $(c)$ follows from \eqref{ineq:spectralintern13} and \eqref{ineq:spectralintern14} (and from choosing the constant $C_2 >0$ sufficiently large). 
Thus, we can apply Lemmas \ref{lemma:spec4} and \ref{lemma:spec5} and by combining these lemmas with inequalities \eqref{ineq:spectralintern13}, \eqref{ineq:spectralintern14} and assumption \eqref{spectral:alphaassumption} we obtain that
 \begin{align*}
 \norm{L_{A,\bot}^{T} P_{Z_{t_\star}Q_{t_\star}}}
 &\le \frac{28c}{\kappa^2},\\
 \sigma_{\min}(Z_{t_\star}Q_{t_\star}) 
 &\ge \left(\frac{2\kappa^{2} \norm{Z_0} }{c\sglmin \left(L_{F_1}^T Z_0\right)}\right)^{ 2 \kappa} \frac{\sglmin \left( L_{F_1}^T Z_0 \right)}{4} 
 \ge \left(\frac{2\kappa^{2}  }{c} \right)^{ 2 \kappa} \frac{ \alpha \varepsilon \left( \sqrt{k} -\sqrt{r-1} \right) }{4}, \\
 \norm{Z_{t_\star}Q_{t_\star,\bot}} 
 &\le \min \left\{ 2 \sglmin \left( \Ztstar Q_{\tstar} \right); \  \left( \frac{2\kappa^2 \norm{Z_0}}{c \sglmin \left( L_{F_1}^T Z_0 \right)} \right)^{ 16 \kappa} \cdot \frac{4c \sglmin \left( L_{F_1}^T Z_0 \right)}{ \kappa^2 } \right\} \\
 &\le \min \left\{ 2 \sglmin \left( \Ztstar Q_{\tstar} \right); \  \left( \frac{6\kappa^2  \sqrt{\max \left\{  n_1+n_2; k \right\} }}{ c \varepsilon \left( \sqrt{k} - \sqrt{r -1} \right)} \right)^{ 16 \kappa} \cdot \frac{12 c \alpha  \sqrt{\max \left\{  n_1 +n_2; k \right\}}}{ \kappa^2 } \right\},\\
 \norm{Z_{\tstar}} 
 &\le 4 \left(\frac{2\kappa^{2} \norm{Z_0}}{c\sglmin \left(L_{F_1}^{T}Z_0 \right)}\right)^{32 \kappa} \norm{Z_0}
 \le 12 \alpha \left(\frac{6\kappa^2  \sqrt{\max \left\{  n_1+n_2; k \right\} }}{ c\varepsilon \left( \sqrt{k} - \sqrt{r -1} \right)}\right)^{32 \kappa}  \sqrt{\max \left\{  n_1 +n_2;k \right\}}\\ 
 &\le 2 \sqrt{\norm{X}},\\
 \norm{\tilde{Z}_{\tstar}^T Z_{\tstar}  }
 &\le 6 \left(\frac{2\kappa^{2} \norm{Z_0}}{c\sglmin(L_{F_1}^{T}Z_0)}\right)^{32 \kappa}  \norm{Z_0}^2 
 \le 54 \alpha^2 \left(\frac{6\kappa^2  \sqrt{\max \left\{  n_1+n_2; k \right\} }}{ c\varepsilon \left( \sqrt{k} - \sqrt{r -1} \right)}\right)^{32 \kappa}  \max \left\{  n_1 +n_2;k \right\}   \\ 
 &\le \frac{ c \norm{X}}{  \kappa^4},\\
 \norm{\tilde{Z}_{\tstar}^T Z_{\tstar} Q_{\tstar, \bot}  }
 &\le  4 \left(\frac{2\kappa^{2} \norm{Z_0}}{c\sglmin \left(L_{F_1}^{T}Z_0 \right)}\right)^{32 \kappa}  \norm{Z_0}  \norm{\Ztstar Q_{\tstar, \bot}}  \\ 
 &\le  12 \alpha \left(\frac{6\kappa^2  \sqrt{\max \left\{  n_1+n_2; k \right\} }}{ c \varepsilon \left( \sqrt{k} - \sqrt{r -1} \right)}\right)^{32 \kappa}  \sqrt{\max \left\{  n_1 +n_2 ; k \right\}}  \norm{\Ztstar Q_{\tstar, \bot}}\\
 &\le \frac{ c \sqrt{\norm{X}}}{ \kappa^3 } \norm{\Ztstar Q_{\tstar, \bot} },\\
 \norm{\tilde{Z}_{\tstar}^T P_{ Z_{\tstar} Q_{\tstar} }  }
 &\le \frac{228 c}{\kappa^2}  \left( \frac{2\kappa^2 \norm{Z_0}}{ c \sglmin \left( L_{F_1}^T Z_0 \right) } \right)^{32 \kappa} \norm{Z_0} \\ 
 &\le \frac{674 \alpha c}{\kappa^2}  \left(  \frac{6\kappa^2  \sqrt{\max \left\{  n_1+n_2;k \right\} }}{ c \varepsilon \left( \sqrt{k} - \sqrt{r -1} \right)}\right)^{32 \kappa}  \sqrt{\max \left\{  n_1 +n_2; k \right\}} \\
 &\le \frac{ c \sqrt{\norm{X}} }{\kappa^3}.
 \end{align*}
This shows the inequalities \eqref{spectral:final1}-\eqref{spectral:finalbalanc3}.
To finish the proof we note that
 \begin{align*}
    t_{\star}=
     \left\lceil \frac{\ln\left(\frac{c\sglmin \left(L_{F_1}^T Z_0\right)}{2\kappa^{2} \norm{Z_0} }\right)}{\ln \left(1 - \frac{\mu\sglmin(X)}{8} \right)} \right\rceil
     &\le \frac{\ln\left(\frac{c\sglmin \left(L_{F_1}^T Z_0\right)}{2\kappa^{2} \norm{Z_0} }\right)}{\ln \left(1 - \frac{\mu\sglmin(X)}{8} \right)} +1 \\ 
     &\le \frac{16 \ln\left(\frac{2\kappa^{2} \norm{Z_0} }{c\sglmin \left(L_{F_1}^T Z_0\right)}\right)}{ \mu \sglmin \left(X\right) } +1 \\ 
     &\le \frac{17 \ln\left(\frac{6\kappa^{2} \sqrt{ \max \left\{ n_1+n_2; k \right\} } }{c\varepsilon \left( \sqrt{k} - \sqrt{r-1} \right) }\right)}{ \mu \sglmin \left(X\right) }.
 \end{align*}
 This shows inequality \eqref{tstarbound}.
 Thus, the proof is complete.
\end{proof}

\section{Proofs of technical lemmas for convergence analysis (Phase II+III)}

\subsection{Proof of Lemma \ref{lemma:sigmingrowth}: Controlling $\sglmin(\LAT\Zt)$}\label{sec:sigmingrowth}

\begin{proof}[Proof of Lemma \ref{lemma:sigmingrowth}]
First, we note that it follows from $ \symA = \LA \Sigma_X \LAT - \tilLA \Sigma_X \tilLAT $ that $\LAT \symA = \Sigma_X \LAT $.
We note that
\begin{align}\label{lemma1eq}
    \LAT\Zplus\Qt = & \LAT \left( \I - \mu \left(\Zt\ZtT - \tilZt\tilZtT - \symA \right)  -\mu \Deltat \right) \Zt\Qt \nonumber\\
    = &  \left(\I + \mu\Sigma_X \right) \LAT\Zt\Qt - \mu\LAT \left(\Zt\ZtT - \tilZt\tilZtT \right) \Zt\Qt  -\LAT \Deltat \Zt \Qt \nonumber\\
    = & \left(\I + \mu\Sigma_X \right) \LAT\Zt\Qt - \mu\LAT \left(\Zt\ZtT - \tilZt\tilZtT \right)\left(\LA\LAT + \LAP\LAPT\right)\Zt\Qt -\LAT \Deltat \Zt \Qt \nonumber\\
    = & \left(\I + \mu\Sigma_X \right) \LAT\Zt\Qt - \mu\LAT\Zt\ZtT\LA\LAT \Zt\Qt + \mu\LAT\tilZt\tilZtT\LA\LAT\Zt\Qt  \nonumber\\
    & - \mu\LAT \left(\Zt\ZtT - \tilZt\tilZtT \right) \LAP\LAPT\Zt\Qt -\LAT \Deltat \Zt \Qt \nonumber\\
    = & \left(\I + \mu\Sigma_X + \mu\LAT\tilZt\tilZtT\LA \right)\LAT\Zt\Qt \left(\I - \mu\QtT\ZtT\LA\LAT\Zt\Qt\right) \nonumber\\
    & + \mu^2\underbrace{ \left(\Sigma_X + \LAT\tilZt\tilZtT\LA \right)\LAT\Zt\ZtT\LA\LAT\Zt\Qt}_{=:U_1} \nonumber\\
    & - \mu\underbrace{\LAT \left(\Zt\ZtT - \tilZt\tilZtT \right)\LAP\LAPT\Zt\Qt}_{=:U_2} \nonumber\\
    &- \mu \underbrace{\LAT \Deltat \Zt \Qt}_{=: U_3}.
\end{align}

To further simplify \eqref{lemma1eq}, we want to transform $U_i$ into the form $D_i\LAT\Zt\Qt(\I - \mu\QtT\ZtT\LA\LAT\Zt\Qt)$ for $i=1,2,3$, where $D_i$ are matrices to be determined.
Note that using the assumption that $ \LAT \Zt \Qt$ is invertible we can compute that
\begin{align}\label{lemma1LAPTZtQteq}
    \Zt\Qt = &\Zt\Qt(\LAT\Zt\Qt)^{-1}\LAT\Zt\Qt \nonumber\\
    = & \P{\Zt\Qt}\P{\Zt\Qt}^T\Zt\Qt \left(\LAT\P{\Zt\Qt}\P{\Zt\Qt}^T\Zt\Qt\right)^{-1}\LAT\Zt\Qt \nonumber\\
    = & \P{\Zt\Qt}\P{\Zt\Qt}^T\Zt\Qt\left(\P{\Zt\Qt}^T\Zt\Qt\right)^{-1}\left(\LAT\P{\Zt\Qt}\right)^{-1}\LAT\Zt\Qt \nonumber\\
    = & \P{\Zt\Qt}\left(\LAT\P{\Zt\Qt}\right)^{-1}\LAT\Zt\Qt.
\end{align}
Also note that for any matrix $K$ we have the identity
\begin{equation*}
    \left(\I - \mu KK^T \right)  K 
    = K\left(\I - \mu K^T K\right).
\end{equation*}
In particular, when $\I - \mu KK^T$ is invertible, it follows that
\begin{equation}\label{lemma1Xeq}
    K = \left(\I - \mu KK^T \right)^{-1}K \left(\I - \mu K^TK \right).
\end{equation}
Moreover, using the assumptions $\mu\le\frac{c}{\kappa\norm{X}}$ and $\norm{\Zt}\le 2\sqrt{\norm{X}}$, we have that
\begin{equation}\label{lemma1muineq}
    \mu\le \frac{c}{\norm{X}\kappa} \le \frac{1}{8\norm{X}} \le \frac{1}{2\norm{\Zt}^2} \le \frac{1}{2\norm{\LAT\Zt}^2}.
\end{equation}
In particular, this implies that $\I - \mu  \LAT\Zt\Qt \QtT \ZtT \LA  $ is invertible. 
Hence, we can set $K := \LAT\Zt\Qt$ and equation \eqref{lemma1Xeq} yields 
\begin{equation}\label{lemma2Xeq}
    \LAT\Zt\Qt = \left(\I - \mu \LAT\Zt\Qt \QtT \ZtT \LA  \right)^{-1} \LAT\Zt\Qt \left(\I - \mu  \QtT \ZtT \LAT \LAT\Zt\Qt  \right).
\end{equation}
Using \eqref{lemma1LAPTZtQteq} and \eqref{lemma2Xeq} we can rewrite $U_1,U_2,U_3$ in the following way:
\begin{align}\label{U1eq}
    U_1 = & \left(\Sigma_X + \LAT\tilZt\tilZtT\LA \right)\LAT\Zt\ZtT\LA\LAT\Zt\Qt \nonumber \\
    = &\underbrace{\left(\Sigma_X + \LAT\tilZt\tilZtT\LA\right)\LAT\Zt\ZtT\LA\left(\I - \mu\LAT\Zt\Qt\QtT\ZtT\LA\right)^{-1}}_{=:D_1}\LAT\Zt\Qt \left(\I - \mu\QtT\ZtT\LA\LAT\Zt\Qt \right),
\end{align}
\begin{align}\label{U2eq}
    U_2 
   = & \LAT\left(\Zt\ZtT - \tilZt\tilZtT\right)\LAP\LAPT \Zt\Qt \nonumber\\
    = &\LAT\left(\Zt\ZtT - \tilZt\tilZtT\right)\LAP \LAPT\P{\Zt\Qt}\left(\LAT\P{\Zt\Qt}\right)^{-1}\LAT\Zt\Qt \nonumber\\
    = &\underbrace{\LAT\left(\Zt\ZtT - \tilZt\tilZtT\right)\LAP \LAPT\P{\Zt\Qt}\left(\LAT\P{\Zt\Qt}\right)^{-1}(\I - \mu\LAT  \Zt\Qt\QtT\ZtT\LA)^{-1}}_{=:D_2} \nonumber\\
    & \cdot\LAT\Zt\Qt\left(\I - \mu\QtT\ZtT\LA\LAT\Zt\Qt\right),
\end{align}
\begin{align}\label{U3eq}
    U_3 &= \LAT \Deltat \Zt \Qt \nonumber \\
    &= \LAT \Deltat \P{\Zt\Qt}(\LAT\P{\Zt\Qt})^{-1}\LAT\Zt\Qt \nonumber \\
    &=  \underbrace{ \LAT \Deltat \P{\Zt\Qt}(\LAT\P{\Zt\Qt})^{-1}  (\I - \mu \LAT\Zt\Qt \QtT \ZtT \LA  )^{-1} }_{=:D_3} \LAT\Zt\Qt(\I - \mu  \QtT \ZtT \LAT \LAT\Zt\Qt  ). 
\end{align}
Combining \eqref{lemma1eq}, \eqref{U1eq}, \eqref{U2eq}, and \eqref{U3eq} we conclude that
\[\LAT\Zplus\Qt = \left(\I + \mu\Sigma_X + \mu^2 D_1 - \mu D_2 - \mu D_3\right)\LAT\Zt\Qt \left(\I - \mu\QtT\ZtT\LA\LAT\Zt\Qt \right).\]
It follows that
\begin{align}\label{mahdieq}
    &\sglmin \left(\LAT\Zplus\Qt \right) \nonumber\\
    \ge &\sglmin \left(\I + \mu\Sigma_X + \mu^2 D_1 - \mu D_2 -\mu D_3 \right) \sglmin \left(\LAT\Zt\Qt \left(\I - \mu\QtT\ZtT\LA\LAT\Zt\Qt \right) \right) \nonumber\\
    \stackrel{(a)}{=} &\sglmin\left(\I + \mu\Sigma_X + \mu^2 D_1 - \mu D_2 - \mu D_3   \right) \sglmin \left(\LAT\Zt\Qt \right) \left(1 - \mu\sglmin^2 \left(\LAT\Zt\Qt \right) \right) \nonumber\\
    = &\sglmin\left(\I + \mu\Sigma_X + \mu^2 D_1 - \mu D_2 - \mu D_3   \right) \sglmin\left(\LAT\Zt\right) \left(1 - \mu\sglmin^2(\LAT\Zt) \right) \nonumber\\
    \stackrel{(b)}{\ge} &\left( \sglmin \left( \I + \mu\Sigma_X \right) - \mu^2\norm{D_1} - \mu\norm{D_2} - \mu \norm{D_3} \right) \sglmin\left(\LAT\Zt\right) \left(1 - \mu\sglmin^2\left(\LAT\Zt \right) \right) \nonumber\\
    \stackrel{(c)}{=} &\left(1 + \mu\sglmin(X) - \mu^2\norm{D_1} - \mu\norm{D_2} - \mu \norm{D_3}  \right)\sglmin\left(\LAT\Zt\right) \left(1 - \mu\sglmin^2 \left(\LAT\Zt\right) \right).
\end{align}
Equation (a) can be derived from the singular value decomposition of $\LAT\Zt\Qt$ and \eqref{lemma1muineq}. In inequality (b) we use Weyl's inequality. Equation (c) holds due to the fact that $\Sigma_X$ is a positive definite matrix.
In order to prove the desired inequality \eqref{lemma1result}, we need to bound $\norm{D_1}$, $\norm{D_2}$, and $ \norm{D_3} $.
To this aim, first note that using \eqref{lemma1muineq} and the assumption $\norm{\LAPT\P{\Zt\Qt}}\le \frac{c}{\kappa}\le 1/2$ we have that
\begin{align}\label{ineq:intern1}
    \norm{(\I - \mu\LAT\Zt\Qt\QtT\ZtT\LA)^{-1}} = &\frac{1}{\sglmin(\I - \mu\LAT\Zt\Qt\QtT\ZtT\LA)}=\frac{1}{1 - \mu\norm{\LAT\Zt\Qt\QtT\ZtT\LA}} \le 2, 
\end{align}
and
\begin{align}\label{ineq:intern2}
    \norm{(\LAT\P{\Zt\Qt})^{-1}}= &\frac{1}{\sglmin(\LAT\P{\Zt\Qt})} = \frac{1}{\sqrt{1 - \norm{\LAPT\P{\Zt\Qt}}^2}} \le 2.
\end{align}
Combining the latter two inequalities with the assumptions $\norm{\LAPT\P{\Zt\Qt}}\le c\kappa^{-1}$ and $\norm{\Zt} \le 2\sqrt{\norm{X}}$ and the fact that $\norm{\Zt} = \norm{\tilZt} $, we have
\begin{align*}
    \norm{D_1} 
    \le &\norm{\Sigma_X + \LAT\tilZt\tilZtT\LA} \norm{\LAT\Zt}\norm{\ZtT\LA}\norm{\I-\mu\LAT  \Zt\Qt\QtT\ZtT\LA)^{-1}} \\
    \stackrel{\eqref{ineq:intern1}}{\le} &2(\norm{X} + \norm{\LAT\tilZt}^2)\norm{\LAT\Zt}^2 \\
    \le &2(\norm{X} + \norm{\tilZt}^2)\norm{\Zt}^2 \\
    \le &40\norm{X}^2,
\end{align*}
\begin{align*}
    \norm{D_2} \le & \norm{\LAT \left(\Zt\ZtT - \tilZt\tilZtT \right) \LAP }\norm{\LAPT\P{\Zt\Qt}}\norm{(\LAT\P{\Zt\Qt})^{-1}}\norm{\left(\I - \mu\LAT\Zt\Qt\QtT\ZtT\LA \right)^{-1}} \\
    \stackrel{\eqref{ineq:intern1},\eqref{ineq:intern2}}{\le} & 4c\kappa^{-1}\norm{\LAT \left( \Zt\ZtT - \tilZt\tilZtT \right) \LAP} \\
    \le & 4c\kappa^{-1}\norm{\Zt\ZtT - \tilZt\tilZtT} \\
    \le & 32c\sglmin(X),
\end{align*}
and
\begin{align*}
    \norm{D_3} \le & \norm{ \LAT \Deltat \P{\Zt\Qt}(\LAT\P{\Zt\Qt})^{-1}  \left(\I - \mu \LAT\Zt\Qt \QtT \ZtT \LA  \right)^{-1}}\\
    \le & \norm{ \Deltat  } \norm{ \left(\LAT\P{\Zt\Qt} \right)^{-1} } \norm{ \left(\I - \mu \LAT\Zt\Qt \QtT \ZtT \LA  \right)^{-1} } \\
    \stackrel{\eqref{ineq:intern1},\eqref{ineq:intern2}}{\le}
    & 4 \norm{ \Deltat  } \\
    \le & 4 c \sglmin \left(X \right),
\end{align*}
where in the last line we used the assumption $ \norm{ \Deltat  } \le c \sglmin (X)  $.
Plugging these three bounds into \eqref{mahdieq} we have the following chain of inequalities which complete the proof of the lemma.
\begin{align*}
    &\sglmin \left(\LAT\Zplus\Qt \right) \nonumber\\
    \ge & \left(1 + \mu\sglmin \left(X \right) - \mu^2\norm{D_1} - \mu\norm{D_2} - \mu \norm{D_3} \right)\sglmin \left(\LAT\Zt\right) \left(1 - \mu\sglmin^2 \left(\LAT\Zt \right) \right) \nonumber\\
    \ge &\left(1 + \mu\sglmin(X) - 40\mu^2\norm{X}^2- 36c\mu\sglmin(X) \right) \sglmin(\LAT\Zt) \left(1 - \mu\sglmin^2 \left(\LAT\Zt \right) \right) \\
    \stackrel{(a)}{\ge} & \left(1 + \frac{1}{2}\mu\sglmin \left(X \right) \right) \sglmin(\LAT\Zt) \left(1 - \mu\sglmin^2(\LAT\Zt)\right) \\
    = &\left(1 + \frac{1}{2}\mu \left(1 - \mu\sglmin^2 \left( \LAT\Zt \right)\right) \sglmin(X) - \mu\sglmin^2 \left(\LAT\Zt \right) \right)\sglmin \left( \LAT\Zt \right) \\
   \stackrel{(b)}{\ge} & \left(1 + \frac{1}{4}\mu\sglmin(X) - \mu\sglmin^2 \left(\LAT\Zt \right) \right) \sglmin \left(\LAT\Zt \right).
\end{align*}
Inequality (a) follows from the assumption $\mu \le \frac{c}{\norm{X}\kappa}$ and by choosing the absolute constant $c>0$ small enough. 
In inequality (b) we used inequality \eqref{lemma1muineq}.
\end{proof}

\subsection{Proof of Lemma \ref{lemma:noisetermgrowth}: Controlling $\norm{\Zt\Qtp}$}\label{sec:noisetermgrowth}

\begin{proof}[Proof of Lemma \ref{lemma:noisetermgrowth}]
We first derive a formula for $\QtT\Qplusp$.
By the definition of $\Qplusp$ we have $\LAT\Zplus\Qplusp = 0$.
Combined with $\Qt\QtT + \Qtp\QtpT = \I$, we obtain that
\begin{equation*}
    \LAT\Zplus\Qt\QtT\Qplusp = -\LAT\Zplus\Qtp\QtpT\Qplusp,
\end{equation*}
which implies
\begin{equation}\label{QtTQpluspeq}
\QtT\Qplusp = - \left(\LAT\Zplus\Qt \right)^{-1}\LAT\Zplus\Qtp\QtpT\Qplusp.
\end{equation}
Due to $\LAT\Zplus\Qplusp = 0$ we have
\begin{equation*}
\Zplus\Qplusp = \LA\LAT\Zplus\Qplusp + \LAP\LAPT\Zplus\Qplusp = \LAP\LAPT\Zplus\Qplusp,
\end{equation*}
which implies $\norm{\Zplus\Qplusp} = \norm{\LAPT\Zplus\Qplusp}$.
Using equation \eqref{QtTQpluspeq} we obtain that
\begin{align*}
    \LAPT\Zplus\Qplusp = & \LAPT\Zplus\Qt\QtT\Qplusp + \LAPT\Zplus\Qtp\QtpT\Qplusp \\
    = & -\LAPT\Zplus\Qt(\LAT\Zplus\Qt)^{-1}\LAT\Zplus\Qtp\QtpT\Qplusp + \LAPT\Zplus\Qtp\QtpT\Qplusp.
\end{align*}
Note that
\begin{align*}
    \LAPT\Zplus\Qt(\LAT\Zplus\Qt)^{-1}= & \LAPT\P{\Zplus\Qt}\P{\Zplus\Qt}^T\Zplus\Qt\left(\LAT\P{\Zplus\Qt}\P{\Zplus\Qt}^T\Zplus\Qt\right)^{-1} \\
    = & \LAPT\P{\Zplus\Qt}\P{\Zplus\Qt}^T\Zplus\Qt\left(\P{\Zplus\Qt}^T\Zplus\Qt\right)^{-1}\left(\LAT\P{\Zplus\Qt}\right)^{-1} \\
    = & \LAPT\P{\Zplus\Qt}\left(\LAT\P{\Zplus\Qt}\right)^{-1}.
\end{align*}
Thus, we obtain that 
\begin{equation*}
\LAPT\Zplus\Qplusp = -\LAPT\P{\Zplus\Qt}(\LAT\P{\Zplus\Qt})^{-1}\LAT\Zplus\Qtp\QtpT\Qplusp + \LAPT\Zplus\Qtp\QtpT\Qplusp.
\end{equation*}
By using the triangle inequality and the submultiplicativity of the spectral norm we obtain that
\begin{align}\label{LAPTZplusQplusple}
    \norm{\Zplus\Qplusp} = & \norm{\LAPT\Zplus\Qplusp} \nonumber\\
    \le & \norm{\LAPT\P{\Zplus\Qt}(\LAT\P{\Zplus\Qt})^{-1}\LAT\Zplus\Qtp\QtpT\Qplusp} + \norm{\LAPT\Zplus\Qtp\QtpT\Qplusp} \nonumber\\
    \le & \norm{\LAPT\P{\Zplus\Qt}} \norm{\left(\LAT\P{\Zplus\Qt}\right)^{-1}} \norm{\LAT\Zplus\Qtp} + \norm{\LAPT\Zplus\Qtp} \nonumber\\
    = & \frac{\norm{\LAPT\P{\Zplus\Qt}}}{\sqrt{1 - \norm{\LAPT\P{\Zplus\Qt}}^2}} \norm{\LAT\Zplus\Qtp} + \norm{\LAPT\Zplus\Qtp}.
\end{align}
In order to proceed bounding $\norm{\Zplus\Qplusp}$ from above, we first derive an upper bound for $\norm{\LAPT\P{\Zplus\Qt}}$. Note that
\begin{equation*}
\Zplus\Qt = \left(\I - \mu\left(\Zt\ZtT - \tilZt\tilZtT - \symA + \Deltat\right) \right)\Zt\Qt
\end{equation*}
For simplicity of notation, define 
\begin{equation*}
M_t := \Zt\ZtT - \tilZt\tilZtT - \symA +\Deltat,
\end{equation*}
\begin{equation*}
\hat{H}_t := \left(\I - \mu M_t \right)\P{\Zt\Qt}.
\end{equation*}
First, we note that 
    \begin{align}
     \norm{M_t}
    \le & \norm{X}+ \norm{\Zt \ZtT} + \norm{\tilZt \tilZtT} + \norm{\Deltat} \nonumber \\
    = & \norm{X}+2 \norm{\Zt}^2 + \norm{\Deltat}\nonumber  \\
    \le & 10 \norm{X}, \label{ineq:boundforM}
\end{align}
where in the last line we used the assumptions $ \norm{\Zt} \le 2 \sqrt{\norm{X}} $, $\norm{\Deltat} \le c \sglmin \left( X\right) $, and that $\norm{\Zt}=\norm{\tilZt}$ by symmetry.
Since $\Sigma_{\Zt\Qt}\Q{\Zt\Qt}^T$ has full rank by assumption, the matrix $H$ has the same column space as $\Zplus\Qt$. 
In particular, we have that
\begin{align*}
    \norm{\LAPT\P{\Zplus\Qt}} = &\norm{\LAPT\P{ \hat{H}_t }} \\
    \le &\norm{\LAPT \hat{H}_t }\norm{(\Sigma_{ \hat{H}_t } \Q{\hat{H}_t }^T)^{-1}} \\
    = &\frac{\norm{\LAPT \hat{H}_t }}{\sglmin \left(\hat{H}_t \right)}.
\end{align*}
Next, we observe that 
\begin{align*}
    \sglmin \left(\hat{H}_t \right) \ge & \sglmin \left( \Id - \mu M_t \right)\sglmin(\P{\Zt\Qt}) \\
    \ge & \left(1 - \mu \norm{M_t}\right)\sglmin(\P{\Zt\Qt}) \ge \frac{1}{2}.
\end{align*}
In the last inequality we used the assumption $\mu \le \frac{c}{\norm{X}\kappa}$ and inequality \eqref{ineq:boundforM}. 
Combining the above two inequalities we obtain that
\begin{align}
    \norm{\LAPT\P{\Zplus\Qt}} \le &2\norm{\LAPT \hat{H}_t } \nonumber\\
    = &2\norm{\LAPT(\I - \mu M_t )\P{\Zt\Qt}} \nonumber\\
    \le &2\norm{\LAPT\P{\Zt\Qt}} + 2\mu\norm{ M_t } \nonumber\\
    \le &2\norm{\LAPT\P{\Zt\Qt}} + 20\mu\norm{X} \label{ineq:aux1212}  \\
    \le &\frac{24c \varepsilon}{\kappa}.\label{LAPTPZplusQtle}
\end{align}
In the last two inequalities we used the assumptions $\mu \le \frac{c \varepsilon }{\norm{X}\kappa}$, $\norm{\LAPT\P{\Zt\Qt}} \le \frac{c \varepsilon }{\kappa}$ and inequality \eqref{ineq:boundforM}. In order to bound the term $\norm{\LAT\Zplus\Qtp}$, which appears in inequality \eqref{LAPTZplusQplusple}, we note that due to $\LAT\Zt\Qtp = 0$ we have that 
\begin{align}\label{LATZplusQtple}
    \norm{\LAT\Zplus\Qtp} = & \norm{\LAT \left( \I - \mu M_t \right)\Zt\Qtp} \nonumber\\
    = & \mu\norm{\LAT M_t\Zt\Qtp} \nonumber\\
    \le & \mu\norm{M_t}\norm{\Zt\Qtp} \nonumber\\
    \le & 10\mu\norm{X}\norm{\Zt\Qtp}.
\end{align}
In the last line we used inequality \eqref{ineq:boundforM}.
It remains to bound $\norm{\LAPT\Zplus\Qtp}$ in inequality \eqref{LAPTZplusQplusple}.
For simplicity of notation we denote $\hatK := \LAPT\Zt\Qtp$. First, we compute that
\begin{align}\label{LAPTZplusQtpeq}
    &\LAPT\Zplus\Qtp \nonumber\\
    = & \LAPT(\Zt - \mu(\Zt\ZtT - \tilZt\tilZtT - \symA + \Deltat)\Zt)\Qtp \nonumber\\
    = & \hatK - \mu\LAPT\Zt\ZtT\LAP \hatK + \mu\LAPT\tilZt\tilZtT\Zt\Qtp - \mu\LAPT\tilLA\Sigma_X\tilLAT\LAP \hatK - \mu \LAPT \Deltat \Zt \Qtp  \nonumber\\
    = & \hatK - \mu\LAPT\Zt(\Qt\QtT + \Qtp\QtpT)\ZtT\LAP \hatK + \mu\LAPT\tilZt\tilZtT\Zt\Qtp - \mu\LAPT\tilLA\Sigma_X\tilLAT\LAP \hatK \nonumber \\
     &- \mu \LAPT \Deltat \Zt \Qtp \nonumber\\
     = & \hatK - \mu\LAPT\Zt \Qt\QtT\ZtT\LAP \hatK - \mu \hatK \hatK^T \hatK + \mu\LAPT\tilZt\tilZtT\Zt\Qtp - \mu\LAPT\tilLA\Sigma_X\tilLAT\LAP \hatK \nonumber \\
     &- \mu \LAPT \Deltat \LAP \hatK  \nonumber\\
    = & \left(\I - \mu\LAPT\tilLA\Sigma_X\tilLAT\LAP - \mu\LAPT\Zt\Qt\QtT\ZtT\LAP - \mu \LAPT \Deltat \LAP \right) \hatK \left(\I - \mu \hatK^T \hatK \right) \nonumber\\
    & - \mu^2 \left( \LAPT\tilLA\Sigma_X\tilLAT\LAP + \LAPT\Zt\Qt\QtT\ZtT\LAP +  \LAPT \Deltat \LAP \right)  \hatK \hatK^T \hatK    \\
    &+ \mu\LAPT\tilZt\tilZtT\Zt\Qtp. \nonumber  
\end{align}
To proceed, we note that 
\begin{align*}
    & \norm{ \left( \I - \mu\LAPT\tilLA\Sigma_X\tilLAT\LAP - \mu\LAPT\Zt\Qt\QtT\ZtT\LAP - \mu \LAPT \Deltat \LAP  \right) \hatK \left(\I - \mu \hatK^T \hatK \right)} \\
    \le & \norm{\I - \mu\LAPT\tilLA\Sigma_X\tilLAT\LAP - \mu\LAPT\Zt\Qt\QtT\ZtT\LAP-  \mu \LAPT \Deltat \LAP }\norm{ \hatK \left(\I - \mu \hatK^T \hatK \right)} \\
    \le & \left(  \norm{ \I - \mu\LAPT\tilLA\Sigma_X\tilLAT\LAP - \mu\LAPT\Zt\Qt\QtT\ZtT\LAP } + \mu \norm{  \LAPT \Deltat \LAP }  \right) \norm{\hatK \left(\I - \mu \hatK^T\hatK \right)} \\
    \stackrel{(a)}{\le} & \left( 1+ \mu c \varepsilon \sglmin (X)  \right)  \norm{\hatK \left(\I - \mu \hatK^T\hatK \right)}  \\
    \stackrel{(b)}{=} &  \left( 1+ \mu c \varepsilon \sglmin (X)  \right) \left(  1 - \mu\norm{\hatK}^2 \right)\norm{\hatK} \\
    \le &  \left( 1+ \mu c \varepsilon \sglmin (X) - \mu\norm{\hatK}^2 \right)\norm{\hatK},
\end{align*}
where in inequality (a) we used the assumptions $ \norm{\Deltat} \le c \varepsilon \sglmin \left( X \right)  $, $ \mu \le \frac{c \varepsilon}{\kappa} $, and $ \norm{\Zt} \le 2 \sqrt[]{\norm{X}} $.
Equation (b) can be derived by the singular value decomposition of $\hat{K}$, the fact $\mu \le \frac{c \varepsilon }{\norm{X}\kappa} \le \frac{1}{\norm{\Zt}^2} \le \frac{1}{3\norm{\hatK}^2}$, and that the function $x \left(1- \mu x^2 \right)$ is monotonically increasing for $x\in [0, \frac{1}{\sqrt{3\mu}}]$. 
Furthermore, we note that
\begin{align*}
    & \mu^2\norm{(\LAPT\tilLA\Sigma_X\tilLAT\LAP + \LAPT\Zt\Qt\QtT\ZtT\LAP + \LAPT \Deltat \LAP  )\hatK \hatK^T \hatK} \\
    \le & \mu^2\norm{\LAPT\tilLA\Sigma_X\tilLAT\LAP + \LAPT\Zt\Qt\QtT\ZtT\LAP+\LAPT \Deltat \LAP}\norm{\hatK \hatK^T \hatK} \\
    \le & \mu^2 \left( \norm{ \LAPT\tilLA\Sigma_X\tilLAT\LAP } + \norm{ \LAPT\Zt\Qt\QtT\ZtT\LAP } + \norm{ \LAPT \Deltat \LAP }  \right)  \norm{\hatK}^3 \\ 
    \le & \mu^2 \left( \norm{X}+ \norm{\Zt}^2 + \norm{\Deltat}\right) \norm{\hatK}^3 \\
    \stackrel{(c)}{\le} & 6\mu^2\norm{X}\norm{\hatK}^3 \\
     \stackrel{(d)}{\le} & \frac{\mu}{2}\norm{\hatK}^3.
\end{align*}
In inequality (c) we use the assumptions $\norm{\Zt} \le 2\sqrt{\norm{X}}$ and $ \norm{\Deltat} \le c \varepsilon \sglmin (X) $. 
Inequality (d) follows from the assumption $\mu \le \frac{c \varepsilon}{\norm{X}\kappa}$.
Moreover, we observe that
\begin{align*}
    \norm{\LAPT\tilZt\tilZtT\Zt\Qtp} \le \norm{\LAPT\tilZt}\norm{\tilZtT\Zt \Qtp}  \le 2\sqrt{\norm{X}}\norm{\tilZtT\Zt \Qtp},
\end{align*}
where in the last inequality we used the assumption $ \norm{\Zt} \le 2 \sqrt{\norm{X} } $ as well as the fact that by symmetry $ \norm{\Zt} = \norm{\tilZt}  $.
Putting the above three inequalities into \eqref{LAPTZplusQtpeq} we obtain that
\begin{align}\label{LAPTZplusQtple}
    \norm{\LAPT\Zplus\Qtp} \le &  \left( 1+ \mu c \varepsilon \sglmin (X) - \mu\norm{\hatK}^2 \right)\norm{\hatK} + \frac{\mu}{2} \norm{\hatK}^3 +  2 \mu \sqrt{\norm{X}}\norm{\tilZtT\Zt \Qtp} \nonumber  \\
    = & \left( 1 - \frac{\mu}{2}\norm{\Zt\Qtp}^2 + \mu c \varepsilon \sglmin (X) \right) \norm{\Zt\Qtp} + 2\mu\sqrt{\norm{X}}\norm{ \tilZtT\Zt \Qtp }.
\end{align}
Putting inequalities \eqref{LAPTPZplusQtle}, \eqref{LATZplusQtple} and \eqref{LAPTZplusQtple} into \eqref{LAPTZplusQplusple} we obtain that
\begin{align*}
    \norm{\Zplus\Qplusp} \le & \frac{\norm{\LAPT\P{\Zplus\Qt}}}{\sqrt{1 - \norm{\LAPT\P{\Zplus\Qt}}^2}}\norm{\LAT\Zplus\Qtp} + \norm{\LAPT\Zplus\Qtp} \\
    \le & \frac{48c\varepsilon}{\kappa}\cdot 10\mu\norm{X}\norm{\Zt\Qtp}+ \left( 1 - \frac{\mu}{2}\norm{\Zt\Qtp}^2 + \mu c \varepsilon \sglmin (X) \right) \norm{\Zt\Qtp} + 2\mu\sqrt{\norm{X}}\norm{ \tilZtT\Zt \Qtp  } \\
    \le & \left(1 - \frac{\mu}{2}\norm{\Zt\Qtp}^2 +\mu \varepsilon\sglmin(X) \right)\norm{\Zt\Qtp} + 2\mu\sqrt{\norm{X}}\norm{ \tilZtT\Zt \Qtp },
\end{align*}
where the last line follows by choosing the absolute constant $c>0$ small enough.
This finishes the proof.
\end{proof}
\subsection{Proof of Lemma \ref{lemma:anglecontrol}: Controlling $\norm{ \LAPT \P{\Zt\Qt} }$}\label{sec:anglecontrol}
We define the inverse of the square root of a symmetric positive definite matrix $D = \P{D}\Sigma_{D}\P{D}^T$ by $D^{-1/2} = \P{D}\Sigma_{D}^{-1/2}\P{D}^T$, where $\left(\Sigma_{D}^{-1/2}\right)_{ii} = \frac{1}{\sqrt{D_{ii}}}$. 
In the following, we will need the following technical lemma, which gives a bound on the first-order Taylor approximation of the matrix inverse square root.
\begin{lemma}\label{MatrixTaylor}
Let $D$ be a symmetric matrix such that $\norm{D} \leq 1/2$. Then it holds that
\[\norm{(\I + D)^{-1/2} - \I + \frac{1}{2}D} \leq 3\norm{D}^{2}.\]
\end{lemma}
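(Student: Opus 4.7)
The plan is to reduce the matrix inequality to a scalar one via the spectral theorem and then dispatch the scalar estimate with Taylor's theorem.

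Since $D$ is symmetric, I would diagonalize $D = U\Lambda U^T$ with $U$ orthogonal and $\Lambda = \mathrm{diag}(\lambda_1,\ldots,\lambda_d)$, where each eigenvalue satisfies $|\lambda_i| \le \norm{D} \le 1/2$. Both $(\I+D)^{-1/2}$ and $\I - \tfrac{1}{2}D$ share this eigenbasis, so by functional calculus
\[
    (\I + D)^{-1/2} - \I + \tfrac{1}{2}D
    = U \, \mathrm{diag}\!\left( g(\lambda_i) - 1 + \tfrac{\lambda_i}{2}  \right) U^T,
\]
where $g(x) := (1+x)^{-1/2}$. The spectral norm of the left-hand side therefore equals $\max_i |g(\lambda_i) - 1 + \lambda_i/2|$, and it suffices to prove the scalar bound $|g(x) - 1 + x/2| \le 3 x^2$ for all $x \in [-1/2, 1/2]$.

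For the scalar estimate, I would apply Taylor's theorem with Lagrange remainder at the origin, yielding $g(x) = 1 - x/2 + \tfrac{1}{2} g''(\xi)\, x^2$ for some $\xi$ between $0$ and $x$. A direct computation gives $g''(y) = \tfrac{3}{4}(1+y)^{-5/2}$, and the constraint $|\xi| \le 1/2$ forces $1 + \xi \ge 1/2$, hence $|g''(\xi)| \le \tfrac{3}{4} \cdot 2^{5/2}$. This produces $|g(x) - 1 + x/2| \le \tfrac{3}{8} \cdot 2^{5/2} \cdot x^2 < 3x^2$, completing the scalar step and hence the lemma.

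There is no real obstacle: once one notices that $\I$, $D$, and $(\I+D)^{-1/2}$ are simultaneously diagonalizable (since $D$ is symmetric and $\norm{D} < 1$ ensures $\I+D$ is positive definite), the problem collapses to a one-variable Taylor remainder estimate. The only point of care is verifying that the resulting constant $\tfrac{3}{8} \cdot 2^{5/2} = \tfrac{3\sqrt{2}}{2} \approx 2.12$ fits comfortably under the stated bound of $3$, which provides a modest but genuine slack for the invocation of this lemma elsewhere.
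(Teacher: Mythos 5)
Your proof is correct. Note that the paper itself does not prove this lemma at all: it simply cites \cite[Lemma B.2]{stoger2021small}, and the proof there proceeds by expanding $(\I+D)^{-1/2}$ as the binomial (Neumann-type) series $\sum_{k\ge 0}\binom{-1/2}{k}D^k$ and bounding the tail starting at $k=2$. Your route --- diagonalize the symmetric $D$, observe that $\I$, $D$, and $(\I+D)^{-1/2}$ are simultaneously diagonalizable, and reduce to the scalar estimate $\bigl|(1+x)^{-1/2}-1+\tfrac{x}{2}\bigr|\le 3x^2$ on $[-1/2,1/2]$ via the Lagrange remainder --- is a genuinely different and arguably cleaner argument: it avoids any convergence or rearrangement issues with the series and yields the explicit constant $\tfrac{1}{2}\sup_{|\xi|\le 1/2}|g''(\xi)| = \tfrac{3}{8}\cdot 2^{5/2}=\tfrac{3\sqrt{2}}{2}\approx 2.12 < 3$, which I have checked. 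The only thing it relies on that the series approach does not is the symmetry of $D$, but that is part of the hypothesis (and is how the paper defines the matrix inverse square root anyway), so nothing is lost.
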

For the straightforward proof of Lemma \ref{MatrixTaylor} we refer to  \cite[Lemma B.2]{stoger2021small}. 
To prove Lemma \ref{lemma:anglecontrol} we will need the following technical lemma.

\begin{lemma}\label{lemma:aux1}
Assume that $\mu \le \frac{c}{\norm{X}\kappa}$, $\norm{\LAPT\P{\Zt\Qt}} \le c\kappa^{-1}$, $\norm{\Zt\Qtp} \le 2\sglmin(\Zt\Qt)$, $\norm{\Zt}\le 2\sqrt{\norm{X}}$, and $\norm{\Deltat} \le c \sglmin (X) $.
Then it holds that
\begin{equation}\label{ineq:aux1}
\norm{\QtpT\Qplus} \le \mu\left(\norm{\Zt\Qt}\norm{\Zt\Qtp}+ 40 \mu \norm{X} \norm{\Zt\Qt}^2 \right)\norm{\LAPT\P{\Zt\Qt}} + 4\mu\frac{\sqrt{\norm{X}}\norm{\tilZtT\Zt \Qtp}}{\sglmin\left(\Zt\Qt\right)} + 4\mu \norm{\Deltat}.
\end{equation}
Moreover, we have that
\begin{equation*}
\sglmin(\QtT\Qplus) \ge \frac{1}{2}.
\end{equation*}
Here, $c>0$ is an absolute constant chosen small enough.
\end{lemma}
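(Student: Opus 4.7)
The plan is to start from the SVD identity $\Qplus = \ZplusT \LA P_{t+1} \Sigma_{t+1}^{-1}$, which follows from the reduced SVD $\LAT \Zplus = P_{t+1} \Sigma_{t+1} \QplusT$; invertibility of $\Sigma_{t+1}$ is secured by Lemma~\ref{lemma:sigmingrowth} under the stated hypotheses. Since $\LAT \Zt \Qtp = 0$ yields $\QtpT \ZtT \LA = 0$, this gives the key cancellation
\begin{equation*}
\QtpT \Qplus = \QtpT \left( \Zplus - \Zt \right)^T \LA P_{t+1} \Sigma_{t+1}^{-1}.
\end{equation*}
I would then expand $(\Zplus - \Zt)^T$ using the update rule and the symmetry of the matrices involved, use $\symA \LA = \LA \Sigma_X$ together with $\QtpT \ZtT \LA = 0$ to kill the contribution coming from $\symA$, and thereby split the right-hand side into three pieces originating respectively from $\Zt\ZtT$, $\tilZt\tilZtT$, and $\Deltat$.

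The cubic piece is where the delicate work lies. Using $\ZtT \LA = \Qt \QtT \ZtT \LA$ (again from $\LAT \Zt \Qtp = 0$) together with the resolution $\LA \LAT + \LAP \LAPT = \I$, I would factor
\begin{equation*}
\QtpT \ZtT \Zt \ZtT \LA = \left( \LAPT \Zt \Qtp \right)^T \left( \LAPT \Zt \Qt \right) \left( \LAT \Zt \Qt \right)^T,
\end{equation*}
which immediately extracts the sought $\norm{\LAPT \PZQ}$ factor via $\norm{\LAPT \Zt \Qt} \le \norm{\LAPT \PZQ} \norm{\Zt \Qt}$. For the trailing $(\LAT \Zt \Qt)^T P_{t+1} \Sigma_{t+1}^{-1}$, I would reuse the SVD identity in the form $\QtT \Qplus = \QtT \ZplusT \LA P_{t+1} \Sigma_{t+1}^{-1}$ to derive
\begin{equation*}
\left( \LAT \Zt \Qt \right)^T P_{t+1} \Sigma_{t+1}^{-1} = \QtT \Qplus + \QtT \left(\Zt - \Zplus\right)^T \LA P_{t+1} \Sigma_{t+1}^{-1},
\end{equation*}
whose first summand has operator norm at most one. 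I would then bound each of the four subterms of $\QtT(\Zt-\Zplus)^T \LA$ individually at the level $\mu \norm{X} \norm{\Zt \Qt}$, and divide by $\sglmin(\LAT \Zplus) \gtrsim \sglmin(\Zt \Qt)$ (from Lemma~\ref{lemma:sigmingrowth} together with $\sglmin(\LAT \PZQ) \ge \sqrt{1-\norm{\LAPT \PZQ}^2}$) to obtain a correction of order $\mu \norm{X} \norm{\Zt \Qt}/\sglmin(\Zt \Qt)$. Reabsorbing this into the outer factors and using $\norm{\Zt \Qtp} \le 2\sglmin(\Zt \Qt)$ to cancel the denominator where necessary produces both pieces of the first summand of the claimed bound.

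The remaining two contributions are straightforward: the $\tilZt\tilZtT$ term yields the $4\mu \sqrt{\norm{X}} \norm{\tilZtT \Zt \Qtp}/\sglmin(\Zt \Qt)$ summand via $\norm{\LAT \tilZt} \le \norm{\tilZt} = \norm{\Zt} \le 2\sqrt{\norm{X}}$, while the $\Deltat$ term gives $4\mu \norm{\Deltat}$ by the same mechanism combined with $\norm{\Zt \Qtp} \le 2\sglmin(\Zt \Qt)$. For the second inequality, I would invoke the principal-angle identity $\sglmin^2(\QtT \Qplus) = 1 - \norm{\QtpT \Qplus}^2$ and establish $\norm{\QtpT \Qplus} \le \sqrt{3}/2$ via the much cruder bound $\norm{\QtpT \Qplus} = \norm{\QtT \Qplusp} \le \norm{\LAT \Zplus \Qtp}/\sglmin(\LAT \Zplus \Qt)$; a direct triangle inequality on the update furnishes $\norm{\LAT \Zplus \Qtp} \lesssim \mu \norm{X} \norm{\Zt \Qtp}$, which after $\norm{\Zt \Qtp} \le 2\sglmin(\Zt \Qt)$ and $\sglmin(\LAT \Zplus \Qt) \gtrsim \sglmin(\Zt \Qt)$ is of order $\mu \norm{X} \lesssim c/\kappa$ and hence $\le \sqrt{3}/2$ for $c$ small.

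The main obstacle is the cubic term: a naive bound dividing $\norm{\LAT \Zplus \Qtp}$ by $\sglmin(\LAT \Zplus \Qt)$ introduces an extraneous condition-number factor $\norm{\Zt \Qt}/\sglmin(\Zt \Qt)$ on the leading $\norm{\Zt\Qt}\norm{\Zt\Qtp}\norm{\LAPT\PZQ}$ piece, which the target bound does not tolerate. The algebraic factorization of $\QtpT \ZtT \Zt \ZtT \LA$ together with the observation that $(\LAT \Zt \Qt)^T P_{t+1} \Sigma_{t+1}^{-1}$ is close to the identity are precisely the ingredients needed to avert this blow-up.
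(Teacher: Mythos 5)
Your proposal is correct and follows essentially the same route as the paper's proof: the same cancellation via $\LAT\Zt\Qtp=0$, the same three-way split into the $\Zt\ZtT$, $\tilZt\tilZtT$, and $\Deltat$ contributions, and the same key observation that the trailing factor $(\LAT\Zt\Qt)^T P_{t+1}\Sigma_{t+1}^{-1}$ is a near-contraction up to an $O(\mu\norm{X}\norm{\Zt\Qt}/\sglmin(\Zt\Qt))$ correction, which is exactly how the paper (writing $\Zt=\Zplus+\mu M_t\Zt$ and using the polar form $(\LAT\Zplus\ZplusT\LA)^{-1/2}$) avoids the extraneous condition-number factor. The only cosmetic differences are your use of the SVD factors $P_{t+1}\Sigma_{t+1}^{-1}$ in place of the polar factor and your separate cruder bound for $\sglmin(\QtT\Qplus)\ge 1/2$, which the paper instead reads off from the already-established chain of estimates.
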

\begin{proof}
We recall that
\begin{equation*}
    \Zplus = \left( \I -\mu M_t \right) \Zt,
\end{equation*}
where
\begin{equation*}
    M_t = \Zt\ZtT - \tilZt\tilZtT - \symA +\Deltat.
\end{equation*}
As shown in inequality \eqref{ineq:boundforM} we have
\begin{equation}\label{ineq:boundforM1}
    \norm{M_t} \le 10 \norm{X}.
\end{equation}
We observe that
\begin{align*}
    \QtpT\Qplus = &\QtpT\ZplusT\LA(\LAT\Zplus\ZplusT\LA)^{-1/2}.
\end{align*}
Due to $\LAT\Zt\Qtp = 0$ we have
\begin{align*}
    \LAT\Zplus\Qtp = &\LAT\left(\I - \mu\left(\Zt\ZtT - \tilZt\tilZtT - \symA + \Deltat\right)\right)\Zt\Qtp \nonumber\\
    = &-\mu\LAT \left(\Zt\ZtT - \tilZt\tilZtT - \symA + \Deltat \right)\Zt\Qtp \nonumber\\
    = &-\mu\LAT\left(\Zt\ZtT - \tilZt\tilZtT\right)\Zt\Qtp + \mu\LAT \left(\LA\Sigma_X\LAT - \tilLA \Sigma_X\tilLAT \right)\Zt\Qtp -\mu \LAT \Deltat \Zt \Qtp \nonumber\\
    = &-\mu\LAT \left(\Zt\ZtT - \tilZt\tilZtT \right) \Zt\Qtp  -  \mu \LAT \Deltat \Zt \Qtp \nonumber\\
    = &-\mu\LAT\Zt\Qt\QtT\ZtT\LAP\LAPT\Zt\Qtp + \mu\LAT\tilZt\tilZtT\Zt\Qtp -\mu \LAT\Deltat \Zt \Qtp.
\end{align*}
Combining the above two equalities we obtain that
\begin{align} \label{ineq:LemmaC2le}
    &\norm{\QtpT\Qplus} \nonumber\\
    = &\norm{(\LAT\Zplus\ZplusT\LA)^{-1/2}(-\mu\LAT\Zt\Qt\QtT\ZtT\LAP\LAPT\Zt\Qtp + \mu\LAT\tilZt\tilZtT\Zt\Qtp -\mu \LAT\Deltat \Zt \Qtp )} \nonumber\\
    \le &\mu\norm{(\LAT\Zplus\ZplusT\LA)^{-1/2}\LAT\Zt\Qt\QtT\ZtT\LAP\LAPT\Zt\Qtp} \nonumber\\ 
    & + \mu\norm{(\LAT\Zplus\ZplusT\LA)^{-1/2}\LAT\tilZt\tilZtT\Zt\Qtp} + \mu \norm{ (\LAT\Zplus\ZplusT\LA)^{-1/2} \LAT\Deltat \Zt \Qtp    } \nonumber\\
    \le &\mu\norm{(\LAT\Zplus\ZplusT\LA)^{-1/2}\LAT\Zt\Qt}\norm{\QtT\ZtT\LAP}\norm{\LAPT\Zt\Qtp} \nonumber\\ 
    & + \mu\norm{(\LAT\Zplus\ZplusT\LA)^{-1/2}}\norm{\tilZt} \norm{ \tilZtT \Zt \Qtp}  + \mu \norm{ (\LAT\Zplus\ZplusT\LA)^{-1/2}} \norm{\Deltat} \norm{ \Zt \Qtp } \nonumber\\
    \le &\mu\norm{(\LAT\Zplus\ZplusT\LA)^{-1/2}\LAT\Zt\Qt}\norm{\LAPT\P{\Zt\Qt}}\norm{\Zt\Qt}\norm{\Zt\Qtp} \nonumber\\ 
    & + \mu\frac{ \norm{\tilZt} \norm{ \tilZtT \Zt \Qtp}  }{\sglmin(\LAT\Zplus)} + \mu \frac{ \norm{\Deltat} \norm{ \Zt \Qtp }}{\sglmin(\LAT\Zplus)}.
\end{align}
To deal with the first summand in the penultimate line in the inequality chain above, we note that
\begin{align} \label{ineq:LemmaC2firstsummand}
    & \norm{\left(\LAT\Zplus\ZplusT\LA \right)^{-1/2}\LAT\Zt\Qt} \nonumber\\
    = &\norm{ \left(\LAT\Zplus\ZplusT\LA \right)^{-1/2}\LAT \left(\Zplus+\mu M_t \Zt \right)\Qt} \nonumber\\
    \le &\norm{ \left(\LAT\Zplus\ZplusT\LA \right)^{-1/2}\LAT\Zplus\Qt} + \mu\norm{ \left(\LAT\Zplus\ZplusT\LA \right)^{-1/2}\LAT M_t \Zt\Qt} \nonumber\\
    \le &\norm{\left( \LAT\Zplus\ZplusT\LA \right)^{-1/2}\LAT\Zplus} + \mu\norm{ \left(\LAT\Zplus\ZplusT\LA \right)^{-1/2}}\norm{ M_t }\norm{\Zt\Qt} \nonumber\\
    = & 1 + \mu\frac{\norm{ M_t }\norm{\Zt\Qt}}{\sglmin(\LAT\Zplus)}.
\end{align}
To deal with $\sglmin\left(\LAT\Zplus\right)$ appearing in the denominator, we compute that
\begin{align} \label{ineq:LemmaC2sglminLATZplus}
    \sglmin\left(\LAT\Zplus\right) \ge &\sglmin\left(\LAT\Zplus\Qt\right) \nonumber\\
    = &\sglmin\left(\LAT\left( \I -\mu M_t \right) \Zt\Qt\right) \nonumber\\
    = &\sglmin\left(\LAT\left( \I -\mu M_t \right) \P{\Zt\Qt}\P{\Zt\Qt}^T \Zt\Qt\right) \nonumber\\
    \ge &\sglmin\left(\LAT\left( \I -\mu M_t \right) \P{\Zt\Qt}\right)\sglmin\left(\P{\Zt\Qt}^T \Zt\Qt\right) \nonumber\\
    = &\sglmin\left(\LAT\P{\Zt\Qt} - \mu \LAT M_t \P{\Zt\Qt}\right)\sglmin\left(\Zt\Qt\right) \nonumber\\
    \ge &\left(\sglmin\left(\LAT\P{\Zt\Qt}\right) - \mu \norm{\LAT M_t\P{\Zt\Qt}}\right)\sglmin\left(\Zt\Qt\right) \nonumber\\
    \stackrel{(a)}{\ge} &\left(\sqrt{1 - \norm{\LAPT\P{\Zt\Qt}}^2} - 10\mu\norm{X}\right)\sglmin\left(\Zt\Qt\right) \nonumber\\
    \stackrel{(b)}{\ge} &\frac{1}{2}\sglmin\left(\Zt\Qt\right).
\end{align}
In inequality (a) we used the inequality \eqref{ineq:boundforM1}.
Inequality (b) follows from the assumptions $\mu \le \frac{c}{\norm{X}\kappa}$ and $\norm{\LAPT\P{\Zt\Qt}} \le c\kappa^{-1}$.

Inserting \eqref{ineq:LemmaC2firstsummand} and \eqref{ineq:LemmaC2sglminLATZplus} into \eqref{ineq:LemmaC2le} we obtain that
\begin{align}
    &\norm{\QtpT\Qplus} \nonumber \\
    \le &\mu\left(1 + \mu\frac{\norm{M_t}\norm{\Zt\Qt}}{\sglmin\left(\LAT\Zplus\right)}\right)\norm{\LAPT\P{\Zt\Qt}}\norm{\Zt\Qt}\norm{\Zt\Qtp} + \mu\frac{ \norm{\tilZt} \norm{ \tilZtT \Zt \Qtp}  + \norm{\Deltat} \norm{ \Zt \Qtp } }{\sglmin\left(\LAT\Zplus\right)} \nonumber \\
    \le &\mu\left(1 + 2\mu\frac{\norm{M_t}\norm{\Zt\Qt}}{\sglmin\left(\Zt\Qt\right)}\right)\norm{\LAPT\P{\Zt\Qt}}\norm{\Zt\Qt}\norm{\Zt\Qtp} + 2\mu\frac{ \norm{\tilZt} \norm{ \tilZtT \Zt \Qtp}   + \norm{\Deltat} \norm{ \Zt \Qtp } }{\sglmin\left(\Zt\Qt\right)}\nonumber \\
    \stackrel{(a)}{\le} &\mu\norm{\LAPT\P{\Zt\Qt}}(\norm{\Zt\Qt}\norm{\Zt\Qtp} + 4\mu\norm{M_t}\norm{\Zt\Qt}^2) + 4 \mu \frac{\norm{\tilZt}\norm{\tilZtT\Zt \Qtp}}{\sglmin\left(\Zt\Qt\right)}  +4\mu \norm{\Deltat} \nonumber\\
    \stackrel{(b)}{\le} &\mu \left( \norm{\Zt\Qt}\norm{\Zt\Qtp}+ 40 \mu \norm{X} \norm{\Zt\Qt}^2  \right) \norm{\LAPT\P{\Zt\Qt}} + 4\mu \frac{\sqrt{\norm{X}}\norm{\tilZtT\Zt \Qtp}}{\sglmin\left(\Zt\Qt\right)}+4\mu \norm{\Deltat} \label{ineq:lemmashown1} \\
    \stackrel{(c)}{\le} &\mu \left( \norm{\Zt\Qt}\norm{\Zt\Qtp}+ 40 \mu \norm{X} \norm{\Zt\Qt}^2  \right) + 4\mu \frac{\sqrt{\norm{X}}\norm{\tilZtT}\norm{\Zt \Qtp}}{\sglmin\left(\Zt\Qt\right)}+4\mu \norm{\Deltat} \nonumber \\
    \stackrel{(d)}{\le} &\mu \left( 4\norm{X}+ 160 \mu \norm{X}^2  \right) + 8\mu \frac{\norm{X}\norm{\Zt \Qtp}}{\sglmin\left(\Zt\Qt\right)}+4\mu \norm{\Deltat} \nonumber \\
    \stackrel{(e)}{\le} &\frac{1}{2}. \nonumber
\end{align}
In inequality $(a)$ we used the assumption $\norm{\Zt\Qtp} \le 2\sglmin(\Zt\Qt)$. 
Inequalities $(b)$ follows from the assumption that $\norm{\Zt} \le 2\sqrt{\norm{X}}$ and inequality \eqref{ineq:boundforM1}.
Inequality $(c)$ follows from $ \norm{\LAPT\P{\Zt\Qt}}\le 1$ and the submultiplicativity of the spectral norm. 
Inequalities $(d)$ follows from the assumption that $\norm{\Zt} \le 2\sqrt{\norm{X}}$ and the fact that $\norm{\tilZt} =\norm{\Zt} $.
Inequality $(e)$ follows then from the assumptions  $\norm{\Deltat} \le c \sglmin (A)  $, $\norm{\Zt \Qtp} \le 2 \sglmin \left(\Zt \Qt \right)$, and $\mu \le \frac{c}{\kappa \norm{X}}$.
Note that inequality \eqref{ineq:lemmashown1} implies inequality \eqref{ineq:aux1}.
From the last line in the above inequality chain it follows that
\[\sglmin(\QtT\Qplus) = \sqrt{1 - \norm{\QtpT\Qplus}^2} \ge \frac{1}{2}.\]
This finishes the proof.
\end{proof}
Now we have all ingredients in place to give a proof of Lemma \ref{lemma:anglecontrol}.
\begin{proof}[Proof of Lemma \ref{lemma:anglecontrol}]
First, we recall that
\begin{equation*}
    \Zplus = \left( \I -\mu M_t \right) \Zt,
\end{equation*}
where
\begin{equation*}
    M_t = \Zt\ZtT - \tilZt\tilZtT - \symA +\Deltat.
\end{equation*}
As shown in inequality \eqref{ineq:boundforM} we have
\begin{equation}\label{ineq:boundforM2}
    \norm{M_t} \le 10 \norm{X}.
\end{equation} It follows that
\begin{align*}
    \Zplus\Qplus = & (\I - \mu M_t )\Zt\Qplus \\
    = & (\I-\mu M_t )\Zt\Qt\QtT\Qplus + (\I - \mu M)\Zt\Qtp\QtpT\Qplus \\
    = & (\I-\mu M_t )\P{\Zt\Qt}\P{\Zt\Qt}^T\Zt\Qt\QtT\Qplus + \left(\I - \mu M_t \right) \Zt\Qtp\QtpT\Qplus.
\end{align*}
Note that $\P{\Zt\Qt}^T\Zt\Qt\QtT\Qplus$ is invertible since $\P{\Zt\Qt}^T\Zt\Qt$ is invertible by assumption and $\QtT\Qplus$ is invertible by Lemma \ref{lemma:aux1}.
It follows that
\begin{align*}
    &(\I - \mu M_t)\Zt\Qtp\QtpT\Qplus \\
    = &(\I - \mu M_t)\Zt\Qtp\QtpT\Qplus(\P{\Zt\Qt}^T\Zt\Qt\QtT\Qplus)^{-1}\P{\Zt\Qt}^T\Zt\Qt\QtT\Qplus \\
    = &(\I - \mu M_t)\underbrace{\Zt\Qtp\QtpT\Qplus(\P{\Zt\Qt}^T\Zt\Qt\QtT\Qplus)^{-1}\P{\Zt\Qt}^T}_{=:K}\P{\Zt\Qt}\P{\Zt\Qt}^T\Zt\Qt\QtT\Qplus \\
    = &(\I - \mu M_t)K\P{\Zt\Qt}\P{\Zt\Qt}^T\Zt\Qt\QtT\Qplus.
\end{align*}
Hence, we obtain that
\[\Zplus\Qplus = (\I - \mu M_t)(\I + K)\P{\Zt\Qt}\P{\Zt\Qt}^T\Zt\Qt\QtT\Qplus.\]
Since $\P{\Zt\Qt}^T\Zt\Qt\QtT\Qplus$ is invertible, the span of the left singular vectors of 
\begin{equation*}
H:=\left(\I - \mu M_t \right) \left(\I + K \right)\P{\Zt\Qt}
\end{equation*}
is the same as the span of the left singular vectors of $\Zplus\Qplus$, which yields that
\[\norm{\LAPT\P{\Zplus\Qplus}} = \norm{\LAPT\P{H}} = \norm{\LAPT\P{H}\Q{H}^T} = \norm{\LAPT H(H^TH)^{-1/2}}.\]
For simplicity of notations, we define 
\begin{equation*}
B:= \left(\I - \mu M_t \right) \left(\I + K \right) - \I= K-\mu M_t -\mu M_t K.
\end{equation*}
It follows that
\begin{align*}
    (H^TH)^{-1/2} 
    = & \left(\P{\Zt\Qt}^T\left(\I + B\right)^T(\I + B)\P{\Zt\Qt}\right)^{-1/2} \\
    = & \left(\P{\Zt\Qt}^T\left(\I + B^T + B + B^TB\right)\P{\Zt\Qt}\right)^{-1/2} \\
    = & \left(\I + \underbrace{ \P{\Zt\Qt}^T\left(B^T + B\right)\P{\Zt\Qt} + \P{\Zt\Qt}^TB^TB\P{\Zt\Qt}}_{=:D} \right)^{-1/2}.
\end{align*}
Next, we want to apply Lemma \ref{MatrixTaylor}.
For that we need to check that $ \norm{D} \le 1/2 $ holds.
In the following we are going to show $ \norm{B} \le 1/6$, which implies $ \norm{D} \le 1/2 $.
For that, we note first that
\begin{align}
    \norm{K} = &\norm{\Zt\Qtp\QtpT\Qplus(\P{\Zt\Qt}^T\Zt\Qt\QtT\Qplus)^{-1}\P{\Zt\Qt}^T}  \nonumber\\
    = &\norm{\Zt\Qtp\QtpT\Qplus(\QtT\Qplus)^{-1}(\P{\Zt\Qt}^T\Zt\Qt)^{-1}\P{\Zt\Qt}^T}\nonumber \\
    \le &\norm{\Zt\Qtp}\norm{\QtpT\Qplus}\norm{(\QtT\Qplus)^{-1}}\norm{(\P{\Zt\Qt}^T\Zt\Qt)^{-1}}\norm{\P{\Zt\Qt}^T} \nonumber \\
    = &\frac{\norm{\Zt\Qtp}\norm{\QtpT\Qplus}}{\sglmin(\QtT\Qplus)\sglmin(\P{\Zt\Qt}^T\Zt\Qt)}\nonumber \\
    = &\frac{\norm{\Zt\Qtp}\norm{\QtpT\Qplus}}{\sglmin(\QtT\Qplus)\sglmin(\Zt\Qt)} \nonumber \\
    \stackrel{(a)}{\le} &4\norm{\QtpT\Qplus}  \nonumber\\
    \stackrel{(b)}{\le} & 4 \mu\left(\norm{\Zt\Qt}\norm{\Zt\Qtp}+ 40 \mu \norm{X} \norm{\Zt\Qt}^2 \right)\norm{\LAPT\P{\Zt\Qt}} + 16 \mu\frac{\sqrt{\norm{X}}\norm{\tilZtT\Zt \Qtp}}{\sglmin\left(\Zt\Qt\right)} + 16 \mu \norm{\Deltat} \nonumber \\
    \stackrel{(c)}{\le} & 700c \mu \sglmin \left(X\right) \norm{\LAPT\P{\Zt\Qt}} + 16\mu\frac{\sqrt{\norm{X}}\norm{\tilZtT\Zt \Qtp}}{\sglmin\left(\Zt\Qt  \right)} +16 \mu \norm{\Deltat} \stackrel{(d)}{\le} \frac{1}{12}. \label{ineq:aux7}
\end{align}
Inequality $(a)$ follows from Lemma \ref{lemma:aux1} and from the assumption $\norm{\Zt \Qtp} \le 2 \sglmin \left(\Zt \Qt \right)  $.
Inequality $(b)$ follows from using Lemma \ref{lemma:aux1} again.
In inequality $(c)$ we use the assumptions $\norm{\Zt} \le 2\sqrt{\norm{X}} $, $\mu \le \frac{c}{\norm{X} \kappa} $, and $ \norm{ \Zt \Qtp} \le c \kappa^{-1/2} \sqrt{ \sglmin \left(X\right) }$.
Inequality $(d)$ follows from the assumptions $\norm{\Zt\Qtp} \le c\kappa^{-1/2}\sqrt{\sglmin(X)}$, $\norm{\tilZtT\Zt \Qtp} \le \sqrt{\norm{X}}\sglmin\left(\Zt\Qt\right)$, $ \norm{\Deltat} \le c \sglmin(X) $, and $\norm{\Zt} \le 2\sqrt{\norm{X}}$.
Next, we note that
\begin{align}
    \norm{B} \le &\mu\norm{M_t} + \norm{K} + \mu\norm{M_tK} \nonumber \\
    \le &\mu\norm{M_t} + \norm{K} + \mu\norm{M_t}\norm{K} \nonumber \\
    \stackrel{(a)}{\le} & \frac{3}{2} \mu\norm{ M_t } + \norm{K}  \label{ineq:aux5} \\
    \le  & \frac{3}{2} \mu \left( \norm{ \Zt\ZtT - \tilZt\tilZtT-\symA }  + \norm{\Deltat}  \right) + \norm{K}        \nonumber  \\
     \stackrel{(b)}{\le} & \frac{3}{2} \mu \left( 9 \norm{X} + \norm{\Deltat}  \right) + \norm{K}      \nonumber    \\
    \le & 1/6, \nonumber
\end{align}
where in inequality $(a)$ we used that $ \norm{K} \le 1/12 \le 1/2 $.
In inequality $(b)$ we used the assumption $ \norm{\Zt} \le 2 \sqrt{\norm{X}} $.
Hence, we have shown $\norm{B} \le 1/6 $, which implies $\norm{D} \le 1/2$.
Thus, we can apply Lemma \ref{MatrixTaylor} and it follows that
\begin{align*}
    & \left(\I + \P{\Zt\Qt}^T(B^T + B)\P{\Zt\Qt} + \P{\Zt\Qt}^TB^TB\P{\Zt\Qt} \right)^{-1/2} \\
    = & \I - \frac{1}{2} \left(\P{\Zt\Qt}^T(B^T + B)\P{\Zt\Qt} + \P{\Zt\Qt}^TB^TB\P{\Zt\Qt} \right) + C,
\end{align*}
where $C$ is a matrix such that
\begin{align}\label{ineq:intern3}
\norm{C} \le 3\norm{\P{\Zt\Qt}^T \left(B^T + B\right)\P{\Zt\Qt} + \P{\Zt\Qt}^TB^TB\P{\Zt\Qt}}^2.
\end{align}
By a direct computation we obtain that
\begin{align*}
    &\LAPT H \left(H^TH \right)^{-1} \\
    = &\LAPT \left(\I + B \right)\P{\Zt\Qt} \left(\I-\frac{1}{2} \left(\P{\Zt\Qt}^T \left(B^T + B \right) \P{\Zt\Qt} + \P{\Zt\Qt}^TB^TB\P{\Zt\Qt} \right) + C \right) \\
    = &\LAPT \left(\I + B -\frac{1}{2}\P{\Zt\Qt}\P{\Zt\Qt}^T \left(B^T + B \right) \right) \P{\Zt\Qt} - \frac{1}{2}\LAPT B\P{\Zt\Qt}\P{\Zt\Qt}^T \left(B^T + B \right)\P{\Zt\Qt} \\ 
    &-\LAPT \left(\I + B \right)\P{\Zt\Qt} \left( \frac{1}{2}\P{\Zt\Qt}^TB^TB\P{\Zt\Qt} - C \right) \\
    = &\LAPT \left(\I -\mu M_t + \mu\frac{1}{2}\P{\Zt\Qt}\P{\Zt\Qt}^T \left(M_t^T + M_t \right) \right) \P{\Zt\Qt} \\
    & + \LAPT \left(K - \frac{1}{2}\P{\Zt\Qt}\P{\Zt\Qt}^T \left( K^T + K \right) \right) \P{\Zt\Qt} \\
    & -\mu\LAPT \left(M_t K - \frac{1}{2}\P{\Zt\Qt}\P{\Zt\Qt}^T \left(K^TM_t^T + M_t K \right) \right)\P{\Zt\Qt} \\
    & -\frac{1}{2}\LAPT B\P{\Zt\Qt}\P{\Zt\Qt}^T \left( B^T + B \right) \P{\Zt\Qt} \\
    & -\LAPT \left(\I + B\right)\P{\Zt\Qt} \left(\frac{1}{2}\P{\Zt\Qt}^TB^TB\P{\Zt\Qt} - C\right).
\end{align*}
It follows that
\begin{align}
    &\norm{\LAPT H \left(H^TH \right)^{-1}} \nonumber \\
    \le & \norm{\LAPT \left(\I - \mu M_t + \mu\frac{1}{2}\P{\Zt\Qt}\P{\Zt\Qt}^T \left( M_t^T + M_t \right) \right)\P{\Zt\Qt}} \nonumber \\ 
    & + 2\norm{K} + 2\mu\norm{M_t K} + \frac{1}{2}\norm{B}\norm{B^T + B} + \left(1 + \norm{B} \right) \left(\norm{B}^2 + \norm{C} \right) \nonumber \\
    \le & \norm{\LAPT \left(\I - \mu M_t + \mu\frac{1}{2}\P{\Zt\Qt}\P{\Zt\Qt}^T \left(M_t^T + M_t \right) \right)\P{\Zt\Qt}} \nonumber \\ 
    & + 2\norm{K} + 2\mu\norm{ M_t }\norm{K} + 2\norm{B}^2 + \norm{B}^3 + \left(1 + \norm{B} \right)\norm{C}. \label{ineq:aux4}
\end{align}
In order to proceed, we first note that
\begin{align*}
    &\LAPT \left(\I - \mu M_t + \mu\frac{1}{2}\P{\Zt\Qt}\P{\Zt\Qt}^T \left( M_t^T + M_t \right) \right) \P{\Zt\Qt} \\
    =&\LAPT \left(\I - \mu M_t + \mu  \P{\Zt\Qt}\P{\Zt\Qt}^T M_t \right)\P{\Zt\Qt} \\
    = &\LAPT \left(\I - \mu \left(\I - \P{\Zt\Qt}\P{\Zt\Qt}^T \right) \left(\Zt\ZtT - \tilZt\tilZtT - \symA + \Deltat \right) \right)\P{\Zt\Qt} \\
    = &\LAPT\P{\Zt\Qt} - \mu\LAPT \left(\I - \P{\Zt\Qt}\P{\Zt\Qt}^T \right)\left(\Zt\ZtT - \tilZt\tilZtT\right)\P{\Zt\Qt} \\
    & + \mu\LAPT \left(\I - \P{\Zt\Qt}\P{\Zt\Qt}^T\right) \symA\P{\Zt\Qt} + \mu \LAPT \left(\I - \P{\Zt\Qt}\P{\Zt\Qt}^T \right) \Deltat \P{\Zt\Qt}     \\
    = &\LAPT\P{\Zt\Qt} - \mu\LAPT\P{\Zt\Qt,\bot}\P{\Zt\Qt,\bot}^T(\Zt\Qtp\QtpT\ZtT - \tilZt\tilZtT)\P{\Zt\Qt} \\
    & - \mu\LAPT\tilLA\Sigma_X\tilLAT\LAP\LAPT\P{\Zt\Qt} - \mu\LAPT\P{\Zt\Qt}\P{\Zt\Qt}^T\symA\P{\Zt\Qt} \\
    & +  \mu \LAPT  \P{\Zt\Qt, \perp}\P{\Zt\Qt, \perp}^T \Deltat \P{\Zt\Qt} \\
    =& \underbrace{ \left( \I - \mu\LAPT\tilLA\Sigma_X\tilLAT\LAP \right) \LAPT\P{\Zt\Qt} \left(\I - \mu\P{\Zt\Qt}^T\symA\P{\Zt\Qt} \right) }_{=:(i)} \\
    & - \mu^2 \underbrace{ \LAPT\tilLA\Sigma_X\tilLAT\LAP\LAPT\P{\Zt\Qt}\P{\Zt\Qt}^T\symA\P{\Zt\Qt}}_{=:(ii)} \\
    & - \mu \underbrace{\LAPT\P{\Zt\Qt,\bot}\P{\Zt\Qt,\bot}^T\Zt\Qtp\QtpT\ZtT\P{\Zt\Qt}}_{=:(iii)} \\
    & + \mu \underbrace{ \LAPT\P{\Zt\Qt,\bot}\P{\Zt\Qt,\bot}^T\tilZt\tilZtT\P{\Zt\Qt} }_{=:(iv)} + \mu \underbrace{ \LAPT  \P{\Zt\Qt, \perp}\P{\Zt\Qt, \perp}^T \Deltat \P{\Zt\Qt}}_{=:(v)}.
\end{align*}
In the next step, we estimate the spectral norm of the individual summands.
We first note that
\begin{align*}
   \norm{(i)}= &\norm{ \left(\I - \mu\LAPT\tilLA\Sigma_X\tilLAT\LAP \right)\LAPT\P{\Zt\Qt} \left(\I - \mu\P{\Zt\Qt}^T\symA\P{\Zt\Qt} \right)} \\
    \stackrel{(a)}{\le} &\norm{\LAPT\P{\Zt\Qt} \left(\I - \mu\P{\Zt\Qt}^T\symA\P{\Zt\Qt} \right)} \\
    \le &\norm{\LAPT\P{\Zt\Qt}}\norm{\I - \mu\P{\Zt\Qt}^T\symA\P{\Zt\Qt}} \\
    = &\norm{\LAPT\P{\Zt\Qt}}\norm{\I - \mu\P{\Zt\Qt}^T \left(\LA\Sigma_X\LAT - \tilLA\Sigma_X\tilLAT \right) \P{\Zt\Qt}} \\
    \le &\norm{\LAPT\P{\Zt\Qt}} \left(1 - \mu\lambda_{\text{min}} \left(\P{\Zt\Qt}^T\LA\Sigma_X\LAT\P{\Zt\Qt} \right) + \mu\norm{\P{\Zt\Qt}^T\tilLA\Sigma_X\tilLAT\P{\Zt\Qt}} \right) \\
    \le &\norm{\LAPT\P{\Zt\Qt}} \left(1 - \mu\sglmin^2(\LAT \P{\Zt\Qt})\sglmin(X) + \mu\norm{\tilLAT\P{\Zt\Qt}}^2\norm{X} \right) \\
    = &\norm{\LAPT\P{\Zt\Qt}}\left(1 - \mu(1 - \norm{\LAPT\P{\Zt\Qt}}^2)\sglmin(X) + \mu\norm{\LAPT\P{\Zt\Qt}}^2\norm{X} \right) \\
    \stackrel{(b)}{\le} &\norm{\LAPT\P{\Zt\Qt}} \left(1 - \frac{1}{2}\mu\sglmin(X) \right).
\end{align*}
In inequality $(a)$ we used the assumption $\mu \le \frac{c}{\norm{X}\kappa}$ and in inequality $(b)$ we used the assumption that $\norm{\LAPT\P{\Zt\Qt}}\le \frac{c}{\kappa}$.
Next, we note that
\begin{align*}
    \norm{(ii)} =  \norm{\LAPT\tilLA\Sigma_X\tilLAT\LAP\LAPT\P{\Zt\Qt}\P{\Zt\Qt}^T\symA\P{\Zt\Qt}} \le  \norm{X}^2.
\end{align*}
Moreover, we have that
\begin{align*}
    \norm{(iii)}=& 
    \norm{\LAPT\P{\Zt\Qt,\bot}\P{\Zt\Qt,\bot}^T\Zt\Qtp\QtpT\ZtT\P{\Zt\Qt}}  \\
    =&\norm{\LAPT\P{\Zt\Qt,\bot}\P{\Zt\Qt,\bot}^T\Zt\Qtp\QtpT\ZtT\LAP\LAPT\P{\Zt\Qt}}  \\
    \le &\norm{\Zt\Qtp}^2\norm{\LAPT\P{\Zt\Qt}} \\
    \le &c^2\sglmin(X)\norm{\LAPT\P{\Zt\Qt}},
\end{align*}
where in the last line we use the assumption $\norm{\Zt\Qtp}\le c\sqrt{\sglmin(X)}$.
Next, we note that
\begin{align*}
    \norm{(iv)}= &\norm{\LAPT\P{\Zt\Qt,\bot}\P{\Zt\Qt,\bot}^T\tilZt\tilZtT\P{\Zt\Qt}}  \\
    \le & \norm{\tilZt} \norm{ \tilZtT \P{\Zt\Qt} }\\
    \le & 2 \sqrt{\norm{X}} \norm{ \tilZtT \P{\Zt\Qt} },
\end{align*}
where in the last line we used that $  \norm{\tilZt} = \norm{\Zt} $ due to symmetry and that by assumption $ \norm{\Zt} \le 2\sqrt{\norm{X}} $.
Moreover, we note that
\begin{align*}
    \norm{(v)}&= \norm{ \LAPT  \P{\Zt\Qt, \perp}\P{\Zt\Qt, \perp}^T \Deltat \P{\Zt\Qt} } \le \norm{ \Deltat }. 
\end{align*}
By adding up all summands, we obtain that
\begin{equation}\label{ineq:aux6}
    \begin{split}
        &\norm{\LAPT(\I - \mu M_t + \mu\frac{1}{2}\P{\Zt\Qt}\P{\Zt\Qt}^T(M_t^T + M_t))\P{\Zt\Qt}} \\
        \le &\left(1 - \left(\frac{1}{2} - c^2\right) \mu\sglmin(X) \right)\norm{\LAPT\P{\Zt\Qt}} +  \mu^2\norm{X}^2 
        + 2 \mu  \sqrt{\norm{X}} \norm{ \tilZtT \P{\Zt\Qt} }
        + \mu \norm{\Deltat}. 
    \end{split}
\end{equation}

Furthermore, we observe that
\begin{align*}
    \norm{C} \stackrel{(a)}{\le} &3\norm{\P{\Zt\Qt}^T \left(B^T + B \right) \P{\Zt\Qt} + \P{\Zt\Qt}^TB^TB\P{\Zt\Qt}}^2 \\
    \le &3 \left(2\norm{B} + \norm{B}^2 \right)^2 \\
    \stackrel{(b)}{\le} &27\norm{B}^2.
\end{align*}
where $(a)$ is inequality \eqref{ineq:intern3}. Inequality $(b)$ follows from $\norm{B}\le 1/6 \le 1$, which we have shown before. 
As a result, we obtain
\begin{align*}
&\norm{\LAPT H(H^TH)^{-1}} \\
    \stackrel{\eqref{ineq:aux4}}{\le} & \norm{\LAPT \left(\I - \mu M_t+ \mu\frac{1}{2}\P{\Zt\Qt}\P{\Zt\Qt}^T \left(M_t^T + M_t \right) \right)\P{\Zt\Qt}} \\ 
    & + 2\norm{K} + 2\mu\norm{M_t}\norm{K} + 2\norm{B}^2 + \norm{B}^3 + \left( 1 + \norm{B} \right) \norm{C} \\
    \stackrel{(a)}{\le} & \norm{\LAPT \left(\I - \mu M_t + \mu\frac{1}{2}\P{\Zt\Qt}\P{\Zt\Qt}^T \left(M_t^T + M_t \right) \right)\P{\Zt\Qt}} + 3\norm{K} + 57\norm{B}^2 \\
    \stackrel{\eqref{ineq:aux5}}{\le} & \norm{\LAPT \left(\I - \mu M_t + \mu\frac{1}{2}\P{\Zt\Qt}\P{\Zt\Qt}^T \left(M_t^T + M_t \right) \right)\P{\Zt\Qt}} + 3\norm{K} + 57 \left( \frac{3\mu}{2} \norm{M_t} + \norm{K}\right)^2 \\
    \stackrel{(b)}{\le} & \norm{\LAPT \left(\I - \mu M_t + \mu\frac{1}{2}\P{\Zt\Qt}\P{\Zt\Qt}^T \left(M_t^T + M_t \right) \right) \P{\Zt\Qt}} + 200\norm{K} + 500 \mu^2\norm{M_t}^2 \\
    \stackrel{\eqref{ineq:aux6}, \eqref{ineq:aux7}}{\le} & \left(1 - \left(\frac{1}{2} - c^2\right) \mu\sglmin(X) \right)\norm{\LAPT\P{\Zt\Qt}} +  \mu^2\norm{X}^2 +  2 \mu  \sqrt{\norm{X}} \norm{ \tilZtT \P{\Zt\Qt} }+   \mu \norm{\Deltat}  \\
    & + 140000c\mu\sglmin(X)\norm{\LAPT\P{\Zt\Qt}} + 3200\mu\frac{\sqrt{\norm{X}}\norm{\tilZtT\Zt \Qtp}}{\sglmin\left(\Zt\Qt\right)} +3200 \mu \norm{\Deltat}\\
    &+ \tilde{C} \mu^2\norm{X}^2 + \tilde{C} \mu^2 \norm{\Deltat} \\
    \le& \left(1 - \frac{1}{4}\mu\sglmin \left(X\right) \right)\norm{\LAPT\P{\Zt\Qt}} 
    +  2 \mu  \sqrt{\norm{X}} \norm{ \tilZtT \P{\Zt\Qt} }
    +C\mu\frac{\sqrt{\norm{X}}\norm{\tilZtT\Zt \Qtp}}{\sglmin\left(\Zt\Qt\right)}\\
    & + C \mu \norm{\Deltat} +  C\mu^2\norm{X}^2.
\end{align*}
In inequality $(a)$ we used that $\norm{B}\le 1/6\le 1$, $ \norm{C} \le 27 \norm{B}^2 $, and  $2\mu \norm{M_t} \le 1 $. 
The latter follows directly from the definition of $M_t$ combined with our assumptions.
In inequality $(b)$ we used the elementary inequality $ab \le \frac{a^2 + b^2}{2} $ and that $\norm{K} \le 1/12$.
The last two inequalities follow from the assumption on our step size $\mu$, by choosing the absolute constants $ \tilde{C}, C>0$ large enough, and by choosing the absolute constant $c>0$ small enough.
\end{proof}
\subsection{Proof of Lemma \ref{lemma:normcontrolled}: Controlling $\norm{\Zt}$}\label{sec:normcontrolled}

\begin{proof}[Proof of Lemma \ref{lemma:normcontrolled}]
    We note that
     \begin{align}
    \Zplus &= \Zt -\mu \left( \Zt \ZtT - \tilZt \tilZtT - \symA \right) \Zt + \mu \Deltat \Zt \nonumber \\
    & =\left( \Id - \mu  \Zt \ZtT\right)\Zt + \mu \tilZt \tilZtT \Zt - \mu  \symA \Zt + \mu \Deltat \Zt. \label{ineq:aux52}
     \end{align}
    To deal with this expression, we compute that
     \begin{align*}
        &\tilZt \tilZtT \Zt \\
        =&   \tilZt \Qt \QtT \tilZtT \Zt \Qt \QtT +   \tilZt \Qt \QtT \tilZtT \Zt \Qtp \QtpT +  \tilZt \Qtp \QtpT \tilZtT \Zt \\
        =&   \tilZt \Qt \QtT \tilZtT P_{\tilZt Q_t} P_{\tilZt Q_t}^T P_{\Zt Q_t} P_{\Zt Q_t}^T  \Zt \Qt \QtT +   \tilZt \Qt \QtT \tilZtT \Zt \Qtp \QtpT +  \tilZt \Qtp \QtpT \tilZtT \Zt .
     \end{align*}
     We obtain that
     \begin{align*}
         \norm{\tilZt \tilZtT \Zt} \le& \norm{  P_{\tilZt Q_t}^T P_{\Zt Q_t} } \norm{\tilZt \Qt}^2  \norm{\Zt \Qt}  + \norm{\tilZt \Qt}^2  \norm{\Zt \Qtp}  + \norm{\tilZt \Qtp}^2 \norm{\Zt} \\
         \stackrel{(a)}{\le} & \left( \norm{  P_{\tilZt Q_t}^T P_{\Zt Q_t} } \norm{ \Zt}^2 + \norm{\Zt}  \norm{\Zt \Qtp}  +  \norm{\Zt \Qtp}^2   \right) \norm{\Zt}\\
         \stackrel{(b)}{\le} & \left( 4 \norm{  P_{\tilZt Q_t}^T P_{\Zt Q_t} } \norm{X} + \frac{\norm{X}}{30}   \right) \norm{\Zt},
     \end{align*}   
    where in inequality $(a)$ we have used that due to symmetry it holds that $\norm{\Zt \Qt} = \norm{\tilZt \Qt}  $ and $ \norm{\Zt \Qtp} = \norm{\tilZt \Qtp} $. 
    Inequality $(b)$ follows from the assumptions $\norm{\Zt} \le 2 \sqrt{\norm{X}} $ and $\norm{\Zt \Qtp}\le \frac{ \sqrt{\norm{X}}}{100} $. 
    Next, we note that
     \begin{align*}
        \norm{  P_{\tilZt Q_t}^T P_{\Zt Q_t} } &\le  \norm{  P_{\tilZt Q_t}^T \LAP \LAPT P_{\Zt Q_t} } +  \norm{  P_{\tilZt Q_t}^T \LA \LAT P_{\Zt Q_t} }  \\
        &\le \norm{ \LAPT P_{\Zt Q_t} } + \norm{  P_{\tilZt Q_t}^T \LA  }\\
        &= \norm{ \LAPT P_{\Zt Q_t} } + \norm{  P_{\Zt \Qt}^T \tilLA  }\\
        &\le  \norm{  \LAPT P_{\Zt Q_t}  } + \norm{ P_{\Zt \Qt}^T \LAP }\\
        &\le \frac{1}{50}  ,
    \end{align*}
    where we have used the assumption $\norm{\LAPT L_{\Zt \Qt}} \le \frac{1}{100}  $.
     Hence, we have shown that 
     \begin{equation}\label{ineq:aux51}
        \norm{\tilZt \tilZtT \Zt} 
        \le \left( \frac{4}{50}+\frac{1}{30} \right)  \norm{X} \norm{\Zt}
        \le \frac{\norm{X} \norm{\Zt}}{5}.
     \end{equation}
    From \eqref{ineq:aux52} and the triangle inequality it follows that
     \begin{align}
     \norm{ \Zplus } 
     &\le  \norm{ \left( \Id - \mu  \Zt \ZtT\right)\Zt } + \mu \norm{ \tilZt \tilZtT \Zt  } + \mu \norm{\symA} \norm{\Zt}  + \mu  \norm{ \Deltat} \norm{\Zt} \nonumber  \\
     &\stackrel{(a)}{=}  \norm{ \left( \Id - \mu  \Zt \ZtT\right)\Zt } +   \mu \norm{ \tilZt \tilZtT \Zt  }+   \mu \norm{X} \norm{\Zt} + \mu \norm{ \Deltat} \norm{\Zt} \nonumber \\ 
     &\stackrel{(b)}{\le}  \norm{ \left( \Id - \mu  \Zt \ZtT\right)\Zt } +   \mu  \frac{ \norm{X} \norm{\Zt}}{5} +   \mu \norm{X} \norm{\Zt} + \mu \norm{ \Deltat} \norm{\Zt} \nonumber \\ 
     &\stackrel{(c)}{\le}  \norm{ \left( \Id - \mu  \Zt \ZtT\right)\Zt } +  2 \mu \norm{X} \norm{\Zt} \nonumber \\
     & \stackrel{(d)}{=} \left( 1- \mu \norm{\Zt}^2\right) \norm{\Zt} +  2 \mu \norm{X} \norm{\Zt} \nonumber\\
    &= \left( 1- \mu \norm{\Zt}^2 + 2 \mu \norm{X} \right) \norm{\Zt}. \label{ineq:aux24}
     \end{align}
    In equality $(a)$ we used the fact that $\norm{\symA}=\norm{X}$, which follows from the definition of $\symA$.
    Inequality $(b)$ follows from \eqref{ineq:aux51}. 
    In inequality $(c)$ we used the assumption $ \norm{\Deltat} \le \frac{\norm{X}}{100} $.
    Equality $(d)$ follows from the singular value decomposition of $ \left( \Id - \mu  \Zt \ZtT\right)\Zt $ and $\Zt$, the fact that the function $x \mapsto (1-\mu x^2)x $ is increasing in the interval $ x\in \left( 0,\frac{1}{\sqrt{3 \mu} } \right) $, as well as the assumptions $ \mu \le \frac{c}{\norm{X}} $ and $\norm{\Zt} \le 2\sqrt{\norm{X}} $.
    
    In order to deduce the claim from inequality \eqref{ineq:aux24}, we will distinguish two cases.
    First, we assume that $ \norm{\Zt} < \frac{3}{2} \sqrt{\norm{X}} $. 
    Then the claim $\norm{\Zplus} \le 2\sqrt{\norm{X}} $ follows immediately from inequality \eqref{ineq:aux24} combined with the assumption $ \mu \le \frac{1}{100 \norm{X}}  $. 
    For the second case, we assume that $ \frac{3}{2} \sqrt{\norm{X}} \le \norm{\Zt} \le 2 \sqrt{ \norm{X} } $. 
    Then we can verify that \eqref{ineq:aux24} implies that $ \norm{\Zplus} \le \norm{\Zt} $.
    Since we assumed that $ \norm{\Zt} \le 2 \sqrt{ \norm{X} } $, this implies in particular that $ \norm{\Zplus} \le 2 \sqrt{ \norm{X} } $. 
    This finishes the proof.
\end{proof}

\subsection{Proof of Lemma \ref{lemma:balancedbase}: Controlling $\norm{\tilZt^T \Zt} $}\label{sec:balancedbase}

\begin{proof}[Proof of Lemma \ref{lemma:balancedbase}]
First, we recall that
\begin{align*}
    \Zplus   &= \left( \I -\mu M_t) \right) \Zt, \\
    \tilZplus&=\left( \I +\mu M_t \right) \tilZt,
\end{align*}
where
\begin{equation*}
    M_t= \Zt\ZtT - \tilZt\tilZtT - \symA +\Deltat.
\end{equation*}
We calculate that 
\begin{align*}
    \tilZplusT \Zplus
    =  \tilZplusT \left( \I +\mu M_t \right)\left( \I -\mu M_t \right) \Zt 
    =  \tilZtT \Zt - \mu^2  \tilZtT M_t^2 \Zt. 
\end{align*}
It follows that 
\begin{align}
  \norm{\tilZplusT \Zplus }
  \le & \norm{ \tilZtT \Zt} + \mu^2 \norm{M_t}^2 \norm{\tilZt} \norm{\Zt} \nonumber \\ 
  \le &  \norm{ \tilZtT \Zt} + 4 \mu^2 \norm{M_t}^2 \norm{X}, \label{ineq:balancebase1}
\end{align}
where in the last line we used that by symmetry $ \norm{\tilZt} = \norm{\Zt}$ and that by assumption $ \norm{\Zt} \le 2 \sqrt{\norm{X}} $.
In order to proceed we note that 
\begin{align*}
 \norm{M_t} \le \norm{\Zt}^2 + \norm{\tilZt}^2 + \norm{X} + \norm{\Deltat} \le 10 \norm{X},   
\end{align*}
where in the last line we used the assumptions $ \norm{\Zt} \le 2 \sqrt{\norm{X}}$ and $\norm{\Deltat} \le \norm{X}$ and that by symmetry $ \norm{\Zt} = \norm{\tilZt} $.
Inserting this inequality into \eqref{ineq:balancebase1} proves the claim.
\end{proof}
\subsection{Proof of Lemma \ref{lemma:balancednessperp}: Controlling $\norm{ \tilZtT \Zt\Qtp }$}\label{sec:balancednessperp}

\begin{proof}[Proof of Lemma \ref{lemma:balancednessperp}]

We first note that 
\begin{equation*}
    \tilZplus^T \Zplus \Qplusp =  \tilZplus^T \LAP \LAPT \Zplus \Qplusp. 
\end{equation*}
Analogously as in the proof of Lemma \ref{lemma:noisetermgrowth} by using the assumption that $\LAT \Zplus \Qt $ has full rank we can derive that 
\begin{equation*}
    \LAPT\Zplus\Qplusp = -\LAPT\P{\Zplus\Qt}(\LAT\P{\Zplus\Qt})^{-1}\LAT\Zplus\Qtp\QtpT\Qplusp + \LAPT\Zplus\Qtp\QtpT\Qplusp.    
\end{equation*}
Hence, we obtain by the triangle inequality that
\begin{align*}
    &\norm{\tilZplus^T \Zplus \Qplusp} \\
 \le & \norm{\tilZplus^T \LAP \LAPT\P{\Zplus\Qt}(\LAT\P{\Zplus\Qt})^{-1}\LAT\Zplus\Qtp\QtpT\Qplusp  }
 + \norm{\tilZplus^T \LAP \LAPT\Zplus\Qtp\QtpT\Qplusp  } \\
 \le & \underbrace{ \norm{\tilZplus^T \LAP \LAPT\P{\Zplus\Qt}(\LAT\P{\Zplus\Qt})^{-1}\LAT\Zplus\Qtp }}_{=:(I)} 
 + \underbrace{\norm{\tilZplus^T \LAP \LAPT\Zplus\Qtp }}_{=:(II)}.
\end{align*}
We estimate the two summands individually.\\

\noindent \textbf{Bounding (I):} 
First, we recall that
\begin{align*}
    \Zplus   &= \left( \I -\mu M_t \right) \Zt, \\
    \tilZplus&=\left( \I +\mu M_t \right) \tilZt,
\end{align*}
where
\begin{equation*}
    M_t= \Zt\ZtT - \tilZt\tilZtT - \symA +\Deltat.
\end{equation*}
As shown in inequality \eqref{ineq:boundforM} we have
\begin{equation}\label{ineq:boundforM3}
\norm{M_t} \le 10 \norm{X}.
\end{equation}
Then we compute that
\begin{align*}
    \LAT\Zplus\Qtp =& \LAT\left(\Id - \mu M_t\right) \Zt \Qtp \\
    =& \LAT\left(\Id - \mu M_t\right) \LAP \LAPT \Zt \Qtp \\
    =& - \mu\LAT M_t \LAP \LAPT \Zt \Qtp.
\end{align*}
By the submultiplicativity of the spectral norm it follows that 
\begin{align}
    (I) = & \norm{\tilZplus^T \LAP \LAPT\P{\Zplus\Qt}(\LAT\P{\Zplus\Qt})^{-1}\LAT\Zplus\Qtp } \nonumber \\
    = & \norm{- \mu\tilZplus^T \LAP \LAPT\P{\Zplus\Qt}(\LAT\P{\Zplus\Qt})^{-1}\LAT M_t \LAP \LAPT \Zt \Qtp } \nonumber \\
    \le & \mu   \norm{\tilZplus}  \frac{ \norm{ \LAPT\P{\Zplus\Qt} }}{ \sglmin(\LAT\P{\Zplus\Qt}) } \norm{ \LAT M_t\LAP } \norm{ \Zt \Qtp}. \label{ineq:secbalancing1}
\end{align}
Analogously as in Lemma \ref{lemma:anglecontrol} (see inequality \eqref{ineq:aux1212}) we can show that 
\begin{align}
    \norm{\LAPT\P{\Zplus\Qt}} 
    &\le  2\norm{\LAPT\P{\Zt\Qt}} + 20\mu\norm{X}. \label{ineq:balancingaux1} 
\end{align}
Note that due to our assumption on $ \mu $, $\norm{\Deltat}$, and $ \norm{\LAPT\P{\Zt\Qt}}  $ this also implies that $  \norm{\LAPT\P{\Zplus\Qt}}  \le \frac{1}{2}$, which implies that $\sglmin(\LAT\P{\Zplus\Qt})  \ge 1/2 $. 
Thus, from \eqref{ineq:secbalancing1} it follows that
\begin{align*}
    (I) 
    \le& 2 \mu \norm{\tilZplus}  \norm{ \LAPT\P{\Zplus\Qt} } \norm{ \LAT M_t\LAP } \norm{ \Zt \Qtp}\\
    \le& 4 \mu \sqrt{\norm{X}}  \norm{ \LAPT\P{\Zplus\Qt} } \norm{ \LAT M_t\LAP } \norm{ \Zt \Qtp}\\
    \le& 8 \mu  \left( \norm{\LAPT\P{\Zt\Qt}} + 10\mu\norm{X} \right) \norm{ \LAT M_t\LAP } \sqrt{\norm{X}} \norm{ \Zt \Qtp}.
\end{align*}
In the second inequality we used the assumption $ \norm{\Zplus} \le 2 \sqrt{\norm{X}} $ and that by symmetry it holds that $ \norm{\Zplus} = \norm{\tilZplus} $. In the third inequality we used inequality \eqref{ineq:balancingaux1}.\\
\noindent \textbf{Bounding (II):}
First, we observe that
\begin{align*}
    &\tilZplus^T \LAP \LAPT\Zplus\Qtp \\
    =& \tilZtT \left( \I +\mu M_t  \right) \LAP \LAPT  \left( \I -\mu M_t \right) \Zt\Qtp \\
    =&  \tilZtT \LAP \LAPT \Zt\Qtp  + \mu \tilZtT \left(  M_t  \LAP \LAPT -  \LAP \LAPT M_t \right) \Zt\Qtp - \mu^2 \tilZtT M_t \LAP \LAPT  M_t \Zt\Qtp \\
    =&  \tilZtT \Zt\Qtp  + \mu \tilZtT \left(  M_t  -  \LAP \LAPT M_t \right) \LAP \LAPT\Zt\Qtp - \mu^2 \tilZtT M_t \LAP \LAPT  M_t \Zt\Qtp \\
    =&  \tilZtT\Zt\Qtp  + \mu \tilZtT \LA \LAT M_t \LAP \LAPT  \Zt\Qtp - \mu^2 \tilZtT M_t \LAP \LAPT  M_t \Zt\Qtp.
\end{align*}
It follows from the triangle inequality that 
\begin{align*}
    (II) \le &  \norm{\tilZtT \Zt\Qtp} + \mu \norm{\tilZtT \LA \LAT M_t \LAP \LAPT  \Zt\Qtp} + \mu^2 \norm{\tilZtT M_t \LAP \LAPT  M_t \Zt\Qtp} \\
    \le &  \norm{\tilZtT \Zt\Qtp} + \mu \norm{ \LAT \tilZt} \norm{  \LAT M_t \LAP} \norm{ \Zt\Qtp } + \mu^2  \norm{\tilZt} \norm{ M_t}^2  \norm{ \Zt\Qtp }.
\end{align*}
Next, we note that
\begin{align*}
    \norm{ \LAT \tilZt} = & \norm{ \tilLAT \Zt} \\
    \le & \norm{ \LAPT \Zt} \\
    \le & \norm{ \LAPT \Zt \Qt} + \norm{ \LAPT \Zt \Qtp} \\
    \le & \norm{ \LAPT \PZQ } \norm{ \Zt } + \norm{\Zt \Qtp}.
\end{align*}
It follows that 
\begin{align*}
    (II)\le & \norm{\tilZtT \Zt\Qtp} + \mu \left(\norm{ \LAPT \PZQ } \norm{ \Zt } + \norm{\Zt \Qtp}\right)\norm{  \LAT M_t \LAP} \norm{ \Zt\Qtp } \\
    &+ \mu^2  \norm{\tilZt} \norm{M_t}^2  \norm{ \Zt\Qtp } \\
    \le & \norm{\tilZtT \Zt\Qtp} + \mu \left(2\norm{ \LAPT \PZQ } \sqrt{\norm{X}} + \norm{\Zt \Qtp}\right)\norm{  \LAT M_t \LAP} \norm{ \Zt\Qtp } \\
    &+ 200\mu^2 \norm{X}^{\frac{5}{2}}  \norm{ \Zt\Qtp },
\end{align*}
where in the second inequality we used the inequality \eqref{ineq:boundforM3}, the assumption $ \norm{\Zt} \le 2 \sqrt{\norm{X}}$ and that by symmetry it holds that $ \norm{\Zt} = \norm{\tilZt} $.\\

\noindent \textbf{Combining the estimates:}
By combining the previous two steps we obtain that 
\begin{align}
    & \norm{\tilZplus^T \Zplus \Qplusp} \nonumber \\
    \le & 8 \mu  \left( \norm{\LAPT\P{\Zt\Qt}} + 10\mu\norm{X} \right) \norm{ \LAT M_t\LAP } \sqrt{\norm{X}} \norm{ \Zt \Qtp} +  \norm{ \tilZtT \Zt\Qtp } \nonumber \\
    & + \mu \left(2\norm{ \LAPT \PZQ } \sqrt{\norm{X}} + \norm{\Zt \Qtp}\right)\norm{  \LAT M_t \LAP} \norm{ \Zt\Qtp } + 200\mu^2 \norm{X}^{\frac{5}{2}} \norm{ \LAPT  M_t}^2  \norm{ \Zt\Qtp } \nonumber\\
    =&  \norm{ \tilZtT \Zt\Qtp }  + 2\mu \left(  \left( 5\norm{\LAPT\P{\Zt\Qt}} + 40\mu\norm{X} \right) \norm{ \LAT M_t\LAP }  + 100\mu \norm{X}^2 \right)  \sqrt{\norm{ X }} \norm{ \Zt\Qtp }\nonumber\\
    &+\mu \norm{ \LAT M_t \LAP } \norm{\Zt \Qtp}^2. \label{ineq:secbalancing4}
\end{align}
Next, we estimate $ \norm{\LAT M_t\LAP} $. For that, we first calculate 
\begin{align*}
    \LAT M_t\LAP 
    &= \LAT \left(   \Zt\ZtT - \tilZt\tilZtT - \symA +\Deltat  \right)  \LAP \\
    &=  \LAT   \Zt\ZtT  \LAP -  \LAT  \tilZt\tilZtT  \LAP  +  \LAT \Deltat  \LAP \\
    &=  \LAT   \Zt \Qt \QtT \ZtT  \LAP  -  \LAT  \tilZt \Qt \QtT  \tilZtT  \LAP  -  \LAT  \tilZt \Qtp \QtpT  \tilZtT  \LAP  +  \LAT \Deltat  \LAP. 
\end{align*}
It follows that 
\begin{align*}
    \norm{ \LAT M_t\LAP } \le& \norm{ \Zt \Qt } \norm{ \LAPT   \Zt \Qt } + \norm{ \LAT  \tilZt \Qt } \norm{ \tilZt \Qt } + \norm{ \tilZt \Qtp  }^2 + \norm{\Deltat}\\
    = & \norm{ \Zt \Qt } \norm{ \LAPT   \Zt \Qt } + \norm{ \tilLAT  \Zt \Qt } \norm{ \tilZt \Qt } + \norm{ \tilZt \Qtp  }^2 + \norm{\Deltat}\\
    \le& \norm{     \Zt \Qt } \norm{ \LAPT   \Zt \Qt } + \norm{ \LAPT  \Zt \Qt } \norm{   \tilZt \Qt  } + \norm{ \tilZt \Qtp  }^2 + \norm{\Deltat}\\
    \stackrel{(a)}{=}& 2 \norm{     \Zt \Qt } \norm{ \LAPT   \Zt \Qt } + \norm{ \Zt \Qtp  }^2 + \norm{\Deltat} \\
    \le& 2 \norm{\LAPT \PZQ}  \norm{     \Zt \Qt }^2  + \norm{ \Zt \Qtp  }^2 + \norm{\Deltat},
\end{align*}
where in equality $(a)$ we used the symmetry between $\Zt$ and $\tilZt$, see Lemma \ref{lemma:symmetry}.
Using the assumption $ \norm{\Zt} \le 2 \sqrt{\norm{X}} $, we obtain that
\begin{align}
    \norm{ \LAT M_t\LAP }
    &\le  8 \norm{\LAPT \PZQ}  \norm{X}  + \norm{ \Zt \Qtp  }^2 + \norm{\Deltat}\le 8 \beta,\label{ineq:secbalancing3} 
\end{align}
where we have set $ \beta := \norm{\LAPT \PZQ}  \norm{X}  + \norm{ \Zt \Qtp  }^2 + \norm{\Deltat}$.
Inserting \eqref{ineq:secbalancing3} into \eqref{ineq:secbalancing4} we obtain that
\begin{align*}
    &\norm{\tilZplus^T \Zplus \Qplusp} \\   
    \le &\norm{ \tilZtT \Zt\Qtp }  + 2\mu \left(  8 \left( 5\norm{\LAPT\P{\Zt\Qt}} + 40\mu\norm{X} \right) \beta + 100 \mu \norm{X}^2 \right)  \sqrt{\norm{ X }} \norm{ \Zt\Qtp } + 8 \mu \beta \norm{\Zt \Qtp}^2\\
    \le &\norm{ \tilZtT \Zt\Qtp }  + C\mu \left(   \left( \norm{\LAPT\P{\Zt\Qt}} + \mu\norm{X} \right) \beta +  \mu \norm{X}^2 \right)  \sqrt{\norm{ X }} \norm{ \Zt\Qtp }+ C \mu \beta \norm{\Zt \Qtp}^2,
\end{align*}
where the last line follows by choosing the constant $C>0$ large enough.
This proves the claim.
\end{proof}

\subsection{Proof of Lemma \ref{ref:balancednessangle}: Controlling $\norm{\tilZt^T \PZQ } $}\label{sec:balancednessangle}

\begin{proof}[Proof of Lemma \ref{ref:balancednessangle}]
    Recall that 
    \begin{align*}
        \Zplus &= \left(\Id - \mu M_t \right) \Zt, \\
        \tilZplus &=  \left(\Id + \mu M_t \right) \tilZt,
    \end{align*}
    where 
    \begin{equation*}
        M_t=  \Zt \ZtT - \tilZt \tilZtT -\symA + \Deltat.
    \end{equation*}
    Analogously, as in the proof of Lemma \ref{lemma:anglecontrol} we set 
    \begin{equation*}
        H:=\left( \I - \mu M_t \right) \left(\I + K\right) \P{\Zt\Qt},
    \end{equation*}
    where
    \begin{equation*}
        K:=\Zt\Qtp\QtpT\Qplus(\P{\Zt\Qt}^T\Zt\Qt\QtT\Qplus)^{-1}\P{\Zt\Qt}^T.
    \end{equation*}
    In the proof of Lemma \ref{lemma:anglecontrol} we have seen that $P_{\Zplus \Qplus}$ has the same column span as $H$. 
    It follows that
    \begin{equation*}
        \norm{\tilZplusT P_{\Zplus \Qplus}}= \norm{ \tilZplusT H \left(H^TH\right)^{-1/2}}.    
    \end{equation*} 
    We calculate that 
    \begin{align*}
        \tilZplusT P_{\Zplus \Qplus}
        =& \tilZt^T \left( \Id + \mu M_t \right) \left( \Id - \mu M_t \right)  \left(\I + K\right) \PZQ \left(H^TH\right)^{-1/2} \\
        =& \underbrace{ \tilZt^T \PZQ \left(H^TH\right)^{-1/2}}_{=:(I)} 
        + \underbrace{ \tilZt^T K \PZQ \left(H^TH\right)^{-1/2}}_{=:(II)}
         -\mu^2  \underbrace{ \tilZt M_t^2 \left( \Id +K \right) \PZQ  \left(H^TH\right)^{-1/2}}_{=:(III)}.
    \end{align*}
    We are going to estimate the spectral norm of the summands $(I)$, $(II)$, and $(III)$ individually.
    Before that, we will first derive bounds for $\norm{M_t} $, $\norm{ \QtpT\Qplus  }$, $\norm{K} $, and $ \sglmin \left(H\right) $. 
    First, we note that 
    \begin{align}
     \norm{M_t}
    \le & \norm{X}+ \norm{\Zt \ZtT} + \norm{\tilZt \tilZtT} + \norm{\Deltat} \nonumber \\
    = & \norm{X}+2 \norm{\Zt}^2 + \norm{\Deltat}\nonumber  \\
    \le & 10 \norm{X}, \label{ineq:balanceintern1}
    \end{align}
    where in the last line we used the assumptions $ \norm{\Zt} \le 2 \sqrt{\norm{X}} $ and $\norm{\Deltat} \le c \sglmin \left( X\right) $. Next, we derive an upper bound for $\norm{ \QtpT\Qplus  }$. It follows from Lemma \ref{lemma:aux1} that
    \begin{align}
        \norm{ \QtpT\Qplus  } \le&  \mu\left(\norm{\Zt\Qt}\norm{\Zt\Qtp}+ 40 \mu \norm{X} \norm{\Zt\Qt}^2 \right)\norm{\LAPT\P{\Zt\Qt}} + 4\mu\frac{\sqrt{\norm{X}}\norm{\tilZtT\Zt \Qtp}}{\sglmin\left(\Zt\Qt\right)} + 4 \mu \norm{\Deltat} \nonumber \\
        \stackrel{(a)}{\le}&  \mu\left( 2 \sqrt{\norm{X}} \norm{\Zt\Qtp}+ 160 \mu \norm{X}^2  \right) \norm{\LAPT \PZQ } + 4 \mu\frac{\sqrt{\norm{X}}\norm{\tilZtT\Zt \Qtp}}{\sglmin\left(\Zt\Qt\right)} + 4 \mu \norm{\Deltat} \nonumber\\
        \stackrel{(b)}{\le}& C_3 \mu c \sglmin \left(X\right), \label{ineq:balanceintern2additional}
    \end{align}
    where in inequality $(a)$ we used the assumption $\norm{\Zt} \le 2 \sqrt{\norm{X}}$
    and in inequality $(b)$ we used the assumptions $ \norm{ \Zt \Qtp} \le c \sqrt{\sglmin (X)} $, $\norm{\Deltat} \le c \sglmin \left( X\right)  $, $\norm{\tilZtT\Zt \Qtp} \le \frac{c}{\kappa} \sglmin \left(\Zt \Qt \right) \sqrt{\norm{X}} $, $ \mu \le \frac{c}{\norm{X} \kappa}  $, and $\norm{\LAPT \PZQ } \le \frac{c}{\kappa} $,
    where $C_3>0$ is an absolute constant chosen large enough.
    Then we derive an upper bound for $\norm{K} $. For that, we compute that
    \begin{align}
    \norm{K} 
    &\le \frac{ \norm{ \Zt\Qtp } \norm{ \QtpT \Qplus }  }{ \sglmin \left( \P{\Zt\Qt}^T\Zt\Qt \right) \sglmin \left( \QtT\Qplus  \right) } \nonumber\\
    &\stackrel{(a)}{\le} \frac{ 2 \norm{ \QtpT\Qplus }  }{ \sglmin \left( \QtT\Qplus  \right) } \nonumber\\
    & \stackrel{(b)}{\le} 4 \norm{ \QtpT\Qplus  } \nonumber\\
    & \stackrel{(c)}{\le} C_1 \mu c \sglmin \left(X\right)\stackrel{(d)}{\le} \frac{1}{8}. \label{ineq:balanceintern2}
    \end{align}
    In inequality $(a)$ we used the assumption $ \norm{ \Zt \Qtp} \le 2  \sglmin \left( \Zt \Qt \right) $. In inequality $(b)$ we have used $  \sglmin \left( \QtT\Qplus  \right) \ge 1/2 $, which follows from Lemma \ref{lemma:aux1}.
     In inequality $(c)$ we used inequality \eqref{ineq:balanceintern2additional}, where $C_1: = 4C_3$. Inequality $(d)$ follows from our assumption on the step size $\mu$ and by choosing the absolute constant $c>0$ small enough. In order to control $\sglmin \left( H \right) $ we observe that
    \begin{align}
       \sglmin \left( H \right) & \ge \left( 1 - \mu \norm{M_t} \right) \left( 1- \norm{K}  \right)  \norm{\PZQ} \nonumber  \\
        &= \left( 1 - \mu \norm{M_t} \right) \left( 1- \norm{K}  \right) \nonumber \\
        &\ge 1/2, \label{ineq:balanceintern3}
    \end{align}
    where in the last inequality we used inequalities \eqref{ineq:balanceintern1}, \eqref{ineq:balanceintern2}, and our assumption on the step size $\mu$.\\ 

    \noindent Now we are in a position to derive upper bounds for the spectral norms of the terms $(I)$, $(II)$, and $(III)$.\\

    \noindent\textbf{Estimation of (I):}
    Our goal is to derive an upper bound for the spectral norm of 
    \begin{equation*}
    (I)=\tilZt^T \PZQ \left(H^TH\right)^{-1/2}.
    \end{equation*}
    For that, we compute first
    \begin{align*}
        H^T H
        =& \P{\Zt\Qt}^T \left(\I + K^T\right) \left( \I - \mu M_t \right)^2 \left(\I + K\right) \P{\Zt\Qt},
    \end{align*}
    which can be rewritten as 
    \begin{align*}
        H^T H =& \P{\Zt\Qt}^T \left(\I - 2\mu M_t +K +K^T + F \right) \P{\Zt\Qt} \\
        =& \I - 2  \mu \P{\Zt\Qt}^T M_t  \PZQ +  \P{\Zt\Qt}^T \left( K + K^T +F \right) \PZQ, 
    \end{align*}
    where
    \begin{align}
        F:=& \left(\I + K^T\right) \left( \I - \mu M_t \right)^2 \left(\I + K\right) -(\I - 2\mu M_t +K +K^T)\nonumber \\
        =& \mu^2 M_t^2 - 2 \mu K^T M_t - 2 \mu M_t K +K^T K+  \mu^2 K^T M_t^2 - 2\mu K^TM_tK + \mu^2 M_t^2K + \mu^2 K^TM_t^2 K. \label{equ:definitionF}
    \end{align}
    Since $\norm{K} \le 1/8 \le 1 $ due to inequality \eqref{ineq:balanceintern2} we obtain that 
    \begin{align}
        \norm{F} \le& \mu^2 \norm{M_t}^2 + 4 \mu \norm{M_t} \norm{K} + \norm{K}^2 + 2 \mu^2 \norm{K} \norm{M_t}^2 + 2\mu \norm{M_t} \norm{K}^2  + \mu^2 \norm{M_t}^2 \norm{K}^2 \nonumber \\
        \le & \mu^2 \norm{M_t}^2 + 4 \mu \norm{M_t} \norm{K} + \norm{K}^2 + 2 \mu^2 \norm{M_t}^2 + 2\mu \norm{M_t} \norm{K}  + \mu^2 \norm{M_t}^2\\
        \stackrel{(a)}{\le} & 16 \mu^2 \norm{M_t}^2 + 13 \norm{K}^2 \nonumber \\
        \stackrel{(b)}{\le} & 16 \mu^2 \norm{M_t}^2 + 13 \norm{K} \nonumber  \\
        \stackrel{\eqref{ineq:balanceintern1},\eqref{ineq:balanceintern2} }{\le} & 1600 \mu^2 \norm{X}^2 + 13  C_1 \mu \frac{ c \sglmin \left(X\right) }{\kappa} \nonumber \\
        \stackrel{(c)}{\le} & C_2 \mu c \sglmin \left(X\right) \label{ineq:balancingintern16} \\
        \stackrel{(d)}{\le } & 1, \label{ineq:balancingintern17}
    \end{align}
    where inequality $(a)$ follows from the elementary inequality $ab \le \frac{a^2}{2} + \frac{b^2}{2} $.
    In inequality $(b)$ we used again that $\norm{K} \le 1$, which is due to inequality \eqref{ineq:balanceintern2}. Inequality $(c)$ follows from the assumption $ \mu \le \frac{c}{ \norm{X} \kappa} $ and by choosing the constant $C_2>0$ large enough.
    Inequality $(d)$ follows from our assumption on the step size $\mu$ and by choosing the absolute constant $c>0$ small enough.
    Since $ \norm{K}\le \frac{1}{8}$, $\norm{M_t}\le 10 \norm{X}$, by our assumption $ \mu \le \frac{c}{\kappa \norm{X}}$ we can apply Lemma \ref{MatrixTaylor} and obtain that 
    \begin{align*}
        \left( H^T H \right)^{-1/2} =\Id + \mu \PZQ^T M_t  \PZQ - \frac{1}{2} \P{\Zt\Qt}^T \left( K + K^T +F \right) \PZQ + G, 
    \end{align*}
    where $G$ is a symmetric matrix, which satisfies 
    \begin{equation}\label{ineq:balancingGdefinition}
        \norm{G} \le 3 \norm{ -2  \mu \PZQ^T M_t  \P{\Zt\Qt} +  \P{\Zt\Qt}^T \left( K + K^T +F \right) \PZQ }^2.
    \end{equation}
    It follows that 
    \begin{align*}
        \tilZt^T \PZQ \left( H^T H\right)^{-1/2}
       =& \tilZt^T \PZQ  \left( \I + \mu   \P{\Zt\Qt}^T M_t  \PZQ \right) \\
       & - \frac{1}{2} \tilZt^T \PZQ  \PZQ^T \left( K + K^T +F \right) \PZQ + \tilZt^T \PZQ G.
    \end{align*}
    In particular, we obtain that 
    \begin{align}\label{ineq:internaandb}
        \norm{  \tilZt^T \PZQ \left( H^T H\right)^{-1/2} }
        \le \underbrace{ \norm{  \tilZt^T \PZQ  \left( \I + \mu   \P{\Zt\Qt}^T M_t  \PZQ \right) }}_{=:(\square)}
        + \frac{1}{2} \underbrace{ \norm{ \tilZt^T \PZQ } \left( 2 \norm{K} + \norm{F}  +\norm{G}   \right)}_{=:( \square \square )}  .
    \end{align}
    We are going to estimate the two summands individually.
    In order to estimate $( \square )$ we first compute that 
    \begin{align*}
        &\tilZt^T \PZQ  \left( \I + \mu   \P{\Zt\Qt}^T M_t  \PZQ \right)\\
        =&\tilZt^T \PZQ  \left( \I - \mu   \P{\Zt\Qt}^T \left( \symA- \Zt \ZtT + \tilZt \tilZtT - \Deltat \right)  \PZQ \right) \\
        =&\tilZt^T \PZQ  \left( \I - \mu   \P{\Zt\Qt}^T \left( \symA + \tilZt \tilZtT - \Deltat \right)  \PZQ \right)+ \mu \tilZt^T \PZQ \PZQT \Zt \ZtT  \PZQ\\
        =&\tilZt^T \PZQ  \left( \I - \mu   \P{\Zt\Qt}^T \left( \symA + \tilZt \tilZtT - \Deltat \right)  \PZQ \right)+ \mu \tilZt^T \Zt \ZtT  \PZQ-\mu \tilZt^T \PZQperp \PZQperpT \Zt \ZtT  \PZQ.
    \end{align*}
    It follows that 
    \begin{align}
        (\square) =& \norm{\tilZt^T \PZQ  \left( \I + \mu   \P{\Zt\Qt}^T M_t  \PZQ \right)} \nonumber \\
        \le& \norm{ \tilZt^T \PZQ   }  \norm{  \I - \mu   \P{\Zt\Qt}^T \left( \symA + \tilZt \tilZtT - \Deltat  \right) \PZQ } + \mu \norm{ \tilZt^T \Zt  } \norm{\Zt} + \mu \norm{\tilZt} \norm{ \PZQperpT \Zt  } \norm{\Zt} \nonumber \\
        \stackrel{(a)}{\le}& \norm{ \tilZt^T \PZQ   }  \norm{  \I - \mu   \P{\Zt\Qt}^T \left( \symA + \tilZt \tilZtT - \Deltat  \right) \PZQ } + 2\mu \norm{ \tilZt^T \Zt  }\sqrt{\norm{X}} + 4 \mu \norm{X} \norm{ \PZQperpT \Zt  } \nonumber \\
        \stackrel{(b)}{\le}& \norm{ \tilZt^T \PZQ   }  \norm{  \I - \mu   \P{\Zt\Qt}^T \left( \symA + \tilZt \tilZtT - \Deltat  \right) \PZQ } + 2\mu \norm{ \tilZt^T \Zt  }\sqrt{\norm{X}} + 4 \mu \norm{X} \norm{  \Zt \Qtp  }, \label{ineq:internbalancing14}
    \end{align}
    where in inequality (a) we used the assumption $\norm{\tilZt} = \norm{\Zt} \le 2\sqrt{\norm{X}} $ and in inequality (b) we used the fact that $ \norm{ \PZQperpT \Zt  } =  \norm{ \PZQperpT \Zt \Qtp  } \le \norm{ \Zt \Qtp  } $. Furthermore, we have that 
    \begin{align}
    &\norm{ \I - \mu   \P{\Zt\Qt}^T \left( X + \tilZt \tilZtT - \Deltat  \right) \PZQ } \nonumber \\
    =& \norm{ \I - \mu   \P{\Zt\Qt}^T \left( \LA \Sigma_X \LAT - \tilLA \Sigma_X \tilLAT + \tilZt \tilZtT - \Deltat  \right) \PZQ } \nonumber\\
    \le &  \norm{  \I - \mu   \P{\Zt\Qt}^T \LA \Sigma_X \LAT \PZQ - \mu \PZQT \tilZt \tilZtT \PZQ  }\nonumber\\
    &+ \mu  \norm{  \PZQT \tilLA \Sigma_X \tilLAT \PZQ  }
    + \mu  \norm{  \PZQT \Deltat \PZQ  }\nonumber\\
    \le & \norm{  \I - \mu   \P{\Zt\Qt}^T \LA \Sigma_X \LAT \PZQ  }
    + \mu  \norm{  \PZQT \tilLA \Sigma_X \tilLAT \PZQ  } + \mu  \norm{  \PZQT \Deltat \PZQ }\nonumber\\
    \le &  1- \mu \sglmin \left(    \P{\Zt\Qt}^T \LA \Sigma_X \LAT \PZQ \right)
    + \mu  \norm{  \PZQT \tilLA \Sigma_X \tilLAT \PZQ  } + \mu  \norm{   \Deltat }. \label{ineq:internbalancing13}
    \end{align}
    We observe that 
    \begin{align}\label{ineq:internbalancing11}
    \sglmin \left(    \P{\Zt\Qt}^T \LA \Sigma_X \LAT \PZQ \right)    
    \ge \sglmin \left(  \P{\Zt\Qt}^T \LA \right)^2 \sglmin \left( \Sigma_X \right)
    \ge \frac{3}{4} \sglmin \left( X \right),
    \end{align}
    where we have used the assumption $ \norm{\LAPT \PZQ} \le \frac{c}{\kappa} $.
    Since $ \norm{ \tilLAT \PZQ  } \le    \norm{\LAPT \PZQ}  $, the same assumption also implies 
    \begin{align}\label{ineq:internbalancing12}
        \norm{  \PZQT \tilLA \Sigma_X \tilLAT \PZQ  } \le \norm{  \PZQT \tilLA } \norm{ \Sigma_X } \le \frac{1}{8} \sglmin \left(X\right) .
    \end{align}
    Inserting inequalities \eqref{ineq:internbalancing11} and \eqref{ineq:internbalancing12} into \eqref{ineq:internbalancing13} and using the assumption $ \norm{ \Deltat} \le c \sglmin \left(X\right) $ it follows that 
    \begin{align}
    \norm{ \I - \mu   \P{\Zt\Qt}^T \left( \symA  + \tilZt \tilZtT - \Deltat  \right) \PZQ } 
    \le 1- \frac{\mu}{2} \sglmin \left( X \right). 
    \end{align}
    Inserting the above inequality into \eqref{ineq:internbalancing14}  we obtain that 
    \begin{align}
        (\square)\le \left( 1-  \frac{\mu}{2} \sglmin \left( X \right) \right) \norm{\tilZt^T \PZQ } 
        + 2\mu \norm{ \tilZt^T \Zt  } \sqrt{\norm{X}}+ 4 \mu \norm{X} \norm{  \Zt \Qtp  }. \label{ineq:balancing:upperbounda}
    \end{align}
    Next, we are going to estimate term $(\square \square)$ in \eqref{ineq:internaandb}.
    First, we note that it follows from inequality \eqref{ineq:balancingGdefinition} that 
    \begin{align*}
        \norm{G} 
        \le &3 \norm{ -2  \mu \PZQ^T M_t  \P{\Zt\Qt} +  \P{\Zt\Qt}^T \left( K + K^T +F \right) \PZQ }^2\\
        \le & 3 \left( 2\mu \norm{M_t} + 2 \norm{K} +\norm{F} \right)^2 \\
        \le & 24 \mu^2 \norm{M_t}^2 + 6 \left(  2 \norm{K} +\norm{F}  \right)^2,
    \end{align*}
    where in the last line we used the elementary inequality $ (a+b)^2 \le 2a^2 + 2b^2 $.
    It follows that 
    \begin{align}
        (\square \square) = & \norm{ \tilZt^T \PZQ } \left( 2 \norm{K} + \norm{F}  +\norm{G}   \right) \nonumber \\
        \le &  \norm{ \tilZt^T \PZQ } \left( 2 \norm{K} + \norm{F}  + 24 \mu^2 \norm{M}^2 + 6 \left(  2 \norm{K} +\norm{F}  \right)^2   \right) \nonumber \\
        \stackrel{\eqref{ineq:balanceintern1}}{\le} &  \norm{ \tilZt^T \PZQ } \left( 2 \norm{K} + \norm{F}  + 2400 \mu^2 \norm{X}^2 + 6 \left(  2 \norm{K} +\norm{F}  \right)^2   \right) \nonumber \\
        \le &  \norm{ \tilZt^T \PZQ } \left( 2 \norm{K} + \norm{F}  + 2400 \mu c \sglmin \left( X \right) + 6 \left(  2 \norm{K} +\norm{F}  \right)^2   \right), \label{ineq:internbalancing15} 
    \end{align}
    where in the last line we used the assumption $ \mu \le \frac{c}{\norm{X} \kappa} $.
    Inserting our bounds for $\norm{K}$ and $\norm{F}$ into \eqref{ineq:internbalancing15} and obtain that
    \begin{align}
        (\square \square)
        \stackrel{(i)}{\le} & \norm{ \tilZt^T \PZQ } \left( 2 \norm{K} + \norm{F}  + 2400 \mu c \sglmin \left( X \right) + 18 \left(  2 \norm{K} +\norm{F}  \right)   \right) \nonumber \\
        \stackrel{(ii)}{\le} & C_3\mu c \sglmin (X)\norm{ \tilZt^T \PZQ }. \label{ineq:balancing:upperboundb}
    \end{align}
    In inequality $(i)$ we used that $\norm{K} \le 1 $ and $\norm{F}\le 1$ which we have shown above.
    In inequality $(ii)$ we used inequalities \eqref{ineq:balanceintern2} and \eqref{ineq:balancingintern16}. 
    $C_3>0$ is an absolute constant chosen large enough.
    Inserting the upper bounds for $(\square)$ and $(\square\square)$ (inequalities \eqref{ineq:balancing:upperbounda} and \eqref{ineq:balancing:upperboundb}) into \eqref{ineq:internaandb}  we obtain that  
    \begin{align*}
        &\norm{(I)} \le (\square) + \frac{1}{2}(\square\square) \\
        \le & \left( 1-  \frac{\mu}{2} \sglmin \left( X \right) \right) \norm{\tilZt^T \PZQ } 
        + 2\mu \norm{ \tilZt^T \Zt  } \sqrt{\norm{X}}  +  4 \mu \norm{X} \norm{  \Zt \Qtp  }+  \frac{1}{2}C_3\mu c \sglmin (X)\norm{ \tilZt^T \PZQ }  \\
        \le & \left( 1-  \frac{\mu}{4} \sglmin \left( X \right) \right) \norm{\tilZt^T \PZQ } 
        + 2\mu \norm{ \tilZt^T \Zt  } \sqrt{\norm{X}} +4 \mu \norm{X} \norm{  \Zt \Qtp  } ,
    \end{align*}
    where the last line follows since the absolute constant $c>0$ has been chosen small enough.\\

    \noindent\textbf{Estimation of (II):}
    By inserting the definition of $K$ we obtain that
    \begin{align*}
        \tilZt^T K \PZQ \left(H^TH\right)^{-1/2} 
        &= \tilZt^T \Zt\Qtp\QtpT\Qplus(\P{\Zt\Qt}^T\Zt\Qt\QtT\Qplus)^{-1}\P{\Zt\Qt}^T \PZQ \left(H^TH\right)^{-1/2}\\
        &= \tilZt^T \Zt\Qtp\QtpT\Qplus \left(  \QtT\Qplus \right)^{-1} \left(\P{\Zt\Qt}^T\Zt\Qt \right)^{-1} \left(H^TH\right)^{-1/2}. 
    \end{align*}
    It follows that 
    \begin{equation}\label{ineq:balancingintern18}
    (II) = \norm{\tilZt^T K \PZQ \left(H^TH\right)^{-1/2} }
    \le  \frac{\norm{\tilZt^T \Zt\Qtp} \norm{ \QtpT\Qplus }}{\sglmin \left(  \QtT\Qplus  \right) \sglmin \left( \Zt\Qt \right) \sglmin \left( H\right)}.    
    \end{equation}
    In particular, using inequality \eqref{ineq:balanceintern2additional}, \eqref{ineq:balanceintern3} and $  \sglmin \left( \QtT\Qplus  \right) \ge 1/2 $, which follows from Lemma \ref{lemma:aux1}, we obtain that 
    \begin{align*}
        (II) \le & 4C_3 \frac{\mu c  \norm{\tilZt^T \Zt\Qtp}  \sglmin (X) }{ \sglmin \left( \Zt\Qt \right)  } \\
        \le & \frac{\mu \norm{\tilZt^T \Zt\Qtp}  \sglmin (X) }{ \sglmin \left( \Zt\Qt \right)  },
    \end{align*}
    where the last line follows since the absolute constant $c>0$ has been chosen small enough.

    \noindent\textbf{Estimation of (III):}
    We note that
    \begin{align*}
        \norm{(III)}
        &=\norm{ \tilZt M_t^2 \left( \Id + K \right) \PZQ  \left(H^TH\right)^{-1/2}  }\\
        & \le \frac{ \norm{\tilZt} \norm{M_t}^2 \left( 1+ \norm{K} \right)}{\sglmin \left( H \right)} \\
        &\stackrel{(a)}{\le}  400 \norm{\tilZt} \norm{X}^2 
        \stackrel{(b)}{\le}  800 \norm{X}^{5/2},
    \end{align*}
    where in $(a)$ we used inequalities \eqref{ineq:balanceintern1},  \eqref{ineq:balanceintern2}, and \eqref{ineq:balanceintern3} and in inequality $(b)$ we used the assumption $  \norm{\Zt} \le 2 \sqrt{\norm{X}} $ and that by symmetry $ \norm{\Zt}= \norm{\tilZt} $, see Lemma \ref{lemma:symmetry}.\\

    By combining the upper bounds for the spectral norms of $(I)$, $(II)$, and $(III)$ we conclude that
    \begin{align*}
        \norm{ \tilZplusT P_{\Zplus \Qplus}} \le & \norm{(I)} + \norm{(II)} + \norm{(III)}\\
        \le & \left( 1-  \frac{\mu}{4} \sglmin \left( X \right) \right) \norm{\tilZt^T \PZQ } 
        + 2\mu \norm{ \tilZt^T \Zt  } \sqrt{\norm{X}}+ 4 \mu \norm{X} \norm{  \Zt \Qtp  }\\
        &+  \frac{\mu \norm{\tilZt^T \Zt\Qtp}  \sglmin (X) }{ \sglmin \left( \Zt\Qt \right)  }   + 800 \mu^2  \norm{X}^{5/2}. 
    \end{align*}
\end{proof}

\subsection{Proofs of Lemma \ref{SpecLossboundedlemma} and  Lemma \ref{lemma:localconvergence}: Local linear convergence}\label{sec:localconvergence}

To simplify notation we set
\begin{equation*}
\Dt := \symA - \Zt\ZtT + \tilZt\tilZtT.
\end{equation*}
Hence, we can write
\begin{align*}
\Zplus &= \Zt + \mu \left(\Dt +\Deltat \right) \Zt,\\
\tilZplus &= \tilZt - \mu \left(\Dt +\Deltat \right) \tilZt.
\end{align*}
We will first prove Lemma \ref{SpecLossboundedlemma}.

\begin{proof}[Proof of Lemma \ref{SpecLossboundedlemma}]
For the proof of Lemma \ref{SpecLossboundedlemma} we need to introduce some additional notation.
Namely, by $ L_{\text{sym} (X)} \in \mathbb{R}^{(n_1+n_2) \times 2k}$ we denote a matrix with orthonormal columns, whose span is equal to the column span of $ \symA $.
By $ \LsymAP \in \mathbb{R}^{ (n_1 + n_2 ) \times ( n_1 + n_2 -2k) } $ we denote a matrix with orthonormal columns, whose span is orthogonal to the span of $ L_{\text{sym} (X)} $.
In particular, we have that 
\begin{equation*}
\LAP \LAPT = \tilLA \tilLAT + \LsymAP \LsymAPT.
\end{equation*}
Thus, by the triangle inequality and submultiplicativity of the spectral norm it holds that
\begin{align} \label{LAPTDtle}
    \norm{\LAPT\Dt} \le & \norm{\tilLAT\Dt} + \norm{\LsymAPT\Dt} \nonumber \\
    \le & \norm{\tilLAT\Dt} + \norm{\LsymAPT\Dt\LA} + \norm{\LsymAPT\Dt\tilLA} + \norm{\LsymAPT\Dt\LsymAP} \nonumber \\
    \le & \norm{\tilLAT\Dt} + \norm{\Dt\LA} + \norm{\Dt\tilLA} + \norm{\LsymAPT\Dt\LsymAP} \nonumber \\
    = & 3\norm{\LAT\Dt} + \norm{\LsymAPT\Dt\LsymAP},
\end{align}
where the last equation hold because $D_t$ is a symmetric matrix and because of the symmetry between $\Zt$ and $\tilZt$.
To bound the second term in line \eqref{LAPTDtle}, we note that
\begin{align} \label{LsymAPTDtLsymAPle}
    &\norm{\LsymAPT\Dt\LsymAP} \nonumber \\
    = & \norm{\LsymAPT\left(\tilZt\tilZtT - \Zt\ZtT\right)\LsymAP} \nonumber \\
    \le & \norm{\LsymAPT\left(\tilZt\Qt\QtT\tilZtT - \Zt\Qt\QtT\ZtT\right)\LsymAP} \nonumber \\
    & + \norm{\LsymAPT\tilZt\Qtp\QtpT\tilZtT\LsymAP} + \norm{\LsymAPT\Zt\Qtp\QtpT\ZtT\LsymAP} \nonumber \\
    \le & \norm{\LsymAPT\left(\tilZt\Qt\QtT\tilZtT - \Zt\Qt\QtT\ZtT\right)\LsymAP} + \norm{\tilZt\Qtp}^2 + \norm{\Zt\Qtp}^2 \nonumber \\
    = & \norm{\LsymAPT \left(\tilZt\Qt\QtT\tilZtT - \Zt\Qt\QtT\ZtT\right)\LsymAP} + 2\norm{\Zt\Qtp}^2,
\end{align}
where the last line follows from the symmetry between $\Zt$ and $\tilZt$.
To simplify notation, we define 
\begin{equation*}
\Ht := \tilZt\Qt\QtT\tilZtT - \Zt\Qt\QtT\ZtT.
\end{equation*}
Moreover, we note that
\begin{align}
    \norm{\LsymAPT\P{\Ht}} \le & \norm{\LsymAPT\P{\Zt \Qt}}+ \norm{\LsymAPT\P{\tilZt \Qt}} \nonumber \\
    \le & \norm{\LAPT \P{\Zt \Qt}}+ \norm{ \LtilAPT \P{\tilZt \Qt}}  \nonumber \\
    = & 2 \norm{ \LAPT \P{\Zt \Qt} }, \label{ineq:auxconvergencelemma}
\end{align}
where the last line follows from the symmetry between $\Zt$ and $\tilZt$.
In particular, this inequality, the assumption $ \norm{ \LAPT P_{\Zt \Qt} } \le \frac{c}{\kappa} $, and the fact\footnote[1]{In fact, for all symmetric and positive semidefinite matrices $A$ and $B$ of rank $k$ it holds that $A-B$ has rank $2k$ whenever the intersection of the range (of rank $k$) of $A$ and $B$ only contains the null element.} that $H_t $ has rank $2k$ imply that $\LsymAT\P{\Ht}$ is invertible.
This observation allows us then to bound the first term in inequality \eqref{LsymAPTDtLsymAPle} by computing that
\begin{align*}
    \LsymAPT\Ht\LsymAP = & \LsymAPT\P{\Ht}\P{\Ht}^T\Ht\LsymAP \\
    = & \LsymAPT\P{\Ht}(\LsymAT\P{\Ht})^{-1}\LsymAT\P{\Ht}\P{\Ht}^T\Ht\LsymAP \\
    = & \LsymAPT\P{\Ht}(\LsymAT\P{\Ht})^{-1}\LsymAT\Ht\LsymAP.
\end{align*}
It follows that
\begin{align*}
    \norm{\LsymAPT\Ht\LsymAP} \le & \frac{\norm{\LsymAPT\P{\Ht}}}{\sglmin(\LsymAT\P{\Ht})}\norm{\LsymAT\Ht\LsymAP} \\
    = & \frac{\norm{\LsymAPT\P{\Ht}}}{\sqrt{1 - \norm{\LsymAPT\P{\Ht}}^2}}\norm{\LsymAT\Ht\LsymAP}.
\end{align*}
Note that
\begin{align*}
    \norm{\LsymAT \Ht\LsymAP} \le & \norm{\LsymAT(\tilZt\tilZtT - \Zt\ZtT)\LsymAP} \\
    + & \norm{\LsymAT\tilZt\Qtp\QtpT\tilZtT\LsymAP} + \norm{\LsymAT\Zt\Qtp\QtpT\ZtT\LsymAP} \\
    \le & \norm{\LsymAT(\tilZt\tilZtT - \Zt\ZtT)\LsymAP} + \norm{\tilZt\Qtp}^2 + \norm{\Zt\Qtp}^2 \\
    \stackrel{(a)}{=} & \norm{\LsymAT\Dt\LsymAP} + 2\norm{\Zt\Qtp}^2 \\
    \le & \norm{\LAT\Dt} + \norm{\tilLAT\Dt} + 2\norm{\Zt\Qtp}^2 \\
    \stackrel{(b)}{\le} & 2\norm{\LAT\Dt} + 2\norm{\Zt\Qtp}^2,
\end{align*}
where both equality $(a)$ and inequality $(b)$ follow from the symmetry between $\Zt$ and $\tilZt$.
Combining the above two inequality chains with inequality \eqref{ineq:auxconvergencelemma} we obtain that
\begin{align} \label{LsymAPTHtLsymAPle}
    \norm{\LsymAPT\Ht\LsymAP} 
    &\le  \frac{ 4 \norm{\LAPT\P{\Zt\Qt}} }{\sqrt{1 -2 \norm{\LAPT\P{\Zt\Qt}}}} \left(  \norm{\LAT\Dt} + \norm{\Zt\Qtp}^2 \right) \nonumber \\
    &\le 2\norm{\LAT\Dt} + 2\norm{\Zt\Qtp}^2,
\end{align}
where in the second inequality we used the assumption that $\norm{\LAPT\P{\Zt\Qt}} \le \frac{c}{\kappa}$. 
Combining inequalities \eqref{LAPTDtle}, \eqref{LsymAPTDtLsymAPle}, and \eqref{LsymAPTHtLsymAPle} we conclude that
\[\norm{\LAPT\Dt} \le 5\norm{\LAT\Dt} + 4\norm{\Zt\Qtp}^2,\]
which shows the first inequality in the statement of Lemma \ref{SpecLossboundedlemma}. To prove the second inequality in the statement of Lemma \ref{SpecLossboundedlemma}, it suffices to note that
\begin{align*}
    \norm{\Dt} \le & \norm{\LAT\Dt} + \norm{\LAPT\Dt}
    \le  6\norm{\LAT\Dt} + 4\norm{\LAPT\Dt}.
\end{align*}
This finishes the proof of Lemma \ref{SpecLossboundedlemma}.
\end{proof}
With Lemma \ref{SpecLossboundedlemma} in place we can also prove Lemma \ref{lemma:localconvergence}.
\begin{proof}[Proof of Lemma \ref{lemma:localconvergence}]
We compute that
\begin{align*}
    \Dplus = &\symA - \Zplus\ZplusT + \tilZplus\tilZplusT \\
    = &\symA - \left(\Zt + \mu \left(\Dt +\Deltat \right) \Zt\right) \left(\Zt + \mu \left(\Dt +\Deltat \right) \Zt\right)^T\\
    & + \left(\tilZt - \mu\left(\Dt +\Deltat \right) \tilZt\right) \left(\tilZt - \mu \left(\Dt +\Deltat \right) \tilZt\right)^T \\
    = &\symA - \Zt\ZtT - \mu \left(\Dt +\Deltat \right) \Zt\ZtT - \mu\Zt\ZtT \left(\Dt +\Deltat \right) - \mu^2 \left( \Dt +\Deltat \right) \Zt\ZtT \left(\Dt +\Deltat \right) \\
    & + \tilZt\tilZtT - \mu\left(\Dt +\Deltat \right)\tilZt\tilZtT - \mu\tilZt\tilZtT \left(\Dt +\Deltat \right) + \mu^2\left(\Dt +\Deltat \right) \tilZt\tilZtT \left(\Dt +\Deltat \right) \\
    = &\Dt - \mu\Dt\left( \Zt\ZtT + \tilZt\tilZtT \right) - \mu\left(\Zt\ZtT + \tilZt\tilZtT \right)\Dt - \mu\Deltat \left( \Zt\ZtT + \tilZt\tilZtT \right) - \mu \left(\Zt\ZtT + \tilZt\tilZtT \right) \Deltat \\
    & - \mu^2 \left(\Dt +\Deltat \right) \left(\Zt\ZtT - \tilZt\tilZtT \right) \left(\Dt +\Deltat \right) \\
    = &\left(\I - \mu\Zt\ZtT - \mu\tilZt\tilZtT\right) \Dt \left( \I - \mu\Zt\ZtT - \mu\tilZt\tilZtT \right) - \mu \left( \Deltat\left(\Zt\ZtT + \tilZt\tilZtT \right) + \left( \Zt\ZtT + \tilZt\tilZtT \right) \Deltat\right) \\
    & - \mu^2 \left(\Zt\ZtT + \tilZt\tilZtT\right) \Dt \left(\Zt\ZtT + \tilZt\tilZtT \right) - \mu^2 \left(\Dt+\Deltat\right) \left(\Zt\ZtT - \tilZt\tilZtT \right) \left(\Dt + \Deltat\right).
\end{align*}
By the triangle inequality it follows that 
\begin{align*}
\norm{ \LAT  \Dplus } \le & \underbrace{ \norm{ \LAT \left(\I - \mu\Zt\ZtT - \mu\tilZt\tilZtT\right) \Dt \left(\I - \mu\Zt\ZtT - \mu\tilZt\tilZtT \right)} }_{=:(I)}\\
& + \mu \underbrace{ \norm{ \LAT\left( \Deltat(\Zt\ZtT + \tilZt\tilZtT)+(\Zt\ZtT + \tilZt\tilZtT)\Deltat\right) }}_{=:(II)} \\
& + \mu^2 \underbrace{ \norm{ \LAT \left(\Zt\ZtT + \tilZt\tilZtT \right)\Dt \left(\Zt\ZtT + \tilZt\tilZtT \right)  }}_{=:(III)}\\
& + \mu^2 \underbrace{\norm{ \LAT \left( \left(\Dt+\Deltat\right) (\Zt\ZtT - \tilZt\tilZtT) \left(\Dt + \Deltat\right) \right)   }}_{=:(IV)}.
\end{align*}
We bound each summand individually.

\noindent \textbf{Estimation of $(I)$:}
We first compute that
\begin{align} \label{LATIeq}
   & \LAT \left(\I - \mu\Zt\ZtT - \mu\tilZt\tilZtT\right) \Dt \left(\I - \mu\Zt\ZtT - \mu\tilZt\tilZtT \right) \nonumber \\
    = & \LAT(\I - \mu\Zt\ZtT - \mu\tilZt\tilZtT)\LA\LAT\Dt(\I - \mu\Zt\ZtT - \mu\tilZt\tilZtT) \nonumber \\
    & + \LAT(\I - \mu\Zt\ZtT - \mu\tilZt\tilZtT)\LAP\LAPT\Dt(\I - \mu\Zt\ZtT - \mu\tilZt\tilZtT).
\end{align}
To bound the spectral norm of the first summand in this expression, we note that
\begin{align*}
    &\norm{\LAT(\I - \mu\Zt\ZtT - \mu\tilZt\tilZtT)\LA\LAT\Dt(\I - \mu\Zt\ZtT - \mu\tilZt\tilZtT)} \\
    \le &\norm{\LAT(\I - \mu\Zt\ZtT - \mu\tilZt\tilZtT)\LA}\norm{\LAT\Dt}\norm{\I - \mu\Zt\ZtT - \mu\tilZt\tilZtT} \\
    \stackrel{(a)}{\le} &\norm{\LAT(\I - \mu\Zt\ZtT - \mu\tilZt\tilZtT)\LA}\norm{\LAT\Dt} \\
    = &\norm{\I - \mu\LAT\Zt\ZtT\LA - \mu\LAT\tilZt\tilZtT\LA}\norm{\LAT\Dt} \\
    \stackrel{(b)}{\le} &\norm{\I - \mu\LAT\Zt\ZtT\LA}\norm{\LAT\Dt} \\
    \stackrel{(c)}{=} &\left(1 - \mu\sglmin^2\left(\LAT\Zt\Qt\right)\right)\norm{\LAT\Dt}.
\end{align*}
The inequalities $(a)$ and $(b)$ and equality $(c)$ are a consequence of the assumptions $\mu \le \frac{c}{\kappa \norm{X}}$, $\norm{\Zt} \le 2 \sqrt{\norm{X}} $, and the fact that by symmetry $ \norm{\Zt} = \norm{\tilZt} $.
Note that
\begin{align*}
    \sglmin^2 \left(\LAT\Zt\Qt\right) = & \sglmin^2\left(\LAT\P{\Zt\Qt}\P{\Zt\Qt}^T\Zt\Qt\right) \\
    \ge & \sglmin^2\left(\LAT\P{\Zt\Qt}\right)\sglmin^2\left(\Zt\Qt\right) \\
    \stackrel{(a)}{\ge} & \frac{1}{2}\sglmin^2\left(\Zt\Qt\right) \\
    \stackrel{(b)}{\ge} & \frac{1}{16}\sglmin \left(X \right).
\end{align*}
Inequality $(a)$ follows from the assumption $\norm{\LAPT\P{\Zt\Qt}} \le \frac{c}{\kappa} \le \frac{1}{2}$.
In inequality $(b)$ we use the assumption  $\sglmin(\Zt\Qt) \ge \sqrt{\frac{\sglmin(X)}{8}}$.
Combining the above two inequalities, we obtain that
\begin{align} \label{LALATDtle}
    & \norm{\LAT \left(\I - \mu\Zt\ZtT - \mu\tilZt\tilZtT \right) \LA\LAT\Dt \left(\I - \mu\Zt\ZtT - \mu\tilZt\tilZtT \right) } \nonumber \\
    \le & \left(1 - \frac{\mu}{16}\sglmin \left(X \right) \right)\norm{\LAT\Dt}.
\end{align}
To bound the spectral norm of the second summand in \eqref{LATIeq}, we note that
\begin{align*}
    & \norm{\LAT(\I - \mu\Zt\ZtT - \mu\tilZt\tilZtT)\LAP\LAPT \Dt(\I - \mu\Zt\ZtT - \mu\tilZt\tilZtT)} \\
    \le & \norm{\LAT(\I - \mu\Zt\ZtT - \mu\tilZt\tilZtT)\LAP}\norm{\LAPT\Dt}\norm{\I - \mu\Zt\ZtT - \mu\tilZt\tilZtT} \\
    \le & \norm{\LAT(\I - \mu\Zt\ZtT - \mu\tilZt\tilZtT)\LAP}\norm{\LAPT\Dt} \\
    = & \mu\norm{\LAT\Zt\ZtT\LAP + \LAT\tilZt\tilZtT\LAP}\norm{\LAPT\Dt} \\
    \le & \mu \left( \norm{\LAT\Zt\ZtT\LAP} + \norm{\LAT\tilZt\tilZtT\LAP} \right)\norm{\LAPT\Dt},
\end{align*}
where inequality $(a)$ follows from the assumptions $ \mu \le \frac{c}{ \kappa \norm{X}}$ and $\norm{\Zt} \le 2 \sqrt{\norm{X}} $ and the fact that $ \norm{\Zt} = \norm{\tilZt} $.
We observe that
\begin{align*}
    \norm{\LAT\Zt\ZtT\LAP} = & \norm{\LAT\Zt\Qt\QtT\ZtT\LAP} \\
    \le & \norm{\Zt\Qt}\norm{\QtT\ZtT\LAP} \\
    \le & \norm{\LAPT\P{\Zt\Qt}}\norm{\Zt\Qt}^2,
\end{align*}
and
\begin{align*}
    \norm{\LAT\tilZt\tilZtT\LAP} \le & \norm{\LAT\tilZt\Qt\QtT\tilZtT\LAP} + \norm{\LAT\tilZt\Qtp\QtpT\tilZtT\LAP} \\
    \le & \norm{\LAT\tilZt\Qt}\norm{\QtT\tilZtT} + \norm{\tilZt\Qtp\QtpT\tilZtT} \\
    \stackrel{(a)}{=} & \norm{\LAT\tilZt\Qt} \norm{\Zt\Qt} + \norm{\Zt\Qtp}^2 \\
    \stackrel{(b)}{\le} & \norm{ \LtilAPT  \tilZt\Qt}\norm{\Zt\Qt} + \norm{\Zt\Qtp}^2 \\
    \stackrel{(c)}{=} & \norm{\LAPT\Zt\Qt}\norm{\Zt\Qt} + \norm{\Zt\Qtp}^2 \\
    \le & \norm{\LAPT\P{\Zt\Qt}}\norm{\Zt\Qt}^2 + \norm{\Zt\Qtp}^2.
\end{align*}
In equation $(a)$ we used that by symmetry it holds that $\norm{\Zt \Qt}=\norm{\tilZt \Qt}$ and $ \norm{\Zt \Qtp} = \norm{\tilZt \Qtp} $.
(For the definition of $\LtilAP$ we refer to Lemma \ref{lemma:symmetry}.)
Inequality $(b)$ holds since the column span of $L_X$ is contained in the column span of $\LtilAPT $. 
Equality $(c)$ holds since we have that $\LtilAPT  \tilZt\Qt =\LAPT\Zt\Qt $, see Lemma \ref{lemma:symmetry}.
Combining the above three inequalities, we obtain that
\begin{align} \label{LAPLAPTDtle}
    & \norm{\LAT \left(\I - \mu\Zt\ZtT - \mu\tilZt\tilZtT \right) \LAP\LAPT\Dt \left(\I - \mu\Zt\ZtT - \mu\tilZt\tilZtT \right)} \nonumber \\
    \le & 2\mu \left(\norm{\LAPT\P{\Zt\Qt}}\norm{\Zt\Qt}^2 + \norm{\Zt\Qtp}^2 \right) \norm{\LAPT\Dt} \nonumber \\
    \stackrel{(a)}{\le} & 2\mu \left(\norm{\LAPT\P{\Zt\Qt}}\norm{\Zt\Qt}^2 + \norm{\Zt\Qtp}^2 \right) \left(5\norm{\LAT\Dt} + 4\norm{\Zt\Qtp}^2 \right) \nonumber \\
    \stackrel{(b)}{\le} & \frac{\mu}{200}\sglmin(X) \left(5\norm{\LAT\Dt} + 4\norm{\Zt\Qtp}^2 \right).
\end{align}
In inequality (a) we use Lemma \ref{SpecLossboundedlemma}.
Inequality (b) follows from the assumptions $\norm{\LAPT\P{\Zt\Qt}} \le c\kappa^{-1}$, $\norm{\Zt}\le 2\sqrt{\norm{X}}$, $\norm{\Zt\Qtp}\le c \sqrt{\sglmin(X)}$ and by choosing the absolute constant $c>0$ small enough.
Combining \eqref{LATIeq}, \eqref{LALATDtle}, and \eqref{LAPLAPTDtle} we obtain that 
\begin{align*}
    &\norm{\LAT  \left(\I - \mu\Zt\ZtT - \mu\tilZt\tilZtT\right) \Dt \left(\I - \mu\Zt\ZtT - \mu\tilZt\tilZtT \right)} \\
    \le &\left(1 - \frac{\mu}{16}\sglmin(X) \right) \norm{\LAT\Dt} +  \frac{\mu}{200}\sglmin(X) \left(5\norm{\LAT\Dt} + 4\norm{\Zt\Qtp}^2 \right) \\
    \le & \left(1 - \frac{\mu}{32}\sglmin(X)\right)\norm{\LAT\Dt} +  \frac{\mu}{50}\sglmin(X)\norm{\Zt\Qtp}^2.
\end{align*}
\noindent \textbf{Estimation of $(II)$:}
We note that
\begin{align*}
 \norm{   \Deltat(\Zt\ZtT + \tilZt\tilZtT)+(\Zt\ZtT + \tilZt\tilZtT)\Deltat} \le & 2 \norm{\Zt\ZtT + \tilZt\tilZtT} \norm{\Deltat}\\
 \stackrel{(a)}{\le} & 16 \norm{X} \norm{\Deltat} \\
 \stackrel{(b)}{\le}  & \frac{1}{600} \sglmin \left(X\right) \norm{\Dt} \\
 \stackrel{(c)}{\le} & \frac{ \sglmin \left(X\right)}{600} \left( 6\norm{\LAT\Dt} + 4\norm{\Zt\Qtp}^2\right)\\
 = &  \frac{ \sglmin \left(X\right)}{100} \norm{\LAT\Dt} + \frac{ \sglmin \left(X\right)}{150} \norm{\Zt\Qtp}^2,
\end{align*}
where in inequality $(a)$ we used the assumption  $\norm{\Zt}\le 2\sqrt{\norm{X}}$ and the fact that $ \norm{\Zt} = \norm{\tilZt}$.
In inequality $(b)$ we used the assumption $\norm{\Deltat} \le \frac{c}{\kappa} \norm{\Dt} $.
Inequality $(c)$ follows from Lemma \ref{SpecLossboundedlemma}.\\

\noindent \textbf{Estimation of $(III)$:}
Using the assumption $\norm{\Zt}\le 2\sqrt{\norm{X}}$ and the fact that $ \norm{\Zt} = \norm{\tilZt} $ we obtain that
\begin{align*}
     \norm{(\Zt\ZtT + \tilZt\tilZtT)\Dt(\Zt\ZtT + \tilZt\tilZtT)} \le & \norm{\Zt\ZtT + \tilZt\tilZtT}\norm{\Dt}\norm{\Zt\ZtT + \tilZt\tilZtT} 
    \le  64\norm{X}^2\norm{\Dt}.
\end{align*}
\noindent \textbf{Estimation of $(IV)$:}
We obtain that
\begin{align*}
    \norm{ \left( \Dt + \Deltat \right) (\Zt\ZtT - \tilZt\tilZtT) \left( \Dt + \Deltat \right) }   \le & \norm{\Dt + \Deltat}\norm{\Zt\ZtT - \tilZt\tilZtT}\norm{\Dt + \Deltat} \\
    \stackrel{(a)}{\le} & 4 \norm{\Dt} \norm{\Zt\ZtT - \tilZt\tilZtT} \norm{\Dt} \\
    = & 4 \norm{\symA - \Zt\ZtT + \tilZt\tilZtT}\norm{\Zt\ZtT - \tilZt\tilZtT}\norm{\Dt} \\
    \stackrel{(b)}{\le} & 320\norm{X}^2\norm{\Dt},
\end{align*}
where in inequality $(a)$ we used the triangle inequality and the assumption $ \norm{\Deltat} \le \frac{c}{\kappa} \norm{\Dt} $.
Inequality $(b)$ follows from the assumption $\norm{\Zt}\le 2\sqrt{\norm{X}}$ and from $ \norm{\Zt} = \norm{\tilZt} $.\\

\noindent \textbf{Conclusion:}
By combining the estimates for $(I)$, $(II)$, $(III)$, and $(IV)$ we obtain that
\begin{align*}
    & \norm{\LAT\Dplus} \\
    \le & \left(1 - \frac{\mu}{32}\sglmin(X)\right)\norm{\LAT\Dt} +  \frac{\mu}{50}\sglmin(X)\norm{\Zt\Qtp}^2 + \mu \sglmin (X) \left( \frac{1}{100} \norm{\LAT\Dt} + \frac{1}{150} \norm{\Zt\Qtp}^2\right) \\
    & +  64\mu^2\norm{X}^2\norm{\Dt} + 320\mu^2\norm{X}^2\norm{\Dt}\\
    \le & \left(1 - \frac{\mu}{64}\sglmin(X)\right)\norm{\LAT\Dt} +  \frac{\mu}{25}\sglmin(X)\norm{\Zt\Qtp}^2 +  64\mu^2\norm{X}^2\norm{\Dt} + 320\mu^2\norm{X}^2\norm{\Dt}\\
    \stackrel{(a)}{\le} & \left(1 - \frac{\mu}{64}\sglmin(X) \right)\norm{\LAT\Dt} +  \frac{\mu}{25}\sglmin(X)\norm{\Zt\Qtp}^2 + 384 \mu c \sglmin(X) \left(6\norm{\LAT\Dt} + 4\norm{\Zt\Qtp}^2 \right) \\
    \stackrel{(b)}{\le} & \left(1 - \frac{\mu}{128}\sglmin(X) \right)\norm{\LAT\Dt} +  \frac{\mu}{20}\sglmin(X)\norm{\Zt\Qtp}^2.
\end{align*}
In inequality $(a)$ we used Lemma \ref{SpecLossboundedlemma} and the assumption $\mu \le c\kappa^{-1} \norm{X}^{-1}$.
Inequality $(b)$ holds since the absolute constant $c>0$ is chosen small enough.
This finishes the proof of Lemma \ref{lemma:localconvergence}.
\end{proof}

\section{Proofs of the main lemmas for Phase 2 and Phase 3}

\subsection{Proof of main lemma for Phase 2 (Lemma \ref{lemma:phase2combined})}\label{sec:phase2proof}

\begin{proof}[Proof of Lemma \ref{lemma:phase2combined}]
At the beginning of the proof we would like to recall that as described in Remark \ref{choiceofconstants} the constants 
$\hat{c}_1,\hat{c}_2,\hat{c}_3,\hat{c}_4,\hat{c}_5,\hat{c}_6,\hat{c}_7 $
are chosen such that 
$\hat{c}_1, \hat{c}_2, \hat{c}_3^{4/5} \ll \hat{c}_4 \hat{c}_5 $, $\hat{c}_4 \ll \hat{c}_5 \ll \hat{c}_6 \ll 1$ holds.
Set
\begin{equation*}
    t_2:= \min \left\{ t \in \mathbb{N} : \ \sglmin \left( \LAT \Zt \right) \ge  \sqrt{\frac{\sglmin (X)}{8}} \text{ and } t \ge t_1  \right\}.
\end{equation*}
We show by induction that it holds that for $t_1 \le t \le t_2$ that
\begin{align}
    \sglmin \left( \LAT \Zt \right) &\ge  \left(1 + \frac{1}{8}\mu\sglmin(X)  \right)^{t-t_1} \sglmin \left( \LAT \Ztone \right),  \label{ineq:phase2hyp1} \\
    \norm{\Zt \Qtp} & \le \left( 1+ \frac{\mu}{1500} \sglmin(X)  \right)^{t-t_1} \norm{\Ztone Q_{t_1,\bot}},  \label{ineq:phase2hyp2} \\
    \norm{\LAPT P_{\Zt \Qt }} & \le \frac{\hat{c}_4}{\kappa^2},    \label{ineq:phase2hyp3} \\
    \norm{\Zt} &\le 2 \sqrt{\norm{X}}, \label{ineq:phase2hyp4}\\
    \norm{\tilZtT \Zt } &\le  \norm{ \widetilde{Z}_{t_1}^T Z_{t_1} } + 400\mu^2 \left( t-t_1 \right) \norm{X}^3    \le  \frac{\hat{c}_7 \norm{X}}{\kappa^{4}},  \label{ineq:phase2hyp_balanc1}\\
    \norm{ \tilZtT \Zt \Qtp } &\le \frac{\hat{c}_4 \hat{c}_5 \sqrt{ \norm{X}} }{\kappa^{3} } \left( 1+ \frac{\mu}{1500} \sglmin(X)  \right)^{t-t_1} \norm{\Ztone Q_{t_1,\bot}},  \label{ineq:phase2hyp_balanc2}\\
    \norm{ \tilZtT \PZQ } &\le \frac{\hat{c}_4 \hat{c}_6 \sqrt{\norm{X}}}{\kappa^3}.  \label{ineq:phase2hyp_balanc3}
\end{align}        
Before establishing these inequalities, we note that from \eqref{ineq:phase2hyp1} and the definition of $t_2$ the upper bound on $t_2-t_1$ given by inequality \eqref{ineq:t2bound} directly follows. 

We are going to prove these inequalities by induction. For that, we note first that the base case $t=t_1$ follows directly from the assumptions in this lemma.\\
To show the induction step $ t\rightarrow t+1$ for $t<t_2$ we first note that 
\begin{align}
\norm{\Deltat} =& \norm{(\Bcal^* \Bcal - \I)  \left( \symA - \Zt \ZtT +  \tilZt \tilZtT \right) } \nonumber \\
\le&\norm{(\Bcal^* \Bcal - \I)  \left( \symA  \right) }+\norm{(\Bcal^* \Bcal - \I)  \left( \Zt \Qt \QtT \ZtT \right) }+\norm{(\Bcal^* \Bcal - \I)  \left( \Zt \Qtp \QtpT \ZtT \right) } \nonumber \\
&+\norm{(\Bcal^* \Bcal - \I)  \left(  \tilZt \Qt \QtT \tilZtT \right) }+\norm{(\Bcal^* \Bcal - \I)  \left(  \tilZt \Qtp \QtpT \tilZtT \right) } \nonumber \\
\stackrel{(a)}{\le} & \delta \sqrt{r} \left(  \norm{\symA} + \norm{\Zt \Qt}^2 + \norm{\tilZt \Qt}^2  \right) +   \delta \left( \nucnorm{ \Zt\Qtp \QtpT \ZtT } + \nucnorm{ \tilZt\Qtp \QtpT \tilZtT }   \right) \nonumber \\
\stackrel{(b)}{=}& \delta \sqrt{r} \left(  \norm{\symA} + 2\norm{\Zt \Qt}^2  \right) + 2\delta \nucnorm{ \Zt\Qtp \QtpT \ZtT }  \nonumber\\
\stackrel{(c)}{\le} & \delta \sqrt{r}  \left(  9 \norm{X} + 2\nucnorm{ \Zt\Qtp \QtpT \ZtT }   \right), \label{ineq:phase2aux2}
\end{align}
where inequality $(a)$ follows from Lemma \ref{lemma:RIPlemma} and the restricted isometry property of the measurement operator $\mathcal{B}$.
In equality $(b)$ we used the symmetry between $\Zt$ and $\tilZt$ (see Lemma \ref{lemma:symmetry}) and in inequality $(c)$ we used the induction hypothesis \eqref{ineq:phase2hyp4}.
Next, we are going to show inequality \eqref{ineq:phase2hyp2} for $t+1$.
For that, we observe that 
\begin{equation}
\nucnorm{ \Zt\Qtp \QtpT \ZtT } \le \left( k -r \right) \norm{\Zt \Qtp}^2. \label{ineq:phase2aux3}
\end{equation}
To estimate this expression further, we observe that due to the induction hypothesis \eqref{ineq:phase2hyp2} it holds
\begin{align}
\norm{\Zt \Qtp} &\le  \left( 1+ \frac{\mu}{1500} \sglmin(X)  \right)^{t-t_1} \norm{\Ztone Q_{t_1,\bot}} \nonumber \\
& \stackrel{(a)}{\le} \exp \left(    \frac{2 \ln \left( \frac{ \sqrt{ \sglmin \left(X\right) } }{\sqrt{8} \sglmin \left( \LAT \Ztone \right) }   \right)}{\ln \left( 1 +\frac{\mu}{8} \sglmin \left(X\right)  \right)} \ln \left(  1+ \frac{\mu}{1500} \sglmin(X)   \right)   \right) \norm{\Ztone Q_{t_1,\bot}} \nonumber \\
& \stackrel{(b)}{\le} \exp \left(  \frac{ \ln \left( \frac{ \sqrt{ \sglmin \left(X\right) } }{\sqrt{8} \sglmin \left( \LAT \Ztone \right) }   \right)   }{5}  \right) \norm{\Ztone Q_{t_1,\bot}} \nonumber \\
& =  \left(  \frac{ \sglmin \left(X\right)  }{ 8 \sglmin^2 \left( \LAT \Ztone \right)  } \right)^{1/10}  \norm{\Ztone Q_{t_1,\bot}} \nonumber \\
& \stackrel{(c)}{\le}  \left(  \frac{ \sglmin \left(X\right)  }{ 8 \sglmin^2 \left( \LAT P_{Z_{t_1} Q_{t_1}} \right) \sglmin^2 \left(\Ztone Q_{t_1}  \right)  } \right)^{1/10}  \norm{\Ztone Q_{t_1,\bot}} \nonumber \\
& \stackrel{(d)}{\le}  \left(  \frac{ \sglmin \left(X\right)  }{ 128 \norm{ \Ztone Q_{t_1,\bot} }^2  } \right)^{1/10}  \norm{\Ztone Q_{t_1,\bot}} \nonumber \\
& = \left( \frac{1}{128} \right)^{1/10}  \left( \sglmin \left(X\right) \right)^{1/10}   \norm{ \Ztone Q_{t_1,\bot}}^{4/5}   \label{ineq:phase2aux5} \\
& \stackrel{(e)}{\le} \frac{ \hat{c}_3^{4/5} \sqrt{ \sglmin \left(X\right) } }{ \kappa^{7/2} k^{4/5}}, \label{ineq:phase2aux4}
\end{align}
where in inequality $(a)$ we have used the upper bound on $ t-t_1 \le t_2 -t_1$ in inequality \eqref{ineq:t2bound}.
In inequality $(b)$ we have used the elementary inequalities $ \frac{x}{1-x} \le \ln \left(1+x\right) \le x $ and the assumption on the step size $\mu \le \frac{ \hat{c}_2 }{\kappa^4 \norm{X}}$. 
Inequality $(c)$ follows from the fact that $ \sglmin \left( \LAT \Ztone \right) \ge \sglmin \left(   \LAT P_{Z_{t_1} Q_{t_1}}  \right) \sglmin \left( \Ztone Q_{t_1} \right)  $.
Note that it follows from assumption \eqref{ineq:phase2assump2} that $\sglmin \left(   \LAT P_{Z_{t_1} Q_{t_1}}  \right)  \ge 1/2 $.
Together with assumption \eqref{ineq:phase2assump1} this implies inequality $(d)$. 
Inequality $(e)$ follows again from assumption \eqref{ineq:phase2assump1}.
We remark that with an analogous computation as in the above inequality chain, we also can show that 
\begin{equation}\label{eqref:phase2aux12}
\begin{split}
\norm{ \tilZtT \Zt \Qtp }
&\le \frac{c_4 c_5 \sqrt{\norm{X}}}{\kappa^3} \left(  1 + \frac{\mu \sglmin \left(X\right)}{1500} \right)^{t-t_1} \norm{ Z_{\tone} Q_{\tone, \bot} }\\
&\le \left( \frac{1}{128} \right)^{1/10} \frac{ \hat{c}_4 \hat{c}_5 \sqrt{\norm{X}}}{\kappa^3} \left( \sglmin \left(X\right) \right)^{1/10}   \norm{ \Ztone Q_{t_1,\bot}}^{4/5}  
\le   \frac{ \hat{c}_3^{4/5} \hat{c}_4 \hat{c}_5 \sqrt{ \norm{X} \sglmin \left(X\right) } }{ \kappa^{7/2} k^{4/5}}.
\end{split}
\end{equation}
Combining \eqref{ineq:phase2aux3} and \eqref{ineq:phase2aux4} and inserting this into \eqref{ineq:phase2aux2} it follows that 
\begin{equation}\label{ineq:phase2aux11}
    \norm{\Deltat} \le 11 \delta \sqrt{r} \norm{X} \le \frac{11 \hat{c}_1}{\kappa^2} \sglmin \left(X\right),
\end{equation}
where in the last inequality we have used our assumption $\delta \le \frac{ \hat{c}_1}{\kappa^{3} \sqrt{r} }$.
Thus, we conclude that all the assumptions for Lemma \ref{lemma:sigmingrowth} are fulfilled.
It follows that 
\begin{equation}\label{ineq:phase2aux1}
    \sglmin(\LAT\Zplus) \ge \sglmin(\LAT\Zplus\Qt) \ge \sglmin(\LAT\Zt)\left(1 + \frac{1}{4}\mu\sglmin(X) - \mu\sglmin^2(\LAT\Zt)\right).
\end{equation}
Since we assumed $t < t_2 $, which implies by the definition of $t_2$ that $ \sglmin \left( \LAT \Zt \right) < \sqrt{\frac{ \sglmin(X) }{8}} $, we obtain that 
\begin{equation*}
  \sglmin \left(\LAT \Zplus\right) \ge \left(1 + \frac{1}{8}\mu\sglmin(X)  \right)  \sglmin(\LAT\Zt).
\end{equation*}
This implies \eqref{ineq:phase2hyp1} for $t+1$.
Note that \eqref{ineq:phase2aux1} also implies that $\LAT\Zplus\Qt$ has full rank.
Hence, we can apply Lemma \ref{lemma:noisetermgrowth} and by choosing the absolute constants $\hat{c}_1$, $\hat{c}_2$, and $\hat{c}_4 $ small enough we obtain that
\begin{equation}
\norm{\Zplus\Qplusp} \le \left (1 - \frac{\mu}{2}\norm{\Zt\Qtp}^2 + \frac{ \mu \sglmin(X)}{3000} \right) \norm{\Zt\Qtp}  + 2\mu\sqrt{\norm{X}}\norm{\tilZtT\Zt \Qtp}.
\end{equation} 
We obtain that
\begin{align*}
 \norm{\Zplus\Qplusp} 
 \stackrel{(a)}{\le} & \left (1 - \frac{\mu}{2}\norm{\Zt\Qtp}^2 + \frac{ \mu   \sglmin(X)}{3000} + \frac{ 2\mu \hat{c}_4 \hat{c}_5 \norm{X} }{\kappa^{3}} \right) \left( 1+ \frac{\mu}{1500} \sglmin(X)  \right)^{t-t_1} \norm{\Ztone Q_{t_1,\bot}}\\
 \stackrel{(b)}{\le} & \left( 1+ \frac{\mu}{1500} \sglmin(X)  \right)^{t+1-t_1} \norm{\Ztone Q_{t_1,\bot}}.
\end{align*}
Inequality $(a)$ is due to induction hypotheses \eqref{ineq:phase2hyp2} and \eqref{ineq:phase2hyp_balanc2}.
Inequality $(b)$ follows from choosing the absolute constants $\hat{c}_4$ and $ \hat{c}_5$ to be small enough.
This implies inequality \eqref{ineq:phase2hyp2} for $t+1$.

Next, we observe that the assumptions of Lemma \ref{lemma:anglecontrol} are satisfied and hence it follows that 
\begin{equation}\label{ineq:intern111}
\begin{split}
    &\norm{\LAPT\P{\Zplus Q_{t+1} }} \le \\
&\left(1 - \frac{1}{4}\mu\sglmin \left(X\right) \right)\norm{\LAPT\P{\Zt\Qt}} 
+  2 \mu  \sqrt{\norm{X}} \norm{ \tilZtT \P{\Zt\Qt} }
+C\mu\frac{\sqrt{\norm{X}}\norm{\tilZtT\Zt \Qtp}}{\sglmin\left(\Zt\Qt\right)} + C \mu \norm{\Deltat} +  C\mu^2\norm{X}^2.
\end{split}
\end{equation}
In order to proceed, we note that 
\begin{align}
    \frac{\norm{\tilZtT\Zt \Qtp}}{\sglmin\left(\Zt\Qt\right)} 
    \le & \frac{ \norm{\tilZtT\Zt \Qtp}}{\sglmin\left( \LAT \Zt\right)} \nonumber \\
    \stackrel{(a)}{\le} & \frac{ \hat{c}_4 \hat{c}_5 \sqrt{ \norm{X} } \left( 1+ \frac{\mu}{1500} \sglmin(X)  \right)^{t-t_1} \norm{\Ztone Q_{t_1,\bot}}}{ \kappa^{3} \left(1 + \frac{1}{8}\mu\sglmin(X)  \right)^{t-t_1} \sglmin \left( \LAT \Ztone \right)} \nonumber\\
    \le & \frac{ \hat{c}_4 \hat{c}_5 \sqrt{ \norm{X}  }   \norm{\Ztone Q_{t_1,\bot}}}{ \kappa^{3} \sglmin \left( \LAT \Ztone \right)} \nonumber \\
    \le & \frac{ \hat{c}_4 \hat{c}_5 \sqrt{ \norm{X}  }   \norm{\Ztone Q_{t_1,\bot}}}{ \kappa^{3} \sglmin \left( \LAT P_{\Ztone Q_{t_1}} \right) \sglmin \left( \Ztone Q_{t_1} \right) } \nonumber \\
    \stackrel{(b)}{\le} & \frac{4 \hat{c}_4 \hat{c}_5 \sqrt{ \norm{X}  } }{ \kappa^{3}}, \label{ineq:aux23}
\end{align}
where in inequality $(a)$ we have used the induction hypotheses \eqref{ineq:phase2hyp1} and \eqref{ineq:phase2hyp_balanc2}.
Inequality $(b)$ is due to the assumption \eqref{ineq:phase2assump1} and the induction hypothesis \eqref{ineq:phase2hyp3}.
Combining this inequality chain with inequality \eqref{ineq:intern111} we obtain that
\begin{align*}
    &\norm{\LAPT\P{\Zplus Q_{t+1}}}  \\
    \le &\left(1 - \frac{1}{4}\mu\sglmin \left(X\right) \right)\norm{\LAPT\P{\Zt\Qt}} 
    +  2 \mu  \sqrt{\norm{X}} \norm{ \tilZtT \P{\Zt\Qt} }
    +\frac{4C \hat{c}_4 \hat{c}_5 \mu  \norm{X} }{ \kappa^{3}}+ C \mu \norm{\Deltat} +  C\mu^2\norm{X}^2\\
    \stackrel{(a)}{\le} &\left(1 - \frac{1}{4}\mu\sglmin \left(X\right) \right) \frac{c_4}{\kappa^2} 
    +  \frac{ 2 \mu \hat{c}_4  \hat{c}_6 \norm{X} }{\kappa^3}
    +\frac{4 \mu C \hat{c}_4 \hat{c}_5    \norm{X} }{ \kappa^{3}}+ \frac{11 C \hat{c}_1 \mu \sglmin \left(X\right) }{\kappa^2}   + \frac{ \mu C \hat{c}_2  \sglmin (X)}{\kappa^3} \\
    = & \left( \hat{c}_4 - \mu \left( \frac{\hat{c}_4}{4} - 2 \hat{c}_4  \hat{c}_6 - 4C \hat{c}_4 \hat{c}_5 - 11 C \hat{c}_1 - \frac{ C \hat{c}_2}{\kappa}  \right) \sglmin \left(X\right)  \right) \frac{1}{ \kappa^2 } \\
    \stackrel{(b)}{\le} & \frac{ \hat{c}_4 }{\kappa^2}.
\end{align*}
In inequality $(a)$ we have used the induction hypotheses \eqref{ineq:phase2hyp3} and \eqref{ineq:phase2hyp_balanc3}, inequality \eqref{ineq:phase2aux11}, and the assumption that $\mu \le \frac{ \hat{c}_2 }{\kappa^4 \norm{X}}$.
In inequality $(b)$ we used that the constants $\hat{c}_1$ and $\hat{c}_2$ are chosen small enough compared to $\hat{c}_4$ and, moreover, the constants $\hat{c}_5$ and $\hat{c}_6$ are chosen small enough (compared to $1$).
This shows the \eqref{ineq:phase2hyp3} for $t+1$.

Next, recall from \eqref{ineq:phase2aux4} that $\norm{\Zt \Qtp} \le  \frac{ \hat{c}_3^{4/5}  \sqrt{ \sglmin \left(X\right) } }{ \kappa^{7/2} k^{4/5}} $, which allows us to apply Lemma \ref{lemma:normcontrolled}, which yields $\norm{\Zt}\le 2 \sqrt{\norm{X}} $.
This verifies \eqref{ineq:phase2hyp4} for $t+1$.

Moreover, we note that it follows from Lemma \ref{lemma:balancedbase} that 
\begin{equation*}
    \norm{\tilZplusT \Zplus} \le \norm{\tilZtT \Zt} + 400 \mu^2 \norm{X}^3.
\end{equation*}
Inserting this into the induction hypothesis \eqref{ineq:phase2hyp_balanc1} and using the assumption \eqref{ineq:phase2assump_balanc1}, we obtain that 
\begin{equation*}
    \norm{ \tilZplusT \Zplus} \le  \norm{ \widetilde{Z}_{t_1}^T Z_{t_1} } + 400\mu^2 \left( t+1-t_1 \right) \norm{X}^3    \le  \frac{ \hat{c}_7 \norm{X} }{\kappa^{4}},
\end{equation*}
which proves inequality \eqref{ineq:phase2hyp_balanc1} for $t+1$.
We obtain from Lemma \ref{lemma:balancednessperp} that
     \begin{align*}
        &\norm{\tilZplus^T \Zplus \Qplusp} \le  \norm{ \tilZtT \Zt\Qtp } \\
        &+ C\mu \left(   \left( \norm{\LAPT\P{\Zt\Qt}} + \mu\norm{X} \right)\beta  +  \mu \norm{X}^2 \right)  \sqrt{\norm{ X }} \norm{ \Zt\Qtp }+ 8 \mu \beta \norm{\Zt \Qtp}^2,
    \end{align*}
 where we have set $\beta := \norm{\LAPT \PZQ}  \norm{X}  + \norm{ \Zt \Qtp  }^2 + \norm{\Deltat}$. 
It follows from induction hypothesis \eqref{ineq:phase2hyp3}, inequality \eqref{ineq:phase2aux4}, and inequality \eqref{ineq:phase2aux11} that 
\begin{align*}
\beta
\le \frac{ \left( \hat{c}_4 +\hat{c}_3^{8/5} + 11 \hat{c}_1 \right) \sglmin (X)}{\kappa} .
\end{align*}
We obtain that
\begin{align*}
    &\norm{\tilZplus^T \Zplus \Qplusp} \le  \norm{ \tilZtT \Zt\Qtp } \\
    &+ C\mu \left( \left(  \hat{c}_4 +\hat{c}_3^{8/5} + 11 \hat{c}_1 \right)  \left( \norm{\LAPT\P{\Zt\Qt}} + \mu\norm{X} \right) \frac{\sglmin (X) }{\kappa}+  \mu \norm{X}^2 \right)  \sqrt{\norm{ X }} \norm{ \Zt\Qtp } \\ 
    &+ \frac{ 8 \left( \hat{c}_4 +\hat{c}_3^{8/5} + 11 \hat{c}_1 \right) \mu \sglmin (X)  \norm{\Zt \Qtp}^2}{\kappa} \\
    \stackrel{(a)}{\le} & \norm{ \tilZtT \Zt\Qtp } + C\mu \left( \left( \hat{c}_4 +\hat{c}_3^{8/5} +11 \hat{c}_1 \right)  \left( \frac{ \hat{c}_4}{\kappa^2} + \frac{ \hat{c}_2}{\kappa^4}\right) \frac{\sglmin (X) }{\kappa}+  \frac{ \hat{c}_2 \norm{X}}{\kappa^4} \right)  \sqrt{\norm{ X }} \norm{ \Zt\Qtp }\\ 
    & + 8 \hat{c}_3^{4/5} \left( \hat{c}_4 + \hat{c}_3^{8/5} + 11 \hat{c}_1 \right) \mu \frac{ \sglmin (X) \sqrt{\norm{X}}  \norm{\Zt \Qtp}}{\kappa^{9/2} k^{4/5} }\\
    \le & \norm{ \tilZtT \Zt\Qtp }  + \mu \left( C \left( \hat{c}_4+ \hat{c}_3^{8/5} + 11 \hat{c}_1 \right) \left( \hat{c}_4 +\hat{c}_2\right) + C \hat{c}_2 + 8 \hat{c}_3^{4/5} \left( \hat{c}_4 + \hat{c}_3^{8/5} +11 \hat{c}_1  \right)  \right) \\
    & \cdot \sglmin (X)  \frac{\sqrt{\norm{X}}  \norm{\Zt \Qtp}}{\kappa^3}  \\
    \stackrel{(b)}{\le} & \norm{ \tilZtT \Zt\Qtp }  +  \hat{c}_4 \hat{c}_5 \mu  \sglmin (X)  \frac{ \sqrt{\norm{X}}  \norm{ \Zt \Qtp}}{1500 \kappa^3}  \\
    \stackrel{(c)}{\le} &  \frac{ \hat{c}_4 \hat{c}_5 \sqrt{\norm{X}}}{\kappa^3} \left( 1+ \frac{\mu}{1500} \sglmin(X)  \right)^{t+1-t_1} \norm{\Ztone Q_{t_1,\bot}}.
\end{align*}
Inequality $(a)$ follows from inequalities \eqref{ineq:phase2hyp3}, \eqref{ineq:phase2aux4}, and the assumption $ \mu \le \frac{ \hat{c}_2}{\norm{X} \kappa^4}$.
Inequality $(b)$ follows from the fact that the constants $ \hat{c}_1$, $\hat{c}_2$, and  $ \hat{c}_3$ are chosen small enough (compared to $\hat{c}_4 \hat{c}_5$) and that $\hat{c}_4$ is chosen small enough compared to $\hat{c}_5$.
Inequality $(c)$ is due to inequalities \eqref{ineq:phase2hyp2} and \eqref{ineq:phase2hyp_balanc2}.
This shows \eqref{ineq:phase2hyp_balanc2} for $t+1$.

In order to prove \eqref{ineq:phase2hyp_balanc3} for $t+1$ we note that from Lemma \ref{ref:balancednessangle} it follows that 
    \begin{align*}
        &\norm{ \tilZplus^T P_{\Zplus \Qplus}  } \\
        \le & \left( 1-  \frac{\mu}{4} \sglmin \left( X \right) \right) \norm{\tilZt^T \PZQ } +4 \mu \norm{X} \norm{  \Zt \Qtp  }
        + 2\mu \norm{ \tilZt^T \Zt  } \sqrt{\norm{X}}+ \frac{\mu  \norm{\tilZt^T \Zt\Qtp}  \sglmin (X) }{ \sglmin \left( \Zt\Qt \right)  } \\
        &  +800 \mu^2 \norm{X}^{5/2}\\
        \stackrel{(a)}{\le} & \left( 1-  \frac{\mu}{4} \sglmin \left( X \right) \right) \frac{ \hat{c}_4 \hat{c}_6 \sqrt{\norm{X} } }{\kappa^{3}} +  \frac{ 4\mu \hat{c}_3^{4/5} \sqrt{ \sglmin \left(X\right) } \norm{X} }{ \kappa^{7/2} k^{4/5}}
        + 2\mu  \frac{ \hat{c}_7 \norm{X}^{3/2}  }{\kappa^{4}} + \frac{4 \mu  \hat{c}_4 \hat{c}_5 \sqrt{ \norm{X} } \sglmin (X) }{ \kappa^3 } \\
        &  + 800 \mu \hat{c}_2 \frac{\sqrt{\norm{X}} \sglmin (X) }{\kappa^{3}}\\
        = & \left( \hat{c}_4 \hat{c}_6   - \mu \left(  \frac{  \hat{c}_4 \hat{c}_6}{4} -  \frac{ 4  \hat{c}_3^{4/5} }{ k^{4/5}} -  2  \hat{c}_7-  4   \hat{c}_4 \hat{c}_5  - 800 \hat{c}_2 \right)  \sglmin (X)   \right) \frac{\sqrt{\norm{X}}}{\kappa^3}\\
        \stackrel{(b)}{\le} & \frac{ \hat{c}_4 \hat{c}_6 \sqrt{\norm{X}}}{ \kappa^{3}},
    \end{align*}
where in inequality $(a)$ we have used the induction hypotheses \eqref{ineq:phase2hyp_balanc1} and \eqref{ineq:phase2hyp_balanc3}, the assumption $\mu \le \frac{\hat{c}_2}{\kappa^4 \norm{X}}$, and inequalities \eqref{ineq:phase2aux4} and \eqref{ineq:aux23}.
Inequality $(b)$ follows from the fact that the constants $\hat{c}_2$, $\hat{c}_3$, and $\hat{c}_7$ are chosen small enough compared to $\hat{c}_4 \hat{c}_5 \ll \hat{c}_4 \hat{c}_6$ and $ \hat{c}_5$ is chosen small enough compared to $\hat{c}_6$.

Hence, we have verified the inequalities \eqref{ineq:phase2hyp1},  \eqref{ineq:phase2hyp2},  \eqref{ineq:phase2hyp3},  \eqref{ineq:phase2hyp4}, and the three conditions regarding the imbalance matrix (\eqref{ineq:phase2hyp_balanc1}, \eqref{ineq:phase2hyp_balanc2}, and \eqref{ineq:phase2hyp_balanc3})  for $t+1$. Thus, the induction step is completed.\\

In order to complete the proof it remains to show that inequalities \eqref{ineq:phase2final1}--\eqref{ineq:phase2final_balanc3} hold for $t=t_2$.
For that, we set 
\begin{equation*}
    \gamma := \left( 1 +\frac{\mu \sglmin (X)}{1500} \right)^{t-t_1} \norm{\Ztone Q_{t_1,\bot}}.
\end{equation*}
Note that upper bound in line \eqref{ineq:gammabound} follows directly from \eqref{ineq:phase2aux5} (whereas the lower bound is immediate).
Next, we note that for $t=t_2$ inequality \eqref{ineq:phase2final1} follows directly from the definition of $t_2$.
Inequality \eqref{ineq:phase2final2} is due to \eqref{ineq:phase2aux5} and the definition of $\gamma$.
Moreover, inequality \eqref{ineq:phase2final3}, respectively inequality \eqref{ineq:phase2final4}, follow directly from \eqref{ineq:phase2hyp3}, respectively \eqref{ineq:phase2hyp4}, applied to $t=t_2$.
Analogously, inequalities \eqref{ineq:phase2final_balanc1} and \eqref{ineq:phase2final_balanc3} regarding the imbalance matrix follow from \eqref{ineq:phase2hyp_balanc1} and \eqref{ineq:phase2hyp_balanc3} with $t=t_2$.
Inequality \eqref{ineq:phase2final_balanc2} follows from \eqref{eqref:phase2aux12} with $t=t_2$ and the definition of $\gamma$.

\end{proof}
\subsection{Proof of main lemma for Phase 3 (Lemma \ref{lemma:phase3combined})}\label{sec:phase3proof}

\begin{proof}[Proof of Lemma \ref{lemma:phase3combined}]
At the beginning of the proof we would like to recall that the constants
$ \hat{c}_1,\hat{c}_2,\hat{c}_3,\hat{c}_4,\hat{c}_5,\hat{c}_6,\hat{c}_7 $
fulfill the relationships
$\hat{c}_1, \hat{c}_2, \hat{c}_3^{4/5} \ll \hat{c}_4 \hat{c}_5 $, $\hat{c}_4 \ll \hat{c}_5 \ll \hat{c}_6 \ll 1$ since they are chosen exactly as in Lemma \ref{lemma:phase2combined}.

To simplify the notation in the following, we again use the notation 
\begin{equation*}
    \Dt :=\symA - \Zt\ZtT + \tilZt\tilZtT.
\end{equation*}
Note that for $t=t_2$ it holds that 
\begin{equation}\label{ineq:auxfinal1}
    \norm{D_{t_2}} \le \norm{Z_{t_2}}^2 + \norm{\widetilde{Z}_{t_2}}^2 + \norm{X} \le 9 \norm{X},
\end{equation}
where we have used that $ \norm{\tilde{Z}_{t_2} } = \norm{\Zttwo} \le 2 \sqrt{\norm{X}} $ by assumption \eqref{ineq:phase3assump4} and Lemma \ref{lemma:symmetry}.
Set 
\begin{align}
    \tilde{t}&:= \min \left\{ t \in \mathbb{N}: t \ge t_2 \text{ and } \nucnorm{ \Zt \Qtp \QtpT \ZtT   } \ge \frac{ \norm{ \LAT \Dt }}{200}  \right\} \nonumber, \\
    t_3&:= \min \left\{ \tilde{t}; \ t_2 + \Bigg\lfloor \frac{300 \ln \left( \frac{9  \sqrt{\norm{X}} }{200 k \gamma} \right) }{\mu \sglmin (X)} \Bigg\rfloor  \right\}.\label{phase3:tbounddefinition}
\end{align}

We are using an induction argument to show that the following inequalities hold for $ t_2 \le t \le t_3 $.
\begin{align}
    \sglmin (\LAT \Zt ) &\ge \sqrt{\frac{\sglmin (X)}{8}}, \label{ineq:phase3hyp1} \\
    \norm{\Zt \Qtp} & \le  \left(1 + \frac{ \mu}{1500} \sglmin(X) \right)^{t-t_2}    \gamma, \label{ineq:phase3hyp2} \\
    \norm{\LAPT \PZQ} & \le \frac{\hat{c}_4}{\kappa^2}, \label{ineq:phase3hyp3}\\
    \norm{\Zt} &\le 2 \sqrt{\norm{X}}, \label{ineq:phase3hyp4}\\
    \norm{\tilZtT \Zt } &\le  \norm{ \widetilde{Z}_{t_2}^T Z_{t_2} } + 400\mu^2 \left( t-t_2 \right) \norm{X}^3    \le  \frac{\hat{c}_7 \norm{X}}{\kappa^{4}},  \label{ineq:phase3hyp_balanc1}\\
    \norm{ \tilZtT \Zt \Qtp } &\le \frac{ \hat{c}_4 \hat{c}_5 \sqrt{ \norm{X}} }{\kappa^{3} } \left( 1+ \frac{\mu}{1500} \sglmin(X)  \right)^{t-t_2} \gamma,  \label{ineq:phase3hyp_balanc2}\\
    \norm{ \tilZtT \PZQ } &\le \frac{ \hat{c}_4 \hat{c}_6 \sqrt{\norm{X}}}{\kappa^3},  \label{ineq:phase3hyp_balanc3}\\
    \norm{\LAT \Dt} &\le \left(1 - \frac{\mu}{300}\sglmin(X) \right)^{t-t_2} \norm{\LAT D_{t_2}}.\label{ineq:phase3hyp6}
\end{align}
We observe that in the base case $t=t_2$ the above inequalities follow directly from our assumptions. 
Next, we want to show the induction step $t\rightarrow t+1$ for $t_2 \le t < t_3 $.
For that, we first note that
\begin{align}
\norm{\Deltat} =& \norm{(\Bcal^* \Bcal - \I)  \left( \symA - \Zt \ZtT +  \tilZt \tilZtT \right) } \nonumber \\
\le &  \norm{(\Bcal^* \Bcal - \I)  \left( \symA - \Zt \Qt \QtT \ZtT +  \tilZt \Qt \QtT \tilZtT \right) } +\norm{(\Bcal^* \Bcal - \I)  \left(  \Zt \Qtp \QtpT \ZtT  \right) } \nonumber\\
& +\norm{(\Bcal^* \Bcal - \I)  \left(  \tilZt \Qtp \QtpT \tilZtT  \right) } \nonumber \\
\stackrel{(a)}{\le} & \delta \sqrt{r} \norm{\symA - \Zt \Qt \QtT \ZtT +  \tilZt \Qt \QtT \tilZtT  } + \delta \nucnorm{ \Zt \Qtp \QtpT \ZtT   } \nonumber \\
& + \delta \nucnorm{ \tilZt \Qtp \QtpT \tilZtT } \nonumber \\
\stackrel{(b)}{=} & \delta  \sqrt{r}  \norm{\symA - \Zt \Qt \QtT \ZtT +  \tilZt \Qt \QtT \tilZtT  }  + 2\delta \nucnorm{ \Zt \Qtp \QtpT \ZtT   } \nonumber \\
\le &  \delta \sqrt{r}   \norm{\symA - \Zt  \ZtT +  \tilZt  \tilZtT  }  + 2\delta \nucnorm{ \Zt \Qtp \QtpT \ZtT   } +  2\delta \sqrt{r}  \norm{ \Zt \Qtp}^2 \nonumber \\
\stackrel{(c)}{\le} & 6 \delta  \sqrt{r}   \norm{\LAT \Dt} + 2 \delta \nucnorm{ \Zt \Qtp \QtpT \ZtT   } + 6 \delta \sqrt{r}  \norm{ \Zt \Qtp}^2 \nonumber \\ 
\stackrel{(d)}{\le } &  7 \delta  \sqrt{r}   \norm{\LAT \Dt}. \label{phase3:aux2}
\end{align}
In inequality $(a)$ we have used the Restricted Isometry Property (see Lemma \ref{lemma:RIPlemma}) and equality $(b)$ follows from the symmetry between $\Zt$ and $\tilZt$.
Inequality $(c)$ is due to Lemma \ref{SpecLossboundedlemma}.
Inequality $(d)$ follows from $t\le t_3 \le \tilde{t}$ and the definition of $\tilde{t}$. 
This implies that
\begin{align}\label{ineq:auxdeltabound}
    \norm{\Deltat} \le 7 \delta \sqrt{r}  \norm{\LAT \Dt } \stackrel{(a)}{\le} 7 \delta \sqrt{r}  \norm{\LAT D_{t_2}} \stackrel{(b)}{\le} \frac{63 \hat{c}_1}{\kappa^2} \sglmin (X),
\end{align}
where inequality $(a)$ follows from the induction hypothesis \eqref{ineq:phase3hyp6} and inequality $(b)$ follows from the estimate \eqref{ineq:auxfinal1} and the assumption on the RIP-constant, $\delta < \frac{ \hat{c}_1}{\kappa^3 \sqrt{r} }$.

In order to apply our lemmas, we also need to check that the condition 
\begin{equation}\label{ineq:aux25}
\norm{\Zt \Qtp}  \le \min \left\{ \frac{c \sqrt{\sglmin (X)} }{\sqrt{\kappa}};  2 \sglmin \left( \Zt \Qt \right)\right\}
\end{equation}
holds.
Due to \eqref{ineq:phase3hyp1} and  $ \sglmin (\LAT \Zt) \le   \sglmin \left( \Zt \Qt \right)  $ it suffices to check $  \norm{\Zt \Qtp}  \le c \sqrt{\frac{\sglmin (X)}{\kappa}} $. 
For that purpose, note that it follows from inequality \eqref{ineq:phase3hyp2} that
\begin{align}
\norm{\Zt \Qtp} &\le  \left(1 + \frac{ \mu \sglmin(X) }{1500} \right)^{t-t_2}    \gamma  \nonumber \\
& \le  \left(1 + \frac{ \mu  \sglmin(X) }{1500} \right)^{t_3-t_2}    \gamma \nonumber\\
&\stackrel{(a)}{\le} \exp \left(  \frac{1}{5} \ln \left( \frac{9 \sqrt{\norm{X}} }{200k \gamma} \right)  \right) \gamma  \nonumber\\
&=  \left( \frac{9}{200k} \right)^{1/5} \norm{X}^{1/10}  \gamma^{4/5} \label{ineq:phase3_113}\\
&\stackrel{(b)}{\le}  \frac{ c^{4/5}_3 \sqrt{ \sglmin (X)}}{ \kappa^{7/2} } . \label{ineq:phase3_112}
\end{align}
In inequality $(a)$ we used definition \eqref{phase3:tbounddefinition}, $t\le t_3$ and the elementary inequality $ \ln \left(1+x\right) \le x $ for $x > -1$. 
Inequality $(b)$ follows from assumption \eqref{phase3:gammabound}.
In particular, this implies inequality \eqref{ineq:aux25}.

We are now going to show that the conditions \eqref{ineq:phase3hyp1}--\eqref{ineq:phase3hyp6} hold simultaneously for $t+1$.
For that, we first note that it follows from Lemma \ref{lemma:sigmingrowth}, the induction hypotheses \eqref{ineq:phase3hyp1} and \eqref{ineq:phase3hyp4}, and the assumption on the step size, $\mu \le \frac{\hat{c}_2}{\kappa^{4} \norm{X}}$,  that 
\begin{align*}
    \sglmin(\LAT \Zplus) \ge  \sglmin(\LAT \Zplus \Qt)  \ge \sglmin (\LAT \Zt) \left( 1 + \mu \frac{\sglmin (X)}{4} - \mu \sglmin^2 (\LAT \Zt)  \right) \ge  \sqrt{\frac{\sglmin (X)}{8}}. 
\end{align*}
This shows condition \eqref{ineq:phase3hyp1} for $t+1$.

We also note that the above inequality chain implies that $ \LAT \Zplus \Qt $ has full rank.
Thus, we can apply Lemma \ref{lemma:noisetermgrowth} since all conditions are satisfied. 
We obtain that 
\begin{equation*}
\norm{\Zplus\Qplusp} \le \left (1 - \frac{\mu}{2}\norm{\Zt\Qtp}^2 + \frac{ \mu  \sglmin(X)}{3000} \right) \norm{\Zt\Qtp} + 2\mu\sqrt{\norm{X}}\norm{\tilZtT\Zt \Qtp}.
\end{equation*}
We obtain that
\begin{align*}
 \norm{\Zplus\Qplusp} 
 \stackrel{(a)}{\le} & \left (1 - \frac{\mu}{2}\norm{\Zt\Qtp}^2 + \frac{ \mu   \sglmin(X)}{3000} + \frac{ 2\mu c_4 c_5 \norm{X} }{\kappa^{3}} \right) \left( 1+ \frac{\mu}{1500} \sglmin(X)  \right)^{t-t_2} \gamma\\
 \stackrel{(b)}{\le} & \left( 1+ \frac{\mu}{1500} \sglmin(X)  \right)^{t+1-t_2} \gamma.
\end{align*}
Inequality $(a)$ follows from induction hypotheses \eqref{ineq:phase3hyp2} and \eqref{ineq:phase3hyp_balanc2}.
Inequality $(b)$ can be obtained by choosing the absolute constants $\hat{c}_4$ and  $\hat{c}_5$ small enough.
This implies inequality \eqref{ineq:phase3hyp2} for $t+1$.

In order to proceed, we note that
\begin{align}
    \frac{ \norm{\tilZt^T \Zt \Qtp} }{ \sglmin \left(\Zt \Qt \right) } 
    \le & \frac{ \norm{\tilZtT\Zt \Qtp}}{\sglmin\left( \LAT \Zt\right)} \nonumber \\
    \stackrel{(a)}{\le} & \frac{ \sqrt{8} \hat{c}_4 \hat{c}_5 \sqrt{ \norm{X} } \left( 1+ \frac{\mu}{1500} \sglmin(X)  \right)^{t-t_2} \gamma }{ \kappa^{3} \sqrt{\sglmin (X)} } \nonumber\\
    \stackrel{(b)}{\le}  & \frac{\sqrt{8} \hat{c}_4 \hat{c}_5 \sqrt{\norm{X}}}{ \kappa^3 } \cdot  \frac{ \hat{c}^{4/5}_3 \sqrt{ \sglmin (X)}}{ \kappa^{7/2} }   \\
    \le & \frac{4 \hat{c}_4 \hat{c}_5 \sqrt{\norm{X}}}{ \kappa^3 }  , \label{ineq:phase3_aux22}
\end{align}
where inequality (a) follows from induction hypotheses \eqref{ineq:phase3hyp1} and \eqref{ineq:phase3hyp_balanc2}.
Inequality $(b)$ can be seen from similar arguments as in the proof of inequality \eqref{ineq:phase3_112}.
From inequality \eqref{ineq:phase3_112} and inequality \eqref{ineq:phase3_aux22} we see that the assumptions of Lemma \ref{lemma:anglecontrol} are fulfilled. 
Thus, we can apply Lemma \ref{lemma:anglecontrol} and obtain that
\begin{equation}\label{ineq:phase3_111}
\begin{split}
    &\norm{\LAPT\P{\Zplus Q_{t+1} }} \le \\
&\left(1 - \frac{1}{4}\mu\sglmin \left(X\right) \right)\norm{\LAPT\P{\Zt\Qt}} 
+  2 \mu  \sqrt{\norm{X}} \norm{ \tilZtT \P{\Zt\Qt} }
+C\mu\frac{\sqrt{\norm{X}}\norm{\tilZtT\Zt \Qtp}}{\sglmin\left(\Zt\Qt\right)} + C \mu \norm{\Deltat} +  C\mu^2\norm{X}^2.
\end{split}
\end{equation}
Inserting inequality \eqref{ineq:phase3_aux22} into inequality \eqref{ineq:phase3_111} we obtain that
\begin{align*}
    &\norm{\LAPT\P{\Zplus Q_{t+1}}}  \\
    \le &\left(1 - \frac{1}{4}\mu\sglmin \left(X\right) \right)\norm{\LAPT\P{\Zt\Qt}} 
    +  2 \mu  \sqrt{\norm{X}} \norm{ \tilZtT \P{\Zt\Qt} }
    +\frac{4C \hat{c}_4 \hat{c}_5 \mu  \norm{X} }{ \kappa^{3}}+ C \mu \norm{\Deltat} +  C\mu^2\norm{X}^2\\
    \stackrel{(a)}{\le} &\left(1 - \frac{1}{4}\mu\sglmin \left(X\right) \right) \frac{\hat{c}_4}{\kappa^2} 
    +  \frac{ 2 \mu \hat{c}_4  \hat{c}_6 \norm{X} }{\kappa^3}
    +\frac{4 \mu C \hat{c}_4 \hat{c}_5    \norm{X} }{ \kappa^{3}}+ \frac{63 C \hat{c}_1 \mu \sglmin \left(X\right) }{\kappa^2}   + \frac{ \mu C \hat{c}_2  \sglmin (X)}{\kappa^3} \\
    = & \left( \hat{c}_4 - \mu \left( \frac{\hat{c}_4}{4} - 2 \hat{c}_4  \hat{c}_6 - 4C \hat{c}_4 \hat{c}_5 - 63 C \hat{c}_1 - \frac{ C \hat{c}_2}{\kappa}  \right) \sglmin \left(X\right)  \right) \frac{1}{ \kappa^2 } \\
    \stackrel{(b)}{\le} & \frac{ \hat{c}_4 }{\kappa^2}.
\end{align*}
In inequality $(a)$ we have used the induction hypotheses \eqref{ineq:phase3hyp3} and \eqref{ineq:phase3hyp_balanc3}, inequality \eqref{ineq:auxdeltabound}, and the assumption that $\mu \le \frac{ \hat{c}_2 }{\kappa^4 \norm{X}}$.
In inequality $(b)$ we used that the constants $ \hat{c}_1$ and $ \hat{c}_2$ are chosen small enough compared to $\hat{c}_4$ and also that the constants $\hat{c}_5$ and $\hat{c}_6$ are chosen small enough compared to $1$.
This shows the \eqref{ineq:phase3hyp3} for $t+1$.
Next, we note we can apply Lemma \ref{lemma:normcontrolled} since we have that $ \norm{ \Zt \Qtp } \le \frac{\sqrt{\norm{X}}}{100} $ due to \eqref{ineq:phase3_112} with an absolute constant $\hat{c}_3$ chosen small enough.
This implies inequality \eqref{ineq:phase3hyp4} for $t+1$.

Moreover, we obtain from Lemma \ref{lemma:balancedbase} that 
\begin{equation*}
    \norm{\tilZplusT \Zplus} \le \norm{\tilZtT \Zt} + 400 \mu^2 \norm{X}^3.
\end{equation*}
Combining this inequality with the induction hypothesis \eqref{ineq:phase3hyp_balanc1} and using the assumption \eqref{ineq:phase3assump_balanc1}, we obtain that 
\begin{equation*}
    \norm{ \tilZplusT \Zplus} \le  \norm{ \widetilde{Z}_{t_2}^T Z_{t_2} } + 400\mu^2 \left( t+1-t_2 \right) \norm{X}^3    \le  \frac{ \hat{c}_7 \norm{X} }{\kappa^{4}},
\end{equation*}
which shows the induction hypothesis \eqref{ineq:phase3hyp_balanc1} for $t+1$.
We obtain from Lemma \ref{lemma:balancednessperp} that
     \begin{align*}
        &\norm{\tilZplus^T \Zplus \Qplusp} \le  \norm{ \tilZtT \Zt\Qtp } \\
        &+ C\mu \left(   \left( \norm{\LAPT\P{\Zt\Qt}} + \mu\norm{X} \right)\beta  +  \mu \norm{X}^2 \right)  \sqrt{\norm{ X }} \norm{ \Zt\Qtp }+ 8 \mu \beta \norm{\Zt \Qtp}^2,
    \end{align*}
 where we recall that $\beta = \norm{\LAPT \PZQ}  \norm{X}  + \norm{ \Zt \Qtp  }^2 + \norm{\Deltat}$. 
It follows from induction hypothesis \eqref{ineq:phase3hyp3}, inequality \eqref{ineq:phase3_112}, and inequality \eqref{ineq:auxdeltabound} that 
\begin{equation*}
\beta \le \frac{ \left( \hat{c}_4 +\hat{c}_3^{8/5} + 63 \hat{c}_1 \right) \sglmin (X)}{\kappa} .
\end{equation*}
Next, we compute that
\begin{align*}
    &\norm{\tilZplus^T \Zplus \Qplusp} \le  \norm{ \tilZtT \Zt\Qtp } \\
    &+ C\mu \left( \left( \hat{c}_4 +\hat{c}_3^{8/5} + 63 \hat{c}_1 \right)  \left( \norm{\LAPT\P{\Zt\Qt}} + \mu\norm{X} \right) \frac{\sglmin (X) }{\kappa}+  \mu \norm{X}^2 \right)  \sqrt{\norm{ X }} \norm{ \Zt\Qtp } \\ 
    &+ \frac{ 8 \left( \hat{c}_4 +\hat{c}_3^{8/5} + 63 \hat{c}_1 \right) \mu \sglmin (X)  \norm{\Zt \Qtp}^2 }{\kappa}\\
    \stackrel{(a)}{\le} & \norm{ \tilZtT \Zt\Qtp } + C\mu \left( \left( \hat{c}_4 +\hat{c}_3^{8/5} + 63 \hat{c}_1 \right)  \left( \frac{\hat{c}_4}{\kappa^2} + \frac{\hat{c}_2}{\kappa^4}  \right) \frac{\sglmin (X) }{\kappa}+  \frac{ \hat{c}_2 \norm{X}}{\kappa^4} \right)  \sqrt{\norm{ X }} \norm{ \Zt\Qtp }\\ 
    & + 8 \hat{c}_3^{4/5} \left( \hat{c}_4 + \hat{c}_3^{8/5} + 63 \hat{c}_1 \right) \mu \frac{ \sglmin (X) \sqrt{\norm{X}}  \norm{\Zt \Qtp}}{\kappa^{4}}\\
    \le & \norm{ \tilZtT \Zt\Qtp } \\
    & + \mu \left( C \left( \hat{c}_4+\hat{c}_3^{8/5} + 63\hat{c}_1 \right) \left( \hat{c}_4 +\hat{c}_2 \right) + C \hat{c}_2 + 8 \hat{c}_3^{4/5} \left( \hat{c}_4 +\hat{c}_3^{8/5} + 63 \hat{c}_1  \right)  \right) \sglmin (X)  \frac{\sqrt{\norm{X}}  \norm{\Zt \Qtp}}{\kappa^3}  \\
    \stackrel{(b)}{\le} & \norm{ \tilZtT \Zt\Qtp }  +  \hat{c}_4 \hat{c}_5 \mu  \sglmin (X)  \frac{ \sqrt{\norm{X}}  \norm{\Zt \Qtp}}{1500 \kappa^3}  \\
    \stackrel{(c)}{\le} &  \frac{ \hat{c}_4 \hat{c}_5 \sqrt{\norm{X}}}{\kappa^3} \left( 1+ \frac{\mu}{1500} \sglmin(X)  \right)^{t+1-t_2} \gamma.
\end{align*}
Inequality $(a)$ is due to inequalities \eqref{ineq:phase3hyp3}, \eqref{ineq:auxdeltabound}, \eqref{ineq:phase3_112}, and the assumption $ \mu \le \frac{\hat{c}_2}{\norm{X} \kappa^4}$.
Inequality $(b)$ can be seen from the fact that the constants $ \hat{c}_1$, $ \hat{c}_2$, and  $ \hat{c}_3$ are chosen small enough (compared to $ \hat{c}_4$) and that $\hat{c}_4$ is chosen small enough compared to $\hat{c}_5$.
Inequality $(c)$ follows from inequalities \eqref{ineq:phase3hyp2} and \eqref{ineq:phase3hyp_balanc2}.
This shows \eqref{ineq:phase3hyp_balanc2} for $t+1$.

Next, we want to prove \eqref{ineq:phase3hyp_balanc3} for $t+1$.
For that, we apply Lemma \ref{ref:balancednessangle} and obtain that 
    \begin{align*}
        &\norm{ \tilZplus^T P_{\Zplus \Qplus}  } \\
        \le & \left( 1-  \frac{\mu}{4} \sglmin \left( A \right) \right) \norm{\tilZt^T \PZQ } +4 \mu \norm{X} \norm{  \Zt \Qtp  }
        + 2\mu \norm{ \tilZt^T \Zt  } \sqrt{\norm{X}} \\
        & + \frac{\mu  \norm{\tilZt^T \Zt\Qtp}  \sglmin (X) }{ \sglmin \left( \Zt\Qt \right)  }   +800 \mu^2 \norm{X}^{5/2}\\
        \stackrel{(a)}{\le} & \left( 1-  \frac{\mu}{4} \sglmin \left( A \right) \right) \frac{ \hat{c}_4 \hat{c}_6 \sqrt{\norm{X} } }{\kappa^{3}} +  \frac{ 4\mu \hat{c}_3^{4/5} \sqrt{ \sglmin \left(X\right) } \norm{X} }{ \kappa^{7/2}  }
        + 2\mu  \frac{ \hat{c}_7 \norm{X}^{3/2}  }{\kappa^{4}} + \frac{4 \mu  \hat{c}_4 \hat{c}_5 \sqrt{ \norm{X} } \sglmin (X) }{ \kappa^3 }\\
        &   + 800 \mu \hat{c}_2 \frac{\sqrt{\norm{X}} \sglmin (X) }{\kappa^{3}}\\
        = & \left( \hat{c}_4 \hat{c}_6   - \mu \left(  \frac{  \hat{c}_4 \hat{c}_6}{4} -  4  \hat{c}_3^{4/5} -  2  \hat{c}_7-  4   \hat{c}_4 \hat{c}_5  -  800 \hat{c}_2 \right)  \sglmin (X)   \right) \frac{\sqrt{\norm{X}}}{\kappa^3}\\
        \stackrel{(b)}{\le} & \frac{ \hat{c}_4 \hat{c}_6 \sqrt{\norm{X}}}{ \kappa^{3}}.
    \end{align*}
In inequality $(a)$ we have used the induction hypotheses \eqref{ineq:phase3hyp_balanc1} and \eqref{ineq:phase3hyp_balanc3}, the assumption $\mu \le \frac{ \hat{c}_2}{\kappa^4 \norm{X}}$, and inequalities \eqref{ineq:phase3_112} and \eqref{ineq:phase3_aux22}.
Inequality $(b)$ follows from the fact that the constants $\hat{c}_2$, $\hat{c}^{4/5}_3$, and $\hat{c}_7$ are chosen small enough compared to $\hat{c}_4 $ and the constant $ \hat{c}_5$ is chosen small enough compared to the constant $\hat{c}_6$.

Due to inequality \eqref{phase3:aux2} and the assumption $ \delta \le \frac{\hat{c}_1}{ \kappa^3 \sqrt{r} } $ it holds that 
\begin{equation*}
\norm{\Deltat} \le 7 \delta \sqrt{r} \norm{\LAT \Dt} \le \frac{ 7 \hat{c}_1}{\kappa^3} \norm{\Dt}.
\end{equation*}
Hence, we can apply Lemma \ref{lemma:localconvergence}.
We obtain that 
\begin{align*}
&\norm{\LAT(\symA - \Zplus\ZplusT + \tilZplus \tilZplusT)} \\
\le &\left(1 - \frac{\mu}{128}\sglmin(X) \right)\norm{\LAT \left( \symA - \Zt\ZtT + \tilZt\tilZtT \right)} +  \frac{\mu}{20}\sglmin(X)\norm{\Zt\Qtp}^2.
\end{align*}
Moreover, note that due to $t\le t_3 \le \tilde{t}$ we have that
\begin{align*}
    \norm{\Zt\Qtp}^2 \le \nucnorm{ \Zt\Qtp \QtpT \ZtT } \le \frac{\norm{\LAT D_t }}{200}.   
\end{align*} 
Combining the last two inequalities we obtain that
\begin{equation*}
    \norm{\LAT(\symA - \Zplus\ZplusT + \tilZplus \tilZplusT)}   \le \left(1 - \frac{\mu}{300}\sglmin(X) \right)\norm{\LAT \left( \symA - \Zt\ZtT + \tilZt\tilZtT \right)}.   
\end{equation*}
Together with the induction hypothesis the above inequality implies that inequality \eqref{ineq:phase3hyp6} holds for $t+1$.\\

\noindent \textbf{Bounding the final error:}
It remains to prove the final error estimate \eqref{phase3:finalerror}.
For that, we distinguish two cases, namely $t_3=\tilde{t}$ and 
$t_3 = t_2 + \Bigg\lfloor \frac{300 \ln \left( \frac{9  \sqrt{\norm{X}} }{200 k \gamma} \right) }{\mu \sglmin (X)} \Bigg\rfloor$.
We begin with analyzing the case that 
$t_3= t_2+\Bigg\lfloor \frac{300 \ln \left( \frac{9  \sqrt{\norm{X}} }{200 k \gamma } \right) }{\mu \sglmin (X)} \Bigg\rfloor$. 
From Lemma \ref{SpecLossboundedlemma} it follows that 
\begin{align*}
    \norm{\Dthree} \le & 6\norm{\LAT \Dthree} + 4 \norm{Z_{t_3} Q_{t_3, \bot}}^2 \\
    \stackrel{(a)}{\le} & 6 \left( 1 - \frac{\mu}{300}\sglmin(X)  \right)^{t_3-t_2} \norm{\LAT \Dtwo } + 4  \left( \frac{9}{200k} \right)^{2/5} \norm{X}^{1/5} \gamma^{8/5}  \\
    \stackrel{(b)}{\le} & 54 \left( 1 - \frac{\mu}{300}\sglmin(X)  \right)^{t_3-t_2} \norm{X} + 4 \left( \frac{9}{200k} \right)^{2/5} \norm{X}^{1/5} \gamma^{8/5}  , 
\end{align*}
where in $(a)$ we used inequalities \eqref{ineq:phase3hyp6} and \eqref{ineq:phase3_113}. 
Inequality $(b)$ follows from \eqref{ineq:auxfinal1}. 
We observe that
\begin{align*}
\left( 1 - \frac{\mu}{300}\sglmin(X)  \right)^{t_3-t_2} 
& = \exp \left(  \ln \left( 1 - \frac{\mu}{300}\sglmin(X)   \right)   \Bigg\lfloor \frac{300 \ln \left( \frac{9  \sqrt{\norm{X}} }{200 k \gamma } \right) }{\mu \sglmin (X)} \Bigg\rfloor  \right)\\
& \stackrel{(a)}{\le} \exp \left(  - \frac{\mu\sglmin (X)}{300}   \Bigg\lfloor \frac{300 \ln \left( \frac{9  \sqrt{\norm{X}} }{200 k \gamma } \right) }{\mu \sglmin (X)} \Bigg\rfloor  \right)\\
& \stackrel{(b)}{\le} \exp \left(-  \ln \left( \frac{9  \sqrt{\norm{X}}}{200 k \gamma } \right)  + \frac{\mu \sglmin (X)}{300}  \right)\\
&\stackrel{(c)}{\le}  \frac{200 e k   \gamma }{9  \sqrt{\norm{X}}},  
\end{align*}
where $(a)$ follows the elementary inequality $  \ln (1+x) \le x $.  
Inequality $(b)$ follows from $ \lfloor x \rfloor \ge x-1 $.
In inequality $(c)$ we have used our assumption on the step size $\mu$.
Combining the last two estimates it follows that 
\begin{align*}
    \norm{D_{t_3}} &\lesssim  k \gamma \sqrt{\norm{X}}  + \frac{ \gamma  \sqrt{\norm{X} }}{\sqrt{k}} 
    \lesssim k \gamma  \sqrt{\norm{X} }.
\end{align*}
This shows the claim \eqref{phase3:finalerror} in the scenario that $t_3= t_2+\Bigg\lfloor \frac{300 \ln \left( \frac{9  \sqrt{\norm{X}} }{200 k \gamma} \right) }{\mu \sglmin (X)} \Bigg\rfloor$.

Now we consider the scenario that $ t_3 =\tilde{t} $.
In this case, we have that 
\begin{equation}\label{equ:aux223}
  \frac{ \norm{ \LAT D_{t_3} }}{200} \le   \nucnorm{ Z_{t_3} Q_{t_3, \bot}  Q_{t_3, \bot}^T Z_{t_3}^T   }.
\end{equation}
Hence, we can use Lemma \ref{SpecLossboundedlemma} and obtain that
\begin{align*}
    \norm{\Dthree} \le & 6\norm{\LAT \Dthree} + 4 \norm{Z_{t_3} Q_{t_3, \bot}}^2\\
    \le & 6\norm{\LAT \Dthree} + 4 \nucnorm{Z_{t_3} Q_{t_3, \bot}}^2\\
    \stackrel{\eqref{equ:aux223}}{\le} & 5\nucnorm{ Z_{t_3} Q_{t_3, \bot}  Q_{t_3, \bot }^T Z_{t_3}^T   } \\
    \le &  5k \norm{Z_{t_3} Q_{t_3, \bot}}^2 \\
    \stackrel{\eqref{ineq:phase3_113}}{\lesssim} & k^{3/5} \norm{X}^{1/5} \gamma^{8/5} \\
    \stackrel{\eqref{ineq:phase3assump2}}{\le} & \gamma \sqrt{\norm{X} }.
\end{align*}
This implies the claim \eqref{phase3:finalerror} in the case that $t_3 = \tilde{t}$.
Thus, the the proof of Lemma \ref{lemma:phase3combined} is complete.
\end{proof}

\section{Proof of the main result, Theorem \ref{theorem:main}}

\begin{proof}[Proof of Theorem \ref{theorem:main}]
We first note that 
\begin{align*}
    \norm{ \left(\mathcal{A}^* \mathcal{A}\right) \left(X\right) -X } 
    \le \delta \sqrt{r} \norm{X}
    \le \frac{c}{\kappa^2} \sglmin \left(X\right),
\end{align*}
where the first inequality follows from the restricted isometry property (see, e.g., \cite[Lemma 7.3]{stoger2021small}) and the fact that $X$ has rank $r$.
The second inequality follows from our assumption on $\delta$.
Thus, we can appy Lemma \ref{lemma:spectralmain} and we obtain that after $t_1$ iterations,
where $t_1$ satisfies
\begin{equation}\label{main:tonebound}
    t_1  
    \le \frac{17 \ln\left(\frac{6\kappa^{2} \sqrt{ \max \left\{ n_1+n_2; k \right\} } }{c\varepsilon \left( \sqrt{k} - \sqrt{r-1} \right) }\right)}{ \mu \sglmin \left(X\right) },
\end{equation} 
and by choosing the constant $c>0$ in Lemma \ref{lemma:spectralmain} small enough the following inequalities hold with probability at least $1 - C_3 \exp \left( -c_4 k \right) + \left( C_1 \varepsilon  \right)^{k-r+1}  $:
\begin{align}
\sigma_{\min} \left( Z_{\tone}Q_{\tone} \right) 
&\ge \left(\frac{2\kappa^{2}  }{c}\right)^{8\kappa} \frac{\alpha \varepsilon \left( \sqrt{k} -\sqrt{r-1} \right)}{4} ,\label{ineq:proofmain12} \\
\norm{Z_{\tone}Q_{\tone,\bot}} 
&\le \min \left\{ 2 \sglmin \left( Z_{\tone} Q_{\tone} \right); \  \left( \frac{6\kappa^2  \sqrt{\max \left\{  n_1+n_2; k \right\} }}{ c \varepsilon \left( \sqrt{k} - \sqrt{r -1} \right)} \right)^{ 16 \kappa} \cdot \frac{12c \alpha \sqrt{\max \left\{ n_1 + n_2; k \right\} }}{ \kappa^2 } \right\} , \label{ineq:proofmain1} \\
\norm{L_{X,\bot}^{T} P_{Z_{\tone}Q_{\tone}}}&\le \frac{ \hat{c}_4 }{\kappa^2}, \label{main:intern2} \\
\norm{\Ztone} &\le 2 \sqrt{\norm{X}}, \nonumber \\
\norm{\tilde{Z}_{\tone}^T Z_{\tone}  }
&\le  \frac{ \hat{c}_7\norm{X}}{3 \kappa^4}, \label{main:balancintern1} \\ 
\norm{\tilde{Z}_{\tone}^T Z_{\tone} Q_{\tone, \bot}  }
&\le  \frac{\hat{c}_4 \hat{c}_5 \sqrt{\norm{X}} }{\kappa^3} \norm{Z_{\tone} Q_{\tone,\bot} }, \nonumber \\ 
\norm{\tilde{Z}_{\tone}^T P_{ Z_{\tone} Q_{\tone} }  }
&\le \frac{ \hat{c}_4 \hat{c}_6 \sqrt{\norm{X}} }{\kappa^3}  . \nonumber
\end{align}
Here and in the following $\hat{c}_1, \hat{c}_2, \hat{c}_3, \hat{c}_4, \hat{c}_5, \hat{c}_6, \hat{c}_7$ denote the constants in Lemmas \ref{lemma:phase2combined} and \ref{lemma:phase3combined}.
Next, we want to apply Lemma \ref{lemma:phase2combined}, which describes the second convergence phase.
We assume without loss of generality that $ \sglmin \left( \LAT \Ztone \right) \le \sqrt{\frac{\sglmin \left(X\right)}{8}} $ (otherwise we can skip the second convergence phase and go directly to the third phase).
Thus, all assumptions in Lemma \ref{lemma:phase2combined} except \eqref{ineq:phase2assump1} and \eqref{ineq:phase2assump_balanc1} can be immediately deduced from the above inequalities.
To see that assumption \eqref{ineq:phase2assump1} is also fulfilled we check that
\begin{align*}
    \norm{Z_{\tone} Q_{\tone,\bot} } 
    \stackrel{(a)}{\le} &\left( \frac{6\kappa^2  \sqrt{\max \left\{  n_1+n_2; k \right\} }}{ c \varepsilon \left( \sqrt{k} - \sqrt{r -1} \right)} \right)^{ 16 \kappa} \cdot \frac{12c \alpha \sqrt{\max \left\{ n_1 + n_2; k \right\} }}{ \kappa^2 } \\
    \stackrel{(b)}{\le} & \frac{ \hat{c}_3 \sqrt{\sglmin (X)}}{ k \kappa^{35/8}}. 
\end{align*}
Here, inequality $(a)$ follows from \eqref{ineq:proofmain1}.
Inequality $(b)$ follows from our assumption on the scale of initialization $\alpha$ and by choosing the constant $C_1$ sufficiently large.
This shows that condition \eqref{ineq:phase2assump1} is satisfied.
Next, we check that condition \eqref{ineq:phase2assump_balanc1} is satisfied.
For that, we note first that
\begin{align}
 \Bigg\lceil  \frac{\ln \left( \frac{ \sqrt{ \sglmin \left(X\right) } }{\sqrt{8} \sglmin \left( \LAT \Ztone \right) }   \right)}{\ln \left( 1 +\frac{\mu}{8}  \sglmin \left(X\right)   \right)} \Bigg\rceil 
 \le &\frac{\ln \left( \frac{ \sqrt{ \sglmin \left(X\right) } }{\sqrt{8} \sglmin \left( \LAT \Ztone \right) }   \right)}{\ln \left( 1 +\frac{\mu}{8}  \sglmin \left(X\right)   \right)} +1 \nonumber \\
 \stackrel{(a)}{\le} &\frac{16\ln \left( \frac{ \sqrt{ \sglmin \left(X\right) } }{\sqrt{8} \sglmin \left( \LAT \Ztone \right) }   \right)}{\mu \sglmin \left(X\right)} +1\nonumber\\
 \stackrel{(b)}{\le} &\frac{16\ln \left( \frac{ \sqrt{ \sglmin \left(X\right) } }{\sqrt{2} \sglmin \left( \Ztone Q_{\tone} \right) }   \right)}{\mu \sglmin \left(X\right)} +1\nonumber\\
 \stackrel{(c)}{\le} &\frac{16\ln \left( \frac{ 2 \sqrt{2 \sglmin \left(X\right) }  }{ \alpha \varepsilon \left( \sqrt{k} -\sqrt{r-1}  \right)} \cdot \left( \frac{c}{2k^2} \right)^{8 \kappa}      \right)}{\mu \sglmin \left(X\right)} +1\nonumber\\
 \le &\frac{17\ln \left( \frac{ 2  \sqrt{ 2 \sglmin \left(X\right) }  }{ \alpha \varepsilon \left( \sqrt{k} -\sqrt{r-1}  \right)}  \right)}{\mu \sglmin \left(X\right)}.\label{ineq:mainresultsintern1}
\end{align}
Inequality $(a)$ follows from $\frac{x}{1-x} \le \ln (1+x) $ for $ x > -1 $ and from our assumption on the step size $\mu$.
Inequality $(b)$ is due to $\sglmin \left( \LAT \Ztone \right) \ge \frac{1}{2} \sglmin \left( \Ztone Q_{\tone} \right) $, which follows from \eqref{main:intern2}.
Inequality $(c)$ is due to \eqref{ineq:proofmain12}.
Thus, it follows that 
\begin{align}
    \norm{\tilde{Z}_{\tone}^T Z_{\tone}  }+
    400 \mu^2 \Bigg\lceil  \frac{\ln \left( \frac{ \sqrt{ \sglmin \left(X\right) } }{\sqrt{8} \sglmin \left( \LAT \Ztone \right) }   \right)}{\ln \left( 1 +\frac{\mu}{8}  \sglmin \left(X\right)   \right)} \Bigg\rceil \norm{X}^3
     \stackrel{\eqref{main:balancintern1}}{\le}
     &
    \frac{\hat{c}_7\norm{X}}{3 \kappa^4}+
    400 \mu^2 \Bigg\lceil  \frac{\ln \left( \frac{ \sqrt{ \sglmin \left(X\right) } }{\sqrt{8} \sglmin \left( \LAT \Ztone \right) }   \right)}{\ln \left( 1 +\frac{\mu}{8}  \sglmin \left(X\right)   \right)} \Bigg\rceil \norm{X}^3 \nonumber \\
    \stackrel{\eqref{ineq:mainresultsintern1}}{\le} &
    \frac{\hat{c}_7 \norm{X}}{3 \kappa^4}+
    400 \cdot 17 \mu \kappa \ln \left( \frac{ 2  \sqrt{ 2\sglmin \left(X\right) }  }{ \alpha \varepsilon \left( \sqrt{k} -\sqrt{r-1}  \right)}  \right) \norm{X}^2 \nonumber \\
    \stackrel{(b)}{\le} &
    \frac{2 \hat{c}_7 \norm{X}}{3 \kappa^4}. \label{ineq:mainresultsintern2}
\end{align}
In inequality $(b)$ we used the assumption on our step size, see \eqref{main:stepsizeassumption}.
Thus, condition \eqref{ineq:phase2assump_balanc1} is satisfied and we can apply Lemma \ref{lemma:phase2combined}.
Thus, we obtain that after $t_2$ iterations, where 
\begin{equation}\label{main:ttwobound}
    t_2 - t_1 
    \le \Bigg\lceil \frac{\ln \left( \frac{ \sqrt{ \sglmin \left(X\right) } }{\sqrt{8} \sglmin \left( \LAT \Ztone \right) }   \right)}{\ln \left( 1 +\frac{\mu}{8}  \sglmin \left(X\right)   \right)} \Bigg\rceil
    \stackrel{\eqref{ineq:mainresultsintern1}}{\le}
     \frac{17\ln \left( \frac{ 2  \sqrt{ 2 \sglmin \left(X\right) }  }{ \alpha \varepsilon \left( \sqrt{k} -\sqrt{r-1}  \right)}  \right)}{\mu \sglmin \left(X\right)},
\end{equation}
that our iterates $\Zttwo$ and $\widetilde{Z}_{t_2}$ satisfy 
\begin{align}
    \sglmin \left( \LAT \Zttwo \right) &\ge  \sqrt{\frac{\sglmin (X)}{8}}, \nonumber \\
    \norm{\Zttwo Q_{t_2,\bot}} 
    & \le  \gamma ,   \nonumber  \\
    \norm{\LAPT P_{Z_{t_2}} Q_{t_2} } & \le \frac{ \hat{c}_4}{\kappa^2}, \nonumber  \\
    \norm{\Zttwo} &\le 2 \sqrt{\norm{X}},  \nonumber  \\
    \norm{ \widetilde{Z}_{t_2}^T Z_{t_2}  } &\le \norm{ \widetilde{Z}_{t_1}^T Z_{t_1} } + 400\mu^2 \left( t_2-t_1 \right) \norm{X}^3 \stackrel{\eqref{ineq:mainresultsintern2}}{\le} \frac{2 \hat{c}_7 \norm{X}}{3\kappa^4} , \label{ineq:mainintern17}  \\ 
    \norm{ \widetilde{Z}_{t_2}^T Z_{t_2} Q_{t_2,\bot} }  
    &\le  \frac{ \hat{c}_4 \hat{c}_5 \sqrt{\norm{X}} \gamma }{ \kappa^3} , \nonumber \\
    \norm{\widetilde{Z}_{t_2}^T P_{Z_{t_2} Q_{t_2 } }  }  &\le  \frac{ \hat{c}_4 \hat{c}_6 \sqrt{\norm{X}}}{\kappa^3},  \nonumber
\end{align}
where
\begin{equation}\label{main:gammadefinition}
  \gamma
   = \max \left\{  \left( \frac{1}{128} \right)^{1/10}  \left( \sglmin \left(X\right) \right)^{1/10}   \norm{ \Ztone Q_{t_1,\bot}}^{4/5}; \alpha \right\} .
\end{equation}
(Note that Lemma \ref{lemma:phase2combined} yields a somewhat stronger bound on $\gamma$, see inequality \eqref{ineq:gammabound}.
However, additionally requiring that $\alpha \le \gamma$ will make the proof below slightly more convenient.)
In the next step, we want to apply Lemma \ref{lemma:phase3combined}, which describes the third phase, i.e., the local convergence phase.
Again, we need to check that the conditions are satisfied.
Note that if $\gamma = \alpha$, then condition \eqref{phase3:gammabound} holds due to our assumption on $\alpha$, see \eqref{main:alphabound}.
In the other case, we note that
\begin{align*}
    \gamma 
    &=
   \left( \frac{1}{128} \right)^{1/10}  \left( \sglmin \left(X\right) \right)^{1/10}   \norm{ \Ztone Q_{t_1,\bot}}^{4/5}\\ 
   &\stackrel{\eqref{ineq:proofmain1}}{\le} \left( \frac{1}{128} \right)^{1/10} \left( \sglmin \left(X\right) \right)^{1/10} \left( \frac{6\kappa^2  \sqrt{\max \left\{  n_1+n_2; k \right\} }}{ c \varepsilon \left( \sqrt{k} - \sqrt{r -1} \right)} \right)^{ \frac{ 64 \kappa}{5}} \cdot \frac{ \left( 12c \sqrt{\max \left\{ n_1 + n_2; k \right\} }\right)^{4/5} \alpha^{4/5} }{ \kappa^{8/5} }\\
   & \le \hat{c}_3 \min \left\{  \frac{\sqrt{\sglmin(X)}}{\kappa^{9/2}}   ;\frac{\sqrt{\norm{X}}}{k^{4/3}}  \right\},
\end{align*}
where for the last line we have used our assumption on $\alpha$, see \eqref{main:alphabound},
which shows condition \eqref{phase3:gammabound}.
It remains to verify condition \eqref{ineq:phase3assump_balanc1}.
For that, we note that
\begin{align*}
    120000 \mu \frac{ \ln \left( \frac{9 \sqrt{\norm{X}}}{200 k \gamma} \right) \norm{X}^3 }{ \sglmin \left(X\right) }
    &\stackrel{\alpha \le \gamma }{\le} 120000 \mu  \kappa \ln \left( \frac{9 \sqrt{\norm{X}}}{200 k \alpha} \right)  \norm{X}^2 
    \le \frac{\hat{c}_7 \norm{X}}{3 \kappa^4},
\end{align*}
where we have used our assumption on the step size $\mu$.
Combining this estimate with inequality \eqref{ineq:mainintern17} we see that condition \eqref{ineq:phase2assump_balanc1} is fulfilled.
Thus, from Lemma \ref{lemma:phase3combined} we obtain that after $t_3$ iterations, where
\begin{align}\label{main:tthreebound}
    t_3-t_2 \le \frac{300 \ln \left( \frac{9  \sqrt{\norm{X}} }{200 k \gamma} \right) }{\mu \sglmin (X)},
\end{align}
the following bound for the reconstruction error holds:
\begin{align*}
    \norm{V_{t_3} W_{t_3}^T - X}
    =&\norm{\symA - Z_{t_3} Z^T_{t_3}+  \tilde{Z}_{t_3} \tilde{Z}^T_{t_3} }\\
    \lesssim &  k \gamma \sqrt{\norm{X} } 
\end{align*}
If $\gamma = \alpha$ holds, then we note that 
\begin{equation*}
    \frac{\norm{V_{t_3} W_{t_3}^T - X}}{\norm{X}}
    \lesssim 
    \frac{k \alpha}{ \sqrt{ \norm{X} } } 
    \lesssim  
    \frac{ \alpha^{3/5} }{  \norm{X}^{3/10}  },
\end{equation*}
where in the last line we used assumption our assumption on $\alpha$, which is inequality \eqref{main:alphabound}.
Thus, in the case that $\alpha = \gamma$ inequality \eqref{equ:mainresultfinalbound} follows for $T=t_3$.
Otherwise, we obtain 
that
\begin{align*}
    \norm{V_{t_3} W_{t_3}^T - X} 
    \stackrel{\eqref{main:gammadefinition}}{\lesssim } & k  \sqrt{\norm{X} } \left( \sglmin \left(X\right) \right)^{1/10}   \norm{ \Ztone Q_{t_1,\bot}}^{4/5} \\
    \stackrel{\eqref{ineq:proofmain1}}{\lesssim} &   k  \sqrt{\norm{X} } \left( \sglmin \left(X\right) \right)^{1/10}   
    \cdot \left( \frac{6\kappa^2  \sqrt{\max \left\{  n_1+n_2; k \right\} }}{ c \varepsilon \left( \sqrt{k} - \sqrt{r -1} \right)} \right)^{ \frac{64 \kappa}{5} } \cdot \frac{ \alpha^{4/5} \left( \max \left\{ n_1 + n_2; k \right\} \right)^{2/5} }{ \kappa^{8/10} } \\
    \stackrel{}{\lesssim} &
    \norm{X}^{7/10} \alpha^{3/5},
\end{align*}
where the last line follows from our assumption on $\alpha$, which is inequality \eqref{main:alphabound}, and by choosing the constant $C_2 $ therein large enough.
Thus, by rearranging terms it follows that 
\begin{align*}
    \frac{\norm{V_{t_3} W_{t_3}^T - X}}{\norm{X}}
    \lesssim
    \frac{\alpha^{3/5}}{\norm{X}^{3/10} }.
\end{align*}
This shows inequality \eqref{equ:mainresultfinalbound} with $T=t_3$.
It remains to show that $T$ fulfills inequality \eqref{lemma:tildetbound}. 
For that we note that 
\begin{align*}
    T
    &=t_1 + \left( t_2 - t_1 \right) + \left(t_3 -t_2\right)\\
    &\overset{\eqref{main:tonebound},\eqref{main:ttwobound},\eqref{main:tthreebound}}{\lesssim} \frac{ \ln\left(\frac{6\kappa^{2} \sqrt{ \max \left\{ n_1+n_2; k \right\} } }{c\varepsilon \left( \sqrt{k} - \sqrt{r-1} \right) }\right)}{ \mu \sglmin \left(X\right) }
    +\frac{\ln \left( \frac{ 2  \sqrt{2 \sglmin \left(X\right) }  }{ \alpha \varepsilon \left( \sqrt{k} -\sqrt{r-1}  \right)}  \right)}{\mu \sglmin \left(X\right)}
    +\frac{ \ln \left( \frac{9  \sqrt{\norm{X}} }{200 k \gamma} \right) }{\mu \sglmin (X)}\\
    &\stackrel{(a)}{\le} \frac{ \ln\left(\frac{6\kappa^{2} \sqrt{ \max \left\{ n_1+n_2; k \right\} } }{c\varepsilon \left( \sqrt{k} - \sqrt{r-1} \right) }\right)}{ \mu \sglmin \left(X\right) }
    +\frac{\ln \left( \frac{ 2  \sqrt{ 2 \sglmin \left(X\right) }  }{ \alpha \varepsilon \left( \sqrt{k} -\sqrt{r-1}  \right)}  \right)}{\mu \sglmin \left(X\right)}
    +\frac{ \ln \left( \frac{9  \sqrt{\norm{X}} }{200 k \alpha} \right) }{\mu \sglmin (X)}\\
    &\overset{(b)}{\lesssim} \frac{ \ln \left( \frac{ 2 \sqrt{2\norm{X}} }{  \varepsilon \alpha \left( \sqrt{k} - \sqrt{r-1} \right) } \right) }{\mu \sglmin (X)}.
\end{align*}
where in inequality $(a)$ we used that $\alpha \le \gamma$, see inequality \eqref{main:gammadefinition}.
Inequality $(b)$ follows from the assumption on $\alpha$, see \eqref{main:alphabound}.
This shows inequality \eqref{lemma:tildetbound} and the proof of Theorem \ref{theorem:main} is complete.
\end{proof}

\end{document}